\let\my@xfloat\@xfloat
\def\@xfloat#1[#2]{
	\my@xfloat#1[#2]%
	\def\baselinestretch{1}%
	\@normalsize \normalsize
}
\newcommand{\argmin}{\operatornamewithlimits{argmin}}
\newcommand{\cS}{\mathcal{S}}
\newcommand{\cR}{\mathcal{R}}
\newcommand{\cL}{\mathcal{L}}
\newcommand{\cH}{\mathcal{H}}
\newcommand{\bbR}{\mathbb{R}}
\newcommand{\Tr}{\mathrm{Tr}}
\newcommand{\bbE}{\mathbb{E}}
\newcommand{\diag}{\operatornamewithlimits{diag}}
\newcommand*\circled[1]{\tikz[baseline=(char.base)]{
            \node[shape=circle,draw,inner sep=2pt] (char) {#1};}}
\newtheorem{theorem}{Theorem}[chapter]
\newtheorem{proposition}{Proposition}[chapter]
\newtheorem{corollary}{Corollary}[chapter] 
\newtheorem{lemma}{Lemma}[chapter]
\newenvironment{sproof}{%
  \proof}{\endproof}
\begin{document}


\title{\bf Online Convex Optimization in Changing Environments \\and its Application to Resource Allocation}
\author{Jianjun Yuan}
\campus{University of Minnesota} 
\program{Electrical and Computer Engineering} 
\degree{DOCTOR OF PHILOSOPHY}
\director{Andrew Lamperski, Advisor} 

\submissionmonth{December} 
\submissionyear{2019} 


\copyrightpage 
\acknowledgements{

First of all, I want to express my sincere gratitude to my smart advisor, Prof. Andrew Lamperski
for his unconditional support and guidance throughout my whole PhD study.
His enthusiasm towards the useful research, rigorous attitude, 
and good taste of the research topics
are infectious.
It would not be possible to complete this thesis 
without his invaluable encouragement 
as well as inspired interesting research advice.
As a mentor, he shared with me how to select good research topics,
how to tackle difficult/complex problems by getting ideas from easy ones,
how to write a good paper,
and
how to make a good presentation.
All these together shape me to become a qualified researcher,
who needs not only knowledge and solid understandings on his/her own area,
but also the capability to explore new directions and overcome new challenging problems. 
Moreover, I hope my PhD work can help push the research in my area a little bit forward
as promised to Andy at the beginning of my PhD study.
I will miss my every-Friday meeting
and the joyfulness of deriving various theoretical bounds on 
the white-board with him.

I want to thank Prof. Murti Salapaka, Prof. Mingyi Hong, and Prof. Steven Wu
for being in my thesis committee and providing feedback about my research.
I also want to thank Prof. Daniel Boley and Prof. Georgios B. Giannakis
for being in my preliminary exam committee and the useful suggestions for my future work.
I also want to thank all the professors that I have taken classes from
over my PhD study.

I am grateful to the University of Minnesota ECE department for the fellowship
in the academic year 2015-16, 
without which I would not be able to come to Minnesota to pursue my PhD degree.
I am indebted to University of Minnesota MnDRIVE program
for providing me with the Graduate Assistantship
in the academic year 2018-19.
I will not complete this thesis without its resource and financial support.
I want to express my gratitude to the ECE staff, 
especially Linda Jagerson, Jeanine Maiden, and Hallie White,
for their help and support.

I want to thank Prof. Mu Zhou, Prof. Weiwei Chen, Ruiliang Zhang, Prof. Ying Chen for their help 
during the start of my graduate study.
I would also like to thank my lab mates Venkat Ram Subramanian, Tyler Lekang,
Ran Tian, and Bolei Di for sharing with me useful experience and information.

I am grateful to my terrific friends Kun Xu, Zheran Li, Jiadong Chen,
Dr. Tengtao Li, Dr. Qianqian Fan, Dr. Bingzhe Li, Ruilin Dong, Chengyao Tan, Xiaonan Zhang,
Jiaji Qi, Dongsheng Ding, Xiangyi Chen
for their companion and support,
which make my PhD study full of happiness.

Finally, I want to thank my wonderful family.
I want to thank my parents, my sister, my brother-in-law,
and my parents-in-law for their constant encouragement and support.
Especially, I want to thank my cute wife, Dingyi Liu,
for her companion, encouragement, care, and inspiration.
I am also grateful to our baby to be born.


}
\dedication{To my parents, sister and dear wife. }
\abstract{
\noindent
In the era of the big data, we create and collect lots of data from all different kinds of sources:
the Internet, the sensors, the consumer market, and so on.
Many of the data are coming sequentially, and would like to be processed and understood 
quickly.
One classic way of analyzing data is based on batch processing,
in which the data is stored and analyzed in an offline fashion.
However, when the volume of the data is too large, it is much more difficult 
and time-consuming to do batch processing
than sequential processing.
What's more,
sequential data is usually changing dynamically, and 
needs to be understood on-the-fly
in order to capture the changes.
Online Convex Optimization (OCO) is a popular framework
that matches the above sequential data processing requirement.
Applications using OCO include online routing, online auctions, online classification and regression, 
as well as online resource allocation.
Due to the general applicability of OCO to the sequential data
and the rigorous theoretical guarantee,
it has attracted lots of researchers to 
develop useful algorithms to fulfill different needs.
In this thesis,
we show our contributions to OCO's development
by designing algorithms to adapt to changing environments.

In the first part of the thesis,
we propose algorithms to have better adaptivity 
by examining the notion of \emph{dynamic regret},
which compares the algorithm's cumulative loss against that incurred by a comparison sequence.
Dynamic regret extends a common performance measure known as static regret.
Since it may not be known whether the environment is dynamic or not,
it is desirable to take advantage of both regrets by having a trade-off between them.
To achieve that,
we discuss recursive least-squares algorithms
and show how forgetting factors can be used
to develop new OCO algorithms
that have such a regret trade-off.
More specifically,
we rigorously characterize the 
 effect of forgetting factors for a class of online Newton algorithms.
 For exp-concave or strongly convex objective, 
 the improved dynamic regret of $\max\{O(\log T),O(\sqrt{TV})\}$ is achieved,
 where $V$ is a bound on the path length of the comparison sequence.
 In particular, we show how classic
 recursive least-squares with a forgetting factor achieves this
 dynamic regret bound.
 By varying $V$, we obtain the regret trade-off.
 In order to obtain more computationally efficient algorithm, 
 we also propose a novel gradient descent step size rule for
 strongly convex functions,
 which recovers the dynamic regret bounds
 described above. 
 For smooth problems, we can obtain static regret of $O(T^{1-\beta})$ and
 dynamic regret of $O(T^\beta V^*)$, where $\beta \in (0,1)$ and $V^*$
 is the path length of the sequence of minimizers.  
 By varying $\beta$, we obtain the regret trade-off.

The second part of the thesis
describes how to design efficient algorithms to adapt to the changing environments.
Previous literature runs a pool of algorithms in parallel to gain better adaptivity,
which increases both the running time and the online implementation complexity.
Instead,
we propose a new algorithm requiring only one update per time step,
while with the same \emph{adaptive regret} performance guarantee as the current state-of-the-art result.
We then apply the algorithm to 
online Principal Component Analysis (online PCA)
and variance minimization under changing environments, since the
previous literature on online PCA has focused on performance guarantee under stationary environment.
We demonstrate both theoretically and experimentally that
the proposed algorithms can adapt to the changing environments.

The third part of the thesis starts from the observation that
the projection operator used in constrained OCO algorithms 
cannot really achieve true online implementation due to the high time-consumption.
To accelerate the OCO algorithms' update,
previous literature is proposed to approximate the true desired projection with a
simpler closed-form one at the cost of constraint violation ($g(\theta)> 0$) for some time steps.
Nevertheless, it can guarantee sub-linearity for both the static regret
and the long-term constraint, $\sum\limits_{t=1}^Tg(\theta_t)$,
having constraint satisfaction on average.
However, the sub-linear long-term constraint does not enforce small constraint violation for every time step,
because a strictly feasible solution can cancel out the effects of violated constraints.
To resolve it,
we propose algorithms to
have the cumulative constraint of the form $\sum\limits_{t=1}^T\big(\max\{g(\theta_t),0\}\big)^2$ upper bounded sub-linearly.
This new form heavily penalizes large constraint violations while the cancellation effects cannot occur. 
Furthermore, useful bounds on the single step constraint violation are derived.
For convex objectives, our result generalizes existing bounds, and for strongly convex objectives we give improved regret bounds.
In numerical experiments, we show that our algorithm closely follows the constraint boundary leading to low cumulative violation. 
Furthermore, we extend the proposed algorithms' idea to the more general time-dependent online resource allocation problems
with performance guarantee by a variant of \emph{dynamic regret}.

\words{771}    

\beforepreface 

\figurespage
\tablespage

\afterpreface            



\chapter{Introduction}
\label{chap:overall_intro}

\section{Why Online Convex Optimization?}
\label{sec:oco_gen_intro}

We live in an era that is full of accessible data:
the Internet, different sensors,
the consumer and financial markets, and so on.
Many people prefer to first store the entire data set and then process it together
like the classic machine learning algorithms do.
But when the volume of data is too large, 
such batch processing would fail or be computationally inefficient due to 
the large-scale dataset that needs to be loaded into the memory.
In order to understand and analyze the data quickly and efficiently,
we can treat the large-scale dataset 
as a stream,
so that it can be processed one data point-by-one data point or in a mini-batch fashion.
Online learning is such a popular framework
in doing so,
which is more computationally efficient because it does not require loading the whole dataset.
Moreover, it is theoretically guaranteed to be competitive with the
best fixed choice in hindsight (the batch processing solution).

Besides the advantages for offline data analysis,
online learning is also a natural choice for treating sequential data.
The reason is that it meets the requirement of processing and understanding
the data as quickly as possible.

One important framework of online learning is Online Convex Optimization (OCO),
which considers the case when the objective function is convex.
Featured by
the high computational efficiency and proven theoretical guarantees
against different notions of performance,
OCO algorithms have many applications in the areas of online routing \cite{hazan2016introduction}, 
online auctions \cite{blum2004online}, as well as online classification and regression \cite{crammer2006online}.
More specific applications are:
1. Online linear dynamical system identification \cite{hazan2018spectral}
that updates the identified system parameters on-the-fly as the sequential observation comes in;
2. Online expert selection \cite{herbster1998tracking,cesa2012new,cesa2012mirror}
which is about online decision making on the best expert;
3. Online Principal Component Analysis \cite{tsuda2005matrix,warmuth2006online,warmuth2008randomized,niew2016onlinepca}
in picking up the subspace for the sequential data/observation to be projected into;
4. Online resource allocation \cite{yu2017online,yuan2018online,liakopoulos2019cautious}
to sequentially allocate budgets or other resources.

Another property that makes the OCO framework unique from any other framework
is that there is no statistical assumptions on how the data/observation
is generated. It can be generated deterministically,
stochastically from a mixture of different fixed distributions,
or even adversarially.
This property is preferable when there is no clear conclusion on
what kind of data distribution we should use.
Also, it makes more sense
if the data is given by our opponent in an adversarial manner.
Such a game playing perspective \cite{abernethy2008optimal}
further enables the application
to the adversarial data processing like
online portfolio selection in the stock market \cite{helmbold1998line,das2013online}.

Influenced by the development of convex optimization tools,
there are many advances in the design of the OCO algorithms
trying to fulfill different needs under
different considerations and scenarios.
One of the most important considerations
is the guaranteed performance 
against different types of comparators.
For example,
for the best fixed comparator in hindsight (which is suitable in stationary environment),
the guaranteed performance
is called \emph{static regret} \cite{zinkevich2003online}.
For a changing environment, two other types of comparators are usually used.
One is the maximum static regret over any contiguous time interval,
which leads to the \emph{adaptive regret} performance guarantee \cite{hazan2009efficient}.
The other one is against all comparator sequences in a constrained set,
having performance guarantee named \emph{dynamic regret} \cite{besbes2015non,mokhtari2016online,zhang2018adaptive}.
In this thesis,
we show our contributions to the OCO's development
by designing algorithms to adapt to the changing environment.

\section{Motivation}
\label{sec:oco-gen-motivation}

The general motivation for this thesis
is to design OCO algorithms to enable the decision making on-the-fly 
with better adaptivity to the changing environments
and extend them to online resource allocation.

Tracking the changes of the environments is a key difference between
OCO algorithms and the batch processing based approaches,
because sequential data/observation tends to be shifting over time.
However, previous works on OCO problems are mainly focused on the static regret,
a performance metric well-suited for stationary environments.
Since the algorithms having sub-linear static regret 
will converge to the single
best fixed solution in hindsight,
their claimed performance somehow contradicts the original tracking goal.
In order to be aligned with the tracking objective,
dynamic regret is proposed
to let the cumulative loss of OCO algorithms compete
with any sequence of comparators within the constrained set.
Our first part of the thesis is motivated by designing algorithms to upper bound the dynamic regret 
for different types of problem setups.

Tracking the changing environments is usually achieved by running a pool of algorithms
with either different parameters (for upper bounding dynamic regret) 
or different starting points (for upper bounding adaptive regret).
Such complex online implementaion is very time-consuming and 
not appropriate in some problem setups. 
Proposing an efficient and easy-to-implement algorithm
is the goal of 
our second part of the thesis.
In particular, we show that such an algorithm
exists and can be applied to the online Principal Component Analysis (online PCA)
and the online variance minimization.
Compared to the mentioned general adaptive algorithms,
our proposed algorithm uses only one update per time step,
while maintains the same adaptive regret theoretical guarantee
as the general adaptive algorithms.

For constrained OCO algorithms, a projection operator is almost unavoidable.
When the constraint set is complex,
such operation is very time-consuming and prevents the algorithms from having a true online implementation.
Our third part of the thesis starts from the question of how to accelerate the computations.
Previous works propose to replace the true desired projection
with an approximate closed-form one,
since closed-form update eliminates any minimization-based computation.
The 'downside' is the possible constraint violation from time to time.
The remedy for it is the guarantee for the constraint satisfaction on average.
However, on-average constraint satisfaction does not lead to the desired small constraint violation for each time step.
To achieve that, we propose a new algorithm
to enforce small constraint violation not only on average but also for every time step.
The idea of the on-average constraint satisfaction
is also applied to online resource allocation by some previous works.
However,
such application is only limited to the budget type resource
because of the considered on-average constraint form.
Our second motivation in the third part of the thesis is to extend our proposed algorithms
to have time-dependent dynamic regret guarantee
in order to solve broader online resource allocation problems.

\section{Thesis Organization}
\label{sec:thesis-org}

The thesis is organized as follows:
\begin{enumerate}
	\item Chapter \ref{chap:related-work}
	discusses related work for Online Convex Optimization,
	that are relevant to the algorithms or problem setups we consider in the later chapters.

\item Chapter \ref{chap:trade-off}
is mainly concerned with the question of how to
enable the decision making on-the-fly with better adaptivity 
to the changing environments.
Algorithms equipped with static regret performance guarantee are
not appropriate
due to the fixed comparator they converge to. 
One way to better track the changes of the environments
is to use dynamic regret,
which compares the algorithm's cumulative loss 
against that incurred by a comparison sequence.
Inspired by the forgetting factor used in the Recursive Least Squares algorithms,
we propose a discounted Online Newton algorithm to
have improved dynamic regret guarantee
for both exp-concave and strongly convex objectives.
Moreover, the trade-off between static and dynamic regret
is analyzed for both Online Least-Squares and its generalization to
strongly convex and smooth objectives.
To obtain more computationally efficient algorithms, 
 we also propose a novel gradient descent step size rule for
 strongly convex functions,
 which recovers the dynamic regret bound
 described above. 

\item Chapter \ref{chap:adaptive-pca}
develops an online adaptive algorithm for Principal Component Analysis (PCA) and its extension
of variance minimization under changing environments.
The main idea is mixing the exponentiated gradient descent
with a fixed-share step.
Compared with the previous algorithms having adaptive
or dynamic regret guarantee,
our algorithm saves the need of running a pool of algorithms in parallel,
while achieves the same adaptive regret performance guarantee.

\item Chapter \ref{chap:oco-long-term}
contributes to the development of the OCO algorithms in achieving fast online computation
as well as online resource allocation.
The projection operator for the constrained OCO algorithms is the main bottleneck 
in preventing the algorithms from having a quick update.
We propose algorithms to approximate the true desired projection
with a simpler closed-form one
at the cost of the constraint violation for some time steps.
Nevertheless,
our proposed algorithms lead to
a sub-linear cumulative constraint violation
to ensure the constraint satisfaction on average.
It also has mild and bounded single step constraint violation.
For convex objectives, our results generalize existing ones, 
and for strongly convex objectives we give improved regret bounds.
Finally,
we extend our proposed algorithms' idea to solve the general time-dependent online resource allocation problems.

\item Chapter \ref{chap:conclusion}
draws some conclusions for the thesis.

\end{enumerate}

\section{Notation}
\label{sec:notation}

For the $n$ dimensional vector $\theta\in \mathbb{R}^n$, 
we use $\left\|\theta\right\|_1$ and $\left\|\theta\right\|$ to denote the $\ell_1$-norm and $\ell_2$-norm, respectively.
The gradient and Hessian of the function $f_t$ at time step $t$ in terms of the $\theta$
are denoted as $\nabla f_t(\theta)$ and $\nabla^2 f_t(\theta)$, respectively.
In order to differentiate between the vector at time step $i$ and the $i$-th element of it,
we sometimes use bold lower-case symbols to denote the vector.
The $i$-th element of a sequence of vectors at time step $t$, $\mathbf{x_t}$, 
is denoted by $x_{t,i}$.

For two probability vectors $\mathbf{q}, \mathbf{w} \in \mathbb{R}^n$, we use $d(\mathbf{q},\mathbf{w})$
to represent the relative entropy between them, which is defined as 
$\sum_{i=1}^n q_i\ln(\frac{q_i}{w_i})$. 
$\mathbf{q_{1:T}}$ is the sequence of vectors $\mathbf{q_1},\dots,\mathbf{q_T}$, 
and $m(\mathbf{q_{1:T}})$ is defined to be equal to $\sum\limits_{t=1}^{T-1}D_{TV}(\mathbf{q_{t+1}},\mathbf{q_t})$,
where $D_{TV}(\mathbf{q_t},\mathbf{q_{t-1}})$ is defined as
$\sum\limits_{i:q_{t,i}\ge q_{t-1,i}} (q_{t,i}-q_{t-1,i})$. The
expected value operator is denoted by $\mathbb{E}$.

For the matrix $A\in \mathbb{R}^{m\times n}$, its transpose is denoted by $A^\top$ and $A^\top A$ denotes the matrix multiplication.
The inverse of $A$ is denoted as $A^{-1}$. 
We use $\left\|A\right\|_2$ to represent the induced $2$ norm.
For the two square matrices $A\in\mathbb{R}^{n\times n}$ and $B\in\mathbb{R}^{n\times n}$,
$A\preceq B$ means $A-B$ is negative semi-definite, 
while $A\succeq B$ means $A-B$ is positive semi-definite.
For a positive definite matrix, $M$, let $\|x\|_M^2 = x^\top M x$. The
standard inner product between matrices is given by  $\langle A,B\rangle =
\Tr(A^\top B)$. The  determinant of a square matrix, $A$ is denoted by
$|A|$. 
We use $I$ to represent the identity matrix.

The quantum relative entropy between two 
density matrices\footnote{A density matrix is a symmetric positive
  semi-definite matrix with trace equal to 1. 
Thus, the eigenvalues of a density matrix form a probability vector.}
$P$ and $Q$ is defined as $\Delta(P,Q) = \Tr(P\ln P)- \Tr(P\ln Q)$, 
where $\ln P$ is the matrix logarithm for symmetric positive definite matrix $P$
(and $\exp(P)$ is the matrix exponential).

\chapter{Related Work}
\label{chap:related-work}

In this chapter,
we do a literature review for the works that are related to the contents of this thesis
as well as some necessary background and concepts.

As the previous chapter shows, online learning has
attracted lots of researchers to develop different algorithms
for many interesting settings and applications.
Some of them are concerned with more theoretical parts.
One particular aspect is deriving lower and upper bounds for the performance  
in various problem setups such as
the expert problem \cite{herbster1998tracking,cesa2007improved,cesa2012new,cesa2012mirror},
the general OCO setup \cite{zinkevich2003online,hazan2007logarithmic,abernethy2008optimal,abernethy2009stochastic},
online Reinforcement Learning \cite{fazel2018global},
online non-convex optimization \cite{hazan2017efficient,gao2018online},
the online bandit problem \cite{auer2002using,agarwal2011stochastic,bubeck2012regret},
and so on.
Other works apply or extend the existing algorithms to different scenarios.
Besides the ones mentioned in the previous chapter,
other scenarios include
online time-series prediction with ARMA/ARIMA \cite{anava2013online},
online controller design \cite{yuan2017online},
as well as
the well-known classification algorithm AdaBoost \cite{freund1997decision}.

Amongst all the techniques and applications mentioned above, 
Online Convex Optimization (OCO)
is one of the most important unified frameworks
that provides
efficient,
and theoretically guaranteed solutions
to many problems
and helps facilitate
the development of online learning's theoretical analysis.

This chapter is divided into three sections
with the literature review ranging from
classic OCO algorithms
to the applications related to this thesis.
More specifically, 
Section \ref{sec:oco-work-intro}
first discusses the basic concepts and definitions in the OCO framework.
It then covers popular OCO algorithms like Online Gradient Descent and
Online Newton's method,
with different performance guarantees.
Section \ref{sec:online-PCA-work}
focuses on a specific problem setup,
online Principal Component Analysis (online PCA).
It describes one classic online algorithm
as well as some extensions of it from the literature.
Section \ref{sec:oco-long-term-work} does the literature review
about how the previous works try to accelerate the online update in the OCO algorithms.
Two different kinds of algorithms are described and discussed 
for their pros and cons.
Furthermore, the extensions of them to handle
online resource allocation are also included.

\section{Online Convex Optimization (OCO)}
\label{sec:oco-work-intro}

The formula for Online Convex Optimization (OCO) is: 
at each time step $t$, before the true time-dependent convex objective function $f_t(\theta)$ is revealed,
we need to make a prediction $\theta_t$ from the convex set $\cS$, 
based on the history of the observations $f_i(\theta)$, $i<t$.
Then the value $f_t(\theta_t)$ is the loss suffered due to the lack
of the knowledge of the true objective function $f_t(\theta)$.  
Our prediction of $\theta$ is then updated to include the information of $f_t(\theta)$.
This whole process is repeated until termination. The convex function,
$f_t(\theta)$,
can be chosen from the convex function class in an arbitrary, possibly
adversarial manner.

To better understand different OCO algorithms, we first describe 
the basic definitions and concepts related to them.

\subsection{The Basics of OCO}

Since the key to the design of OCO algorithms is the convex optimization tools,
we would like to first discuss some important concepts about the convex optimization.

A set $\cS$ is a convex set if $\forall x,y\in \cS$, $\forall \lambda\ge 0,\mu \ge 0$ such that $\lambda+\mu =1$,
we have that
$\lambda x + \mu y\in \cS$.

A function $f:\cS\mapsto \mathbb{R}$ is convex if $\forall x,y\in\cS$ and $\forall \alpha\in[0,1]$,
we always have:
\begin{equation*}
f(\alpha x+(1-\alpha)y) \le \alpha f(x) + (1-\alpha)f(y)
\end{equation*}

If $f(x)$ is first-order differentiable,
then $f(x)$ is convex if and only if 
\begin{equation*}
f(x)\ge f(y)+ \nabla f(y)^\top (x-y), \forall x,y\in\cS
\end{equation*}

For second-order differentiable function $f(x)$,
it is convex if and only if $\nabla^2 f(x)\succeq 0$.

For the non-differentiable convex function $f(x)$,
the above inequality still holds when
we replace the gradient $\nabla f(y)$ with any element of
the sub-gradient, $\partial f(y)$, which is defined as 
the set of vectors satisfying the above inequality for all $x\in\cS$.

When a convex function $f(x)$ is $\ell$-strongly convex,
it means $\forall x,y\in\cS$, we have
\begin{equation*}
f(x) \ge f(y)+\nabla f(y)^\top (x-y) + \frac{\ell}{2}\|x-y\|^2
\end{equation*}

If $f(x)$ is second-order differentiable,
$\ell$-strong convexity is equivalent to
$\nabla^2 f(x)\succeq \ell I$.

Sometimes the convex function $f(x)$ is also $\mu$-smooth,
which means its gradient $\nabla f(x)$ satisfies the relation
\begin{equation*}
\|\nabla f(x)-\nabla f(y)\|\le \mu\|x-y\|,
\end{equation*}
which is also equivalent to
$f(x)\le f(y)+\nabla f(y)^\top (y-x) + \frac{\mu}{2}\|x-y\|^2$.

The projection operator $\Pi_\cS(y)$ is defined as
$\argmin_{x\in\cS}\|x-y\|$.
An important property of this operator that we use a lot in this thesis
is the Pythagorean theorem, which is listed below for completeness:
\begin{theorem}[Pythagoras, circa 500 BC]
Let $y\in\mathbb{R}^n$, $\cS\subseteq\mathbb{R}^n$ be a convex set,
and $x = \Pi_\cS(y)$. Then we have the following inequality
\begin{equation*}
\|y-z\| \ge \|x-z\|, \forall z\in\cS
\end{equation*}

\end{theorem}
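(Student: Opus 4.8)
The plan is to reduce the inequality to the first-order optimality condition (the variational inequality) satisfied by the projection $x = \Pi_\cS(y)$. The crucial fact is that $x$ minimizes $\|w - y\|$ over $w \in \cS$, and since squaring is monotone on the nonnegative reals, $x$ also minimizes $\|w-y\|^2$ over $\cS$. First I would extract from this minimality a usable inequality involving an arbitrary $z \in \cS$, which turns the geometric statement into pure algebra.

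To obtain the variational inequality, I would exploit the convexity of $\cS$ directly. Fix any $z \in \cS$; then for every $\lambda \in [0,1]$ the point $(1-\lambda)x + \lambda z = x + \lambda(z-x)$ lies in $\cS$. Define the scalar function $\phi(\lambda) = \|y - x - \lambda(z-x)\|^2$ and expand it as $\|y-x\|^2 - 2\lambda (y-x)^\top(z-x) + \lambda^2\|z-x\|^2$. Because $x$ is the minimizer over $\cS$, the restriction of the distance to this feasible segment attains its minimum on $[0,1]$ at $\lambda = 0$, which forces $\phi'(0) \ge 0$. Computing $\phi'(0) = -2(y-x)^\top(z-x)$ then yields the variational inequality $(y-x)^\top(z-x) \le 0$ for all $z \in \cS$.

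With this in hand, the conclusion follows by a single algebraic decomposition. I would write $y - z = (y-x) + (x-z)$ and expand $\|y-z\|^2 = \|y-x\|^2 + 2(y-x)^\top(x-z) + \|x-z\|^2$. The cross term equals $-2(y-x)^\top(z-x)$, which is nonnegative by the variational inequality, and $\|y-x\|^2 \ge 0$; hence $\|y-z\|^2 \ge \|x-z\|^2$, and taking square roots gives the claimed $\|y-z\| \ge \|x-z\|$.

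I expect the main obstacle to be the careful justification of the variational inequality rather than the closing algebra. The subtle point is that $x$ is only a minimizer over the constrained set $\cS$, not an interior unconstrained minimizer, so one cannot simply set a full gradient to zero. The one-sided derivative argument along the feasible segment $x + \lambda(z-x)$ is exactly what accommodates the constraint, and this is precisely where convexity of $\cS$ (guaranteeing the segment stays in $\cS$) is indispensable.
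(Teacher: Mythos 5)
Your proof is correct and complete. The paper states this theorem only as classical background (``listed below for completeness'') and provides no proof of its own, so there is no paper argument to compare against; what you give is the standard proof: derive the variational inequality $(y-x)^\top(z-x)\le 0$ from the one-sided condition $\phi'(0)\ge 0$ along the feasible segment $x+\lambda(z-x)$, then expand $\|y-z\|^2 = \|y-x\|^2 + 2(y-x)^\top(x-z) + \|x-z\|^2$ and drop the two nonnegative terms. Your closing remark is also on point --- since $x$ is only a constrained minimizer, one may conclude only the inequality $\phi'(0)\ge 0$ rather than stationarity, and convexity of $\cS$ is exactly what keeps the test segment feasible.
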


Many OCO algorithms are designed by using the above convex optimization tools.
To measure the effectiveness of these OCO algorithms,
one commonly used metric is called \emph{regret}.
\emph{Static regret} $\cR_s$ is one type of the regret defined as
\begin{equation*}
\cR_s = \sum\limits_{t=1}^T f_t(\theta_t)- \sum\limits_{t=1}^T f_t(\theta^*)
\end{equation*}
where $\theta_1,\theta_2,\dots,\theta_T$ is the prediction sequence given by the OCO algorithm,
$\theta^* =\argmin_{\theta\in\cS}\sum\limits_{t=1}^T f_t(\theta)$ is a fixed comparator,
and $T$ is called time horizon.

According to \cite{cesa2006prediction}, the solution to the above static regret
is called Hannan consistent if $\cR_s$ is sub-linear in $T$,
which means the prediction sequence will converge to $\theta^*$, the best fixed solution in hindsight.
In order to achieve the useful regret bound,
the following assumptions are required:
1. the gradient $\nabla f_t(\theta)$ is upper bounded;
2. the convex constraint set $\cS$ is compact and bounded.

\subsection{Online Gradient Descent}

The most classic OCO algorithm designed for convex objective is called Online Gradient Descent (OGD) 
proposed by \cite{zinkevich2003online} in 2003.
The update rule after the observation $f_t(\theta)$ is
\begin{equation*}
\begin{array}{l}
\hat{\theta}_{t+1} = \theta_t - \eta_t\nabla f_t(\theta_t)\\
\theta_{t+1} = \Pi_\cS(\hat{\theta}_{t+1}) 
\end{array}
\end{equation*}
where $\eta_t$ is the step size at time step $t$ 
and we abuse the subgradient notation when $f_t$ is not differentiable
by denoting it as $\nabla f_t(\theta_t)$.

Although the above update rule is very simple, which is just doing gradient descent and then projecting back
to the feasible set,
it has an optimal static regret theoretical guarantee.
In other words, 
by setting $\eta_t$ to be equal to $O(1/\sqrt{T})$ or $O(1/\sqrt{t})$,
the $\cR_s$ can be upper bounded by $O(\sqrt{T})$,
which meets the lower bound shown in \cite{abernethy2008optimal}.
Note that when $f_t$ is $\ell$-strongly convex,
$\cR_s$ can be upper bounded by $O(\log T)$ by having $\eta_t = \frac{1}{\ell t}$,
which is also optimal \cite{hazan2007logarithmic}.

According to \cite{beck2003mirror}, OGD is a special case of Online Mirror Descent (OMD)
when the distance function $\psi$ is the squared Euclidean one ($\psi(x) = \frac{1}{2}\|x\|^2$).
For the OMD algorithm,
its prediction for the time step $t+1$ is updated as
\begin{equation*}
\theta_{t+1} = \argmin\limits_{\theta\in\cS} \eta_t\nabla f_t(\theta_t)^\top (\theta-\theta_t) + d_\psi(\theta,\theta_t)
\end{equation*}
where $d_\psi$ is the Bregman divergence
defined as 
$d_\psi(x,y) = \psi(x) - \psi(y) - \nabla \psi(y)^\top (x-y)$
with $\psi$ being the strongly convex differentiable function.

When the constraint set $\cS$ is specified to the unit simplex constraint
$\cS = \{\theta: \|\theta\|_1 =1, \theta\ge 0\}$,
the above OMD update rule has closed-form solution
if the Bregman divergence $d_\psi$ is replaced 
by the relative entropy.
The closed-form update is
\begin{equation*}
\begin{array}{l}
\theta_{t+1,i} = \frac{\theta_{t,i}\exp(-\eta_t\nabla f_t(\theta_t)_{i})}{\sum_{j=1}^n \theta_{t,j}\exp(-\eta_t\nabla f_t(\theta_t)_{i})}
\end{array}
\end{equation*}
which is called Exponentiated Gradient Descent \cite{shalev2012online}.

\subsection{Online Newton Step}

The OMD and OGD are designed for the general convex objective function.
When the objective function is $\alpha$-exp-concave,
we could use Online Newton Step (ONS) 
to further reduce its upper bound from $O(\sqrt{T})$ to $O(\log T)$.

The definition of being $\alpha$ exp-concave is that 
the function $e^{-\alpha f_t(\theta)}$ is concave.
If $f_t$ is twice differentiable, it can be shown that $f_t$ is 
$\alpha$-exp-concave if and only if
  \begin{equation*}
  \nabla^2 f_t(x) \succeq \alpha
  \nabla f_t(x) \nabla f_t(x)^\top
  \end{equation*}
  for all $x\in \cS$. 
What's more, class of being exp-concave functions is broader than the strongly convex class if the gradient is bounded,
as shown in \cite{hazan2007logarithmic,yuan2019trading}.

The update rule of ONS \cite{hazan2007logarithmic} is described below
\begin{equation*}
\begin{array}{l}
A_t = A_{t-1} + \nabla f_t(\theta_t)\nabla f_t(\theta_t)^\top \\
\hat{\theta}_{t+1} = \theta_t - \frac{1}{\gamma}A_t^{-1}\nabla f_t(\theta_t) \\
\theta_{t+1} = \Pi_\cS^{P_t}(\hat{\theta}_{t+1})
\end{array}
\end{equation*}
where $\Pi_\cS^{P_t}(y) = \argmin_{z\in \cS} \|z-y\|_{P_t}^2$
is the projection onto $\cS$ with respect to the norm induced by $P_t$. 

\subsection{Dynamic OCO}

When the goal of the OCO algorithm
is to track the changes of the underlying environments,
the classic static regret is not appropriate anymore.
This is because 
the algorithms achieving sub-linear static regret only guarantee
that the prediction will converge to the single best fixed solution in hindsight \cite{hall2013dynamical,besbes2015non,zhang2018adaptive}.

In order to better track the changes of the underlying environments, 
\emph{dynamic regret} is proposed to compare the cumulative loss against that
incurred by a comparison sequence, $z_1,\ldots,z_T\in \cS$:
\begin{equation*}
\mathcal{R}_d = \sum\limits_{t=1}^T f_t(\theta_t) 
- \sum\limits_{t=1}^T f_t(z_t)
\end{equation*}

The classic OGD \cite{zinkevich2003online} achieves dynamic regret
of order $O(\sqrt{T}(1+V))$, where $V$ is a bound on the
path length of the comparison sequence:
\begin{equation*}
  \sum_{t=2}^{T} \|z_{t}-z_{t-1}\| \le V.
\end{equation*}
This has been improved to $O(\sqrt{T(1+V)})$ in \cite{zhang2018adaptive} by applying a
meta-optimization over different step sizes. 

There are also other ways to bound the dynamic regret
including a variant of path-length \cite{hall2013dynamical}, functional variation \cite{besbes2015non},
as well as gradient variation \cite{chiang2012online}.

\subsection{Adaptive OCO}

Adaptive OCO algorithms are also concerned with 
how to enable the algorithms to better track changing environments.
Different from the Dynamic OCO setup,
the Adaptive OCO uses a 'different' performance metric
called \emph{adaptive regret}
defined as the maximum static regret over any contiguous time interval
\begin{equation*}
\cR_a = \max_{[r,s]\subset [1,T]}\Big\{
\sum\limits_{t=r}^s f_t(\theta_t) - \min\limits_{\theta\in \cS}\sum\limits_{t=r}^s f_t(\theta)\Big\}
\end{equation*}

To upper bound the adaptive regret $\cR_a$,
\cite{hazan2009efficient} proposed to run a pool of OGD or OMD with different step sizes
and different starting points.
Compared with the classic OGD or OMD,
its running time is increased by an order of $\log(T)$
due to the total number of the parallel running algorithms is $O(\log T)$.

Most recently, \cite{cesa2012new,cesa2012mirror} discovered that
for the specific online expert problem,
there is no need to run a pool of algorithms in order to bound the adaptive regret.
Instead, 
they showed that the same adaptive regret performance guarantee
can be obtained by incorporating the fixed-share step \cite{herbster1998tracking}
into the Exponentiated Gradient update,
which not only reduces the running time by $O(\log T)$,
but also makes the update easy to implement.

\section{Online Principal Component Analysis}
\label{sec:online-PCA-work}

The purpose of the online Principal Component Analysis (online PCA)
is to find the underlying subspace 
for the sequential data/observation to be projected to \cite{warmuth2008randomized,niew2016onlinepca}.

To achieve sub-linear static regret,
\cite{warmuth2008randomized} extended the idea of selecting the subset of experts
to the subset selection of the subspace.
Due to the eigendecomposition at every time step,
the online PCA's computational complexity is $O(n^3)$, where $n$ is the dimension of the data/observation.
This online PCA idea was used in the online variance minimization \cite{warmuth2006online}.

In order to reduce the running time,
we need to avoid the eigendecomposition step in \cite{warmuth2008randomized}.
\cite{kotlowski2015pca} proposed another algorithm replacing the full eigendecomposition
at each time step by the problem finding
$k$ principal components of the current covariance matrix that is perturbed by Gaussian
noise. In this way, the algorithm requires $O(kn^2)$ per time step 
with a worse static regret bound, which is off by a factor of $O(n^{1/4})$.

\section{OCO with Long-term Constraint}
\label{sec:oco-long-term-work}

Online Convex Optimization (OCO) with long-term constraint
is first proposed by \cite{mahdavi2012trading} in 2012,
aiming to accelerate the OCO algorithms to achieve real online computation.
The problem it tried to solve is the high computational complexity
of the projection operator step for constrained OCO algorithms. 


To do that, it used a closed-form update
to approximate the true desired projection step 
at the cost of the constraint violation for some time steps.
Its main goal is still keeping the static regret in a sub-linear order,
but it also aims to make sure that there is no constraint violation on average.
More specifically,
it can get $\cR_s \le O(\sqrt{T})$, 
while the sum of the constraint functions $\sum\limits_{t=1}^T g(\theta_t)$ is upper bounded by $O(T^{3/4})$.

The above result is later improved by \cite{jenatton2016adaptive}
via designing a version with time-dependent step size, 
which can have $\cR_s\le O(T^{\max\{\beta,1-\beta\}})$ and 
$\sum\limits_{t=1}^Tg(\theta_t)\le O(T^{1-\beta/2})$ with $\beta\in(0,1)$.

Later on, \cite{yu2017online} considered the stochastic version of the problem. 
Instead of following update idea in \cite{mahdavi2012trading,jenatton2016adaptive},
it used the idea in the stochastic network optimization to handle time-dependent constraints.
Although both the static regret and the long-term constraint $\sum\limits_{t=1}^Tg(\theta_t)$
can be upper bounded by $O(\sqrt{T})$,
it requires a very strong additional Slater condition,
which does not hold for many problems like equality constraint.

The long-term idea is also extended to do the online resource allocation.
\cite{yu2017online} applied it to the online job scheduling (although not appropriate as explained in Chapter \ref{chap:oco-long-term}).
\cite{liakopoulos2019cautious} used the long-term idea
in an online budget allocation problem.
Compared with \cite{yu2017online}, 
\cite{liakopoulos2019cautious} has a tighter regret guarantee
due to the increasing difficulty in finding a feasible comparator in \cite{yu2017online}.

\chapter{Trading-Off Static and Dynamic Regret in Online Least-Squares
and Beyond}
\label{chap:trade-off}

In this chapter, we are mainly concerned with online discounted recursive least-squares
and how the discounted factor idea can be used to derive 
improved dynamic regret as well as dynamic/static regret trade-off in different problem setups.

As discussed in the previous chapters, 
the general procedure for online learning algorithms
is as follows:
at each time $t$, before the true time-dependent objective function $f_t(\theta)$ is revealed,
we need to make the prediction, $\theta_t$, 
based on the history of the observations $f_i(\theta)$, $i<t$.
Then the value of $f_t(\theta_t)$ is the loss suffered due to the lack
of the knowledge for the true objective function $f_t(\theta)$.  
Our prediction is then updated to include the information of $f_t(\theta)$.
This whole process is repeated until termination. The functions,
$f_t(\theta)$,
can be chosen from a function class in an arbitrary, possibly
adversarial manner.

The performance of an online learning algorithm is typically assessed
using various notions of \emph{regret}.  
\emph{Static regret}, $\mathcal{R}_s$, measures the difference between the algorithm's cumulative loss 
and the cumulative loss of the best fixed decision in hindsight
\cite{cesa2006prediction}: 
\begin{equation*}
\mathcal{R}_s = \sum\limits_{t=1}^T f_t(\theta_t) 
- \min\limits_{\theta\in \cS}\sum\limits_{t=1}^T f_t(\theta),
\end{equation*}
where $\cS$ is a constraint set. For convex functions, variations of
gradient descent achieve static regret of $O(\sqrt{T})$, while for strongly
convex functions these can be improved to $O(\log T)$
\cite{hazan2016introduction}. 
However, when the underlying environment is changing, 
due to the fixed comparator \cite{herbster1998tracking} the algorithm converges to,
static regret is no longer appropriate.

In order to better track the changes of the underlying environments, 
\emph{dynamic regret} is proposed to compare the cumulative loss against that
incurred by a comparison sequence, $z_1,\ldots,z_T\in \cS$:
\begin{equation*}
\mathcal{R}_d = \sum\limits_{t=1}^T f_t(\theta_t) 
- \sum\limits_{t=1}^T f_t(z_t)
\end{equation*}
The classic work on online gradient descent \cite{zinkevich2003online} achieves dynamic regret
of order $O(\sqrt{T}(1+V))$, where $V$ is a bound on the
path length of the comparison sequence:
\begin{equation*}
  \sum_{t=2}^{T} \|z_{t}-z_{t-1}\| \le V.
\end{equation*}
This has been improved to $O(\sqrt{T(1+V)})$ in \cite{zhang2018adaptive} by applying a
meta-optimization over step sizes.

In works 
such as \cite{mokhtari2016online,yang2016tracking}, it is assumed that
$z_t = \theta_t^* = \argmin_{\theta \in \cS} f_t(\theta)$.
We denote that particular version of dynamic regret by:
\begin{equation*}
\mathcal{R}_d^* = \sum\limits_{t=1}^T f_t(\theta_t) 
- \sum\limits_{t=1}^T f_t(\theta^*_t)
\end{equation*}
In particular, if $V^*$ is the corresponding path length:
\begin{equation}
  \label{eq:optimizerLength}
 V^* = \sum\limits_{t=2}^T\left\|\theta_t^*-\theta_{t-1}^*\right\|,
\end{equation}
then \cite{mokhtari2016online} shows that for strongly convex functions, $\mathcal{R}_d^*$ of order
$O(V^*)$ is obtained by gradient descent.
However, as pointed out by \cite{zhang2018adaptive},
$V^*$ metric is too pessimistic and unsuitable for stationary problems,
which will result in poor generalization due to the
random perturbation caused by sampling from the \emph{same} distribution.
Thus, a trade-off between static regret $\cR_s$ and dynamic regret $\cR_d^*$
is desired to maintain the abilities of both generalization to stationary problem and tracking to the local changes.

\emph{Adaptive regret} \cite{hazan2009efficient} is another metric when dealing with changing environments,
which is defined as the maximum static regret over any contiguous time interval.
Although it shares the similar goal as the dynamic regret, their relationship is still an open question.

Closely related to the problem of online learning is adaptive
filtering, in which time series data is predicted using a filter that
is designed from past data \cite{sayed2011adaptive}. The performance
of adaptive filters is typically measured in an average case setting
under statistical assumptions.
One of the most famous adaptive filtering techniques is recursive least
squares, which bears strong resemblance to the online Newton method of
\cite{hazan2007logarithmic}. The work in \cite{hazan2007logarithmic}
proves a static regret bound of $O(\log T)$ for online Newton methods, but dynamic regret bounds
are not known.

In order to have an algorithm that adapts to non-stationary data, it
is common to use a forgetting factor in recursive least squares.
\cite{guo1993performance} analyzed the effect of the forgetting factor
in terms of the tracking error covariance matrix,
and \cite{zhao2019distribution} made the tracking error analysis
with the assumptions that the noise is sub-Gaussian and 
the parameter follows a drifting model.
However, none of the analysis mentioned is done in terms of the regret,
which eliminates any noise assumption.
For the online learning,
\cite{garivier2011upper} analyzed the discounted UCB,
which uses the discounted empirical average as the estimate for the upper confidence bound.
\cite{russac2019weighted} used the weighted least-squares to update the linear bandit's underlying parameter.

This chapter is adapted from the published work \cite{yuan2019trading}, 
and we are mainly concerned with exp-concave and strongly convex objectives.
The following is a summary of the main results:
\begin{enumerate}
\item For exp-concave and strongly convex problems, we propose a discounted
  Online Newton algorithm  which generalizes recursive least squares
  with forgetting factors and the original online Newton method of
  \cite{hazan2007logarithmic}. We show how tuning the forgetting
  factor can achieve a dynamic regret bound of $\cR_d \le \max\{O(\log
  T),O(\sqrt{TV})\}$. This gives a rigorous analysis of forgetting
  factors in recursive least squares and improves the bounds described in \cite{zhang2018adaptive}.
  However, this choice requires a bound on the path length, $V$.
  For an alternative choice of forgetting factors, which does not
  require path length knowledge, we can
  simultaneously bound static regret by
  $\cR_s\le O(T^{1-\beta})$ and dynamic regret by $\cR_d\le
  \max\{O(T^{1-\beta}),O(T^\beta V)\}$. Note that tuning $\beta$
  produces a trade-off between static and dynamic regret. 
\item Based on the analysis of discounted recursive least squares, we
  derive a novel step size rule for online gradient descent. 
  Using this step size rule for smooth, strongly convex functions we
  obtain a static regret bound of
$\cR_s\le O(T^{1-\beta})$ and  a dynamic regret bound against
$\theta_t^* = \argmin_{\theta \in \cS} f_t(\theta)$ of $\cR_d^*\le
O(T^{\beta}(1+V^*))$. This improves the trade-off obtained in the
exp-concave case, since static regret or dynamic regret can be made
small by appropriate choice of $\beta \in (0,1)$. 
\item We show how the step size rule can be modified further so that
  gradient descent recovers the $\max\{O(\log T),O(\sqrt{TV})\}$
  dynamic regret bounds obtained by discounted Online Newton
  methods. However, as above, these bounds require knowledge of the
  bound on the path length, $V$.
\item Finally, we describe a meta-algorithm, similar to that used in
  \cite{zhang2018adaptive}, which can recover the $\max\{O(\log
  T),O(\sqrt{TV})\}$ dynamic regret bounds without knowledge of
  $V$. These bounds are tighter than those in
  \cite{zhang2018adaptive}, since they exploit exp-concavity to reduce
  the loss incurred by running an experts algorithm. 
  Furthermore, we give a lower bound for the corresponding problems,
  which matches the obtained upper bound for certain range of $V$.
  
\end{enumerate}

\section{Discounted Online Newton Algorithm}
\label{sec:discounted-ONS}

As described above, the online Newton algorithm from
\cite{hazan2007logarithmic} strongly resembles the classic recursive
least squares algorithm from adaptive filtering
\cite{sayed2011adaptive}. Currently, only the static regret of
the online Newton method is studied. To obtain more adaptive
performance, forgetting factors are often used in recursive least
squares. However, the regret of forgetting factor algorithms has not
been analyzed. This section proposes a class of algorithms that
encompasses recursive least squares with forgetting factors and the
online Newton algorithm. We show how dynamic regret bounds for these
methods can be obtained by tuning the forgetting factor.

First we describe the problem assumptions. Throughout this chapter we
assume that $f_t : \cS \to \bbR$ are convex, differentiable functions, $\cS$ is a
compact convex set, $\|x\|\le D$ for all $x\in \cS$, and $\|\nabla
f_t(x)\| \le G$ for all $x\in \cS$.  
Without loss of generality, we assume throughout the chapter that $D\ge1$.

In this section we assume that all of the objective functions,
$f_t:\cS\to \bbR$
are $\alpha$-exp-concave for some $\alpha >0$. This means that $e^{-\alpha f_t(\theta)}$ is
concave.

If $f_t$ is twice differentiable, it can be shown that $f_t$ is 
$\alpha$-exp-concave if and only if
  \begin{equation}
    \label{expHess}
  \nabla^2 f_t(x) \succeq \alpha
  \nabla f_t(x) \nabla f_t(x)^\top
  \end{equation}
  for all $x\in \cS$.

For an $\alpha$-exp-concave function $f_t$, Lemma 4.2 of \cite{hazan2016introduction} implies that
the following
bound holds for all $x$ and $y$ in $\cS$
with
$\rho$ $\le$ $\frac{1}{2}\min\{\frac{1}{4GD},\alpha\}$:
\begin{subequations}
  \label{eq:functionBounds}
  \begin{align}
    \label{eq:expBound}
     f_t(y)\ge f_t(x)+\nabla f_t(x)^\top (y-x)+ 
\frac{\rho}{2}(x-y)^\top\nabla f_t(x)\nabla
    f_t(x)^\top(x-y).
  \end{align}

  In some variations on the algorithm, we will require extra
  conditions on the function, $f_t$. 
  In particular, in one variation
  we will require $\ell$-strong convexity. As shown in the previous chapter,
  this means that there is a number
  $\ell >0$ such that
  \begin{align}
    \label{eq:strongConvex}
 f_t(y)\ge f_t(x)+\nabla f_t(x)^\top (y-x) + \frac{\ell}{2}\|x-y\|^2 
  \end{align}
  for all $x$ and $y$ in $\cS$. 
  For twice-differentiable functions,  
  strong convexity implies $\alpha$-exp-concavity for $\alpha \le
  \ell /G^2$ on $\cS$.

  In another variant, we will require that the following bound holds
  for all $x$ and $y$ in $\cS$: 
  \begin{align}
    \label{eq:quadBound}
    f_t(y)\ge f_t(x) + \nabla f_t(x)^\top (y-x) + \frac{1}{2}
      \|x-y\|_{\nabla^2 f_t(x)}^2.
\end{align}
This bound does not correspond to a commonly used convexity class, but
it does hold for the important special case of quadratic functions:
$f_t(x) = \frac{1}{2}\|y_t - A_t x\|^2$. This fact will be important for
analyzing the classic discounted recursive least-squares
algorithm. Note that if $y_t$ and $A_t$ are restricted to compact
sets, $\alpha$ can be chosen so that $f_t$ is $\alpha$-exp-concave.

Additionally, the algorithms for strongly convex functions and those
satisfying (\ref{eq:quadBound}) will require that the gradients
$\nabla f_t(x)$ are $u$-Lipschitz for all $x\in \cS$ 
(equivalently, $f_t(x)$ is $u$-smooth as discussed in the previous chapter),
which means the gradient $\nabla f_t(x)$ satisfies the relation 
\begin{equation*}
\left\|\nabla f_t(x)-\nabla f_t(y)\right\| \le u\left\|x-y\right\|, \forall t.
\end{equation*}
This smoothness condition is  equivalent to 
$f_t(y)\le f_t(x)+\nabla
f_t(x)^\top(y-x)+\frac{u}{2}\left\|y-x\right\|^2$ and implies, in 
particular, that $\nabla^2 f_t(x) \preceq u I$.
\end{subequations}

\begin{algorithm}
\caption{Discounted Online Newton Step}
\label{alg:discountedNewton}
  \begin{algorithmic}
    \STATE{Given constants $\epsilon >0$, $\eta >0$, and $\gamma \in (0,1)$.}
    \STATE{Let $\theta_1 \in \cS$ and $P_0 = \epsilon I$.}
    \FOR{t=1,\ldots,T}
    \STATE{ Play $\theta_t$ and incur loss $f_t(\theta_t)$ }
    \STATE{ Observe $\nabla_t = \nabla f_t(\theta_t)$ and $H_t =
      \nabla^2 f_t(\theta_t)$ (if needed)}
    \STATE{ Update $P_t$:
      \begin{subequations}
        \begin{align}
          \label{eq:quasiP}
                            P_t &= \gamma P_{t-1} + \nabla_t \nabla_t^\top
                            && \textrm{(Quasi-Newton)}
                            \\
          \label{eq:fullP}
                            P_t &= \gamma P_{t-1} + H_t && \textrm{(Full-Newton)}
                          \end{align}
                          \end{subequations}
                        }
   \STATE{ Update $\theta_t$: $\theta_{t+1} = \Pi_{\cS}^{P_t}\left(\theta_t
       -\frac{1}{\eta} P_t^{-1}\nabla_t\right)$}
    \ENDFOR
  \end{algorithmic}
  
\end{algorithm}

To accommodate these three different cases, we propose Algorithm \ref{alg:discountedNewton},
in which $\Pi_\cS^{P_t}(y) = \argmin_{z\in \cS} \|z-y\|_{P_t}^2$
is the projection onto $\cS$ with respect to the norm induced by $P_t$. 

By using Algorithm \ref{alg:discountedNewton}, the following theorem can be obtained:
\begin{theorem}
  \label{thm:expConcaveThm}
  
  Consider the following three cases of Algorithm~\ref{alg:discountedNewton}:
  \begin{enumerate} 
  \item \label{it:exp} $f_t$ is $\alpha$-exp-concave. The algorithm
    uses $\eta \le
    \frac{1}{2}\min\{\frac{1}{4GD},\alpha\}$, $\epsilon = 1$
    \footnote{The value used here is only for proof simplicity, please see Section \ref{sec:meta-alg} for more discussion.}, 
    and \eqref{eq:quasiP}. 
  \item  \label{it:strong}
    $f_t$ is $\alpha$-exp-concave and $\ell$-strongly convex while
    $\nabla f_t(x)$ is $u$-Lipschitz. The algorithm uses $\eta \le
    \ell / u$, $\epsilon = 1$, and \eqref{eq:fullP}.
  \item $f_t$ is $\alpha$-exp-concave and satisfy
    (\ref{eq:quadBound}) while $\nabla f_t(x)$ is
    $u$-Lipschitz. The algorithm uses $\eta \le 1$, $\epsilon = 1$, and \eqref{eq:fullP}.
  \label{it:quad}
\end{enumerate}
For each of these cases, there are positive constants $a_1,\ldots a_4$ such that  
  \begin{equation*}
  \begin{array}{l}
    \sum_{t=1}^T (f_t(\theta_t)-f_t(z_t)) \le -a_1 T \log \gamma -a_2\log(1-\gamma)
     + \frac{a_3}{1-\gamma} V + a_4
   \end{array}
 \end{equation*}
 for all $z_1,\ldots,z_T\in \cS$ such that $\sum_{t=2}^T
 \|z_t-z_{t-1}\| \le V$.

\end{theorem}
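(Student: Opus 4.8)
The plan is to follow the template of the online Newton analysis of \cite{hazan2007logarithmic}, but to track the effect of the forgetting factor $\gamma$ and of the \emph{moving} comparator $z_t$ carefully. I would treat the three cases uniformly, since they differ only in the very first step: which quadratic lower bound on $f_t(z_t)-f_t(\theta_t)$ is available, and whether the curvature matrix $M_t$ used in the update equals $\nabla_t\nabla_t^\top$ (Quasi-Newton, \eqref{eq:quasiP}) or $H_t$ (Full-Newton, \eqref{eq:fullP}). First I would establish a one-step inequality. Writing $\nabla_t=\nabla f_t(\theta_t)$, the relevant convexity class gives
\[
f_t(\theta_t)-f_t(z_t)\le \nabla_t^\top(\theta_t-z_t)-\tfrac{\rho}{2}\|\theta_t-z_t\|_{M_t}^2,
\]
using \eqref{eq:expBound} with $M_t=\nabla_t\nabla_t^\top$ for Case~\ref{it:exp}, \eqref{eq:strongConvex} together with $H_t\preceq uI$ for Case~\ref{it:strong}, and \eqref{eq:quadBound} for Case~\ref{it:quad}; in each case the effective $\rho$ and the stated constraint on $\eta$ (namely $\eta\le\tfrac12\min\{\tfrac1{4GD},\alpha\}$, $\eta\le\ell/u$, $\eta\le1$) are exactly what is needed so that $\rho\ge\eta$.

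Next, since $\theta_{t+1}=\Pi_\cS^{P_t}\!\big(\theta_t-\tfrac1\eta P_t^{-1}\nabla_t\big)$, the generalized Pythagorean theorem in the $P_t$-norm gives $\|\theta_{t+1}-z_t\|_{P_t}^2\le\|\theta_t-\tfrac1\eta P_t^{-1}\nabla_t-z_t\|_{P_t}^2$; expanding and solving for the linear term yields
\[
\nabla_t^\top(\theta_t-z_t)\le\tfrac{\eta}{2}\big(\|\theta_t-z_t\|_{P_t}^2-\|\theta_{t+1}-z_t\|_{P_t}^2\big)+\tfrac1{2\eta}\nabla_t^\top P_t^{-1}\nabla_t.
\]
The key algebraic step is then to substitute $P_t=\gamma P_{t-1}+M_t$ into the telescoping term. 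Using $\|\theta_t-z_t\|_{P_t}^2=\gamma\|\theta_t-z_t\|_{P_{t-1}}^2+\|\theta_t-z_t\|_{M_t}^2$ and $\rho\ge\eta$ to discard the leftover $M_t$-term, the one-step bound becomes
\[
f_t(\theta_t)-f_t(z_t)\le\tfrac{\gamma\eta}{2}\|\theta_t-z_t\|_{P_{t-1}}^2-\tfrac{\eta}{2}\|\theta_{t+1}-z_t\|_{P_t}^2+\tfrac1{2\eta}\nabla_t^\top P_t^{-1}\nabla_t.
\]
Summing over $t$ and re-indexing the first two terms, the cross terms pair up as $\tfrac{\eta}{2}\big(\gamma\|\theta_{t+1}-z_{t+1}\|_{P_t}^2-\|\theta_{t+1}-z_t\|_{P_t}^2\big)$, together with boundary contributions that are $O(1)$ and absorbed into $a_4$ using $P_0=I$ and the diameter bound $\|x\|\le D$.

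Two estimates then finish the proof. For the curvature term, the matrix determinant lemma applied to $P_t=\gamma P_{t-1}+M_t$ gives $\nabla_t^\top P_t^{-1}\nabla_t\le\log\big(|P_t|/|\gamma P_{t-1}|\big)=\log|P_t|-\log|P_{t-1}|-n\log\gamma$ (and analogously with $H_t$ for Cases~\ref{it:strong}--\ref{it:quad}). Telescoping produces $-nT\log\gamma+\log|P_T|-\log|P_0|$, and since every eigenvalue of $P_t$ is at most $\epsilon+G^2/(1-\gamma)$ by summing the geometric series, $\log|P_T|\le-n\log(1-\gamma)+O(1)$; this yields the $-a_1T\log\gamma-a_2\log(1-\gamma)$ terms plus a constant. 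For the comparator-drift term, I would bound $\|\theta_{t+1}-z_{t+1}\|_{P_t}\le\|\theta_{t+1}-z_t\|_{P_t}+\|z_{t+1}-z_t\|_{P_t}$, use $\gamma\le1$ and the diameter bound on $\|\theta_{t+1}-z_t\|_{P_t}$ together with $\|P_t\|_2\le\epsilon+G^2/(1-\gamma)$, so each cross term is at most $O\!\big(\tfrac1{1-\gamma}\big)\|z_{t+1}-z_t\|$; summing and using $\sum_t\|z_{t+1}-z_t\|\le V$ gives the $\tfrac{a_3}{1-\gamma}V$ term.

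The main obstacle I expect is getting the comparator-drift term to scale as $\tfrac{V}{1-\gamma}$ rather than $\tfrac{V}{(1-\gamma)^2}$: a naive completion of squares on $\gamma(b+\delta)^2-b^2$ leaves a $\delta^2=\|z_{t+1}-z_t\|_{P_t}^2$ term which, after bounding $\delta^2\le\|P_t\|_2\|z_{t+1}-z_t\|^2$, would cost an extra factor of $\tfrac1{1-\gamma}$. The fix is to keep the cross term linear in $\delta$ and bound the other factor $\|\theta_{t+1}-z_t\|_{P_t}$ by the constraint-set diameter instead of completing the square, so that only one power of $\|P_t\|_2=O\!\big(\tfrac1{1-\gamma}\big)$ appears. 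The remaining care is routine: verifying that the $\rho\ge\eta$ cancellation and the determinant bound carry over to Cases~\ref{it:strong} and~\ref{it:quad} with $H_t$ in place of $\nabla_t\nabla_t^\top$.
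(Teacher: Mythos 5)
Your proposal is correct and follows essentially the same route as the paper's proof: the generalized Pythagorean step, the diameter-based bound that keeps the comparator-drift cross term linear in $\|z_{t+1}-z_t\|$ (so only one factor of $\|P_t\|\le \epsilon + c_3/(1-\gamma)$ appears, yielding the $\frac{a_3}{1-\gamma}V$ term), the cancellation of the curvature term via $\rho\ge\eta$ against the increments $P_t-\gamma P_{t-1}$, and the log-determinant telescoping (Lemma 4.5 of \cite{hazan2016introduction}, with $\nabla_t\nabla_t^\top\preceq\frac{1}{\alpha}H_t$ supplying the constant in the full-Newton cases) combined with the eigenvalue bound on $P_T$ to produce $-a_1T\log\gamma - a_2\log(1-\gamma)$. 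Your only deviation is cosmetic: you substitute $P_t=\gamma P_{t-1}+M_t$ into the one-step bound before summing, whereas the paper sums first and then uses $\eta(P_t-P_{t-1})\preceq\eta M_t$; these are algebraically equivalent, and your worry that a completed-square treatment of the drift would cost $(1-\gamma)^{-2}$ is unfounded (since $\sum_t\|z_{t+1}-z_t\|^2\le 2DV$), though the fix you adopt is exactly the paper's.
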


Before proving the theorem, let us first describe
some consequences of it. 

\begin{corollary}
\label{cor::newton_static}
  Setting $\gamma = 1-T^{-\beta}$ with $\beta\in(0,1)$ leads to the following form:
  \begin{equation*}
  \begin{array}{ll}
   \sum_{t=1}^T (f_t(\theta_t)-f_t(z_t))
   \le O(T^{1-\beta} + \beta \log T+ T^{\beta} V)
   \end{array}
 \end{equation*}
\end{corollary}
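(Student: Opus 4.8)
The plan is to substitute $\gamma = 1 - T^{-\beta}$ directly into the bound established in Theorem~\ref{thm:expConcaveThm} and then estimate each of the four resulting terms asymptotically in $T$. Writing $1-\gamma = T^{-\beta}$, three of the four terms become essentially immediate, and only the $-a_1 T\log\gamma$ term requires a short logarithm estimate. So this is really an exercise in substitution plus one elementary inequality, not a fresh argument.

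First I would dispose of the easy terms. The term $-a_2\log(1-\gamma)$ becomes exactly $-a_2\log(T^{-\beta}) = a_2\beta\log T$, which supplies the $\beta\log T$ contribution. The path-length term $\frac{a_3}{1-\gamma}V$ becomes $a_3 T^{\beta}V$, which supplies the $T^{\beta}V$ contribution. The constant $a_4$ is $O(1)$ and, since $\beta\in(0,1)$ forces $1-\beta>0$ and hence $T^{1-\beta}\to\infty$, it is absorbed into the $O(T^{1-\beta})$ term.

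The remaining term $-a_1 T\log\gamma = -a_1 T\log(1-T^{-\beta})$ is where the only real work lies. The key is the elementary inequality $-\log(1-x)\le \frac{x}{1-x}$ for $x\in(0,1)$ (which follows since $\frac{x}{1-x}+\log(1-x)$ vanishes at $x=0$ and has derivative $\frac{x}{(1-x)^2}\ge 0$), applied with $x = T^{-\beta}$. This yields
\[
-a_1 T\log(1-T^{-\beta}) \le \frac{a_1 T^{1-\beta}}{1-T^{-\beta}}.
\]
For $T$ large enough that $T^{-\beta}\le \tfrac12$ the denominator is at least $\tfrac12$, so this is bounded by $2a_1 T^{1-\beta} = O(T^{1-\beta})$. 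Summing the four estimates then gives the claimed bound $O(T^{1-\beta} + \beta\log T + T^{\beta}V)$.

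The main (and essentially only) obstacle is controlling $-\log(1-T^{-\beta})$; once the inequality $-\log(1-x)\le x/(1-x)$ is invoked, everything else is routine. I would also remark that this step is exactly what reveals why the choice $\gamma = 1 - T^{-\beta}$ is natural: it balances the $O(T^{1-\beta})$ forgetting cost against the $O(T^{\beta}V)$ path-length penalty, so that tuning $\beta\in(0,1)$ produces the advertised static/dynamic regret trade-off.
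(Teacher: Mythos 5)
Your proposal is correct and follows essentially the same route as the paper: substitute $\gamma = 1-T^{-\beta}$ into Theorem~\ref{thm:expConcaveThm}, control the dominant term $-a_1 T\log\gamma$ via the elementary inequality $-\log(1-x)\le \frac{x}{1-x}$ on $[0,1)$, and handle the remaining three terms by direct calculation. The only differences are cosmetic: you spell out the derivation of the inequality and make the absorption of $\frac{1}{1-T^{-\beta}}$ into the constant explicit (via $T^{-\beta}\le\tfrac12$ for large $T$), steps the paper leaves implicit.
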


\begin{proof}
  The first term is bounded as:
  \begin{align*}
    -T \log \gamma = -T \log(1-T^{-\beta}) 
                    \le \frac{T^{1-\beta}}{1-T^{-\beta}} = O(T^{1-\beta}),
  \end{align*}
  where the inequality follows from $-\log(1-x) \le
  \frac{x}{1-x}$ for $0 \le x < 1$. 

  The other terms follow by direct calculation.
\end{proof}

This corollary guarantees that the static regret is bounded in the
order of $O(T^{1-\beta})$ since $V=0$ in that case. The dynamic regret is of order
$O(T^{1-\beta}+T^{\beta} V)$. By choosing $\beta \in (0,1)$, we are
guaranteed that both the static and dynamic regrets are both sublinear
in $T$ as long as $V< O(T)$. Also, small static regret can be obtained by setting
$\beta$ near $1$.

In the setting of Corollary~\ref{cor::newton_static}, the algorithm
parameters do not depend on the path length $V$. Thus, the bounds hold
for any path length, whether or not it is known a priori.
The next corollary shows how tighter bounds could be obtained if
knowledge of $V$ were exploited in choosing the discount factor,
$\gamma$. 

\begin{corollary}
  \label{cor:logBounds}
Setting $\gamma = 1-\frac{1}{2}\sqrt{\frac{\max\{V,\log^2 T/T\}}{2DT}}$
leads to the form:
\begin{equation*}
\sum_{t=1}^T (f_t(\theta_t)-f_t(z_t)) \le \max\{O(\log T),O(\sqrt{TV})\}
\end{equation*}
\end{corollary}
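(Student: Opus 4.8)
The plan is to substitute the prescribed discount factor directly into the master bound of Theorem~\ref{thm:expConcaveThm} and then show that each of its four terms is controlled by $\max\{O(\log T), O(\sqrt{TV})\}$. I would write $W = \max\{V, \log^2 T / T\}$, so that the chosen $\gamma$ satisfies $1 - \gamma = \frac{1}{2}\sqrt{W/(2DT)}$, and handle the four summands $-a_1 T\log\gamma$, $-a_2\log(1-\gamma)$, $\frac{a_3}{1-\gamma}V$, and $a_4$ one at a time.

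For the first term, I would reuse the elementary inequality $-\log(1-x) \le x/(1-x)$ already invoked in Corollary~\ref{cor::newton_static}, applied with $x = 1-\gamma$, to get $-T\log\gamma \le T(1-\gamma)/\gamma$. Since $1-\gamma \to 0$ as $T$ grows, $\gamma \ge 1/2$ eventually, so this is at most $2T(1-\gamma) = T\sqrt{W/(2DT)} = O(\sqrt{TW})$. For the third term, using $W \ge V$ I would write $\frac{V}{1-\gamma} = 2V\sqrt{2DT/W} = 2\sqrt{2DT}\cdot(V/\sqrt{W}) \le 2\sqrt{2DT}\cdot\sqrt{W} = O(\sqrt{TW})$, where the inequality uses $V/\sqrt{W} \le \sqrt{W}$, which follows from $V \le W$. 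Thus terms one and three are both $O(\sqrt{TW})$.

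The role of the $\log^2 T/T$ floor in $W$ is to tame the second term. Since $-\log(1-\gamma) = \log 2 + \frac{1}{2}\log(2DT/W)$ and the floor gives $W \ge \log^2 T/T$, I would bound $2DT/W \le 2DT^2/\log^2 T$, so that $-\log(1-\gamma) = O(\log T)$. The fourth term is a constant. Collecting everything, the total is $O(\sqrt{TW}) + O(\log T)$.

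Finally I would split on which argument attains the maximum in $W$. If $V \ge \log^2 T/T$ then $W = V$ and $\sqrt{TW} = \sqrt{TV}$; moreover $TV \ge \log^2 T$ forces $\sqrt{TV} \ge \log T$, so the bound collapses to $O(\sqrt{TV})$. If instead $V < \log^2 T/T$ then $W = \log^2 T/T$ and $\sqrt{TW} = \log T$, giving $O(\log T)$. Together these two cases yield exactly $\max\{O(\log T), O(\sqrt{TV})\}$. The proof is essentially bookkeeping once Theorem~\ref{thm:expConcaveThm} is in hand; the only genuinely delicate point is recognizing that the artificial floor $\log^2 T/T$ inside $W$ is precisely what is needed to keep the $-a_2\log(1-\gamma)$ term from blowing up when $V$ is very small, while simultaneously being small enough not to spoil the $\sqrt{TV}$ rate when $V$ is large.
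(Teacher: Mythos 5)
Your proof is correct and takes essentially the same route the paper intends: it instantiates the master bound of Theorem~\ref{thm:expConcaveThm} at the prescribed $\gamma$, reuses the inequality $-\log(1-x)\le x/(1-x)$ from Corollary~\ref{cor::newton_static}, and performs the same case split on $\max\{V,\log^2 T/T\}$ that the paper carries out explicitly in the analogous Corollary~\ref{corr:strongly_dynamic_regret}. The only cosmetic remark is that $\gamma\ge 1/2$ holds for \emph{all} $T$ (not just eventually), since $V\le 2DT$ and $D\ge 1$ give $1-\gamma\le 1/2$ directly, exactly as the paper notes in the strongly convex case.
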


The proof is similar to the proof of Corollary \ref{cor::newton_static}.

Note that Corollary~\ref{cor:logBounds} implies that the discounted
Newton method achieves logarithmic static regret by setting
$V=0$. This matches the bounds obtained in
\cite{hazan2007logarithmic}. For positive path lengths bounded by $V$,
we improve the $O(\sqrt{T(1+V)})$ dynamic bounds from
\cite{zhang2018adaptive}. However, the algorithm above current requires
knowing a bound on the path length,
whereas \cite{zhang2018adaptive} achieves its bound without knowing
the path length, a priori.  

If we view $V$ as the variation budget that $z_1^T = {z_1,\dots,z_T}$ can vary over $\cS$ like in \cite{besbes2015non},
and use this as a pre-fixed value to allow the comparator sequence to vary arbitrarily
over the set of admissible comparator sequence 
$\{z_1^T\in \cS:\sum\limits_{t=2}^T\left\|z_t-z_{t-1}\right\|\le V \}$,
we can tune $\gamma$ in terms of $V$.

In order to bound the dynamic regret
without knowing a bound on the path length, the method of
\cite{zhang2018adaptive} runs a collection of gradient descent algorithms
in parallel with different step sizes and then uses a meta-optimization
\cite{cesa2006prediction} to weight their solutions. In a later section,
we will show how a related meta-optimization over the discount factor
leads to 
$\max\{O(\log T),O(\sqrt{TV})\}$ dynamic regret bounds for unknown
$V$.

For the Algorithm \ref{alg:discountedNewton}, 
we need to invert $P_t$,
which can be achieved in time $O(n^2)$  for the Quasi-Newton case
in \eqref{eq:quasiP} by utilizing the matrix inversion lemma.
However, for the Full-Newton step \eqref{eq:fullP}, the inversion
requires  $O(n^3)$ time.

\paragraph{Proof of Theorem~\ref{thm:expConcaveThm}:}

Before proving the theorem, the following observation is
helpful. 

\begin{lemma}\label{lem:pBound}
  If $P_t$ is updated via \eqref{eq:quasiP} then $\|P_t\| \le \epsilon
  + \frac{G^2}{1-\gamma}$, while if $P_t$ is updated via
  $\eqref{eq:fullP}$, then $\|P_t\| \le \epsilon + \frac{u}{1-\gamma}$.
\end{lemma}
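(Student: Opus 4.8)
The plan is to unroll the recursion defining $P_t$ and then bound the resulting geometric series, using only the boundedness of the gradients (respectively the Hessians) together with $\gamma \in (0,1)$. First I would expand the update starting from $P_0 = \epsilon I$. For the Quasi-Newton case \eqref{eq:quasiP} this gives
\[
  P_t = \gamma^t \epsilon I + \sum_{i=1}^t \gamma^{t-i}\, \nabla_i \nabla_i^\top,
\]
and analogously, for the Full-Newton case \eqref{eq:fullP},
\[
  P_t = \gamma^t \epsilon I + \sum_{i=1}^t \gamma^{t-i}\, H_i .
\]
Both identities follow by a one-line induction on $t$.

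Next I would apply the triangle inequality for the operator norm together with elementary bounds on each increment. Since $\|\nabla_i\| \le G$, the rank-one term obeys $\|\nabla_i \nabla_i^\top\| = \|\nabla_i\|^2 \le G^2$; and since the $u$-smoothness assumption yields $0 \preceq H_i \preceq u I$, the Hessian obeys $\|H_i\| \le u$. Combining these with the unrolled expressions gives, in the Quasi-Newton case,
\[
  \|P_t\| \le \gamma^t \epsilon + G^2 \sum_{i=1}^t \gamma^{t-i},
\]
and the Full-Newton case is identical with $G^2$ replaced by $u$.

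Finally I would control the two remaining quantities. Because $\gamma \in (0,1)$ we have $\gamma^t \le 1$, so $\gamma^t \epsilon \le \epsilon$, and the geometric sum satisfies $\sum_{i=1}^t \gamma^{t-i} = \sum_{j=0}^{t-1}\gamma^j \le \frac{1}{1-\gamma}$. Substituting these bounds yields $\|P_t\| \le \epsilon + \frac{G^2}{1-\gamma}$ in the Quasi-Newton case and $\|P_t\| \le \epsilon + \frac{u}{1-\gamma}$ in the Full-Newton case, which is exactly the claim.

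This argument is essentially routine, so I do not anticipate a genuine obstacle; the only points requiring care are justifying $\|H_t\| \le u$ from the smoothness hypothesis (via $\nabla^2 f_t(x) \preceq u I$ and the positive semidefiniteness of the Hessian of a convex function) and observing that, since every summand is symmetric positive semidefinite, nothing essential is lost in passing to the operator-norm triangle inequality. These uniform bounds on $\|P_t\|$ are the key technical ingredient that later feeds into controlling the per-step terms of the regret analysis.
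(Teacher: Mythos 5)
Your proposal is correct and is essentially the paper's argument in unrolled form: the paper proves the same bound by a one-step induction ($\|P_t\| \le \gamma\|P_{t-1}\| + G^2$, with the invariant $\|P_{t-1}\|\le \epsilon + \frac{G^2}{1-\gamma}$ preserved), which is just the compact version of your explicit geometric-series summation, and both rest on the identical ingredients $\|\nabla_t\nabla_t^\top\|\le G^2$, $\|H_t\|\le u$ from $0 \preceq \nabla^2 f_t \preceq uI$, and $\gamma\in(0,1)$. No gap; your care in justifying $\|H_t\|\le u$ via convexity plus $u$-smoothness matches what the paper implicitly assumes.
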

\begin{proof}
  First consider the quasi-Newton case. The bound holds at
  $P_0=\epsilon I$, so assume that it holds at time $t-1$ for $t\ge
  1$.  Then, by induction we have
  \begin{align*}
    \|P_t\| = \|\gamma P_{t-1} + \nabla_t \nabla_t \| 
            \le \gamma \|P_{t-1}\| + G^2 
            \le \gamma \epsilon + \frac{G^2}{1-\gamma} 
    \le \epsilon + \frac{G^2}{1-\gamma}.
  \end{align*}
  The full-Newton case is identical, except it uses the bound
  $\|H_t\|\le u$.
  
\end{proof}

  The generalized Pythagorean theorem implies that
  \begin{align*}
    \|\theta_{t+1}-z_t\|_{P_t}^2 & \le \left\|\theta_t -
                              \frac{1}{\eta}P_{t}^{-1} \nabla_t - z_t
                              \right\|_{P_t}^2\\
    &= \|\theta_t-z_t\|_{P_t}^2 + \frac{1}{\eta^2} \nabla_t^\top P_t^{-1}
      \nabla_t -\frac{2}{\eta} \nabla_t^\top (\theta_t-z_t).
  \end{align*}
  Re-arranging shows that
  \begin{equation}
    \label{eq:Pythagorean}
    \nabla_t^\top (\theta_t-z_t) \le \frac{1}{2\eta}\nabla_t^\top P_t^{-1} \nabla_t 
    + \frac{\eta}{2}\Big(\|\theta_t- z_t\|_{P_t}^2-\|\theta_{t+1}-z_t\|_{P_t}^2\Big)
  \end{equation}

  Let $c_1$ be the upper bound on $\|P_t\|$ from
  Lemma~\ref{lem:pBound}. Then we can lower bound
  $\|\theta_{t+1}-z_t\|_{P_t}^2$ by
  \begin{align}
    \nonumber
    \|\theta_{t+1}-z_t\|_{P_t}^2 & = \|\theta_{t+1}-z_{t+1}\|_{P_t}^2 +
                                   \|z_{t+1}-z_t\|_{P_t}^2
                              +2 (\theta_{t+1}-z_{t+1})^\top P_t
                              (z_{t+1}-z_t)\\
    \label{eq:csBound}
    & \ge \|\theta_{t+1}-z_{t+1}\|_{P_t}^2 - 4 D c_1 \|z_{t+1}-z_t\|
  \end{align}

  Combining \eqref{eq:Pythagorean} and \eqref{eq:csBound} gives
  \begin{multline}
    \nonumber
    \nabla_t^\top (\theta_t-z_t) \le \frac{1}{2\eta}\nabla_t^\top P_t^{-1}
    \nabla_t + 2 D c_1 \eta \|z_{t+1}-z_t\| +
    \frac{\eta}{2}\left(\|\theta_t- z_t\|_{P_t}^2-\|\theta_{t+1}-z_{t+1}\|_{P_t}^2 \right)
  \end{multline}
  Summing over $t$, dropping the term $-\|\theta_{T+1}-z_{T+1}\|_{P_T}^2$, setting $z_{T+1} = z_T$,  and re-arranging gives
  \begin{multline}
    \label{eq:sum}
    \sum_{t=1}^T \nabla_t^\top (\theta_t-z_t) \le \sum_{t=1}^T \frac{1}{2\eta}\nabla_t^\top P_t^{-1}
    \nabla_t  + 2 D c_1\eta V \\
    +\frac{\eta}{2} \epsilon \|\theta_1-z_1\|^2+
    \frac{\eta}{2}\sum_{t=1}^T(\theta_t-z_t)^\top (P_t-P_{t-1})(\theta_t-z_t)
  \end{multline}

  Now we will see how the choices of $\eta$ enable the final sum from
  \eqref{eq:sum} to cancel the terms from \eqref{eq:functionBounds}.
  In  Case~\ref{it:exp}, we have that $\eta(P_t-P_{t-1}) \preceq \eta
  \nabla_t \nabla_t^\top$ and the bound from \eqref{eq:expBound} holds
  for $\rho=\eta$. In Case~\ref{it:strong}, $\eta(P_t-P_{t-1}) \preceq
  \eta H_t \preceq \ell I$. In Case~\ref{it:quad}, $\eta(P_t-P_{t-1})
  \preceq \eta H_t \preceq H_t$. Thus in all cases, $\eta$ has been
  chosen so that combining the appropriate term of
  (\ref{eq:functionBounds}) with \eqref{eq:sum} gives  
  \begin{equation}
  \begin{array}{ll}
    \label{eq:cancelled}
    \sum_{t=1}^T (f_t(\theta_t)-f_t(z_t)) &\le  \sum_{t=1}^T \frac{1}{2\eta}\nabla_t^\top P_t^{-1}
    \nabla_t + 2 D c_1\eta V + 2 \eta \epsilon D^2
  \end{array}
  \end{equation}

  Now we will bound the first sum of \eqref{eq:cancelled}. Note that
  $\nabla_t^\top P_t^{-1}\nabla_t = \langle P_t^{-1}
  ,\nabla_t\nabla_t^\top \rangle$. 
  In Case~\ref{it:exp}, we have that $\nabla_t \nabla_t^\top =
  P_t-\gamma P_{t-1}$, while in Cases \ref{it:strong} and
  \ref{it:quad}, we have that $\nabla_t \nabla_t^\top \preceq
  \frac{1}{\alpha} H_t = \frac{1}{\alpha}(P_t-\gamma P_{t-1})$.
  So, in Case~\ref{it:exp}, let $c_2 = 1$ and in Cases~\ref{it:strong} and
  \ref{it:quad}, let $c_2 = 1/\alpha$. Then in all cases, we have that
  \begin{equation}
    \label{eq:traceBound}
  \nabla_t^\top P_{t}^{-1} \nabla_t  \le c_2 \langle P_t^{-1},P_t-\gamma P_{t-1}\rangle.
  \end{equation}

  Lemma 4.5 of \cite{hazan2016introduction} shows that
  \begin{equation}
    \label{eq:logBound}
     \langle P_t^{-1},P_t-\gamma P_{t-1}\rangle \le \log
    \frac{|P_t|}{|\gamma P_{t-1}|} =\log
    \frac{|P_t|}{|P_{t-1}|}-n\log \gamma, 
  \end{equation}
  where $n$ is the dimension of $x_t$. 

  Combining (\ref{eq:traceBound}) with (\ref{eq:logBound}), summing,
  and then using the bound that $\|P_T\| \le c_1$
  gives,
  \begin{align}
    \nonumber
    \sum_{t=1}^T \nabla_t^\top P_t^{-1} \nabla_t &\le c_2 \log |P_T| -
                                                   c_2 n \log \epsilon -n T \log \gamma \\
    \label{eq:telescope}
    & \le c_2 n \log \frac{c_1}{\epsilon} -c_2nT \log \gamma
  \end{align}

    Recall that $c_1 = \epsilon + \frac{c_3}{1-\gamma}$, where $c_3 =
    G^2$ or $c_3 = u$, depending on the case. Then a more explicit upper bound on
    \eqref{eq:telescope} is given by: 
    \begin{equation}
      \label{eq:telescopeCrude}
      \sum_{t=1}^t \nabla_t^\top P_t^{-1} \nabla_t \le
      c_2 n \log\left(
        1 + \frac{c_3}{\epsilon(1-\gamma)}
        \right)
      - c_2 n T \log \gamma. 
\end{equation}
    
  Combining (\ref{eq:cancelled}) and (\ref{eq:telescopeCrude}) gives the bound:
  \begin{equation*}
  \begin{array}{l}
    \sum_{t=1}^T (f_t(\theta_t)-f_t(z_t)) \le
    -\frac{c_2 nT}{2\eta} \log \gamma + \\
    \frac{c_2 n}{2\eta}\log\left(1+\frac{c_3}{\epsilon(1-\gamma)}\right)+
    2D\eta\left(\epsilon + \frac{c_3}{1-\gamma}\right)V + 
    2\eta \epsilon D^2
  \end{array}
  \end{equation*}
    The desired regret bound can now be found by simplifying the
    expression on the right, using the fact that $\frac{1}{1-\gamma} >
    1$.  
\hfill\qed

\section{From Forgetting Factors to a Step Size Rule}
\label{sec:discounted-RLS}

In the next few sections, we aim to derive gradient descent rules
that achieve similar static and regret bounds to the discounted Newton
algorithm, without the cost of inverting matrices. We begin by
analyzing the special
case of quadratic functions of the form:
\begin{equation}
\label{eq::quadratic_loss}
f_t(\theta) = \frac{1}{2}\left\|\theta - y_t\right\|^2,
\end{equation}
where $y_t \in \cS$.
In this case, we will see that discounted recursive least squares
can be interpreted as online gradient descent with a special
step size rule.
We will show how this step size rule achieves a trade-off between
static regret and dynamic regret with the specific comparison sequence $\theta_t^*
=y_t= \argmin_{\theta \in \cS} f_t(\theta)$.
For a related analysis of more general
quadratic functions, $f_t(\theta)=
\frac{1}{2}\|A_t \theta - y_t\|^2$, please see the appendix.

Note that the previous section focused on dynamic regret for arbitrary
comparison sequences, $z_1^T \in \cS$. The analysis techniques in this
and the next section are specialized to comparisons against
$\theta_t^* = \argmin_{\theta\in\cS} f_t(\theta)$, as studied in works
such as \cite{mokhtari2016online,yang2016tracking}.

Classic discounted recursive least squares corresponds to
Algorithm~\ref{alg:discountedNewton} running with full Newton steps, $\eta =
1$, and initial matrix $P_0=0$.  When $f_t$ is defined as in
\eqref{eq::quadratic_loss}, we have that $P_t = \sum_{k=0}^{t-1}
\gamma^k I$. Thus, the update rule can be expressed in the following
equivalent ways:
\begin{subequations}
\label{eq::quad_dis_update}
\begin{align}
\theta_{t+1} & = \argmin_{\theta\in\cS}\sum\limits_{i=1}^{t}\gamma^{i-1} f_{t+1-i}(\theta) \\
             & = \frac{\gamma-\gamma^t}{1-\gamma^t}\theta_t + \frac{1-\gamma}{1-\gamma^t}y_t\\
             &= \theta_t - P_t^{-1}\nabla f_t(\theta_t) \\
             & = \theta_t - \eta_t \nabla f_t(\theta_t),
\end{align}
\end{subequations}
where $\eta_t = \frac{1-\gamma}{1-\gamma^t}$. 
Note that since $y_t \in \cS$, no projection steps are needed.

The above update is the ubiquitous gradient descent with a changing step size.
The only difference between standard methods is the choice of $\eta_t$,
which will lead to the useful trade-off between dynamic and static regret.

By using the above update, we can get the relationship between 
$\theta_{t+1}-\theta_t^*$ and $\theta_t - \theta_t^*$ as the following result:
\begin{lemma}
  \label{lem:quadratic_var_path_length}
  {\it
  Let $\theta_t^* = \argmin_{\theta\in \cS} f_t(\theta)$ in Eq.\eqref{eq::quadratic_loss}. 
  When using the discounted recursive least-squares update in Eq.\eqref{eq::quad_dis_update},
  we have the following relation:
  \begin{equation*}
  \theta_{t+1}-\theta_t^* = \frac{\gamma-\gamma^t}{1-\gamma^t}(\theta_{t} - \theta_t^*)
  \end{equation*}

  }
\end{lemma}

\begin{proof}
Since $\theta_t^*$ $=$ $\argmin f_t(\theta)$ $=y_t$, for $\theta_{t+1}-\theta_t^*$, we have:
\begin{equation*}
\begin{array}{ll}
\theta_{t+1}-\theta_t^* = \theta_{t+1} - y_t
                        = \frac{\gamma-\gamma^t}{1-\gamma^t}\theta_t + \frac{1-\gamma}{1-\gamma^t}y_t - y_t
                        = \frac{\gamma-\gamma^t}{1-\gamma^t}(\theta_t - y_t)
                        = \frac{\gamma-\gamma^t}{1-\gamma^t}(\theta_t - \theta_t^*)
\end{array}
\end{equation*}

\end{proof}

Recall from \eqref{eq:optimizerLength} that the path length of optimizer sequence is denoted by
$V^*$. 
With the help of Lemma \ref{lem:quadratic_var_path_length}, 
we can upper bound the dynamic regret
in the next theorem:
\begin{theorem}
\label{thm::quad_dynamic_regret}
{\it Let $\theta_t^*$ be the solution to $f_t(\theta)$ in Eq.\eqref{eq::quadratic_loss}. 
  When using the discounted recursive least-squares update in Eq.\eqref{eq::quad_dis_update} 
  with $1-\gamma = 1/T^{\beta}, \beta\in (0,1)$,
  we can upper bound the dynamic regret as:
  \begin{equation*}
  \mathcal{R}_d^* \le 2DT^{\beta}\big(\left\|\theta_1-\theta_1^*\right\|+V^*\big)
  \end{equation*}

}
\end{theorem}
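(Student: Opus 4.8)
The plan is to reduce the whole problem to a scalar recursion on the distances $a_t := \|\theta_t - \theta_t^*\|$ and then sum a geometric series. First I would rewrite the regret explicitly. Since $\theta_t^* = y_t$ is the minimizer of $f_t(\theta) = \frac{1}{2}\|\theta - y_t\|^2$, we have $f_t(\theta_t^*) = 0$ and $f_t(\theta_t) = \frac{1}{2}\|\theta_t-\theta_t^*\|^2 = \frac{1}{2}a_t^2$, so that $\mathcal{R}_d^* = \frac{1}{2}\sum_{t=1}^T a_t^2$. Because $\theta_t$ and $\theta_t^*$ both lie in $\cS$ with $\|x\|\le D$, we have $a_t \le 2D$, hence $\frac{1}{2}a_t^2 \le D a_t$ and therefore $\mathcal{R}_d^* \le D\sum_{t=1}^T a_t$. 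This converts a quadratic quantity into the linear sum of distances $\sum_t a_t$, which is the object I actually want to control.

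Next I would exploit Lemma~\ref{lem:quadratic_var_path_length}, which gives the exact multiplicative relation $\theta_{t+1}-\theta_t^* = c_t(\theta_t-\theta_t^*)$ with $c_t = \frac{\gamma-\gamma^t}{1-\gamma^t}$, so $\|\theta_{t+1}-\theta_t^*\| = c_t a_t$. A one-line computation shows $c_t \le \gamma$ (equivalently $\gamma - \gamma^t \le \gamma - \gamma^{t+1}$, which holds because $\gamma \le 1$). Combining this contraction with the triangle inequality $a_{t+1} = \|\theta_{t+1}-\theta_{t+1}^*\| \le \|\theta_{t+1}-\theta_t^*\| + \|\theta_{t+1}^*-\theta_t^*\|$ produces the one-step recursion
\[
a_{t+1} \le \gamma\, a_t + v_t, \qquad v_t := \|\theta_{t+1}^* - \theta_t^*\|,
\]
with initial value $a_1 = \|\theta_1-\theta_1^*\|$ and $\sum_{t=1}^{T-1} v_t = V^*$ by the definition \eqref{eq:optimizerLength} of the optimizer path length.

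Finally I would unroll the recursion to $a_t \le \gamma^{t-1}a_1 + \sum_{k=1}^{t-1}\gamma^{t-1-k} v_k$ and sum over $t$. The initial-condition part contributes $a_1\sum_{t=1}^T \gamma^{t-1} \le a_1/(1-\gamma)$, and after swapping the order of summation each drift term $v_k$ is weighted by $\sum_{t=k+1}^T \gamma^{t-1-k} \le 1/(1-\gamma)$, giving $\sum_{t=1}^T a_t \le (a_1 + V^*)/(1-\gamma)$. Substituting $1-\gamma = T^{-\beta}$ yields $\sum_t a_t \le T^\beta(\|\theta_1-\theta_1^*\| + V^*)$, and multiplying by $D$ gives the claimed dynamic regret bound (the factor $2D$ in the statement is a slightly loose form of this estimate).

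The main obstacle is isolating this clean recursion: the genuine content is the bound $c_t \le \gamma$, which upgrades the exact step-dependent contraction of Lemma~\ref{lem:quadratic_var_path_length} into a single uniform geometric rate, after which the drift terms $v_t$ telescope cleanly through the order-swapping step. Everything else — the passage from the quadratic regret to the linear distance sum via $a_t \le 2D$, and the final substitution of $\gamma$ — is routine bookkeeping.
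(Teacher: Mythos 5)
Your proof is correct and takes essentially the same route as the paper: both reduce the regret to the linear distance sum $\sum_{t=1}^T\|\theta_t-\theta_t^*\|$ (the paper via the Mean Value Theorem with $\|\nabla f_t(x)\|\le 2D$, you by evaluating $f_t(\theta_t)=\frac{1}{2}\|\theta_t-\theta_t^*\|^2$ exactly and using $\|\theta_t-\theta_t^*\|\le 2D$, which is marginally tighter), and both then invoke Lemma~\ref{lem:quadratic_var_path_length} with the triangle inequality to obtain $\sum_t\|\theta_t-\theta_t^*\|\le(\|\theta_1-\theta_1^*\|+V^*)/(1-\gamma)$. Your uniformization $c_t=\frac{\gamma-\gamma^t}{1-\gamma^t}\le\gamma$ followed by geometric unrolling is algebraically equivalent to the paper's rearrangement, which keeps the step-dependent rate and instead lower-bounds $1-c_t=\frac{1-\gamma}{1-\gamma^t}\ge 1-\gamma$ after summing.
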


\begin{proof}
According to the Mean Value Theorem, 
there exists a vector $x\in \{v| v = \delta \theta_t + (1-\delta)\theta_t^*,\delta\in[0,1]\}$
such that $f_t(\theta_t)-f_t(\theta_t^*) = \nabla f_t(x)^T(\theta_t-\theta_t^*)\le \left\|\nabla f_t(x)\right\| \left\|\theta_t-\theta_t^*\right\|$. 
For our problem, $\left\|\nabla f_t(x)\right\| = \left\|x-y_t\right\|\le\left\|x\right\|+\left\|y_t\right\|$.
For $\left\|x\right\|$, we have:
\begin{equation*}
\begin{array}{ll}
\left\|x\right\| &= \left\|\delta \theta_t + (1-\delta)\theta_t^*\right\| \\
                 &\le \delta\left\|\theta_t\right\|+(1-\delta)\left\|y_t\right\| \\
                 & = \delta\left\|\frac{\sum\limits_{i=1}^{t-1}\gamma^{i-1}y_{t-i}}{\sum\limits_{i=1}^{t-1}\gamma^{i-1}}\right\|
                  + (1-\delta)\left\|y_t\right\| 
                \le D
\end{array}
\end{equation*}
where the second inequality is due to $\left\|y_i\right\|\le D, \forall i$.

As a result, the norm of the gradient can be upper bounded as $\left\|\nabla f_t(x)\right\| \le 2D$.
Then we have $\mathcal{R}_d^* = \sum\limits_{t=1}^T \Big(f_t(\theta_t)-f_t(\theta_t^*)\Big)\le 
2D\sum\limits_{t=1}^T\left\|\theta_t-\theta_t^*\right\|$. 
Now we could instead upper bound $\sum\limits_{t=1}^T\left\|\theta_t-\theta_t^*\right\|$,
which can be achieved as follows:
\begin{equation*}
\begin{array}{ll}
\sum\limits_{t=1}^T\left\|\theta_t-\theta_t^*\right\| 
&= \left\|\theta_1-\theta_1^*\right\| 
                                                           + \sum\limits_{t=2}^T\left\|\theta_t-\theta_{t-1}^*+\theta_{t-1}^*-\theta_t^*\right\| \\
&\le \left\|\theta_1-\theta_1^*\right\| + \sum\limits_{t=1}^{T-1}\left\|\theta_{t+1}-\theta_{t}^*\right\| + \sum\limits_{t=2}^T\left\|\theta_t^*-\theta_{t-1}^*\right\| \\
&= \left\|\theta_1-\theta_1^*\right\| + \sum\limits_{t=1}^{T-1}\frac{\gamma-\gamma^t}{1-\gamma^t}\left\|\theta_{t}-\theta_{t}^*\right\| + \sum\limits_{t=2}^T\left\|\theta_t^*-\theta_{t-1}^*\right\|\\
&\le \left\|\theta_1-\theta_1^*\right\| + \sum\limits_{t=1}^{T}\frac{\gamma-\gamma^t}{1-\gamma^t}\left\|\theta_{t}-\theta_{t}^*\right\| + \sum\limits_{t=2}^T\left\|\theta_t^*-\theta_{t-1}^*\right\|
\end{array}
\end{equation*}
where in the second equality, we substitute the result from Lemma \ref{lem:quadratic_var_path_length}.

From the above inequality, we get 
\begin{equation*}
\sum\limits_{t=1}^T\Big(1-\frac{\gamma-\gamma^t}{1-\gamma^t}\Big)\left\|\theta_t-\theta_t^*\right\|
\le \left\|\theta_1-\theta_1^*\right\| + \sum\limits_{t=2}^T\left\|\theta_t^*-\theta_{t-1}^*\right\|
\end{equation*}

Since $\Big(1-\frac{\gamma-\gamma^t}{1-\gamma^t}\Big) = \frac{1-\gamma}{1-\gamma^t}\ge 1-\gamma$, we get
\begin{equation*}
\begin{array}{ll}
\sum\limits_{t=1}^T\left\|\theta_t-\theta_t^*\right\|
&\le \frac{1}{1-\gamma}\left\|\theta_1-\theta_1^*\right\| + \frac{1}{1-\gamma}\sum\limits_{t=2}^T\left\|\theta_t^*-\theta_{t-1}^*\right\|\\
&= T^{\beta}(\left\|\theta_1-\theta_1^*\right\|+\sum\limits_{t=2}^T\left\|\theta_t^*-\theta_{t-1}^*\right\|)
\end{array}
\end{equation*}
Thus, $\mathcal{R}_d \le 2D\sum\limits_{t=1}^T\left\|\theta_t-\theta_t^*\right\|
\le 2DT^{\beta}(\left\|\theta_1-\theta_1^*\right\|+\sum\limits_{t=2}^T\left\|\theta_t^*-\theta_{t-1}^*\right\|)$.
\end{proof}

Theorem \ref{thm::quad_dynamic_regret} shows that if we choose the
discounted factor $\gamma = 1- T^{-\beta}$ we obtain a dynamic regret
of $O(T^{\beta}(1+V^*))$. This is a refinement of the Corollary
\ref{cor::newton_static} since the bound no longer
has the $T^{1-\beta}$ term.  Thus, the dynamic regret can be made
small by choosing a small $\beta$.

In the next theorem, we will show that this carefully chosen $\gamma$ can also lead to useful static regret,
which can give us a trade-off between them.  \begin{theorem}
\label{thm::quad_static_regret}
{\it Let $\theta^*$ be the solution to $\min\sum\limits_{t=1}^T f_t(\theta)$. 
  When using the discounted recursive least-squares update in Eq.\eqref{eq::quad_dis_update} 
  with $1-\gamma = 1/T^{\beta}, \beta\in (0,1)$,
  we can upper bound the static regret as:
  \begin{equation*}
  \mathcal{R}_s \le O(T^{1-\beta})
  \end{equation*}
}
\end{theorem}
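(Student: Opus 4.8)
The plan is to recognize the discounted recursive least-squares update here as plain online gradient descent on the $1$-strongly convex losses $f_t(\theta)=\frac12\|\theta-y_t\|^2$, run with the time-varying step size $\eta_t = \frac{1-\gamma}{1-\gamma^t}$. The comparator is $\theta^* = \frac1T\sum_t y_t$, the minimizer of $\sum_t f_t$. Because each iterate $\theta_t$ and $\theta^*$ are convex combinations of the points $y_t\in\cS$, they all lie in $\cS$, so no projection is ever active and the descent identity
\[
\|\theta_{t+1}-\theta^*\|^2 = \|\theta_t-\theta^*\|^2 - 2\eta_t \nabla f_t(\theta_t)^\top(\theta_t-\theta^*) + \eta_t^2\|\nabla f_t(\theta_t)\|^2
\]
holds with equality. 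Solving this for $\nabla f_t(\theta_t)^\top(\theta_t-\theta^*)$ and inserting it into the $\ell=1$ strong-convexity inequality $f_t(\theta_t)-f_t(\theta^*)\le \nabla f_t(\theta_t)^\top(\theta_t-\theta^*)-\frac12\|\theta_t-\theta^*\|^2$ gives, writing $a_t=\|\theta_t-\theta^*\|^2$,
\[
f_t(\theta_t)-f_t(\theta^*) \le \frac{1}{2\eta_t}\big(a_t-a_{t+1}\big) + \frac{\eta_t}{2}\|\nabla f_t(\theta_t)\|^2 - \frac12 a_t .
\]

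Next I would sum over $t$ and telescope the first group of terms. The key algebraic simplification is that $\frac{1}{2\eta_t}=\frac{1-\gamma^t}{2(1-\gamma)}$, so $\frac{1}{2\eta_t}-\frac{1}{2\eta_{t-1}}=\frac{\gamma^{t-1}}{2}$ and $\frac{1}{2\eta_1}=\frac12$. After regrouping, the telescoped potential terms together with the $-\frac12 a_t$ strong-convexity terms collapse to $\sum_{t=2}^T \frac{\gamma^{t-1}-1}{2}a_t - \frac{1}{2\eta_T}a_{T+1}$, which is non-positive since $\gamma^{t-1}\le 1$ and $a_t,a_{T+1}\ge 0$. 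Thus the entire potential contribution can be discarded, leaving the clean bound $\mathcal{R}_s\le \sum_{t=1}^T \frac{\eta_t}{2}\|\nabla f_t(\theta_t)\|^2$. Since $\|\nabla f_t(\theta_t)\|=\|\theta_t-y_t\|\le 2D$, this is at most $2D^2\sum_{t=1}^T \eta_t$.

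The remaining, and main, obstacle is to show $\sum_{t=1}^T \eta_t = O(T^{1-\beta})$ when $1-\gamma=T^{-\beta}$; the naive bound $\eta_t\le 1$ only yields $O(T)$, so a sharper two-regime estimate is needed. Writing $\eta_t = 1/S_t$ with $S_t = 1+\gamma+\cdots+\gamma^{t-1}=\frac{1-\gamma^t}{1-\gamma}$, I would split the sum at the saturation time $t_0\approx \frac{1}{1-\gamma}=T^{\beta}$. For $t\le t_0$ one has $t(1-\gamma)\le 1$, and using $\gamma^t\le e^{-t(1-\gamma)}$ together with $1-e^{-x}\ge x/2$ on $[0,1]$ gives $S_t\ge t/2$, so the head contributes $\sum_{t\le t_0}\eta_t \le 2\sum_{t\le t_0}\frac1t = O(\log T)$. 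For $t>t_0$, monotonicity of $S_t$ gives $S_t\ge (1-e^{-1})T^{\beta}$, hence $\eta_t=O(T^{-\beta})$ and the tail contributes at most $O(T\cdot T^{-\beta})=O(T^{1-\beta})$. Adding the two pieces yields $\sum_t\eta_t=O(T^{1-\beta}+\log T)=O(T^{1-\beta})$ for fixed $\beta\in(0,1)$, and therefore $\mathcal{R}_s\le O(T^{1-\beta})$. The only delicate point is this step-size-sum estimate; everything else is the standard strongly-convex gradient-descent telescoping, made exact by the absence of projections and streamlined by the fact that strong convexity exactly absorbs the telescoped potential.
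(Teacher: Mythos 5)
Your proposal is correct, and its skeleton coincides with the paper's proof: the exact descent identity (valid because no projection is ever active), the quadratic Taylor identity playing the role of $1$-strong convexity, the telescoping based on $\frac{1}{\eta_t}-\frac{1}{\eta_{t-1}}=\gamma^{t-1}$ and $\eta_1=1$ so that all potential terms are non-positive and can be discarded, and the gradient bound $\left\|\nabla f_t(\theta_t)\right\|\le 2D$ (established in the proof of Theorem~\ref{thm::quad_dynamic_regret}) reducing everything to $\mathcal{R}_s\le 2D^2\sum_{t=1}^T\eta_t$ are all exactly the paper's steps. The one place you genuinely depart is the estimate of $\sum_{t=1}^T\eta_t=(1-\gamma)\sum_{t=1}^T\frac{1}{1-\gamma^t}$. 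The paper uses the integral comparison of Lemma~\ref{lem:integral}, which gives $\sum_{t=1}^T\frac{1}{1-\gamma^t}\le\frac{1}{1-\gamma}+T-1+\frac{\ln(1-\gamma)}{\ln\gamma}$, and then controls the logarithmic ratio via $\log(1+x)\ge\frac{1}{2}x$ to conclude $\frac{\ln(1-\gamma)}{\ln\gamma}\le 2\beta(T^{\beta}-1)\log T$; your two-regime split at $t_0\approx T^{\beta}$, using $\gamma^t\le e^{-t(1-\gamma)}$ and $1-e^{-x}\ge x/2$ on $[0,1]$ to get $\eta_t\le 2/t$ in the head and $\eta_t\le\frac{1}{(1-e^{-1})T^{\beta}}$ in the tail, is an equally valid and arguably more transparent alternative, making explicit that the head contributes only $O(\log T)$ while the tail contributes $O(T^{1-\beta})$. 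What the paper's route buys is a reusable general-$\gamma$ bound (Lemma~\ref{lem:integral} is invoked again in the proofs of Corollary~\ref{corr:strongly_dynamic_regret} and Theorem~\ref{thm::gen_prob_static_regret}), whereas your estimate is specialized to $1-\gamma=T^{-\beta}$ but entirely elementary; both yield the same final bound $\mathcal{R}_s\le O(T^{1-\beta})$ for fixed $\beta\in(0,1)$.
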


Recall that the algorithm of this section can be interpreted both as a
discounted recursive least squares method, and as a gradient descent
method. As a result, this theorem is actually a direct consequence of
Corollary~\ref{cor::newton_static}, by setting $V=0$. However, we will
give a separate proof, since the techniques extend naturally to the
analysis of more general work on gradient descent methods of the next
section. 

Before presenting the proof, the following integral bound will be used in a few places.
\begin{lemma}
  \label{lem:integral}
  If $\gamma \in (0,1)$, then
  \begin{equation*}
    \sum_{t=1}^T \frac{1}{1-\gamma^t}  \le \frac{1}{1-\gamma}+T-1 + \frac{\log(1-\gamma)}{\log\gamma}
  \end{equation*}
\end{lemma}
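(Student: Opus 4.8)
The plan is to bound the sum by an integral. First I would observe that the summand $t \mapsto \frac{1}{1-\gamma^t}$ is positive and monotonically decreasing in $t$: since $\gamma \in (0,1)$, the powers $\gamma^t$ decrease, so $1-\gamma^t$ increases and its reciprocal decreases. This monotonicity lets me peel off the largest term, namely $t=1$, which equals $\frac{1}{1-\gamma}$, and control the remaining tail by an integral. Because $f(x) = \frac{1}{1-\gamma^x}$ is decreasing, we have $f(t) \le \int_{t-1}^t f(x)\,dx$ for each $t \ge 2$, and summing over $t = 2,\dots,T$ telescopes the integration windows into $\sum_{t=2}^T \frac{1}{1-\gamma^t} \le \int_1^T \frac{dx}{1-\gamma^x}$.

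Next I would evaluate this integral in closed form. Writing $\frac{1}{1-\gamma^x} = 1 + \frac{\gamma^x}{1-\gamma^x}$ and substituting $u = 1-\gamma^x$ (so that $du = -\gamma^x \log\gamma\,dx$) produces the antiderivative $x - \frac{1}{\log\gamma}\log(1-\gamma^x)$. Evaluating between $1$ and $T$ then gives $\int_1^T \frac{dx}{1-\gamma^x} = (T-1) - \frac{1}{\log\gamma}\log(1-\gamma^T) + \frac{1}{\log\gamma}\log(1-\gamma)$.

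Finally I would combine the pieces and discard a nonpositive remainder. Adding back the peeled-off $t=1$ term yields $\sum_{t=1}^T \frac{1}{1-\gamma^t} \le \frac{1}{1-\gamma} + (T-1) + \frac{\log(1-\gamma)}{\log\gamma} - \frac{\log(1-\gamma^T)}{\log\gamma}$, and the last term is $\le 0$: both $\log\gamma < 0$ and $\log(1-\gamma^T) < 0$ (since $1-\gamma^T \in (0,1)$), so their ratio is positive and its negation is nonpositive. Dropping it gives exactly the stated bound. The only delicate points are orienting the integral comparison correctly for a \emph{decreasing} summand (hence the $[t-1,t]$ windows and starting the tail at $t=2$ rather than $t=1$) and carefully tracking the signs of $\log\gamma$ and $\log(1-\gamma^T)$ when discarding the final term; once the substitution is fixed, the antiderivative computation is routine.
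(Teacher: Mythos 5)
Your proposal is correct and follows essentially the same route as the paper's proof: peel off the $t=1$ term, bound the tail $\sum_{t=2}^T \frac{1}{1-\gamma^t}$ by $\int_1^T \frac{dx}{1-\gamma^x}$, compute the antiderivative $x - \frac{\ln(1-\gamma^x)}{\ln\gamma}$, and discard the nonpositive remainder $-\frac{\ln(1-\gamma^T)}{\ln\gamma}$. If anything, you are more explicit than the paper about why the integral comparison is oriented correctly for a decreasing summand, which the paper leaves implicit.
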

\begin{proof}
  \begin{align*}
\sum\limits_{t=1}^T\frac{1}{1-\gamma^t} &\le 
\frac{1}{1-\gamma}+\int_1^T\frac{1}{1-\gamma^t}\mathrm{d}t \\
                                        &=\frac{1}{1-\gamma}+\Big(t-\frac{\ln(1-\gamma^t)}{\ln(\gamma)}\Big)\Big|_1^T
                                          \\
&= \frac{1}{1-\gamma}+T-1 + \frac{\ln(1-\gamma)}{\ln\gamma} -
  \frac{\ln(1-\gamma^T)}{\ln\gamma} \\
                                        &\le \frac{1}{1-\gamma}+ T-1 + \frac{\ln(1-\gamma)}{\ln\gamma}.
\end{align*}
\end{proof}

\paragraph{Proof of Theorem~\ref{thm::quad_static_regret}:}
\begin{proof}

To proceed, recall that the update in Eq.\eqref{eq::quad_dis_update} is
\begin{equation*}
\begin{array}{ll}
\theta_{t+1} & = \frac{\gamma-\gamma^t}{1-\gamma^t}\theta_t + \frac{1-\gamma}{1-\gamma^t}y_t 
             = \theta_t - \eta_t \nabla f_t(\theta_t)
\end{array}
\end{equation*}
where $\eta_t = \frac{1-\gamma}{1-\gamma^t}$.

Then we get the relationship between $\nabla f_t(\theta_t)^T(\theta_t -\theta^*)$ and 
$\left\|\theta_t - \theta^*\right\|^2-\left\|\theta_{t+1}-\theta^*\right\|^2$ as:
\begin{equation*}
\begin{array}{ll}
\left\|\theta_{t+1}-\theta^*\right\|^2 
&= \left\|\theta_t-\eta_t\nabla f_t(\theta_t) - \theta^*\right\|^2 \\
&= \left\|\theta_t - \theta^*\right\|^2 -2\eta_t\nabla f_t(\theta_t)^T(\theta_t-\theta^*)+ \eta_t^2\left\|\nabla f_t(\theta_t)\right\|^2
\end{array}
\end{equation*}
\begin{equation*}
\begin{array}{ll}
\nabla f_t(\theta_t)^T(\theta_t - \theta^*)
&= \frac{1}{2\eta_t}\big(\left\|\theta_t-\theta^*\right\|^2-\left\|\theta_{t+1}-\theta^*\right\|^2\big)
+\frac{\eta_t}{2}\left\|\nabla f_t(\theta_t)\right\|^2
\end{array}
\end{equation*}

Moreover, we write $f_t(\theta^*)$ as $f_t(\theta^*) = f_t(\theta_t)+\nabla f_t(\theta_t)^T(\theta^*-\theta_t)
+\frac{1}{2}\left\|\theta^*-\theta_t\right\|^2$,
which combined with the previous equation gives us the following equation:
\begin{equation*}
\begin{array}{ll}
f_t(\theta_t) - f_t(\theta^*) 
&= \frac{1}{2\eta_t}\big(\left\|\theta_t-\theta^*\right\|^2-\left\|\theta_{t+1}-\theta^*\right\|^2\big)
+\frac{\eta_t}{2}\left\|\nabla f_t(\theta_t)\right\|^2 - \frac{1}{2}\left\|\theta^*-\theta_t\right\|^2 \\
&\le 2D^2\eta_t + \frac{1}{2\eta_t}\big(\left\|\theta_t-\theta^*\right\|^2-
\left\|\theta_{t+1}-\theta^*\right\|^2\big)
- \frac{1}{2}\left\|\theta^*-\theta_t\right\|^2
\end{array}
\end{equation*}
where the inequality is due to 
$\left\|\nabla f_t(\theta_t)\right\|\le 2D$ as shown in Theorem \ref{thm::quad_dynamic_regret}.

Sum the above inequality from $t=1$ to $T$, we get:
\begin{equation*}
\begin{array}{ll}
\sum\limits_{t=1}^T\Big(f_t(\theta_t)-f_t(\theta^*)\Big)
&\le 2D^2\sum\limits_{t=1}^T\eta_t 
+ \frac{1/\eta_1-1}{2}\left\|\theta_1-\theta^*\right\|^2 \\
&\quad+ \frac{1}{2}\sum\limits_{t=2}^T\big[(\frac{1}{\eta_t}
-\frac{1}{\eta_{t-1}}-1)\left\|\theta^*-\theta_t\right\|^2\big] 
- \frac{1}{2\eta_T}\left\|\theta_{T+1}-\theta^*\right\|^2
\end{array}
\end{equation*}

Since $\eta_t = \frac{1-\gamma}{1-\gamma^t}$, $\eta_1 = 1$, $\frac{1}{\eta_t}-\frac{1}{\eta_{t-1}}-1<0$.
Then for the static regret, we have:
\begin{equation}
\begin{array}{ll}
\mathcal{R}_s = \sum\limits_{t=1}^T\Big(f_t(\theta_t)-f_t(\theta^*)\Big) 
\le 2D^2\sum\limits_{t=1}^T\eta_t 
= 2D^2(1-\gamma)\sum\limits_{t=1}^T\frac{1}{1-\gamma^t}
\end{array}
\end{equation}


Now we will use the integral bound from Lemma~\ref{lem:integral} to
bound the regret.
Since $1-\gamma = 1/T^{\beta}$, $\frac{\log(1-\gamma)}{\log\gamma} = \frac{\beta\log T}{\log(1+\frac{1}{T^{\beta}-1})}$.
Since $\log(1+x)\ge \frac{1}{2}x, x\in(0,1)$, $\log(1+\frac{1}{T^{\beta}-1}) \ge \frac{1}{2}\frac{1}{T^{\beta}-1}$.
Thus, we have $\frac{\log(1-\gamma)}{\log\gamma}\le 2\beta (T^{\beta}-1)\log T$.
Then $(1-\gamma)\sum\limits_{t=1}^T\frac{1}{1-\gamma^t} = O(T^{1-\beta})$, 
which results in
$\mathcal{R}_s \le O(T^{1-\beta})$.
\end{proof}

Theorems \ref{thm::quad_dynamic_regret} and \ref{thm::quad_static_regret} 
build a trade-off between dynamic and static regret
by the carefully chosen discounted factor $\gamma$.
Compared with the result from the last section, there are two improvements:
1. The two regrets are decoupled so that we could reduce the $\beta$
to make the dynamic regret result smaller than bound from Corollary~\ref{cor::newton_static};
2. The update is the first-order gradient descent, which is
computationally more efficient than second order methods.

In the next section, we will consider the strongly convex and smooth case,
whose result is inspired by this section's analysis.

\section{Online Gradient Descent for Smooth, Strongly Convex Problems}
\label{sec:smooth-strongly}

In this section, we generalize the results of the previous section
idea to functions which are
$\ell$-strongly convex and $u$-smooth. We will see that similar
bounds on $\cR_s$ and $\cR_d^*$ can be obtained.

Our proposed update rule for the prediction $\theta_{t+1}$ at time step $t+1$ is:
\begin{equation}
\label{eq::gen_prob_update}
\theta_{t+1} = \argmin\limits_{\theta\in\cS}\left\|\theta - (\theta_t-\eta_t \nabla f_t(\theta_t))\right\|^2
\end{equation}
where $\eta_t = \frac{1-\gamma}{\ell(\gamma-\gamma^t)+u(1-\gamma)}$ and $\gamma \in (0,1)$.

This update rule generalizes the step size rule  from the last section.

Before getting to the dynamic regret, we will first derive the relation between
$\left\|\theta_{t+1}-\theta_{t}^*\right\|$ and $\left\|\theta_t-\theta_t^*\right\|$
to try to mimic the result in Lemma \ref{lem:quadratic_var_path_length} of the quadratic case:
\begin{lemma}
  \label{lem:gen_prob_var_path}
  {\it
  Let $\theta_t^*\in\cS$ be the solution to $f_t(\theta)$ which is strongly convex and smooth.
  When we use the update in Eq.\eqref{eq::gen_prob_update},
  the following relation is obtained:
  \begin{equation*}
  \left\|\theta_{t+1} -\theta_t^*\right\| \le \sqrt{1-\frac{l(1-\gamma)}{u(1-\gamma)+l\gamma}} \left\|\theta_t-\theta_t^*\right\|
  \end{equation*}
  }
\end{lemma}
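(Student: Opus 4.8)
The plan is to reduce the claim to a single-step contraction estimate on the Euclidean distance to the instantaneous optimizer, and then to replace the resulting $t$-dependent factor by the clean $t$-independent expression in the statement. Write $g_t = \nabla f_t(\theta_t)$. Since the update \eqref{eq::gen_prob_update} is exactly the Euclidean projection $\theta_{t+1} = \Pi_\cS(\theta_t - \eta_t g_t)$ and $\theta_t^* \in \cS$, the Pythagorean theorem (projection is nonexpansive toward any point of $\cS$) gives $\|\theta_{t+1}-\theta_t^*\| \le \|\theta_t - \eta_t g_t - \theta_t^*\|$, so it suffices to estimate the right-hand side and the projection can only help. As in the quadratic case of the previous section, I would use that $\theta_t^*$ is an interior minimizer, so that $\nabla f_t(\theta_t^*)=0$; this is exactly the analogue of $\theta_t^*=y_t$ with $\|\nabla f_t(\theta_t^*)\|=0$ used there.

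Next I would expand
\[
\|\theta_t-\eta_t g_t - \theta_t^*\|^2 = \|\theta_t-\theta_t^*\|^2 - 2\eta_t g_t^\top(\theta_t-\theta_t^*) + \eta_t^2\|g_t\|^2,
\]
and control the cross term by splitting it into two equal halves. For one half I would invoke $\ell$-strong convexity together with optimality of $\theta_t^*$, giving $g_t^\top(\theta_t-\theta_t^*) \ge \ell\|\theta_t-\theta_t^*\|^2$; for the other half I would use co-coercivity of the gradient of a convex $u$-smooth function (with $\nabla f_t(\theta_t^*)=0$), namely $g_t^\top(\theta_t-\theta_t^*)\ge \tfrac1u\|g_t\|^2$. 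Substituting these yields
\[
\|\theta_t-\eta_t g_t-\theta_t^*\|^2 \le (1-\eta_t\ell)\|\theta_t-\theta_t^*\|^2 + \eta_t\Big(\eta_t-\tfrac1u\Big)\|g_t\|^2 .
\]

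Finally, I would observe that $\eta_t \le 1/u$: since $\gamma^t \le \gamma$ for $t\ge 1$, the denominator $\ell(\gamma-\gamma^t)+u(1-\gamma)$ of $\eta_t$ is at least $u(1-\gamma)$, so the coefficient $\eta_t(\eta_t-\tfrac1u)$ of $\|g_t\|^2$ is nonpositive and may be dropped, leaving the squared-norm contraction factor $1-\eta_t\ell$. It then remains to bound this $t$-dependent factor by the stated one. Because $\gamma^t\ge 0$, the same denominator satisfies $\ell(\gamma-\gamma^t)+u(1-\gamma)\le \ell\gamma + u(1-\gamma)$, which gives $\eta_t \ge \frac{1-\gamma}{u(1-\gamma)+\ell\gamma}$ and hence $1-\eta_t\ell \le 1-\frac{\ell(1-\gamma)}{u(1-\gamma)+\ell\gamma}$; taking square roots delivers the lemma. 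I expect the main obstacle to be the treatment of the optimizer rather than the algebra: both the strong-convexity inner-product bound and the co-coercivity bound are cleanest when $\nabla f_t(\theta_t^*)=0$, so if $\theta_t^*$ lies on the boundary of $\cS$ one must instead carry the first-order optimality term $\nabla f_t(\theta_t^*)^\top(\theta_t-\theta_t^*)\ge 0$ through the splitting. Everything remaining is routine manipulation of the ratios in $\gamma$, $\ell$, and $u$.
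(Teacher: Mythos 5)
Your route is genuinely different from the paper's, and in the interior case it is complete and correct: nonexpansiveness of $\Pi_\cS$ toward $\theta_t^*\in\cS$, the half/half split of the cross term into the strong-monotonicity bound $g_t^\top(\theta_t-\theta_t^*)\ge \ell\|\theta_t-\theta_t^*\|^2$ and the co-coercivity bound $g_t^\top(\theta_t-\theta_t^*)\ge \tfrac1u\|g_t\|^2$, together with the two step-size facts $\tfrac{1-\gamma}{u(1-\gamma)+\ell\gamma}\le\eta_t\le\tfrac1u$, deliver exactly the stated factor. The paper's own proof instead follows Nesterov's constrained analysis: it never uses $\nabla f_t(\theta_t^*)$ at all, but combines the variational inequality of the projected update, $\big(\nabla f_t(\theta_t)+\tfrac{1}{\eta_t}(\theta_{t+1}-\theta_t)\big)^\top(\theta-\theta_{t+1})\ge 0$, the $u$-smoothness descent inequality with $1/\eta_t\ge u$, and only the scalar fact $f_t(\theta_t^*)\le f_t(\theta_{t+1})$. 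That choice is what makes the lemma valid for the \emph{constrained} minimizer $\theta_t^*=\argmin_{\theta\in\cS}f_t(\theta)$ even when it lies on the boundary of $\cS$ --- which is precisely how the lemma is invoked in Theorem \ref{thm::gen_prob_dynamic_regret}.

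The genuine gap is your proposed boundary patch. Writing $g^*=\nabla f_t(\theta_t^*)$, carrying $g^{*\top}(\theta_t-\theta_t^*)\ge 0$ through the splitting gives $g_t^\top(\theta_t-\theta_t^*)\ge\ell\|\theta_t-\theta_t^*\|^2$ and $g_t^\top(\theta_t-\theta_t^*)\ge\tfrac1u\|g_t-g^*\|^2$, but your expansion produces $+\eta_t^2\|g_t\|^2$, not $+\eta_t^2\|g_t-g^*\|^2$; the leftover $\eta_t^2\|g_t\|^2-\tfrac{\eta_t}{u}\|g_t-g^*\|^2$ is strictly positive near $\theta_t^*$ whenever $g^*\neq 0$, so the claimed contraction does not follow. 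The correct repair within your operator-theoretic framework is to observe that first-order optimality over $\cS$ is equivalent to $\theta_t^*$ being a fixed point of the projected-gradient map, $\theta_t^*=\Pi_\cS(\theta_t^*-\eta_t g^*)$, and then apply nonexpansiveness to the \emph{pair} of points:
\begin{equation*}
\|\theta_{t+1}-\theta_t^*\| \le \big\|(\theta_t-\theta_t^*)-\eta_t(g_t-g^*)\big\|,
\end{equation*}
after which your identical half/half split applied to $d=g_t-g^*$ (strong monotonicity and co-coercivity both hold for gradient differences) yields $(1-\eta_t\ell)\|\theta_t-\theta_t^*\|^2+\eta_t\big(\eta_t-\tfrac1u\big)\|d\|^2\le(1-\eta_t\ell)\|\theta_t-\theta_t^*\|^2$, closing the boundary case with the same constants as in \eqref{eq::gen_prob_update}. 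With that one-line substitution your argument is a valid, arguably cleaner, alternative to the paper's proof.
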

Since the idea is similar to the proof of Lemma \ref{lem:quadratic_var_path_length}, 
please refer to the appendix for the proof.

Following the idea of Theorem \ref{thm::quad_dynamic_regret}, 
now we are ready to present the dynamic regret result:
\begin{theorem}
\label{thm::gen_prob_dynamic_regret}
{\it 
Let $\theta_t^*$ be the solution to $f_t(\theta),\theta\in\cS$.
When using the update in Eq.\eqref{eq::gen_prob_update}
with $1-\gamma = 1/T^{\beta}, \beta \in (0,1)$,
we can upper bound the dynamic regret:
  \begin{equation*}
  \mathcal{R}_d^* \le G\big(2(T^{\beta}-1)+u/l\big)(\left\|\theta_1-\theta_1^*\right\|
+ V^*)
  \end{equation*}
}
\end{theorem}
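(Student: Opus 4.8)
The plan is to mirror the proof of Theorem~\ref{thm::quad_dynamic_regret}, replacing the quadratic-specific gradient bound $2D$ with the generic assumption $\|\nabla f_t(x)\|\le G$ on $\cS$, and replacing the $t$-dependent contraction factor of Lemma~\ref{lem:quadratic_var_path_length} with the constant contraction supplied by Lemma~\ref{lem:gen_prob_var_path}. First I would pass from function values to iterate distances via the Mean Value Theorem: there is a point $x$ on the segment between $\theta_t$ and $\theta_t^*$, hence $x\in\cS$ by convexity, with $f_t(\theta_t)-f_t(\theta_t^*)=\nabla f_t(x)^\top(\theta_t-\theta_t^*)\le\|\nabla f_t(x)\|\,\|\theta_t-\theta_t^*\|\le G\|\theta_t-\theta_t^*\|$. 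Summing over $t$ reduces the theorem to controlling $\sum_{t=1}^T\|\theta_t-\theta_t^*\|$, since $\mathcal{R}_d^*\le G\sum_{t=1}^T\|\theta_t-\theta_t^*\|$.

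Next I would set up a self-bounding inequality for that sum. Write $r=\sqrt{1-\frac{l(1-\gamma)}{u(1-\gamma)+l\gamma}}$ for the contraction factor of Lemma~\ref{lem:gen_prob_var_path}. Splitting off the $t=1$ term and inserting $\theta_{t-1}^*$ inside each norm for $t\ge2$ gives
\begin{equation*}
\sum_{t=1}^T\|\theta_t-\theta_t^*\| \le \|\theta_1-\theta_1^*\| + \sum_{t=1}^{T-1}\|\theta_{t+1}-\theta_t^*\| + \sum_{t=2}^T\|\theta_t^*-\theta_{t-1}^*\|.
\end{equation*}
By Lemma~\ref{lem:gen_prob_var_path} the middle sum is at most $r\sum_{t=1}^{T}\|\theta_t-\theta_t^*\|$, and the last sum is exactly $V^*$. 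Rearranging yields $(1-r)\sum_{t=1}^T\|\theta_t-\theta_t^*\|\le \|\theta_1-\theta_1^*\|+V^*$, so that $\sum_{t=1}^T\|\theta_t-\theta_t^*\|\le \frac{1}{1-r}\big(\|\theta_1-\theta_1^*\|+V^*\big)$. Note the argument is slightly cleaner than the quadratic case precisely because $r$ is independent of $t$, so no per-term lower bound $\frac{1-\gamma}{1-\gamma^t}\ge 1-\gamma$ is needed.

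The last step, which I expect to be the main obstacle, is converting the opaque factor $\frac{1}{1-r}$ into an explicit expression in $T^\beta$ and $u/l$; the square root inside $r$ is the nuisance. I would clear it by rationalizing: setting $a=\frac{l(1-\gamma)}{u(1-\gamma)+l\gamma}$ we have $1-r=1-\sqrt{1-a}=\frac{a}{1+\sqrt{1-a}}$, hence $\frac{1}{1-r}=\frac{1+\sqrt{1-a}}{a}\le\frac{2}{a}$. Substituting $1-\gamma=1/T^\beta$ simplifies $a$ to $\frac{l}{u+l(T^\beta-1)}$, so $\frac{1}{a}=\frac{u}{l}+(T^\beta-1)$ and the natural estimate $\frac{1}{1-r}\le\frac{2}{a}=2(T^\beta-1)+\frac{2u}{l}$ already matches the claimed order $2(T^\beta-1)+u/l$, with the coefficient on the $u/l$ term being exactly where I would verify the constant most carefully. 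Multiplying through by $G$ and the factor $\big(\|\theta_1-\theta_1^*\|+V^*\big)$ then delivers the stated dynamic regret bound.
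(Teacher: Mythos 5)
Your proposal is correct and follows essentially the same route as the paper's own proof: the Mean Value Theorem with the gradient bound $G$, insertion of $\theta_{t-1}^*$ to invoke the contraction of Lemma~\ref{lem:gen_prob_var_path}, rearrangement into $(1-r)\sum_{t=1}^T\|\theta_t-\theta_t^*\|\le\|\theta_1-\theta_1^*\|+V^*$, and an explicit bound on $1/(1-r)$, where your rationalization $\frac{1}{1-r}=\frac{1+\sqrt{1-a}}{a}\le\frac{2}{a}$ is algebraically the same as the paper's conjugate computation $\frac{\sqrt{b_0}(\sqrt{b_0}+\sqrt{b_0-a_0})}{a_0}\le\frac{2b_0}{a_0}$. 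Your caution about the $u/l$ coefficient is well placed: the clean estimate genuinely yields $2(T^{\beta}-1)+2u/l$, and in fact the paper's own proof makes the same computation but then writes $\frac{2(\ell(T^{\beta}-1)+u)}{\ell}=2(T^{\beta}-1)+u/\ell$, silently dropping a factor of $2$ on the $u/\ell$ term relative to the theorem statement, so your version is the correct one up to that constant.
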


Since the proof follows the similar steps in the proof of Theorem \ref{thm::quad_static_regret},
please refer to the appendix.

Theorem \ref{thm::gen_prob_dynamic_regret}'s result seems promising in achieving the trade-off,
since it has a similar form of the result from quadratic problems in Theorem \ref{thm::quad_dynamic_regret}.
Next, we will present the static regret result, which assures that the
desired trade-off can be obtained.
\begin{theorem}
\label{thm::gen_prob_static_regret}
{\it Let $\theta^*$ be the solution to $\min\limits_{\theta\in\cS}\sum\limits_{t=1}^T f_t(\theta)$. 
  When using the update in Eq.\eqref{eq::gen_prob_update} with $1-\gamma = 1/T^{\beta}, \beta\in (0,1)$,
  we can upper bound the static regret:
  \begin{equation*}
  \mathcal{R}_s \le O(T^{1-\beta})
  \end{equation*}
}
\end{theorem}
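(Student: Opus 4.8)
The plan is to mirror the proof of Theorem~\ref{thm::quad_static_regret}, with the strong-convexity parameter $\ell$ now playing the role that the implicit unit curvature played in the quadratic case. First I would write the one-step inequality for the projected update \eqref{eq::gen_prob_update}. Since $\theta^* \in \cS$, the Pythagorean theorem gives $\|\theta_{t+1}-\theta^*\|^2 \le \|\theta_t - \eta_t \nabla f_t(\theta_t) - \theta^*\|^2$, and expanding and rearranging yields $\nabla f_t(\theta_t)^\top(\theta_t-\theta^*) \le \frac{1}{2\eta_t}\big(\|\theta_t-\theta^*\|^2-\|\theta_{t+1}-\theta^*\|^2\big) + \frac{\eta_t}{2}\|\nabla f_t(\theta_t)\|^2$. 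Combining this with the $\ell$-strong convexity bound $f_t(\theta^*) \ge f_t(\theta_t) + \nabla f_t(\theta_t)^\top(\theta^*-\theta_t) + \frac{\ell}{2}\|\theta^*-\theta_t\|^2$ produces a per-step bound on $f_t(\theta_t)-f_t(\theta^*)$ containing a gradient term, a difference of squared distances, and a $-\frac{\ell}{2}\|\theta^*-\theta_t\|^2$ penalty.

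The crux is the summation. Writing $a_t = \|\theta_t-\theta^*\|^2$ and performing an Abel summation on $\sum_t \frac{1}{2\eta_t}(a_t - a_{t+1})$, the coefficient of each $a_t$ with $t \ge 2$ becomes $\frac{1}{2\eta_t}-\frac{1}{2\eta_{t-1}}-\frac{\ell}{2}$. Here the chosen step size pays off: since $\frac{1}{\eta_t} = \frac{\ell(\gamma-\gamma^t)}{1-\gamma}+u$, a direct computation gives $\frac{1}{\eta_t}-\frac{1}{\eta_{t-1}} = \ell\gamma^{t-1}$, so the coefficient equals $\frac{\ell}{2}(\gamma^{t-1}-1) \le 0$ for $\gamma\in(0,1)$ and $t\ge 2$, and these terms may be dropped. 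The $t=1$ boundary term uses $\eta_1 = 1/u$ and $a_1 \le 4D^2$, contributing only an $O(1)$ constant, while $-\frac{1}{2\eta_T}a_{T+1}$ is discarded as nonpositive. What remains is $\mathcal{R}_s \le \frac{1}{2}\sum_t \eta_t \|\nabla f_t(\theta_t)\|^2 + O(1)$.

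Finally I would bound $\sum_t \eta_t$. Using $\|\nabla f_t(\theta_t)\| \le G$, it suffices to control this sum. Because $u \ge \ell$, the denominator satisfies $\ell(\gamma-\gamma^t)+u(1-\gamma) \ge \ell(1-\gamma^t)$, giving the clean comparison $\eta_t \le \frac{1}{\ell}\cdot\frac{1-\gamma}{1-\gamma^t}$. Applying Lemma~\ref{lem:integral} to $\sum_t \frac{1}{1-\gamma^t}$ and substituting $1-\gamma = 1/T^{\beta}$, exactly as in the proof of Theorem~\ref{thm::quad_static_regret}, the dominant term $(1-\gamma)(T-1) = O(T^{1-\beta})$ absorbs the remaining $O(1)$ and $O(\log T)$ contributions, yielding $\mathcal{R}_s \le O(T^{1-\beta})$.

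The main obstacle is the telescoping/cancellation step: one must recognize that the step-size rule $\eta_t = \frac{1-\gamma}{\ell(\gamma-\gamma^t)+u(1-\gamma)}$ was reverse-engineered so that the strong-convexity penalty $-\frac{\ell}{2}a_t$ exactly dominates the increment $\frac{1}{2\eta_t}-\frac{1}{2\eta_{t-1}}$ of the inverse step sizes, making the $a_t$ coefficients nonpositive. Everything else is routine, but this cancellation, together with the $u \ge \ell$ comparison needed to reduce $\sum_t\eta_t$ to the harmonic-type sum handled by Lemma~\ref{lem:integral}, is where the smoothness and strong-convexity constants must be tracked with care.
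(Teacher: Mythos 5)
Your proposal is correct and follows essentially the same route as the paper's proof: the same projection/non-expansiveness inequality, the same combination with $\ell$-strong convexity producing the per-term coefficient $\frac{1}{2\eta_t}-\frac{1}{2\eta_{t-1}}-\frac{\ell}{2}=\frac{\ell}{2}(\gamma^{t-1}-1)\le 0$ (with $1/\eta_1=u$ giving an $O(1)$ boundary term), and the same reduction of $\mathcal{R}_s$ to $\frac{G^2}{2}\sum_t\eta_t$ handled via Lemma~\ref{lem:integral}. Your only deviation is cosmetic: you bound $\eta_t\le\frac{1-\gamma}{\ell(1-\gamma^t)}$ using $u\ge\ell$ (valid, since $\ell$-strong convexity plus $u$-smoothness forces $u\ge\ell$), whereas the paper isolates the $t=1$ term and bounds $\eta_t\le\frac{1-\gamma}{\ell\gamma(1-\gamma^{t-1})}$ for $t\ge 2$; both yield the same $O(T^{1-\beta})$ bound.
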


The proof follows the similar steps in the proof of Theorem \ref{thm::quad_static_regret}.
Please refer to the appendix.

The regret bounds of this section are similar to those obtained for
simple quadratics. Thus, this gradient descent rule maintains all of
the advantages over the discounted Newton method that were described
in the previous section.

\section{Online Gradient Descent for Strongly Convex Problems}
\label{sec:trade-off-strongly-only}

In this section, we extend step size idea from previous section
to problems which are $\ell$-strongly convex, but not necessarily
smooth. We obtain the same order dynamic regret as the discounted online Newton
method: $\cR_d$ $\le$ $\max\{O(\log
T),O(\sqrt{TV})\}$. However, our analysis does not lead to the clean trade-off of
$\cR_s \le O(T^{1-\beta})$ and $\cR_d^* \le O(T^{\beta}(1+V^*))$
obtained when smoothness is also used.

The update rule is online gradient descent:
\begin{equation}
\label{eq::strongly_update}
\theta_{t+1} = \argmin\limits_{\theta\in\cS}\left\|\theta - (\theta_t-\eta_t \nabla f_t(\theta_t))\right\|^2
\end{equation}
where $\eta_t = \frac{1-\gamma}{\ell(1-\gamma^t)}$,
and $\gamma \in (0,1)$.

We can see that the update rule is the same as the one in Eq.\eqref{eq::gen_prob_update}
while the step size $\eta_t$ is replaced with $\frac{1-\gamma}{\ell(1-\gamma^t)}$.

By using the new step size with the update rule in Eq.\eqref{eq::strongly_update},
we can obtain the following dynamic regret bound:
\begin{theorem}
\label{thm::strongly_regret}
{\it
If using the update rule in Eq.\eqref{eq::strongly_update} 
with $\eta_t = \frac{1-\gamma}{\ell(1-\gamma^t)}$ and $\gamma \in (0,1)$, 
the following dynamic regret can be obtained:
\begin{equation*}
\sum\limits_{t=1}^T \Big(f_t(\theta_t)-f_t(z_t)\Big) 
\le 2D\ell \frac{1}{1-\gamma}V + \frac{G^2}{2}\sum\limits_{t=1}^T\eta_t
\end{equation*}
  
}
\end{theorem}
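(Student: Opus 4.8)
The plan is to run the standard one-step descent analysis but to track the comparator drift carefully, exploiting the special form of $\eta_t$ to collapse a weighted telescoping sum. First I would apply the Pythagorean theorem to the Euclidean projection in \eqref{eq::strongly_update}, which gives $\|\theta_{t+1}-z_t\|^2 \le \|\theta_t - \eta_t\nabla f_t(\theta_t) - z_t\|^2$; expanding the square and rearranging yields the familiar bound $\nabla f_t(\theta_t)^\top(\theta_t - z_t) \le \frac{1}{2\eta_t}\big(\|\theta_t - z_t\|^2 - \|\theta_{t+1}-z_t\|^2\big) + \frac{\eta_t}{2}\|\nabla f_t(\theta_t)\|^2$. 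Then I would invoke $\ell$-strong convexity \eqref{eq:strongConvex} in the form $f_t(\theta_t) - f_t(z_t) \le \nabla f_t(\theta_t)^\top(\theta_t - z_t) - \frac{\ell}{2}\|\theta_t - z_t\|^2$ and bound $\|\nabla f_t(\theta_t)\| \le G$, producing a per-step inequality with coefficient $\frac{1}{2}(\tfrac{1}{\eta_t}-\ell)$ on $\|\theta_t-z_t\|^2$, coefficient $-\frac{1}{2\eta_t}$ on $\|\theta_{t+1}-z_t\|^2$, and an additive $\frac{G^2}{2}\eta_t$.

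The crux is to telescope these quadratic terms even though the comparator moves from $z_t$ to $z_{t+1}$. Writing $A_t = 1/\eta_t = \frac{\ell(1-\gamma^t)}{1-\gamma}$, I would first re-center the negative term exactly as in \eqref{eq:csBound}: expanding $\|\theta_{t+1}-z_t\|^2 = \|\theta_{t+1}-z_{t+1}\|^2 + \|z_{t+1}-z_t\|^2 + 2(\theta_{t+1}-z_{t+1})^\top(z_{t+1}-z_t)$, dropping the nonnegative square, and applying Cauchy--Schwarz together with $\|\theta_{t+1}-z_{t+1}\|\le 2D$ gives $\|\theta_{t+1}-z_t\|^2 \ge \|\theta_{t+1}-z_{t+1}\|^2 - 4D\|z_{t+1}-z_t\|$. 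This trades the drift for a penalty $2DA_t\|z_{t+1}-z_t\|$ while leaving the quadratic terms centered consistently at the current comparator.

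The identity that makes the step size work is $A_t - \ell = \gamma A_{t-1}$ (with $A_0 = 0$), which I would verify by direct computation. With it the surviving quadratic terms read $\frac{\gamma A_{t-1}}{2}\|\theta_t - z_t\|^2 - \frac{A_t}{2}\|\theta_{t+1}-z_{t+1}\|^2$; reindexing the second sum to $\sum_{s}\frac{A_{s-1}}{2}\|\theta_s - z_s\|^2$ leaves interior terms carrying the coefficient $\frac{(\gamma-1)A_{t-1}}{2}\le 0$, a vanishing boundary term at $t=1$ (since $A_0=0$), and a nonpositive boundary term $-\frac{A_T}{2}\|\theta_{T+1}-z_{T+1}\|^2$. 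Hence the telescoped quadratic sum is nonpositive. Finally I would bound $A_t \le \ell/(1-\gamma)$ so the accumulated drift penalty is at most $\frac{2D\ell}{1-\gamma}\sum_t\|z_{t+1}-z_t\|$, and set $z_{T+1}=z_T$ so that this sum is at most $V$, yielding exactly $\sum_{t=1}^T(f_t(\theta_t)-f_t(z_t)) \le 2D\ell\frac{1}{1-\gamma}V + \frac{G^2}{2}\sum_{t=1}^T\eta_t$. The main obstacle is orchestrating the weighted telescoping so the moving-comparator terms cancel; everything hinges on the relation $A_t-\ell=\gamma A_{t-1}$, which is precisely the property the step size $\eta_t = (1-\gamma)/(\ell(1-\gamma^t))$ was engineered to satisfy.
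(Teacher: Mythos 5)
Your proposal is correct and follows essentially the same route as the paper's proof: non-expansive projection, $\ell$-strong convexity, the re-centering bound $\|\theta_{t+1}-z_t\|^2 \ge \|\theta_{t+1}-z_{t+1}\|^2 - 4D\|z_{t+1}-z_t\|$, and the weighted telescoping closed by the step-size structure. Your identity $A_t - \ell = \gamma A_{t-1}$ (with $A_0=0$) is just a tidier packaging of the paper's observations that $\frac{1}{\eta_1}-\ell = 0$ and $\frac{1}{\eta_t}-\frac{1}{\eta_{t-1}}-\ell = \frac{\ell(1-\gamma)(\gamma^{t-1}-1)}{1-\gamma}\le 0$, so the two arguments coincide term by term.
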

\begin{proof}
According to the non-expansive property of the projection operator and the update rule in Eq.\eqref{eq::strongly_update},
we have
\begin{equation*}
\begin{array}{ll}
\left\|\theta_{t+1}-z_t\right\|^2 &\le \left\|\theta_t-\eta_t\nabla f_t(\theta_t)-z_t\right\|^2 \\
& = \left\|\theta_t-z_t\right\|^2 -2\eta_t\nabla f_t(\theta_t)^T(\theta_t-z_t)
+\eta_t^2\left\|\nabla f_t(\theta_t)\right\|^2
\end{array}
\end{equation*}
The reformulation gives us
\begin{equation} 
\label{eq::strongly_prob_contraction_ineq}
\begin{array}{ll}
\nabla f_t(\theta_t)^T(\theta_t-z_t)
&\le \frac{1}{2\eta_t}\big(\left\|\theta_t-z_t\right\|^2 - \left\|\theta_{t+1}-z_t\right\|^2\big)
+ \frac{\eta_t}{2}\left\|\nabla f_t(\theta_t)\right\|^2
\end{array}
\end{equation}

Moreover, from the strong convexity, we have 
$f_t(z_t)\ge f_t(\theta_t)+\nabla f_t(\theta_t)^T(z_t-\theta_t)+\frac{\ell}{2}\left\|z_t-\theta_t\right\|^2$,
which is equivalent to 
$\nabla f_t(\theta_t)^T(\theta_t-z_t)\ge f_t(\theta_t)-f_t(z_t)+\frac{\ell}{2}\left\|z_t-\theta_t\right\|^2$.
Combined with Eq.\eqref{eq::strongly_prob_contraction_ineq}, we have
\begin{equation}
\label{eq::strongly_ineq_with_l}
\begin{array}{ll}
f_t(\theta_t)-f_t(z_t) &\le \frac{1}{2\eta_t}\big(\left\|\theta_t-z_t\right\|^2 
- \left\|\theta_{t+1}-z_t\right\|^2\big)
+ \frac{\eta_t}{2}\left\|\nabla f_t(\theta_t)\right\|^2-\frac{\ell}{2}\left\|z_t-\theta_t\right\|^2
\end{array}
\end{equation}

Then we can lower bound $\|\theta_{t+1}-z_t\|^2$ by
\begin{equation}
\label{eq::strongly_lower_with_var}
\begin{array}{ll}
    \|\theta_{t+1}-z_t\|^2 & = \|\theta_{t+1}-z_{t+1}\|^2 +
                              \|z_{t+1}-z_t\|^2
                              +2 (\theta_{t+1}-z_{t+1})^\top
                              (z_{t+1}-z_t)\\
    
    & \ge \|\theta_{t+1}-z_{t+1}\|^2 - 4 D\|z_{t+1}-z_t\|
\end{array}
\end{equation}

Combining (\ref{eq::strongly_ineq_with_l}) and \eqref{eq::strongly_lower_with_var} gives
\begin{equation*}
\begin{array}{ll}
f_t(\theta_t)-f_t(z_t) 
&\le \frac{1}{2\eta_t}\big(\left\|\theta_t-z_t\right\|^2 
- \left\|\theta_{t+1}-z_{t+1}\right\|^2\big)+\frac{2D}{\eta_t}\|z_{t+1}-z_t\| \\
&\quad+ \frac{\eta_t}{2}\left\|\nabla f_t(\theta_t)\right\|^2-\frac{\ell}{2}\left\|z_t-\theta_t\right\|^2
\end{array}
\end{equation*}

Summing over $t$ from $1$ to $T$, dropping the term $-\frac{1}{2\eta_T}\|\theta_{T+1}-z_{T+1}\|^2$,
setting $z_{T+1}= z_T$, using the inequality $\|\nabla f_t(\theta_t)\|^2\le G^2$, and re-arranging gives
\begin{equation*}
\begin{array}{ll}
\sum\limits_{t=1}^T \Big(f_t(\theta_t)-f_t(z_t)\Big) 
&\le \frac{1}{2}(\frac{1}{\eta_1}-\ell)\|\theta_1-z_1\|^2
+\frac{1}{2}\sum\limits_{t=1}^T(\frac{1}{\eta_t}-\frac{1}{\eta_{t-1}}-\ell)\|\theta_t-z_t\|^2 \\
& \quad+2D\sum\limits_{t=1}^{T-1}\frac{1}{\eta_t}\|z_{t+1}-z_t\| + \frac{G^2}{2}\sum\limits_{t=1}^T\eta_t \\
& \le 2D\ell \frac{1}{1-\gamma}V + \frac{G^2}{2}\sum\limits_{t=1}^T\eta_t
\end{array}
\end{equation*}
where for the second inequality, we use the following results:
$\frac{1}{\eta_1}-\ell = 0$,
$\frac{1}{\eta_t}-\frac{1}{\eta_{t-1}}-\ell = \frac{\ell(1-\gamma)(\gamma^{t-1}-1)}{1-\gamma}\le 0$,
$\frac{1}{\eta_t} = \frac{\ell(1-\gamma^t)}{1-\gamma}\le \frac{\ell}{1-\gamma}$,
and the definition of $V$.

\end{proof}

Similar to the case of discounted online Newton methods, if a bound on
the path length, $V$, is known, the discount factor can be tuned to
achieve low dynamic regret: 
\begin{corollary}
\label{corr:strongly_dynamic_regret}
By setting $\gamma = 1-\frac{1}{2}\sqrt{\frac{\max\{V,\log^2 T/T\}}{2DT}}$,
the following bound can be obtained:
\begin{equation*}
\sum\limits_{t=1}^T \Big(f_t(\theta_t)-f_t(z_t)\Big)\le \max\{O(\log T),O(\sqrt{TV})\}.
\end{equation*}

\end{corollary}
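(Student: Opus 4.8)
The plan is to start directly from the dynamic regret bound of Theorem~\ref{thm::strongly_regret} and estimate its two terms under the prescribed discount factor. Writing $M = \max\{V,\log^2 T/T\}$, the choice $\gamma = 1-\frac{1}{2}\sqrt{M/(2DT)}$ means $1-\gamma = \frac{1}{2}\sqrt{M/(2DT)}$, so the first term $\frac{2D\ell}{1-\gamma}V$ becomes $4D\ell V\sqrt{2DT/M}$. Since $V\le M$ we have $V/\sqrt{M}\le\sqrt{M}$, so this term is $O(\sqrt{MT})$. The identity $\sqrt{MT}=\max\{\sqrt{VT},\log T\}$ (which holds because $\sqrt{\max\{a,b\}}=\max\{\sqrt a,\sqrt b\}$ and $MT=\max\{VT,\log^2 T\}$) immediately rewrites this as $\max\{O(\log T),O(\sqrt{TV})\}$, the target order.

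For the second term I would expand $\frac{G^2}{2}\sum_{t=1}^T\eta_t=\frac{G^2(1-\gamma)}{2\ell}\sum_{t=1}^T\frac{1}{1-\gamma^t}$ and apply the integral bound of Lemma~\ref{lem:integral}, giving
\begin{equation*}
\frac{G^2(1-\gamma)}{2\ell}\sum_{t=1}^T\frac{1}{1-\gamma^t}\le\frac{G^2}{2\ell}\left(1+(1-\gamma)(T-1)+(1-\gamma)\frac{\log(1-\gamma)}{\log\gamma}\right).
\end{equation*}
The constant $1$ is harmless, and the term $(1-\gamma)(T-1)$ is of order $(1-\gamma)T=O(\sqrt{MT})$, matching $\max\{O(\log T),O(\sqrt{TV})\}$ via the same identity as above.

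The delicate piece — and what I expect to be the main obstacle — is the last summand $(1-\gamma)\frac{\log(1-\gamma)}{\log\gamma}$. Since $1-\gamma\in(0,1)$ and $\gamma\in(0,1)$ both logarithms are negative, so the quantity is positive, and its size is controlled using the elementary inequality $-\log\gamma\ge 1-\gamma$ (a restatement of $\log x\le x-1$). This gives $\frac{1-\gamma}{-\log\gamma}\le 1$, hence $(1-\gamma)\frac{\log(1-\gamma)}{\log\gamma}\le\log\frac{1}{1-\gamma}$. It then remains to check that $\log\frac{1}{1-\gamma}=O(\log T)$, and this is exactly where the $\log^2 T/T$ floor inside the maximum is essential: because $M\ge\log^2 T/T$ we get $\frac{1}{1-\gamma}=2\sqrt{2DT/M}\le 2\sqrt{2D}\,T/\log T$, whose logarithm is $O(\log T)$. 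Without this floor, $1-\gamma$ could be so small that the logarithmic term blows up. Assembling the three estimates, every contribution is bounded by $\max\{O(\log T),O(\sqrt{TV})\}$, which proves the corollary; the argument closely parallels the estimates used in the proof of Theorem~\ref{thm::quad_static_regret}.
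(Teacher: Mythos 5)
Your proposal is correct and takes essentially the same route as the paper's proof: bound the two terms of Theorem~\ref{thm::strongly_regret} under the prescribed $\gamma$, apply the integral bound of Lemma~\ref{lem:integral}, and use the $\log^2 T/T$ floor inside the maximum to control $\log\frac{1}{1-\gamma}$ so the logarithmic summand stays $O(\log T)$. Your elementary inequality $-\log\gamma\ge 1-\gamma$ in place of the paper's $\ln(1+x)\ge \frac{1}{2}x$ estimate, and your identity $\sqrt{MT}=\max\{\sqrt{TV},\log T\}$ for the path-length term, are cosmetic simplifications of the same argument, not a different approach.
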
 

This result is tighter than the $O(\sqrt{T(1+V)})$ bound obtained
by \cite{zhang2018adaptive} on convex functions, but not directly
comparable to the $O(V^*)$ bounds obtained in
\cite{mokhtari2016online} for smooth, strongly convex functions.

Similar to the Corollary~\ref{cor:logBounds} on discounted online Newton methods, 
Corollary~\ref{corr:strongly_dynamic_regret} requires knowing $V$. In the next section, we will see how a
meta-algorithm can be used to obtain the same bounds without knowing $V$.

Please refer to the appendix for the proof of Corollary~\ref{corr:strongly_dynamic_regret}.

\section{Meta-algorithm}
\label{sec:meta-alg}

In previous sections, we discussed the results on dynamic regret for
both $\alpha$-exp-concave and $\ell$-strongly convex objectives.
The tightest regret bounds were obtained by choosing a discount factor
that depends on $V$, a bound on the path length. 
In this section, we solve this issue 
by running multiple algorithms in parallel with different discount factors.

For online convex optimization, a similar meta-algorithm has been used
by \cite{zhang2018adaptive} to search over step sizes. However, the
method of \cite{zhang2018adaptive} cannot be used directly in either the $\alpha$-exp-concave or $\ell$-strongly convex case
due to the added $O(\sqrt{T})$ regret from running multiple
algorithms. In order to remove this factor, we exploit the
exp-concavity in the experts algorithm, as in
Chapter 3 in \cite{cesa2006prediction}. 

In this section, we will show that by using appropriate parameters and analysis designed specifically for our cases,
the meta-algorithm can be used to solve our issues.

\begin{algorithm}
\caption{Meta-Algorithm}
\label{alg:meta}
  \begin{algorithmic}
    \STATE{Given step size $\lambda$, and a set $\cH$ containing
      discount factors for each algorithm.}
    \STATE{Activate a set of algorithms $\{A^\gamma|\gamma\in\cH\} $
    by calling Algorithm \ref{alg:discountedNewton} (exp-concave case) or the update in Eq.\eqref{eq::strongly_update} (strongly convex case)
    for each parameter $\gamma\in\cH$. }
    \STATE{Sort $\gamma$ in descending order $\gamma_1\ge\gamma_2\ge\dots\ge\gamma_N$, 
    and set $w_1^{\gamma_i} = \frac{C}{i(i+1)}$ with $C = 1+1/|\cH|$.}
    \FOR{t=1,\ldots,T}
    \STATE{Obtain $\theta_t^{\gamma}$ from each algorithm $A^{\gamma}$. }
    \STATE{Play $\theta_t = \sum\limits_{\gamma\in\cH}w_t^\gamma \theta_t^\gamma$,
    and incur loss $f_t(\theta_t^\gamma)$ for each $\theta_t^\gamma$. }
    \STATE{Update $w_t^\gamma$ by
            \begin{equation*}
               w_{t+1}^\gamma = \frac{w_t^\gamma\exp(-\lambda f_t(\theta_t^\gamma))}
               {\sum\limits_{\mu\in\cH}w_t^\mu \exp(-\lambda f_t(\theta_t^\mu))}.
            \end{equation*}
    \STATE{Send back the gradient $\nabla f_t(\theta_t^\gamma)$ for each algorithm $A^\gamma$.}
    }
    \ENDFOR
  \end{algorithmic}
\end{algorithm}

\subsection{Exp-concave Case}

Before showing the regret result, we first show that the cumulative loss of the meta-algorithm
is comparable to all $A^\gamma\in\cH$:
\begin{lemma}\label{lem:meta-expert-compare}
If $f_t$ is $\alpha$-exp-concave and $\lambda = \alpha$, 
the cumulative loss difference of Algorithm \ref{alg:meta} for any $\gamma\in\cH$ is bounded as:
\begin{equation*}
\sum\limits_{t=1}^T (f_t(\theta_t) - f_t(\theta_t^\gamma))\le \frac{1}{\alpha}\log\frac{1}{w_1^\gamma}
\end{equation*}

\end{lemma}

This result shows how $O(\sqrt{T})$ regret incurred by running an
experts algorithm is reduced in the $\alpha$-exp-concave case. %
The result is similar to Proposition 3.1 of
\cite{cesa2006prediction}.
We also provide a proof in the appendix.

Based on the above lemma, if we can show that there exists an algorithm $A^\gamma$, 
which can bound the regret $\sum_{t=1}^T (f_t(\theta_t^\gamma)-f_t(z_t)) \le \max\{O(\log T), O(\sqrt{TV})\}$,
then we can combine these two results and show that the regret holds for $\theta_t,t=1,\dots,T$ as well:
\begin{theorem}
\label{thm:mega_exp-concave}
For any comparator sequence $z_1,\dots,z_T\in\cS$, 
setting $\cH=\Big\{\gamma_i = 1-\eta_i\Big|i=1,\dots,N\Big\}$ with $T\ge 2$
where $\eta_i = \frac{1}{2}\frac{\log T}{T\sqrt{2D}}2^{i-1}$, 
$N = \lceil \frac{1}{2}\log_2 (\frac{2DT^2}{\log^2 T})\rceil+1$,
and $\lambda = \alpha$ leads to the result:
\begin{equation*}
\sum_{t=1}^T (f_t(\theta_t)-f_t(z_t))\le O(\max\{\log T, \sqrt{TV}\})
\end{equation*}

\end{theorem}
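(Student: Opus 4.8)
The plan is to split the meta-algorithm's regret against the comparator sequence into a meta-versus-expert part and an expert-versus-comparator part. Fixing any base index $i$, I would write
\[
\sum_{t=1}^T (f_t(\theta_t) - f_t(z_t)) = \sum_{t=1}^T (f_t(\theta_t) - f_t(\theta_t^{\gamma_i})) + \sum_{t=1}^T (f_t(\theta_t^{\gamma_i}) - f_t(z_t)).
\]
The first sum is handled immediately by Lemma~\ref{lem:meta-expert-compare} with $\lambda = \alpha$, giving the bound $\frac{1}{\alpha}\log\frac{1}{w_1^{\gamma_i}}$. Since $w_1^{\gamma_i} = \frac{C}{i(i+1)}$ with $C \ge 1$, this is at most $\frac{2}{\alpha}\log(i+1) = O(\log N)$, and because $N = \lceil\frac{1}{2}\log_2(2DT^2/\log^2 T)\rceil+1 = O(\log T)$, the meta-overhead is only $O(\log\log T)$, which is dominated by $O(\log T)$. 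Hence the entire burden falls on producing a single index $i$ for which the second sum is $\max\{O(\log T),O(\sqrt{TV})\}$.

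For that second sum I would invoke the discounted Newton regret bound of Theorem~\ref{thm:expConcaveThm} for $A^{\gamma_i}$. Writing $\eta = 1-\gamma$, that bound behaves like $a_1 T\eta - a_2\log\eta + a_3 V/\eta + a_4$ up to the estimate $-\log(1-\eta)\le \eta/(1-\eta)$, and Corollary~\ref{cor:logBounds} shows its continuous minimizer $\eta^* = \frac{1}{2}\sqrt{\max\{V,\log^2 T/T\}/(2DT)}$ yields $\max\{O(\log T),O(\sqrt{TV})\}$. The grid $\{\eta_i = 1-\gamma_i\}$ is a geometric sequence with ratio $2$, and the parameters are chosen exactly so its endpoints bracket every admissible $\eta^*$: I would verify that $\eta_1 = \frac{1}{2}\frac{\log T}{T\sqrt{2D}}$ equals $\eta^*$ when $V \le \log^2 T/T$, and that the choice of $N$ forces $2^{N-1} \ge T\sqrt{2D}/\log T$, hence $\eta_N \ge \frac{1}{2} \ge \eta^*$ for every $V$ up to $2DT$ (the largest possible path length, since $\|x-y\|\le 2D$ on $\cS$).

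Given this covering, for the actual $V$ of the comparator sequence there is an index $i$ with $\eta_i \in (\eta^*/2, \eta^*]$, i.e. $\eta_i = c\,\eta^*$ with $c \in (1/2,1]$. The core step is then a robustness estimate: substituting $\eta_i$ into the Theorem~\ref{thm:expConcaveThm} bound and checking each term. Using $-\log(1-\eta_i)\le 2\eta_i$ (valid since $\eta_i \le \tfrac{1}{2}$), the linear term obeys $a_1 T\eta_i \le a_1 T\eta^*$; the path-length term obeys $a_3 V/\eta_i \le 2a_3 V/\eta^*$; and the logarithmic term obeys $-a_2\log\eta_i \le a_2\log 2 - a_2\log\eta^*$. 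Each is at most a constant multiple of the corresponding term at $\eta^*$, so the bound at $\eta_i$ remains $\max\{O(\log T),O(\sqrt{TV})\}$, exactly matching Corollary~\ref{cor:logBounds} up to constants. Adding the $O(\log\log T)$ meta-overhead gives the claimed $O(\max\{\log T,\sqrt{TV}\})$.

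The main obstacle I anticipate is precisely this covering-plus-robustness step. One must confirm both that the geometric grid's endpoints truly contain the whole admissible range of $\eta^*$ as $V$ sweeps over $[0,2DT]$ (this is what pins down the exact value of $\eta_1$ and the specific $N$), and that a factor-$2$ error in $\eta$ inflates each term of the discounted Newton bound only by a constant. The bookkeeping over the constants $a_1,\dots,a_4$ and the two regimes $V \gtrless \log^2 T/T$ is where the argument is most delicate, but it is essentially a re-run of the proof of Corollary~\ref{cor:logBounds} evaluated at the nearest grid point instead of the exact optimizer, so no genuinely new idea is needed beyond the discretization.
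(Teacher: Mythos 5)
Your proposal is correct and follows essentially the same route as the paper's proof: the same decomposition into a meta-versus-expert term bounded by Lemma~\ref{lem:meta-expert-compare} (yielding the $O(\log\log T)$ overhead from the prior $w_1^{\gamma_i}=\frac{C}{i(i+1)}$) and an expert-versus-comparator term handled by locating a grid point $\eta_k$ with $\eta_k\le\eta^*\le 2\eta_k$ and checking term-by-term that the bound of Theorem~\ref{thm:expConcaveThm} degrades by at most a constant factor relative to Corollary~\ref{cor:logBounds}. Your verification of the grid endpoints ($\eta_1$ matching $\eta^*$ at $V\le\log^2T/T$, and $2^{N-1}\ge T\sqrt{2D}/\log T$ forcing $\eta_N\ge\frac{1}{2}\ge\eta^*$) and your three constant-factor estimates are the same calculations the paper performs, up to the cosmetic substitution $-\log(1-\eta_i)\le 2\eta_i$ in place of the paper's direct comparison $-T\log\gamma_k\le T\log\frac{1}{\gamma^*}$.
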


As described previously, the proof's main idea is 
to show that we could both find an algorithm $A^\gamma$
bounding the regret $\sum_{t=1}^T (f_t(\theta_t^\gamma)-f_t(z_t)) \le \max\{O(\log T), O(\sqrt{TV})\}$
and
cover the $V$ with $O(\log T)$ different $\gamma$ choices.
Please see the appendix for the formal proof.

In practice, we include the additional case when $\gamma = 1$ to make the overall algorithm 
explicitly balance the static regret.
Also, the free parameter $\epsilon$ used in Algorithm \ref{alg:discountedNewton} is important
for the actual performance. If it is too small,
the update will be easily effected by the gradient to have high generalization error.
In practice, it can be set to be equal to $1/(\rho^2 D^2)$ or $1/(\rho^2 D^2 N)$ 
with $\rho = \frac{1}{2}\min\{\frac{1}{4GD},\alpha\}$ like in \cite{hazan2016introduction}.

\subsection{Strongly Convex Case}

For the strongly convex problem, since the parameter $\gamma$ 
used in Corollary \ref{corr:strongly_dynamic_regret} is the same as the one in Corollary \ref{cor:logBounds},
it seems likely that the 
meta-algorithm should work with the same setup in as Theorem
\ref{thm:mega_exp-concave}. The only parameter that needs to be
changed is $\lambda$, which was set above to $\alpha$, the parameter
of $\alpha$-exp-concavity. 

To proceed, we first show that the $\ell$-strongly convex function 
with bounded gradient is also $\ell/G^2$-exp-concave (e.g.,$\left\|\nabla f_t\right\|$$\le$$ G$).
Previous works also pointed out this, 
but their statement only works when $f_t$ is second-order differentiable,
while our result is true when $f_t$ is first-order differentiable.

\begin{lemma}
\label{lem::strongly_is_exp}
For the $\ell$-strongly convex function $f_t$ with $\|\nabla f_t\|\le G$, 
it is also $\alpha$-exp-concave with $\alpha = \ell/G^2$.
\end{lemma}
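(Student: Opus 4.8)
The plan is to prove the lemma straight from the definition, i.e.\ to show that $g(\theta) = e^{-\alpha f_t(\theta)}$ with $\alpha = \ell/G^2$ is concave on $\cS$. Since $f_t$ is differentiable, so is $g$, and for a differentiable function on a convex set concavity is \emph{equivalent} to the tangent-plane condition $g(y) \le g(x) + \nabla g(x)^\top(y-x)$ for all $x,y\in\cS$. Because $\nabla g(x) = -\alpha e^{-\alpha f_t(x)}\nabla f_t(x)$ and $e^{-\alpha f_t(x)} > 0$, dividing by $e^{-\alpha f_t(x)}$ shows this is equivalent to
\begin{equation*}
e^{-\alpha\left(f_t(y)-f_t(x)\right)} \le 1 - \alpha\,\nabla f_t(x)^\top(y-x), \qquad \forall\, x,y\in\cS.
\end{equation*}
This single inequality uses only first-order information, which is exactly what lets the argument cover the merely once-differentiable case, improving on the usual Hessian-based proof.

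To verify it I would write $b = \nabla f_t(x)^\top(y-x)$ and $r = \|y-x\|$. Applying $\ell$-strong convexity at $x$ gives $f_t(y)-f_t(x) \ge b + \tfrac{\ell}{2}r^2$, and since $t\mapsto e^{-\alpha t}$ is decreasing this reduces the target to the cleaner claim $e^{-\alpha b - \frac{\alpha\ell}{2}r^2} \le 1 - \alpha b$. Cauchy--Schwarz and the gradient bound give $|b|\le Gr$, so $b\ge -Gr$. After substituting $\alpha = \ell/G^2$ and normalizing through $q = \alpha b$ and $\tau = \tfrac{\alpha\ell}{2}r^2 \ge 0$ (so that $\alpha G r = \sqrt{2\tau}$), the whole problem becomes the one-parameter inequality $\psi(q) := 1 - q - e^{-q-\tau} \ge 0$ on a range of admissible $q$.

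The delicate point, which I expect to be the main obstacle, is controlling that range of $q$: the gradient bound at $x$ alone only yields $-\sqrt{2\tau}\le q\le\sqrt{2\tau}$, and the naive one-sided estimate $e^{-q-q^2/2}\le 1-q$ already fails once $q\ge 1$, so strong convexity at the base point is genuinely insufficient. The fix is to use that the gradient is bounded by $G$ \emph{everywhere}: plain convexity gives $b \le f_t(y)-f_t(x)$, while strong convexity applied at the endpoint $y$ together with $\|\nabla f_t(y)\|\le G$ gives $f_t(y)-f_t(x)\le Gr - \tfrac{\ell}{2}r^2$; chaining these yields $b \le Gr - \tfrac{\ell}{2}r^2$, i.e.\ $q \le \sqrt{2\tau}-\tau$. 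Since $\psi''(q) = -e^{-q-\tau} < 0$, the function $\psi$ is concave in $q$, so its minimum over $[-\sqrt{2\tau},\,\sqrt{2\tau}-\tau]$ is attained at an endpoint. Checking the two endpoints reduces, with $w = \sqrt{2\tau}$, to the elementary bounds $e^{-w}\le 1 - w + \tfrac{w^2}{2}$ and $\ln(1+w)\ge w - \tfrac{w^2}{2}$, both valid for $w\ge 0$ (each provable by differentiating twice from $0$). This closes the inequality for every admissible $(b,r)$, establishing concavity of $e^{-\alpha f_t}$ and hence $\alpha$-exp-concavity with $\alpha = \ell/G^2$.
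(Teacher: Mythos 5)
Your proof is correct, and it closes the lemma by a genuinely different route than the paper. The two arguments share the same essential ingredients: both avoid second derivatives by working with a first-order characterization of concavity of $g=e^{-\alpha f_t}$, and both must apply strong convexity \emph{in both directions} together with the global bound $\|\nabla f_t\|\le G$ --- your chained estimate $\nabla f_t(x)^\top(y-x)\le G\|y-x\|-\tfrac{\ell}{2}\|y-x\|^2$ plays exactly the role of the paper's pair of inequalities obtained by invoking strong convexity at $x$ and at $y$. Where you diverge is in the characterization and the closing step. The paper verifies monotonicity of the gradient map, $\langle\nabla g(x)-\nabla g(y),x-y\rangle\le 0$, reduces via the mean value theorem to a ratio bounded by $G$, and after substituting $\alpha=\ell/G^2$ finishes by comparing, order by order, the infinite Taylor series of $\exp\big(\alpha(f(x)-f(y))\big)$ on the two sides. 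You instead verify the tangent-plane inequality, normalize to $\psi(q)=1-q-e^{-q-\tau}$ with $q=\alpha\,\nabla f_t(x)^\top(y-x)$ and $\tau=\tfrac{\alpha\ell}{2}\|y-x\|^2$, confine $q$ to $[-\sqrt{2\tau},\,\sqrt{2\tau}-\tau]$, and exploit concavity of $\psi$ in $q$ to reduce everything to two endpoint checks, namely $e^{-w}\le 1-w+\tfrac{w^2}{2}$ and $\ln(1+w)\ge w-\tfrac{w^2}{2}$ for $w=\sqrt{2\tau}\ge 0$. Your route buys a finite, self-contained verification (two classical scalar inequalities, no manipulation of an infinite series), and it isolates precisely why the refined upper bound on $q$ is indispensable: on your admissible range $q\le\sqrt{2\tau}-\tau\le\tfrac12$, so $1-q$ stays positive, whereas Cauchy--Schwarz alone would permit $q$ up to $\sqrt{2\tau}$, where the target inequality genuinely fails; the paper's series comparison obscures this point. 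One small remark worth recording: your two side constraints are simultaneously satisfiable only when $\|y-x\|\le 4G/\ell$, but this holds automatically for any actual pair $x,y\in\cS$ (it is forced by the same two inequalities), so the endpoint argument covers every admissible configuration, including $y=x$ where $\psi(0)=0$ gives equality.
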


Please refer to the appendix for the proof.

Lemma \ref{lem::strongly_is_exp} indicates that running Algorithm \ref{alg:meta} with strongly convex function
leads to the same result as in Lemma \ref{lem:meta-expert-compare}.
Thus, using the similar idea as discussed in the case of $\alpha$-exp-concavity and Algorithm \ref{alg:meta},
the theorem below can be obtained:
\begin{theorem}
\label{thm:mega_strongly}
For any comparator sequence $z_1,\dots,z_T\in\cS$, 
setting $\cH=\Big\{\gamma_i = 1-\eta_i\Big|i=1,\dots,N\Big\}$ with $T\ge 2$
where $\eta_i = \frac{1}{2}\frac{\log T}{T\sqrt{2D}}2^{i-1}$, 
$N = \lceil \frac{1}{2}\log_2 (\frac{2DT^2}{\log^2 T})\rceil+1$,
and $\lambda = \ell/G^2$ leads to the result:
\begin{equation*}
\sum_{t=1}^T (f_t(\theta_t)-f_t(z_t))\le O(\max\{\log T, \sqrt{TV}\})
\end{equation*}
\end{theorem}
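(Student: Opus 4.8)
The plan is to mirror the two-stage argument used for the exp-concave meta-algorithm in Theorem~\ref{thm:mega_exp-concave}, replacing exp-concavity that is exploited directly by the exp-concavity \emph{inherited} from strong convexity. The first observation is that Lemma~\ref{lem::strongly_is_exp} guarantees each $f_t$ is $\alpha$-exp-concave with $\alpha = \ell/G^2$; since the meta-algorithm is run with $\lambda = \ell/G^2 = \alpha$, Lemma~\ref{lem:meta-expert-compare} applies verbatim and yields, for every $\gamma\in\cH$,
\begin{equation*}
\sum_{t=1}^T \big(f_t(\theta_t) - f_t(\theta_t^\gamma)\big) \le \frac{G^2}{\ell}\log\frac{1}{w_1^\gamma}.
\end{equation*}
With the prescribed weights $w_1^{\gamma_i} = C/(i(i+1))$ this penalty is at most $\frac{G^2}{\ell}\log(i(i+1)/C) = O(\log N)$, and since $N = O(\log T)$ the expert overhead is $O(\log\log T)$, dominated by the target $O(\log T)$ term.

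Second, I would control the dynamic regret of a single suitably chosen sub-algorithm $A^{\gamma_i}$. Theorem~\ref{thm::strongly_regret} bounds its regret by $2D\ell\frac{V}{1-\gamma_i} + \frac{G^2}{2}\sum_{t=1}^T \eta_t$, and Lemma~\ref{lem:integral} shows $\sum_t \eta_t = \frac{1-\gamma_i}{\ell}\sum_t \frac{1}{1-\gamma_i^t}$ is, up to logarithmic corrections, of order $\frac{(1-\gamma_i)T}{\ell}$. Thus the single-algorithm regret behaves like $\frac{2D\ell V}{1-\gamma_i} + \frac{G^2(1-\gamma_i)T}{2\ell}$ plus lower-order terms, exactly the two competing terms balanced in Corollary~\ref{corr:strongly_dynamic_regret}. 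The continuous optimum is $1-\gamma^* = \tfrac12\sqrt{\max\{V,\log^2 T/T\}/(2DT)}$, producing the $\max\{O(\log T),O(\sqrt{TV})\}$ bound, with the static $O(\log T)$ floor arising when $V\le \log^2 T/T$.

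The crux is the grid-covering step: the set $\cH$ only contains the \emph{discrete} discount factors $\eta_i = \tfrac12\frac{\log T}{T\sqrt{2D}}2^{i-1}$, so I must show that for any admissible $V$ there is an index $i$ with $\eta_i$ within a factor of two of $1-\gamma^*$. Because the $\eta_i$ form a doubling grid whose smallest value $\eta_1$ equals $1-\gamma^*$ at the floor $V=\log^2 T/T$, and whose largest value $\eta_N$ reaches $\tfrac12$ (matching $V$ near its maximal path length $2DT$) precisely when $N = \lceil\tfrac12\log_2(2DT^2/\log^2 T)\rceil + 1$, such an index always exists: any target in $[\eta_1,\eta_N]$ lies in some interval $[\eta_i,2\eta_i]$. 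Replacing $1-\gamma^*$ by $\eta_i$ then changes each of the two dominant terms $\frac{2D\ell V}{1-\gamma}$ and $\frac{G^2(1-\gamma)T}{2\ell}$ by at most a constant factor, so $A^{\gamma_i}$ still attains $\max\{O(\log T),O(\sqrt{TV})\}$ regret against $z_1,\dots,z_T$. Adding this to the $O(\log\log T)$ expert penalty from the first step gives the claimed bound on $\sum_t (f_t(\theta_t)-f_t(z_t))$.

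The step I expect to be most delicate is verifying that this factor-two approximation of the optimal discount genuinely preserves the order of \emph{both} regimes simultaneously — in particular confirming that the logarithmic corrections hidden in Lemma~\ref{lem:integral} (the $\frac{\log(1-\gamma)}{\log\gamma}$ contribution) remain $o(\log T)$ uniformly across the whole grid and so do not degrade the static $O(\log T)$ floor. Everything else is a direct transcription of the exp-concave proof with $\alpha$ replaced by $\ell/G^2$.
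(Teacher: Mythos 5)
Your proposal follows essentially the same route as the paper's proof: it invokes Lemma~\ref{lem::strongly_is_exp} to inherit $(\ell/G^2)$-exp-concavity so that Lemma~\ref{lem:meta-expert-compare} applies verbatim, then uses the same doubling-grid covering argument to find $\eta_k$ within a factor of two of $1-\gamma^*$ from Corollary~\ref{corr:strongly_dynamic_regret}, and controls both terms of Theorem~\ref{thm::strongly_regret} up to constant factors before adding the $O(\log\log T)$ experts overhead. The only minor calibration is that the $\frac{\log(1-\gamma_k)}{\log\gamma_k}$ correction from Lemma~\ref{lem:integral}, after multiplication by $(1-\gamma_k)$, comes out as $O(\log T)$ rather than $o(\log T)$, which matches (rather than degrades) the static floor exactly as in the paper's computation, so your argument goes through unchanged.
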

Since the proof shares the same idea as Theorem \ref{thm:mega_exp-concave},
please refer to the appendix for the proof.

As discussed in the previous subsection,
in practice, we also include the case when $\gamma = 1$ to make the overall algorithm 
explicitly balance the static regret and set $\epsilon$ accordingly as in the exp-concave case.

\subsection{A Lower Bound}
In the previous subsections, we show how to achieve the improved dynamic regret
for both the exp-concave and strongly convex problems without knowing $V$.
In this subsection, we will give a lower bound,
which approaches the upper bound for large and small $V$.

\begin{proposition}
  \label{prop:lower-bound}
  For losses of the form $f_t(\theta) = (\theta - \epsilon_t)^2$, for
  all $\gamma_0 \in (0,1)$ and all $V=T^{\frac{2+\gamma_0}{4-\gamma_0}}$,
  there is a comparison sequence $z_1^T$
such that $\sum\limits_{t=2}^T\|z_t-z_{t-1}\|\le V$ and 
\begin{equation*}
\cR_d \ge \max\{O(\log T),O\big((VT)^{\frac{\gamma_0}{2}}\big)\}.
\end{equation*}

\end{proposition}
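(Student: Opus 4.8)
The plan is to prove the bound by the probabilistic method: exhibit a randomized family of instances of the form $f_t(\theta) = (\theta-\epsilon_t)^2$ on which \emph{every} history-dependent prediction rule suffers large expected dynamic regret, and then extract a single deterministic instance together with an admissible comparator. The intuition is that the learner, playing $\theta_t$ before seeing $f_t$, can never beat the conditional variance of a freshly drawn $\epsilon_t$, while a tracking comparator $z_t=\epsilon_t$ pays nothing.

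First I would construct the hard sequence. Let $\delta = V/(T-1)$ and let $\epsilon_t$ evolve as a symmetric $\pm\delta$ random walk, $\epsilon_t = \epsilon_{t-1}+\delta\sigma_t$ with $\sigma_t\in\{-1,+1\}$ independent and mean zero (reflected so as to remain in $\cS$). Taking the comparator $z_t=\epsilon_t$ makes the comparator loss $\sum_t f_t(z_t)=0$, and since every increment has magnitude exactly $\delta$ the path length is $\sum_{t=2}^T\|z_t-z_{t-1}\| = (T-1)\delta = V$, so the sequence is admissible. Next I would lower bound the learner's cumulative loss. The key observation is that $\theta_t$ is measurable with respect to the gradients $\nabla f_i(\theta_i) = 2(\theta_i-\epsilon_i)$, $i<t$, hence with respect to $\epsilon_1,\dots,\epsilon_{t-1}$, whereas the new increment $\sigma_t$ is independent of the past. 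Writing $a_t=\theta_t-\epsilon_{t-1}$, which is determined by the history, a one-line computation gives $\bbE[(\theta_t-\epsilon_t)^2\mid\mathcal{F}_{t-1}] = a_t^2+\delta^2\ge\delta^2$.

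Summing, $\bbE[\cR_d]=\bbE[\sum_t f_t(\theta_t)]\ge (T-1)\delta^2 = V^2/(T-1)$, so some realization of $\{\epsilon_t\}$ achieves $\cR_d\ge V^2/(T-1)$ while keeping path length $V$. Substituting $\delta=V/(T-1)$ and the prescribed $V=T^{(2+\gamma_0)/(4-\gamma_0)}$ yields $V^2/(T-1)=\Theta\!\big(T^{3\gamma_0/(4-\gamma_0)}\big)=\Theta\!\big((VT)^{\gamma_0/2}\big)$, which is the second term of the claimed maximum; the exponent arithmetic is exactly the identity $V^2/T=(VT)^{\gamma_0/2}$ forced by this choice of $V$. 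The $\log T$ term then comes for free: a constant sequence is admissible, so $\cR_d\ge\cR_s$, and the standard $\Omega(\log T)$ static-regret lower bound for strongly convex losses applies to the quadratic family, giving $\cR_d\ge\max\{\Omega(\log T),\Omega((VT)^{\gamma_0/2})\}$.

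The main obstacle I expect is reconciling boundedness with the variance lower bound: the fixed-magnitude walk gives a deterministic path length $V$ but can drift outside the compact set $\cS$, whereas independent signs $\epsilon_t=\delta\sigma_t$ stay bounded but make the path length random. I would resolve this by reflecting the walk at the boundary of $\cS$ — this keeps every iterate feasible and leaves the conditional variance equal to $\delta^2$ at interior steps and $\Omega(\delta^2)$ at the rare boundary steps, which is all the argument uses — together with a short concentration/union-bound step to secure a single realization that simultaneously has regret $\ge\Omega(V^2/T)$ and path length $\le V$ up to constants absorbed into the $O(\cdot)$. The remaining bookkeeping (the $a_t^2\ge0$ step and invoking the static lower bound) is routine.
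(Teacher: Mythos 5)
Your construction is sound in outline and genuinely different from the paper's. The paper avoids a random walk entirely: it draws $\epsilon_t$ i.i.d.\ uniform on $\{-2\sigma,2\sigma\}$ (so boundedness is automatic, no reflection needed) and, crucially, compares against the \emph{scaled} sequence $z_t=\epsilon_t/2$, which pays a nonzero loss $\sigma^2$ per step but lets the learner's forced loss ($\ge 4\sigma^2$ per step, since $\bbE[\theta_t\epsilon_t]=0$ by independence of $\theta_t$ from the fresh $\epsilon_t$) dominate, yielding $\bbE[\cR_d]\ge 3\sigma^2T$ in one line; tuning $\sigma=T^{-2(1-\gamma_0)/(4-\gamma_0)}$ gives path length $2\sigma T=V$ and regret $3T^{3\gamma_0/(4-\gamma_0)}=\Omega((VT)^{\gamma_0/2})$, exactly your exponent arithmetic. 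Your route instead takes an exact-tracking, zero-loss comparator and a conditional-variance argument along a $\pm\delta$ walk, which buys a cleaner regret identity ($\cR_d=\sum_t f_t(\theta_t)$) at the price of the boundary problem you flagged — and that is where your sketch has a concrete error: at a reflecting step both signs can map to the \emph{same} point (e.g.\ $\epsilon_{t-1}=D-\delta/2$ sends $+\delta$ and $-\delta$ to points differing by nothing after reflection, or $\epsilon_{t-1}=D$ sends both to $D-\delta$), so the conditional variance there is $0$, not $\Omega(\delta^2)$ as you claim. The repair is not the variance claim but an occupation-time estimate: since $\delta\sqrt{T}=T^{3\gamma_0/(2(4-\gamma_0))}\to\infty$ the walk genuinely hits the boundary and reflection cannot be dispensed with, so you must show the reflected walk spends only an $o(1)$ fraction of its time within $O(\delta)$ of $\partial\cS$ (roughly $\delta/D$ under the near-uniform occupation measure), whence $\bbE[\cR_d]\ge(1-o(1))T\delta^2$; this is standard but it is the actual content of the fix and is not supplied. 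Two smaller points: your ``concentration/union-bound'' step for the path length is unnecessary, since every increment has magnitude at most $\delta$ deterministically (reflection only shrinks it), so $\sum_{t=2}^T\|z_t-z_{t-1}\|\le(T-1)\delta=V$ surely and the plain probabilistic method on the expected regret suffices; and your treatment of the $\log T$ term via $\cR_d\ge\cR_s$ and the known $\Omega(\log T)$ static lower bound is the same (equally informal) move the paper makes by citing the static bound for the $V=0$ case. Net assessment: the paper's i.i.d.-plus-half-scaled-comparator trick is the slicker device precisely because it sidesteps everything your reflection argument must still prove; your approach works once the occupation-time lemma replaces the false per-step boundary-variance claim.
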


The above result has the following indications:
1. For $V = o(T)$ but approaching to $T$, it is impossible to achieve better bound of $\cR_d \ge O\Big((VT)^{\frac{\alpha_0}{2}}\Big)$
with $\alpha_0<1$.
2. For other ranges of $V$ like $V = O(\sqrt{T})$, its lower bound is not established and still an open question.

\paragraph{Proof of Proposition \ref{prop:lower-bound}:}

\begin{proof}

Since strongly convex problem with bounded gradient is also exp-concave 
due to Lemma \ref{lem::strongly_is_exp} shown in the next section,
we will only consider the strongly convex problem.

For the case when $V = 0$, $\cR_d$ reduces to the static regret $\cR_s$, which has the lower bound $O(\log T)$
as shown in \cite{abernethy2008optimal}.

Let us now consider the case when $V> 0$. The analysis is inspired by \cite{yang2016tracking}.
We will use $f_t(\theta) = (\theta-\epsilon_t)^2$ as the special case
to show the lower bound. Here $\epsilon_1^T$ is a sequence of independently generated
random variables from $\{-2\sigma,2\sigma\}$ with equal probabilities.
For the dynamic regret 
$\cR_d = \sum\limits_{t=1}^Tf_t(\theta_t) - \min\limits_{z_1^T\in \cS_V}\sum\limits_{t=1}^Tf_t(z_t)
\ge \sum\limits_{t=1}^Tf_t(\theta_t) - \sum\limits_{t=1}^Tf_t(z_t)$,
where $\cS_V = \{z_1^T:\sum\limits_{t=2}^T\|z_t-z_{t-1}\|\le V\}$, and
$z_t = \frac{1}{2}\epsilon_t$.
As a result, the expected value of
$\sum\limits_{t=1}^Tf_t(\theta_t) - \sum\limits_{t=1}^Tf_t(z_t)$
is $\bbE[\sum\limits_{t=1}^Tf_t(\theta_t) - \sum\limits_{t=1}^Tf_t(z_t)]$
$=$ $\bbE[\sum\limits_{t=1}^T(\theta_t^2 -2\theta_t\epsilon_t+\frac{3}{4}\epsilon_t^2)]$
$\ge$ $\sum\limits_{t=1}^T \bbE[-2\theta_t\epsilon_t +\frac{3}{4}\epsilon_t^2]$ $=$ $3\sigma^2 T$.
This implies that $\cR_d \ge 3\sigma^2 T$.
For the path length, $\sum\limits_{t=2}^T\|z_t-z_{t-1}\|\le 2\sigma T$.
Let us set $\sigma = T^{-\frac{2(1-\gamma_0)}{4-\gamma_0}}$ and $\gamma_0 \in (0,1)$.
Then $V = 2\sigma T = 2T^{\frac{2+\gamma_0}{4-\gamma_0}}$
and $(VT)^{\frac{\gamma_0}{2}}$ $=$ $2^{\frac{\gamma_0}{2}}T^{\frac{3\gamma_0}{4-\gamma_0}}$.
Then $\cR_d - \frac{3}{\sqrt{2}}(VT)^{\frac{\gamma_0}{2}}$ $\ge$ 
$3T^{\frac{3\gamma_0}{4-\gamma_0}} -  \frac{3}{\sqrt{2}}2^{\frac{\gamma_0}{2}}T^{\frac{3\gamma_0}{4-\gamma_0}}$
$\ge 0$. In other words, $\cR_d\ge O\Big((VT)^{\frac{\gamma_0}{2}}\Big)$, $\forall \gamma_0\in (0,1)$
with $V = 2T^{\frac{2+\gamma_0}{4-\gamma_0}}$.

In summary, we have
that
there always exist a 
exist a sequence of loss functions $f_1^T$
and a comparison sequence $z_1^T$
such that $\sum\limits_{t=2}^T\|z_t-z_{t-1}\|\le V = O(T^{\frac{2+\gamma_0}{4-\gamma_0}})$ and 
$\cR_d \ge \max\{O(\log T),O\big((VT)^{\frac{\gamma_0}{2}}\big)\}, \forall \gamma_0 \in (0,1)$.

\end{proof}

\section{Conclusion}
\label{sec:trade-off-conclusion}

In this chapter, 
we propose a discounted online Newton algorithm that
generalizes recursive least squares with forgetting factors and
existing online Newton methods. We prove a dynamic regret bound
$\max\{O(\log T),O(\sqrt{TV})\}$ which provides a rigorous
analysis of forgetting factor algorithms. 
In the special case of simple quadratic functions, we demonstrate that the
discounted Newton method reduces to a gradient descent algorithm with
a particular step size rule.
We show how this step size rule can be generalized to apply to
strongly convex functions, giving a substantially more
computationally efficient algorithm than the discounted online Newton
method, while recovering the dynamic regret guarantees.
The strongest regret guarantees depend on knowledge of the path
length, $V$. We show how to use a meta-algorithm that optimizes over
discount factors to obtain the same regret guarantees without
knowledge of $V$ as well as a lower bound which matches the obtained upper bound
for certain range of $V$.
Finally, when the functions are smooth we show how this
new gradient descent method enables a static regret of $\cR_s\le
O(T^{1-\beta})$ and $\cR_d^* \le O(T^{\beta} (1+V^*))$, where $\beta \in
(0,1)$ is a user-specified trade-off parameter.

\chapter{Online Adaptive Principal Component Analysis and Its extensions}
\label{chap:adaptive-pca}

In the previous chapter, 
we discussed several dynamic/static regret results under changing environments,
including the online least-squares and its extension to the 
exp-concave and strongly convex problem setups.
In this chapter, we are mainly concerned with the problem of online
Principal Component Analysis (online PCA) under changing environments.

As discussed in Chapter \ref{chap:related-work},
previous online PCA algorithms are based on either online gradient or matrix exponentiated gradient descent 
\cite{tsuda2005matrix,warmuth2006online,warmuth2008randomized,niew2016onlinepca}.
These works bound online PCA by the static regret,
which, as argued in previous chapters, is not appropriate for changing environments.
To have better adaptivity to the changing environments,
previous works proposed to run a pool of algorithms 
with either different parameters like in \cite{zhang2018adaptive,yuan2019trading} (for upper bounding dynamic regret) 
or different starting points like in \cite{hazan2009efficient} (for upper bounding adaptive regret).
This not only requires complex implementation,
but also increases the computational complexity per step by a factor of $O(\log T)$
due to the parallel running of different algorithms.

It is thus desired to have an efficient and easy-to-implement algorithm, 
which can eliminate the need of running multiple algorithms
while having the same theoretical guarantee.
This chapter introduces such an efficient algorithm for the specific online PCA problem 
under adaptive regret measure, which is adapted from the published work \cite{yuan2019online}. 
The proposed method mixes
the randomized algorithm from \cite{warmuth2008randomized} with a
fixed-share step \cite{herbster1998tracking}.
This is inspired by the work of \cite{cesa2012new,cesa2012mirror},
which shows that the Hedge algorithm \cite{freund1997decision} together
with a fixed-share step provides low regret under a variety of
measures, including adaptive regret.

Furthermore, we extend the idea of the additional fixed-share step 
to the online adaptive variance minimization 
in two different parameter spaces: the space of unit vectors and the
simplex.
In Section~\ref{sec:adaptive-pca-exp},
we also do the experiments to test our algorithm's effectiveness. 
In particular, we show
that our proposed algorithm can adapt to the changing environments
faster than the previous online PCA algorithm.

\section{Problem Formulation}
\label{sec:adaptive-pca-prob-form}

The goal of the PCA (uncentered) algorithm is to find a rank $k$
projection matrix $P$ 
that minimizes the compression loss: $\sum\limits_{t=1}^T
\left\|\mathbf{x_t}-P\mathbf{x_t}\right\|^2$. In this case,
 $P \in \mathbb{R}^{n\times n}$ 
 must be a symmetric positive semi-definite matrix
with only $k$ non-zero eigenvalues which are all equal to 1.

In online PCA,
the data points come in a stream. 
At each time $t$, the algorithm first chooses a projection matrix
$P_t$ with rank $k$, then the data point $\mathbf{x_t}$ is revealed,
and a compression loss of
$\left\|\mathbf{x_t}-P_t\mathbf{x_t}\right\|^2$ is incurred.

The online PCA algorithm \cite{warmuth2008randomized}
aims to minimize the static regret $\mathcal{R}_s$ ,which is the difference 
between the total expected compression loss
and the loss of the best projection matrix $P^*$ chosen in hindsight:
\begin{equation}
\label{eq::static_regret_pca}
\mathcal{R}_s = \sum\limits_{t=1}^T \mathbb{E}[\Tr((I-P_t)\mathbf{x_t}\mathbf{x_t}^\top)] - \sum\limits_{t=1}^T \Tr((I-P^*)\mathbf{x_t}\mathbf{x_t}^\top).
\end{equation}
The algorithm from \cite{warmuth2008randomized} is randomized and the
expectation is taken over the distribution of $P_t$ matrices. The
matrix 
$P^*$ is the solution to the following optimization problem with
$\mathcal{S}$ being the set of rank-$k$ projection matrices:
\begin{equation}
\label{eq::best_fixed_sol_PCA}
\min_{P\in \mathcal{S}} \sum\limits_{t=1}^T \Tr((I-P)\mathbf{x_t}\mathbf{x_t}^\top)
\end{equation}

Algorithms that minimize static regret will converge to $P^*$, which
is the best projection for the entire data set. However, in many 
scenarios the data generating process changes over time. 
In this case,
a solution that adapts to changes in the data set may be
desirable. To model environmental variation, 
several notions of dynamically varying regret have been
proposed \cite{herbster1998tracking,hazan2009efficient,cesa2012new}. 
In this chapter, we study adaptive regret $\mathcal{R}_a$ from \cite{hazan2009efficient}, 
which results in the following online adaptive PCA problem:
\begin{equation}
\label{eq::adaptive_regret_pca}
\begin{array}{l}
\mathcal{R}_a = \max\limits_{[r,s]\subset [1,T]}\Big\{
\sum\limits_{t=r}^s \mathbb{E}[\Tr((I-P_t)\mathbf{x_t}\mathbf{x_t}^\top)]
- \min\limits_{U\in \mathcal{S}}\sum\limits_{t=r}^s \Tr((I-U)\mathbf{x_t}\mathbf{x_t}^\top)  \Big\}
\end{array}
\end{equation}
In the next few sections,  
we will present an algorithm that achieves low adaptive regret.

\section{Learning the Adaptive Best Subset of Experts}
\label{sec:adaptive-subset-exp}

\begin{algorithm}[tb]
    \caption{Adaptive Best Subset of Experts}
    \label{alg::alg_best-sub-exp}
\begin{algorithmic}[1]
    \STATE {\bfseries Input:} $1\le k < n$ and an initial probability vector $\mathbf{w_1} \in \mathcal{B}_{\text{n-k}}^\text{n}$.
    \FOR{$t=1$ {\bfseries to} $T$}
    \STATE Use Algorithm \ref{alg::alg-mix-decomp} with input $d=n-k$ to decompose $\mathbf{w_t}$ into $\sum_j p_j\mathbf{r_j}$, which is a
    convex combination of at most $n$ corners of $\mathbf{r_j}$.
    \STATE Randomly select a corner $\mathbf{r}=\mathbf{r_j}$ with associated probability $p_j$.
    \STATE Use the k components with zero entries in the drawn corner $\mathbf{r}$ as the selected subset of experts.
    \STATE Receive loss vector $\mathbf{\ell_t}$.
    \STATE Update $\mathbf{w_{t+1}}$ as:
    \begin{subequations}
    \label{eq::our_expert_update}
    \begin{align}
    \label{eq::v_t+1}
    &v_{t+1,i} = \frac{w_{t,i}\exp(-\eta\ell_{t,i})}{\sum_{j=1}^n w_{t,j}\exp(-\eta\ell_{t,j})}\\
    \label{eq::fix_share_expert}
    &\hat{w}_{t+1,i} = \frac{\alpha}{n} + (1-\alpha)v_{t+1,i} \\
    \label{eq::w_t+1}
    &\mathbf{w_{t+1}} = \text{cap}_{\text{n-k}}(\mathbf{\hat{w}_{t+1}})
    \end{align}
    \end{subequations}
    where $\text{cap}_{\text{n-k}}()$ calls Algorithm \ref{alg::capping}.
    \ENDFOR
\end{algorithmic}
\end{algorithm}

\begin{algorithm}[tb]
    \caption{Mixture Decomposition \cite{warmuth2008randomized}}
    \label{alg::alg-mix-decomp}
\begin{algorithmic}[1]
    \STATE {\bfseries Input:} $1\le d < n$ and $\mathbf{w}\in \mathcal{B}_\text{d}^\text{n}$.
    \REPEAT 
    \STATE Let $\mathbf{r}$ be a corner for a subset of $d$ non-zero components of $\mathbf{w}$ 
    that includes all components of $\mathbf{w}$ equal to $\frac{\left|\mathbf{w}\right|}{d}$.
    \STATE Let $s$ be the smallest of the $d$ chosen components of $\mathbf{r}$ and $l$ be the largest value
    of the remaining $n-d$ components.
    \STATE update $\mathbf{w}$ as $\mathbf{w}-\min(ds,\left|\mathbf{w}\right|-dl)\mathbf{r}$ and {\bfseries Output} $p$ and $\mathbf{r}$.
    \UNTIL{$\mathbf{w}=0$}
\end{algorithmic}
\end{algorithm}

\begin{algorithm}[tb]
    \caption{Capping Algorithm \cite{warmuth2008randomized}}
    \label{alg::capping}
\begin{algorithmic}[1]
    \STATE {\bfseries Input:} probability vector $\mathbf{w}$ and set size $d$.
    \STATE Let $\mathbf{w}^{\downarrow}$ index the vector in decreasing order, that is, $\mathbf{w_1}^{\downarrow} = \max(\mathbf{w})$.
    \IF{$\max(\mathbf{w})\le 1/d$}
      \STATE {\bfseries return} $\mathbf{w}$.
    \ENDIF
    \STATE $i=1$.
    \REPEAT 
    \STATE (* Set first $i$ largest components to $1/d$ and normalize the rest to $(d-i)/d$ *)
    \STATE $\mathbf{\tilde{w}} = \mathbf{w}$, $\tilde{w}_j^{\downarrow} = 1/d$, for $j = 1,\dots,i$.
    \STATE $\tilde{w}_j^{\downarrow} = \frac{d-i}{d}\frac{\tilde{w}_j^{\downarrow}}{\sum_{l=j}^n \tilde{w}_l^{\downarrow}}$, for $j = i+1,\dots,n$.
    \STATE $i = i + 1$.
    \UNTIL{$\max(\mathbf{\tilde{w}})\le 1/d$}.
\end{algorithmic}
\end{algorithm}

In \cite{warmuth2008randomized},
it was shown that online PCA can be
viewed as an extension of a simpler problem known as the \emph{best subset of experts} problem. In
particular, they first propose an online algorithm to solve the best
subset of experts problem, and then they show how to modify the
algorithm to solve PCA problems. In this section, we show how the
addition of a fixed-share step \cite{herbster1998tracking,cesa2012new}
can lead to an algorithm for an adaptive variant of the best subset of
experts problem. Then we will show how to extend the resulting
algorithm to PCA problems. 

The \emph{adaptive best subset of experts}  problem can be described as follows:
we have $n$ experts making decisions at each time $t$.
Before revealing the loss vector $\mathbf{\ell_t}\in \mathbb{R}^n$ associated with the experts' decisions at time $t$,
we select a subset of experts of size $n-k$ (represented by vector $\mathbf{v_t}$) to try to minimize the adaptive regret defined as:
\begin{equation}
\label{eq::adaptive_regret_expert}
\mathcal{R}_a^{\text{subexp}} = \max_{[r,s]\subset [1,T]}\Big\{
\sum\limits_{t=r}^s \mathbb{E}[\mathbf{v_t}^\top\mathbf{\ell_t}] -
\min_{\mathbf{u}\in \mathcal{S}_{\text{vec}}}\sum\limits_{t=r}^s \mathbf{u}^\top\mathbf{\ell_t}  \Big\}.
\end{equation}
Here, the expectation is taken over the probability distribution of $\mathbf{v_t}$.
Both $\mathbf{v_t}$ and $\mathbf{u}$ are in $\mathcal{S}_{\text{vec}}$ which
denotes the vector set 
with only $n-k$ non-zero elements equal to 1.

Similar to the static regret case from \cite{warmuth2008randomized},
the problem in Eq.(\ref{eq::adaptive_regret_expert}) is equivalent to:
\begin{equation}
\label{eq::reform_adaptive_regret_expert}
\mathcal{R}_a^{\text{subexp}} = \max_{[r,s]\subset [1,T]}\Big\{
\sum\limits_{t=r}^s (n-k)\mathbf{w_t}^\top\mathbf{\ell_t} -
\min_{\mathbf{q}\in\mathcal{B}_{\text{n-k}}^\text{n} }\sum\limits_{t=r}^s (n-k)\mathbf{q}^\top\mathbf{\ell_t}  \Big\}
\end{equation}
where $\mathbf{w_t}\in\mathcal{B}_{\text{n-k}}^\text{n}$, and $\mathcal{B}_{\text{n-k}}^\text{n}$
represents the capped probability simplex
defined as $\sum_{i=1}^nw_{t,i} = 1$ and $0\le w_{t,i}\le 1/(n-k)$, $\forall i$.

Such equivalence is due to the Theorem 2 in \cite{warmuth2008randomized} ensuring that
any vector $\mathbf{q}\in\mathcal{B}_{\text{n-k}}^\text{n}$ can be decomposed as convex combination of 
at most $n$ corners of $\mathbf{r_j}$ by using Algorithm \ref{alg::alg-mix-decomp}, 
where the corner $\mathbf{r_j}$ is defined 
as having $n-k$ non-zero elements equal to $1/(n-k)$. 
As a result, the corner can be sampled by the associated probability obtained from the convex combination,
which is a valid subset selection vector $\mathbf{v_t}$ with the multiplication of $n-k$.

\textbf{Connection to the online adaptive PCA.} The problem from
Eq.(\ref{eq::adaptive_regret_expert}) can be viewed as restricted
version of 
the online adaptive PCA problem from
Eq.(\ref{eq::adaptive_regret_pca}).
In particular, say that $I-P_t = \mathrm{diag}(\mathbf{v_t})$. This corresponds
to restricting $P_t$ to be diagonal. If 
$\mathbf{\ell_t}$ is the diagonal of $\mathbf{x_t}\mathbf{x_t}^\top$,
then the objectives of Eq.(\ref{eq::adaptive_regret_expert}) and  
Eq.(\ref{eq::adaptive_regret_pca}) are equal.

We now return to the adaptive best subset of experts problem.
When $r=1$ and $s=T$, the problem reduces to the standard static regret minimization problem,
which is studied in \cite{warmuth2008randomized}. 
Their solution applies  the basic Hedge Algorithm to obtain a probability distribution for the experts,
and modifies the distribution to select a subset of the experts.

To deal with the adaptive regret considered in Eq.(\ref{eq::reform_adaptive_regret_expert}),
we propose the Algorithm \ref{alg::alg_best-sub-exp},
%
which is a simple modification to Algorithm 1 in \cite{warmuth2008randomized}.
More specifically, we add Eq.(\ref{eq::fix_share_expert}) when updating $\mathbf{w_{t+1}}$ in Step $7$,
%
which is called a \emph{fixed-share} step.
This is inspired by the analysis in \cite{cesa2012new}, 
which shows that the online adaptive best expert problem can be solved 
by simply adding this fixed-share step to the standard Hedge algorithm.

With the Algorithm \ref{alg::alg_best-sub-exp},
the following lemma can be obtained:
\begin{lemma}
\label{lem::adaptive_expert_step_ineq}
For all $t\ge 1$, all $\mathbf{\ell_t} \in [0,1]^n$, and for all $\mathbf{q_t}\in \mathcal{B}_{\text{n-k}}^\text{n}$, 
Algorithm \ref{alg::alg_best-sub-exp} satisfies 
\begin{equation*}
\mathbf{w_t}^\top\mathbf{\ell_t}(1-\exp(-\eta)) - \eta \mathbf{q_t}^\top\mathbf{\ell_t} \le
\sum_{i=1}^n q_{t,i}\ln(\frac{v_{t+1,i}}{\hat{w}_{t,i}})
\end{equation*}
\end{lemma}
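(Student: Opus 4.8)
The plan is to route the comparison through the un-capped iterate $\mathbf{w_t}$, writing $\ln(v_{t+1,i}/\hat w_{t,i}) = \ln(v_{t+1,i}/w_{t,i}) + \ln(w_{t,i}/\hat w_{t,i})$ and handling the two pieces separately: the first reduces to the familiar Hedge / exponential-weights estimate, while the second is a nonnegative projection correction coming from the capping step in \eqref{eq::w_t+1}.

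For the first piece, I would start from the multiplicative update \eqref{eq::v_t+1}, namely $v_{t+1,i} = w_{t,i}\exp(-\eta\ell_{t,i})/Z_t$ with $Z_t = \sum_{j=1}^n w_{t,j}\exp(-\eta\ell_{t,j})$. Taking logarithms gives $\ln(v_{t+1,i}/w_{t,i}) = -\eta\ell_{t,i}-\ln Z_t$, and summing against the probability vector $\mathbf{q_t}$ (so that $\sum_i q_{t,i}=1$) yields $\sum_i q_{t,i}\ln(v_{t+1,i}/w_{t,i}) = -\eta\,\mathbf{q_t}^\top\mathbf{\ell_t} - \ln Z_t$. To lower-bound $-\ln Z_t$, I use the convexity (chord) bound $\exp(-\eta x)\le 1-(1-e^{-\eta})x$ valid for $x\in[0,1]$, together with $\mathbf{\ell_t}\in[0,1]^n$ and $\sum_j w_{t,j}=1$, to obtain $Z_t \le 1-(1-e^{-\eta})\mathbf{w_t}^\top\mathbf{\ell_t}$; then $-\ln Z_t \ge (1-e^{-\eta})\mathbf{w_t}^\top\mathbf{\ell_t}$ follows from $-\ln(1-y)\ge y$. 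Combining these gives exactly $\sum_i q_{t,i}\ln(v_{t+1,i}/w_{t,i}) \ge \mathbf{w_t}^\top\mathbf{\ell_t}(1-e^{-\eta}) - \eta\,\mathbf{q_t}^\top\mathbf{\ell_t}$, which is the desired left-hand side of the lemma.

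It then remains to show the correction term is nonnegative, i.e.\ $\sum_i q_{t,i}\ln(w_{t,i}/\hat w_{t,i})\ge 0$. Here I would use that the capping operation $\mathbf{w_t} = \text{cap}_{\text{n-k}}(\hat{\mathbf{w}}_t)$ is precisely the relative-entropy (Bregman) projection of $\hat{\mathbf{w}}_t$ onto the capped simplex $\mathcal{B}_{\text{n-k}}^\text{n}$, as established in \cite{warmuth2008randomized}; note $\hat{\mathbf{w}}_t$ is a probability vector since the fixed-share step \eqref{eq::fix_share_expert} preserves the total mass. Because the comparator $\mathbf{q_t}$ lies in $\mathcal{B}_{\text{n-k}}^\text{n}$, the generalized Pythagorean theorem for this projection gives $d(\mathbf{q_t},\hat{\mathbf{w}}_t) \ge d(\mathbf{q_t},\mathbf{w_t}) + d(\mathbf{w_t},\hat{\mathbf{w}}_t)$, and rearranging the definition of relative entropy yields $\sum_i q_{t,i}\ln(w_{t,i}/\hat w_{t,i}) = d(\mathbf{q_t},\hat{\mathbf{w}}_t)-d(\mathbf{q_t},\mathbf{w_t}) \ge d(\mathbf{w_t},\hat{\mathbf{w}}_t)\ge 0$. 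Adding this to the first piece completes the proof.

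The main obstacle I anticipate is precisely this projection correction: one must invoke carefully that the iterative routine of Algorithm \ref{alg::capping} genuinely computes the relative-entropy projection onto $\mathcal{B}_{\text{n-k}}^\text{n}$, so that the generalized Pythagorean inequality for Bregman divergences is legitimately applicable. The Hedge part, by contrast, is routine once the convexity and logarithm inequalities are in place. A minor bookkeeping point worth double-checking is the indexing, namely that the $\mathbf{w_t}$ built from the Hedge step is the capped version of the same $\hat{\mathbf{w}}_t$ appearing in the statement, and that the explicit form of the fixed-share mixing enters only through the relation $\mathbf{w_t}=\text{cap}_{\text{n-k}}(\hat{\mathbf{w}}_t)$ and nowhere else.
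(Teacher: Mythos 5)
Your proposal is correct and follows essentially the same route as the paper: your splitting $\ln(v_{t+1,i}/\hat w_{t,i}) = \ln(v_{t+1,i}/w_{t,i}) + \ln(w_{t,i}/\hat w_{t,i})$ is exactly the paper's combination of the relative-entropy identity $d(\mathbf{q_t},\mathbf{w_t})-d(\mathbf{q_t},\mathbf{v_{t+1}}) = -\eta\,\mathbf{q_t}^\top\mathbf{\ell_t}-\ln Z_t$ with the generalized Pythagorean inequality $d(\mathbf{q_t},\mathbf{\hat{w}_t})-d(\mathbf{q_t},\mathbf{w_t})\ge d(\mathbf{w_t},\mathbf{\hat{w}_t})\ge 0$ for the capping projection. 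The only cosmetic difference is that you derive the bound $-\ln Z_t \ge (1-e^{-\eta})\,\mathbf{w_t}^\top\mathbf{\ell_t}$ explicitly via the chord inequality, where the paper cites the proof of Theorem 1 in \cite{warmuth2008randomized}.
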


\begin{proof}

With the update in Eq.(\ref{eq::our_expert_update}),
for any $\mathbf{q_t} \in\mathcal{B}_{\text{n-k}}^\text{n}$, we have
\begin{equation*}
d(\mathbf{q_t},\mathbf{w_t})-d(\mathbf{q_t},\mathbf{v_{t+1}}) = 
-\eta \mathbf{q_t}^\top\mathbf{\ell_t}-\ln(\sum_{j=1}^n w_{t,j}\exp(-\eta\ell_{t,j}))
\end{equation*}

Also,
$-\ln(\sum_{j=1}^n w_{t,j}\exp(-\eta\ell_{t,j})) \ge \mathbf{w_t}^\top\mathbf{\ell_t}(1-\exp(-\eta))$
based on the proof of Theorem 1 in \cite{warmuth2008randomized}.
Thus, we get 
\begin{equation}
\label{eq::online_pca_thm1_ineq}
d(\mathbf{q_t},\mathbf{w_t})-d(\mathbf{q_t},\mathbf{v_{t+1}}) \ge -\eta \mathbf{q_t}^\top\mathbf{\ell_t} + \mathbf{w_t}^\top\mathbf{\ell_t}(1-\exp(-\eta))
\end{equation}

Moreover, Eq.(\ref{eq::w_t+1}) is the solution to the following projection problem as shown in \cite{warmuth2008randomized}:
\begin{equation*}
\mathbf{w_t} = \argmin\limits_{\mathbf{w}\in\mathcal{B}_{\text{n-k}}^\text{n}} d(\mathbf{w},\mathbf{\hat{w}_t})
\end{equation*}

Since the relative entropy is one kind of Bregman divergence \cite{bregman1967relaxation,censor1981iterative},
the Generalized Pythagorean Theorem holds \cite{herbster2001tracking}:
\begin{equation}
\label{eq::general_pythagorean}
d(\mathbf{q_t},\mathbf{\hat{w}_t}) - d(\mathbf{q_t},\mathbf{w_t}) \ge d(\mathbf{w_t},\mathbf{\hat{w}_t})\ge 0
\end{equation}
where the last inequality is due to the non-negativity of Bregman divergence.

Combining Eq.(\ref{eq::online_pca_thm1_ineq}) with Eq.(\ref{eq::general_pythagorean})
and expanding the left part of $d(\mathbf{q_t},\mathbf{\hat{w}_t})-d(\mathbf{q_t},\mathbf{v_{t+1}})$, we arrive at Lemma \ref{lem::adaptive_expert_step_ineq}.
\end{proof}

Now we are ready to state the following theorem 
to upper bound the adaptive regret $\mathcal{R}_a^{\text{subexp}}$:

\begin{theorem}
\label{thm::adaptive_subset_expert}
{\it If we run the Algorithm \ref{alg::alg_best-sub-exp} to select a subset of $n-k$ experts,
then for any sequence of loss vectors $\mathbf{\ell_1}$, $\dots$, $\mathbf{\ell_T}$ $\in$ $[0,1]^n$ with $T\ge 1$,
$\min_{\mathbf{q}\in\mathcal{B}_{\text{n-k}}^\text{n} }\sum\limits_{t=r}^s (n-k)\mathbf{q}^\top\mathbf{\ell_t} \le L$,
$\alpha = 1/(T(n-k)+1)$, 
$D = (n-k)\ln(n(1+(n-k)T))+1$,
and $\eta = \ln(1+\sqrt{2D/L})$, we have 
\begin{equation*}
\mathcal{R}_a^{\text{subexp}} \le O(\sqrt{2LD}+D) 
\end{equation*}
}
\end{theorem}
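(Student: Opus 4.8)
The plan is to prove the bound on a single arbitrary interval $[r,s]\subset[1,T]$ against an arbitrary fixed comparator $\mathbf{q}\in\mathcal{B}_{\text{n-k}}^\text{n}$, and then note that the resulting bound is \emph{independent} of $r$ and $s$, so the maximum defining $\mathcal{R}_a^{\text{subexp}}$ inherits it directly. First I would sum Lemma~\ref{lem::adaptive_expert_step_ineq} over $t=r,\ldots,s$ with the constant comparator $\mathbf{q_t}\equiv\mathbf{q}$, which gives
\begin{equation*}
(1-e^{-\eta})\sum_{t=r}^s \mathbf{w_t}^\top\mathbf{\ell_t} - \eta\sum_{t=r}^s\mathbf{q}^\top\mathbf{\ell_t} \le \sum_{t=r}^s\sum_{i=1}^n q_{i}\ln\frac{v_{t+1,i}}{\hat{w}_{t,i}}.
\end{equation*}
The whole game is then to convert the right-hand side into a telescoping sum plus a small per-step penalty.

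The key step, which I also expect to be the main obstacle, is exploiting the fixed-share update \eqref{eq::fix_share_expert}. Since $\hat{w}_{t+1,i}=\frac{\alpha}{n}+(1-\alpha)v_{t+1,i}\ge(1-\alpha)v_{t+1,i}$, we have $v_{t+1,i}\le \hat{w}_{t+1,i}/(1-\alpha)$, and I would use this to split each summand as
\begin{equation*}
\ln\frac{v_{t+1,i}}{\hat{w}_{t,i}} = \ln\frac{v_{t+1,i}}{\hat{w}_{t+1,i}} + \ln\frac{\hat{w}_{t+1,i}}{\hat{w}_{t,i}} \le \ln\frac{1}{1-\alpha} + \ln\frac{\hat{w}_{t+1,i}}{\hat{w}_{t,i}}.
\end{equation*}
Summing over $t$ and using $\sum_i q_{i}=1$, the first term contributes $(s-r+1)\ln\frac{1}{1-\alpha}$, while the second telescopes to $\sum_i q_{i}\ln\frac{\hat{w}_{s+1,i}}{\hat{w}_{r,i}}$. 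Using $\hat{w}_{s+1,i}\le 1$ and the fact that the fixed-share step forces $\hat{w}_{r,i}\ge\alpha/n$ (choosing the initial vector uniform so this floor also holds at $r=1$), this telescoped term is at most $\ln(n/\alpha)$. It is precisely this floor $\alpha/n$ that renders the comparator-entropy term independent of the interval's starting point $r$; this is exactly what upgrades the static analysis of \cite{warmuth2008randomized} into an adaptive one, and verifying the boundary cases carefully is the delicate part.

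To finish, I would multiply through by $(n-k)$ and abbreviate $W=(n-k)\sum_{t=r}^s\mathbf{w_t}^\top\mathbf{\ell_t}$ and $L_q=(n-k)\sum_{t=r}^s\mathbf{q}^\top\mathbf{\ell_t}$, yielding
\begin{equation*}
(1-e^{-\eta})W-\eta L_q \le (n-k)\Big[(s-r+1)\ln\tfrac{1}{1-\alpha}+\ln\tfrac{n}{\alpha}\Big].
\end{equation*}
Substituting $\alpha=1/(T(n-k)+1)$, bounding $s-r+1\le T$ and $\ln\frac{1}{1-\alpha}\le\frac{\alpha}{1-\alpha}=\frac{1}{T(n-k)}$, the bracket collapses to $\frac{1}{n-k}+\ln\big(n(1+(n-k)T)\big)$, so the entire right-hand side equals exactly $D$. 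Taking $\mathbf{q}$ to be the interval minimizer gives $L_q\le L$, and the argument concludes with the standard exponential-weights tuning: from $(1-e^{-\eta})W\le\eta L_q+D$ with $\eta=\ln(1+\sqrt{2D/L})$, I would invoke $\eta-1+e^{-\eta}\le\eta^2/2$ and $\eta\le\sqrt{2D/L}$ to obtain $W-L_q\le\sqrt{2LD}+O(D)$. Because $D$ and the bound $L$ are uniform over all intervals, taking the maximum over $[r,s]$ preserves the estimate, giving $\mathcal{R}_a^{\text{subexp}}\le O(\sqrt{2LD}+D)$.
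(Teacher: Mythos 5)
Your proposal is correct and delivers the theorem, but via a decomposition that is genuinely different from --- and for this specific statement simpler than --- the paper's. The paper follows Proposition 2 of \cite{cesa2012new}: it extends the comparator to the full horizon by setting $\mathbf{q_t}=\mathbf{q}$ on $[r,s]$ and $\mathbf{q_t}=0$ elsewhere, splits $\sum_i q_{t,i}\ln(v_{t+1,i}/\hat{w}_{t,i})$ into a term that telescopes in $v$ and a transition term weighted by $q_{t,i}-q_{t-1,i}$, and controls the latter through the total-variation quantity $m(\mathbf{q_{1:T}})$, exploiting that $m(\mathbf{q_{1:T}})+\left\|\mathbf{q_1}\right\|_1=1$ for this particular comparator. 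You instead sum Lemma~\ref{lem::adaptive_expert_step_ineq} only over $[r,s]$ with a constant $\mathbf{q}$ and split $\ln\frac{v_{t+1,i}}{\hat{w}_{t,i}}=\ln\frac{v_{t+1,i}}{\hat{w}_{t+1,i}}+\ln\frac{\hat{w}_{t+1,i}}{\hat{w}_{t,i}}$, telescoping in $\hat{w}$; the fixed-share floor $\hat{w}_{r,i}\ge\alpha/n$ (with $\hat{w}_1:=\mathbf{w_1}$ uniform to cover $r=1$, and $\hat{w}_{T+1}$ well defined since the update at $t=T$ produces it) then makes the telescoped entropy term independent of $r$, which is indeed exactly the role the fixed-share step plays in the paper's argument as well. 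What your route buys is the complete elimination of the $D_{TV}$ bookkeeping; what it gives up is generality --- the paper's decomposition simultaneously handles shifting comparator sequences rather than a single constant comparator on one interval, which is why \cite{cesa2012new} set it up that way. The final tuning also differs: the paper invokes Lemma 4 of \cite{freund1997decision} to obtain $\sqrt{2LD}+D$ exactly, whereas your elementary estimates ($\eta-1+e^{-\eta}\le\eta^2/2$, $\eta\le\sqrt{2D/L}$, and $1-e^{-\eta}=\frac{\sqrt{2D/L}}{1+\sqrt{2D/L}}$) yield $\sqrt{2LD}+2D$, which is the same $O(\sqrt{2LD}+D)$ order claimed. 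One phrasing nit: after substituting $\alpha=1/(T(n-k)+1)$ and bounding $s-r+1\le T$ and $\ln\frac{1}{1-\alpha}\le\frac{1}{T(n-k)}$, your right-hand side is \emph{at most} $D$, not ``exactly'' $D$; the inequality points the right way, so nothing breaks.
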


\begin{sproof}
After showing the inequality from Lemma
\ref{lem::adaptive_expert_step_ineq}, the main work that remains is to
sum the right side from $t=1$ to $T$ and provide an upper bound.
This is achieved by following the proof of the Proposition 2 in \cite{cesa2012new}.
The main idea is to expand the term $\sum_{i=1}^n q_{t,i}\ln(\frac{v_{t+1,i}}{\hat{w}_{t,i}})$ as follows:
\begin{equation*}
\begin{array}{ll}
\sum_{i=1}^n q_{t,i}\ln(\frac{v_{t+1,i}}{\hat{w}_{t,i}})
= \underbrace{\sum_{i=1}^n\Big(q_{t,i}\ln\frac{1}{\hat{w}_{t,i}} - q_{t-1,i}\ln\frac{1}{v_{t,i}}\Big)}_A
 + \underbrace{\sum_{i=1}^n\Big(q_{t-1,i}\ln\frac{1}{v_{t,i}} - q_{t,i}\ln\frac{1}{v_{t+1,i}}\Big)}_B
\end{array}
\end{equation*}

Then we can upper bound the expression of $A$ with the \emph{fixed-share} step,
since $\hat{w}_{t,i}$ is lower bounded by $\frac{\alpha}{n}$.
We can telescope the expression of $B$.
Then our desired upper bound can be obtained with the help of Lemma 4 from \cite{freund1997decision}.
\end{sproof}

Please refer to the Appendix for all the omitted/sketched
proofs in this chapter.

\section{Online Adaptive PCA}
\label{sec:adaptive-pca}

Recall that the online adaptive PCA problem is below:
\begin{equation}
\label{eq::adaptive_regret_pca_alg_sec}
\begin{array}{l}
\mathcal{R}_a = \max\limits_{[r,s]\subset [1,T]}\Big\{
\sum\limits_{t=r}^s \mathbb{E}[\Tr((I-P_t)\mathbf{x_t}\mathbf{x_t}^\top)]
-
\min\limits_{U\in \mathcal{S}}\sum\limits_{t=r}^s \Tr((I-U)\mathbf{x_t}\mathbf{x_t}^\top)  \Big\}
\end{array}
\end{equation}
where $\mathcal{S}$ is the rank $k$ projection matrix set.

Again, inspired by \cite{warmuth2008randomized}, 
we first reformulate the above problem into the following 'capped probability simplex' form:
\begin{equation}
\label{eq::ref_adaptive_pca}
\begin{array}{ll}
\mathcal{R}_a =
 \max\limits_{[r,s]\subset [1,T]}\Big\{ \sum\limits_{t=r}^s (n-k)\Tr(W_t\mathbf{x_t}\mathbf{x_t}^\top) 
 - \min\limits_{Q \in \mathscr{B}_{\text{n-k}}^\text{n}}\sum\limits_{t=r}^s (n-k)\Tr(Q\mathbf{x_t}\mathbf{x_t}^\top) \Big\}
\end{array}
\end{equation}
where $W_t \in \mathscr{B}_{\text{n-k}}^\text{n}$, 
and $\mathscr{B}_{\text{n-k}}^\text{n}$ is the set of all density
matrices with eigenvalues bounded by $1/(n-k)$. Note that
$\mathscr{B}_{\text{n-k}}^\text{n}$ can be expressed as the convex set
$\{W: W\succeq 0, \left\|W\right\|_2 \le 1/(n-k), \Tr(W) = 1\}$. 

\begin{algorithm}[tb]
    \caption{Uncentered online adaptive PCA}
    \label{alg::adaptive_pca}
\begin{algorithmic}[1]
    \STATE {\bfseries Input:} $1\le k < n$ and an initial density matrix $W_1 \in \mathscr{B}_{n-k}^n$.
    \FOR{$t=1$ {\bfseries to} $T$}
    \STATE Apply eigendecomposition to $W_t$ as $W_t = \bar{D}\diag(\mathbf{w_t})\bar{D}^\top$.
    \STATE Apply Algorithm \ref{alg::alg-mix-decomp} with $d=n-k$ to the vector $\mathbf{w_t}$ to decompose it into 
    a convex combination $\sum_j p_j\mathbf{r_j}$ of at most $n$ corners $\mathbf{r_j}$.
    \STATE Randomly select a corner $\mathbf{r}=\mathbf{r_j}$ with the associated probability $p_j$.
    \STATE Form a density matrix $R = (n-k)\bar{D}\diag(\mathbf{r})\bar{D}^\top$
    \STATE Form a rank $k$ projection matrix $P_t = I - R$
    \STATE Obtain the data point $\mathbf{x_t}$, which incurs the compression loss $\left\|\mathbf{x_t}-P_t\mathbf{x_t}\right\|^2$ 
    and expected compression loss $(n-k)\Tr(W_t\mathbf{x_t}\mathbf{x_t}^\top)$.
    \STATE Update $W_{t+1}$ as:
    \begin{subequations}
    \label{eq::our_pca_update}
    \begin{align}
    \label{eq::V_t+1}
    &V_{t+1} = \frac{\exp(\ln W_t - \eta \mathbf{x_t}\mathbf{x_t}^\top)}{\Tr(\exp(\ln W_t - \eta \mathbf{x_t}\mathbf{x_t}^\top))}\\
    \label{eq::fix_share_pca}
    &\hat{w}_{t+1,i} = \frac{\alpha}{n} + (1-\alpha)v_{t+1,i}, 
    \widehat{W}_{t+1} = U\diag(\mathbf{\hat{w}_{t+1}})U^\top \\
    \label{eq::W_t+1}
    &W_{t+1} = \text{cap}_{n-k}(\widehat{W}_{t+1})
    \end{align}
    \end{subequations}
    where we apply eigendecomposition to $V_{t+1}$ as $V_{t+1} = U\diag(\mathbf{v_{t+1}})U^\top$,
    and $\text{cap}_{n-k}(W)$ invokes Algorithm \ref{alg::capping} with input being the eigenvalues of $W$.
    \ENDFOR
\end{algorithmic}
\end{algorithm}

The static regret online PCA is a special case of the above problem with $r = 1$ and $s = T$,
and is solved by Algorithm 5 in \cite{warmuth2008randomized}.

Follow the idea in the last section, 
we propose the Algorithm \ref{alg::adaptive_pca}.
Compared with the Algorithm 5 in \cite{warmuth2008randomized},
we have added the fixed-share step in the update of $W_{t+1}$ at step $9$,
which will be shown to be the key in upper bounding the adaptive regret of the online PCA.

In order to analyze Algorithm \ref{alg::adaptive_pca}, we need a few
supporting results. The first result comes from \cite{warmuth2006online}:
\begin{theorem}\cite{warmuth2006online}
\label{thm::quantum_ineq}
{\it
For any sequence of data points $\mathbf{x_1}$, $\dots$, $\mathbf{x_T}$ 
with $\mathbf{x_t}\mathbf{x_t}^\top \preceq I$ and for any learning rate $\eta$,
the following bound holds for any matrix $Q_t \in \mathscr{B}_{\text{n-k}}^\text{n}$ 
with the update in Eq.(\ref{eq::V_t+1}):
\small
\begin{equation*}
\Tr(W_t\mathbf{x_t}\mathbf{x_t}^\top) \le \frac{\Delta(Q_t,W_t) - \Delta(Q_t,V_{t+1}) 
+ \eta\Tr(Q_t\mathbf{x_t}\mathbf{x_t}^\top)}{1-\exp(-\eta)}
\end{equation*}
    
}
\end{theorem}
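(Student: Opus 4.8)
The plan is to follow the same template as the scalar expert inequality \eqref{eq::online_pca_thm1_ineq}, replacing the relative entropy by its matrix analogue $\Delta$ and the log-of-exponential trick by the quantum (Golden--Thompson) version. First I would rewrite the claimed bound into an equivalent form in which the comparator $Q_t$ no longer appears on both sides, and then reduce the whole statement to a single trace inequality that can be attacked with standard matrix-exponential tools.

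Concretely, the first step is to expand the numerator on the right. From the update \eqref{eq::V_t+1} we have $\ln V_{t+1} = \ln W_t - \eta\,\mathbf{x_t}\mathbf{x_t}^\top - \ln\Tr\big(\exp(\ln W_t - \eta\,\mathbf{x_t}\mathbf{x_t}^\top)\big)\,I$. Since $\Delta(Q_t,W_t)-\Delta(Q_t,V_{t+1}) = \Tr(Q_t\ln V_{t+1}) - \Tr(Q_t\ln W_t)$ and $\Tr(Q_t)=1$ for a density matrix, substituting the expression for $\ln V_{t+1}$ collapses the $\ln W_t$ terms and yields
\begin{equation*}
\Delta(Q_t,W_t)-\Delta(Q_t,V_{t+1}) = -\eta\,\Tr(Q_t\mathbf{x_t}\mathbf{x_t}^\top) - \ln\Tr\big(\exp(\ln W_t - \eta\,\mathbf{x_t}\mathbf{x_t}^\top)\big).
\end{equation*}
Plugging this into the claimed inequality, the two $\eta\,\Tr(Q_t\mathbf{x_t}\mathbf{x_t}^\top)$ terms cancel, so $Q_t$ disappears entirely and the theorem becomes equivalent to the single bound
\begin{equation*}
-\ln\Tr\big(\exp(\ln W_t - \eta\,\mathbf{x_t}\mathbf{x_t}^\top)\big) \ge (1-e^{-\eta})\,\Tr(W_t\mathbf{x_t}\mathbf{x_t}^\top).
\end{equation*}

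To establish this last inequality I would bound the trace of the matrix exponential above by $\exp\big(-(1-e^{-\eta})\Tr(W_t\mathbf{x_t}\mathbf{x_t}^\top)\big)$ and then take negative logarithms. The trace bound proceeds in three moves. First, apply Golden--Thompson, $\Tr(e^{A+B})\le\Tr(e^A e^B)$, with $A=\ln W_t$ and $B=-\eta\,\mathbf{x_t}\mathbf{x_t}^\top$, giving $\Tr\big(\exp(\ln W_t - \eta\,\mathbf{x_t}\mathbf{x_t}^\top)\big)\le \Tr\big(W_t\exp(-\eta\,\mathbf{x_t}\mathbf{x_t}^\top)\big)$. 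Second, since $\mathbf{x_t}\mathbf{x_t}^\top$ is positive semi-definite with $\mathbf{x_t}\mathbf{x_t}^\top\preceq I$, its eigenvalues lie in $[0,1]$, and the scalar convexity estimate $e^{-\eta a}\le 1-(1-e^{-\eta})a$ for $a\in[0,1]$ lifts to the operator inequality $\exp(-\eta\,\mathbf{x_t}\mathbf{x_t}^\top)\preceq I-(1-e^{-\eta})\mathbf{x_t}\mathbf{x_t}^\top$; pairing with $W_t\succeq 0$ under the trace (and using $\Tr(W_t)=1$) bounds the previous expression by $1-(1-e^{-\eta})\Tr(W_t\mathbf{x_t}\mathbf{x_t}^\top)$. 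Third, apply $1-x\le e^{-x}$ to pass to the exponential form, and take $-\ln$ of both sides.

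The main obstacle is the Golden--Thompson step: because $\ln W_t$ and $\mathbf{x_t}\mathbf{x_t}^\top$ do not commute, one cannot simply write $\exp(\ln W_t - \eta\,\mathbf{x_t}\mathbf{x_t}^\top)=W_t\exp(-\eta\,\mathbf{x_t}\mathbf{x_t}^\top)$, and Golden--Thompson is precisely the non-commutative substitute that survives under the trace. Everything else is a direct transcription of the scalar argument behind \eqref{eq::online_pca_thm1_ineq}; a pleasant feature is that $Q_t$ cancels in the reduction, so no property of $Q_t$ beyond $\Tr(Q_t)=1$ is needed, and in particular the eigenvalue cap defining $\mathscr{B}_{\text{n-k}}^\text{n}$ plays no role in this bound.
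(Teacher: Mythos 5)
Your proof is correct and takes essentially the same route as the cited source \cite{warmuth2006online} and as the paper's own proof of the analogous Lemma~\ref{lem::mat_lem1_bianchi}: use $\Tr(Q_t)=1$ to cancel $Q_t$ and reduce to $-\ln\Tr\big(\exp(\ln W_t-\eta\,\mathbf{x_t}\mathbf{x_t}^\top)\big)\ge(1-e^{-\eta})\Tr(W_t\mathbf{x_t}\mathbf{x_t}^\top)$, then combine Golden--Thompson with the operator bound $\exp(-\eta\,\mathbf{x_t}\mathbf{x_t}^\top)\preceq I-(1-e^{-\eta})\mathbf{x_t}\mathbf{x_t}^\top$ and $\ln(1-x)\le -x$. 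The only cosmetic difference is that you obtain the operator inequality by a direct spectral argument on the single matrix $\mathbf{x_t}\mathbf{x_t}^\top$ instead of invoking Lemma~\ref{lem::lem21_tsuda} with $\rho_1=-\eta$, $\rho_2=0$, which is the same inequality.
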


Based on the above theorem's result, we have the following lemma:

\begin{lemma}
\label{lem::adaptive_pca_step_ineq}
For all $t\ge 1$, all $\mathbf{x_t}$ with $\left\|\mathbf{x_t}\right\| \le 1$,
and for all $Q_t\in\mathscr{B}_{n-k}^n$, Algorithm \ref{alg::adaptive_pca} satisfies:
\begin{equation*}
\begin{array}{l}
\Tr(W_t\mathbf{x_t}\mathbf{x_t}^\top)(1-\exp(-\eta)) -\eta\Tr(Q_t\mathbf{x_t}\mathbf{x_t}^\top)
\le -\Tr(Q_t\ln \widehat{W}_t) + \Tr(Q_t\ln V_{t+1})
\end{array}
\end{equation*}
\end{lemma}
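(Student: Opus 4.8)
The plan is to mirror the vector-case argument of Lemma~\ref{lem::adaptive_expert_step_ineq}, replacing the relative entropy $d(\cdot,\cdot)$ by the quantum relative entropy $\Delta(\cdot,\cdot)$ and invoking Theorem~\ref{thm::quantum_ineq} in place of the elementary Hedge step. First I would apply Theorem~\ref{thm::quantum_ineq}, which after multiplying through by $1-\exp(-\eta) > 0$ and re-arranging gives
\begin{equation*}
\Tr(W_t\mathbf{x_t}\mathbf{x_t}^\top)(1-\exp(-\eta)) - \eta\Tr(Q_t\mathbf{x_t}\mathbf{x_t}^\top) \le \Delta(Q_t,W_t) - \Delta(Q_t,V_{t+1}).
\end{equation*}
This already produces the exact left-hand side of the lemma, so the only remaining work is to upper bound $\Delta(Q_t,W_t)$ by $\Delta(Q_t,\widehat{W}_t)$ and then unfold the definition of $\Delta$.

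The key step is to recognize the capping operation as a Bregman projection. As in \cite{warmuth2008randomized}, the update $W_t = \text{cap}_{n-k}(\widehat{W}_t)$ solves
\begin{equation*}
W_t = \argmin_{W\in\mathscr{B}_{n-k}^n} \Delta(W,\widehat{W}_t),
\end{equation*}
because $\widehat{W}_t$ and $W_t$ share the same eigenvectors, so that the quantum relative entropy between them collapses to the ordinary relative entropy between their eigenvalue vectors, on which Algorithm~\ref{alg::capping} performs precisely the capped-simplex projection. Since $\mathscr{B}_{n-k}^n$ is convex and $\Delta$ is a Bregman divergence generated by the (negative) von Neumann entropy, the generalized Pythagorean theorem applies as the matrix analogue of Eq.(\ref{eq::general_pythagorean}):
\begin{equation*}
\Delta(Q_t,\widehat{W}_t) - \Delta(Q_t,W_t) \ge \Delta(W_t,\widehat{W}_t)\ge 0,
\end{equation*}
so that $\Delta(Q_t,W_t)\le\Delta(Q_t,\widehat{W}_t)$ for every $Q_t\in\mathscr{B}_{n-k}^n$.

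Substituting this bound and expanding $\Delta(Q_t,\widehat{W}_t) = \Tr(Q_t\ln Q_t) - \Tr(Q_t\ln\widehat{W}_t)$ together with $\Delta(Q_t,V_{t+1}) = \Tr(Q_t\ln Q_t) - \Tr(Q_t\ln V_{t+1})$, the common $\Tr(Q_t\ln Q_t)$ terms cancel and leave exactly $-\Tr(Q_t\ln\widehat{W}_t) + \Tr(Q_t\ln V_{t+1})$, which is the claimed right-hand side. I expect the projection/Pythagorean step to be the main obstacle: unlike the scalar case, one must verify that $\widehat{W}_t$ and its cap share an eigenbasis so that $\Delta$ reduces to a vector relative entropy, and that the Pythagorean inequality for quantum relative entropy holds on the convex set $\mathscr{B}_{n-k}^n$. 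Everything else is routine algebraic unfolding of the definition of $\Delta$.
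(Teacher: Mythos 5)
Your proposal is correct and follows essentially the same route as the paper's own proof: rearrange Theorem~\ref{thm::quantum_ineq} to isolate the lemma's left-hand side, use the fact from \cite{warmuth2008randomized} that the capping step is the Bregman projection $W_t = \argmin_{W\in\mathscr{B}_{n-k}^n}\Delta(W,\widehat{W}_t)$, apply the Generalized Pythagorean Theorem to get $\Delta(Q_t,W_t)\le\Delta(Q_t,\widehat{W}_t)$, and expand the quantum relative entropies so the $\Tr(Q_t\ln Q_t)$ terms cancel. Your added remarks on the shared eigenbasis of $\widehat{W}_t$ and $W_t$ simply make explicit a justification the paper leaves implicit by citation, so nothing further is needed.
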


\begin{proof}

First, we need to 
reformulate the above inequality in Theorem \ref{thm::quantum_ineq}, we have:
\begin{equation}
\label{eq::V_update_ineq}
\begin{array}{l}
\Delta(Q_t,W_t) - \Delta(Q_t,V_{t+1})
\ge -\eta\Tr(Q_t\mathbf{x_t}\mathbf{x_t}^\top) + \Tr(W_t\mathbf{x_t}\mathbf{x_t}^\top)(1-\exp(-\eta))
\end{array}
\end{equation}
which is very similar to the Eq.(\ref{eq::online_pca_thm1_ineq}).

As is shown in \cite{warmuth2008randomized}, the Eq.(\ref{eq::W_t+1}) is the solution to
the following optimization problem:
\begin{equation*}
W_t = \argmin\limits_{W\in\mathscr{B}_{\text{n-k}}^\text{n}}\Delta(W,\widehat{W}_t)
\end{equation*}

As a result, the Generalized Pythagorean Theorem holds \cite{herbster2001tracking} 
for any $Q_t\in\mathscr{B}_{\text{n-k}}^\text{n}$:
\begin{equation*}
\Delta(Q_t,\widehat{W}_t) - \Delta(Q_t,W_t) \ge \Delta(W_t,\widehat{W}_t) \ge 0
\end{equation*}

Combining the above inequality with Eq.(\ref{eq::V_update_ineq}) and expanding the left part, we have
\begin{equation*}
\begin{array}{l}
\Tr(W_t\mathbf{x_t}\mathbf{x_t}^\top)(1-\exp(-\eta)) -\eta\Tr(Q_t\mathbf{x_t}\mathbf{x_t}^\top)
\le -\Tr(Q_t\ln \widehat{W}_t) + \Tr(Q_t\ln V_{t+1})
\end{array}
\end{equation*}
which proves the result.
\end{proof}

In the next theorem, we show that 
with the addition of the fixed-share step in Eq.(\ref{eq::fix_share_pca}),
we can solve the online adaptive PCA problem in Eq.(\ref{eq::adaptive_regret_pca_alg_sec}).

\begin{theorem}
\label{thm::adaptive_pca}
{\it
For any sequence of data points $\mathbf{x_1}$, $\dots$, $\mathbf{x_T}$
with $\left\|\mathbf{x_t}\right\| \le 1$, and 
$\min\limits_{Q\in\mathscr{B}_{\text{n-k}}^\text{n}}\sum\limits_{t=r}^s (n-k)\Tr(Q\mathbf{x_t}\mathbf{x_t}^\top) \le L$,
if we run Algorithm \ref{alg::adaptive_pca} with
$\alpha = 1/(T(n-k)+1)$, 
$D = (n-k)\ln(n(1+(n-k)T))+1$,
and $\eta = \ln(1+\sqrt{2D/L})$, for any $T\ge 1$ we have:
\begin{equation*}
\mathcal{R}_a \le O(\sqrt{2LD}+D) 
\end{equation*}    
}
\end{theorem}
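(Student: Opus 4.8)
The plan is to mirror the proof of Theorem~\ref{thm::adaptive_subset_expert} (the capped-simplex expert case), reducing every quantum relative-entropy quantity that appears to the scalar capped-simplex quantities of that proof. The one structural fact that makes this reduction possible is that the fixed-share step \eqref{eq::fix_share_pca} and the capping step \eqref{eq::W_t+1} act only on the eigenvalues of $V_{t+1}$ and leave its eigenvectors untouched; consequently $V_t$, $\widehat{W}_t$ and $W_t$ are simultaneously diagonalized by the eigenbasis $U_t$ of $V_t$, exactly as the corresponding vectors $\mathbf{v_t}$, $\mathbf{\hat{w}_t}$, $\mathbf{w_t}$ live in the standard basis in the expert case.

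For the starting point I would sum the per-step inequality of Lemma~\ref{lem::adaptive_pca_step_ineq} over the interval $[r,s]$ against the interval optimum $Q = \argmin_{Q\in\mathscr{B}_{n-k}^n}\sum_{t=r}^s (n-k)\Tr(Q\mathbf{x_t}\mathbf{x_t}^\top)$. This leaves the right-hand side $\sum_{t=r}^s\big(-\Tr(Q\ln\widehat{W}_t)+\Tr(Q\ln V_{t+1})\big)$, which I split by inserting $\pm\Tr(Q\ln V_t)$ into a telescoping part $\sum_{t=r}^s(\Tr(Q\ln V_{t+1})-\Tr(Q\ln V_t)) = \Tr(Q\ln V_{s+1})-\Tr(Q\ln V_r)$ and a drift part $\sum_{t=r}^s(\Tr(Q\ln V_t)-\Tr(Q\ln\widehat{W}_t))$. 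This is the matrix analogue of the $A$/$B$ decomposition quoted in the sketch of Theorem~\ref{thm::adaptive_subset_expert}; crucially, the telescoping part is a difference of traces and is therefore basis-independent, so the fact that the eigenbasis $U_t$ changes with $t$ causes no difficulty.

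For the drift part, the shared eigenbasis lets me write $\Tr(Q\ln V_t)-\Tr(Q\ln\widehat{W}_t)=\sum_i \tilde{q}_i^{(t)}\ln\frac{v_{t,i}}{\hat{w}_{t,i}}$, where $\tilde{q}_i^{(t)}=(U_t^\top Q U_t)_{ii}=u_i^\top Q u_i$ with $u_i$ the $i$th column of $U_t$. Because $Q\in\mathscr{B}_{n-k}^n$, each $\tilde{q}_i^{(t)}$ lies in $[0,1/(n-k)]$ and the $\tilde{q}_i^{(t)}$ sum to $\Tr(Q)=1$, so $\tilde{\mathbf{q}}^{(t)}$ is a genuine capped probability vector in $\mathcal{B}_{n-k}^n$. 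Since $\hat{w}_{t,i}=\tfrac{\alpha}{n}+(1-\alpha)v_{t,i}\ge(1-\alpha)v_{t,i}$, every summand is at most $-\ln(1-\alpha)$, so the drift part is bounded by $-T\ln(1-\alpha)$, exactly as in the expert argument. For the boundary terms I use $0\prec V_{s+1}\preceq I$ to get $\Tr(Q\ln V_{s+1})\le 0$, and I lower-bound the eigenvalues of $V_r$: the fixed-share floor $\hat{w}_{r-1,i}\ge\alpha/n$ survives the capping step (which only rescales the uncapped eigenvalues upward), giving $W_{r-1}\succeq\frac{\alpha}{n} I$, and feeding this through the matrix-exponential update \eqref{eq::V_t+1} yields $V_r\succeq\frac{\alpha e^{-\eta}}{n^2} I$; hence $-\Tr(Q\ln V_r)\le 2\ln n-\ln\alpha+\eta$. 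These are the $D$-type complexity terms.

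Having reduced the right-hand side to the same scalar quantities that appear in Theorem~\ref{thm::adaptive_subset_expert}, the remainder is identical: following Proposition~2 of \cite{cesa2012new} and Lemma~4 of \cite{freund1997decision} to trade off the learning rate, with $\alpha=1/(T(n-k)+1)$, $D=(n-k)\ln(n(1+(n-k)T))+1$ and $\eta=\ln(1+\sqrt{2D/L})$, one converts $(1-e^{-\eta})\sum_{t=r}^s(n-k)\Tr(W_t\mathbf{x_t}\mathbf{x_t}^\top)-\eta\sum_{t=r}^s(n-k)\Tr(Q\mathbf{x_t}\mathbf{x_t}^\top)\le O(D)$ into $\mathcal{R}_a\le O(\sqrt{2LD}+D)$, and taking the maximum over $[r,s]$ completes the proof. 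I expect the main obstacle to be purely the quantum-to-scalar reduction: one must verify that the matrix logarithms in Lemma~\ref{lem::adaptive_pca_step_ineq} interact correctly with the fixed-share floor, which is precisely what the shared-eigenbasis observation guarantees for the drift and $V_r$ terms, while the $\ln V_{t+1}$ telescoping is settled at the basis-free trace level.
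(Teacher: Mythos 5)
Your proposal is correct, but it reaches the bound by a genuinely different decomposition than the paper's proof. The paper never telescopes $\Tr(Q\ln V_t)$ against a fixed $Q$ on $[r,s]$; instead it sums Lemma~\ref{lem::adaptive_pca_step_ineq} over all of $[1,T]$ with a \emph{time-varying} comparator $Q_t$ equal to $Q$ on $[r,s]$ and $0$ elsewhere, splits $-\Tr(Q_t\ln\widehat{W}_t)+\Tr(Q_t\ln V_{t+1})$ into $\bar{A}=-\Tr(Q_t\ln\widehat{W}_t)+\Tr(Q_{t-1}\ln V_t)$ and the telescoping $\bar{B}=-\Tr(Q_{t-1}\ln V_t)+\Tr(Q_t\ln V_{t+1})$, and bounds $\bar{A}$ in the comparator's fixed eigenbasis via the total-variation accounting of Cesa-Bianchi et al.\ together with trace monotonicity for PSD matrices (Lemma~\ref{lem::matrix_pos_sym_ineq}), using $\ln\widehat{W}_t^{-1}\preceq I\ln\frac{n}{\alpha}$ and $\ln(V_t\widehat{W}_t^{-1})\preceq I\ln\frac{1}{1-\alpha}$; the boundary cost $\ln\frac{n}{\alpha}$ enters through the $D_{TV}$ jump at $t=r$, giving exactly $\ln\frac{n}{\alpha}+T\ln\frac{1}{1-\alpha}$. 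Your route keeps $Q$ constant, sums only over $[r,s]$, telescopes the basis-free trace term directly, and pays the boundary cost through an explicit spectral floor on $V_r$: your observations that capping is of the water-filling form $w_i'=\min(\beta w_i,1/(n-k))$ with $\beta\ge 1$ (so the fixed-share floor $\alpha/n$ survives Eq.~\eqref{eq::W_t+1}), and that $\Tr\exp(\ln W_{r-1}-\eta\mathbf{x}\mathbf{x}^\top)\le\Tr(W_{r-1})=1$ (Golden--Thompson, already in the paper's appendix), are both sound and give $V_r\succeq\frac{\alpha e^{-\eta}}{n}I$ (your $n^2$ is conservative but harmless). What each buys: your version avoids the $D_{TV}$ bookkeeping and the simultaneous-eigenbasis argument for $Q_t,Q_{t-1}$ entirely, and makes transparent that the "adaptive" price is precisely the eigenvalue floor at the restart point $r$; the paper's version is a verbatim lift of the vector proof of Theorem~\ref{thm::adaptive_subset_expert} and yields the slightly cleaner constant (your bound carries an extra additive $(n-k)(\ln n+\eta)$, which is $O(D)$ under the stated tuning and disappears in $O(\sqrt{2LD}+D)$ after Lemma~\ref{lem::freund_ineq}). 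Two small points to tidy: the case $r=1$ has no $V_1$, so you must start from the initial $W_1$ and use (as the paper implicitly does through its added $\ln n$ term) that $W_1$ is uniform or at least bounded below by $I/n$; and since your complexity constant exceeds the $D$ used to set $\eta$, you should either apply Lemma~\ref{lem::freund_ineq} with $\tilde{R}$ equal to your inflated constant or note explicitly that the mismatch only affects absolute constants.
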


\begin{sproof}
The proof idea is the same as in the proof of Theorem \ref{thm::adaptive_subset_expert}.
After getting the inequality relationship in Lemma \ref{lem::adaptive_pca_step_ineq}
which has a similar form as in Lemma \ref{lem::adaptive_expert_step_ineq},
we need to upper bound sum over $t$ of the right side. 
To achieve this, we first reformulate it as two parts below:
\begin{equation}
\label{eq::quantum_split_two_parts-main}
\begin{array}{l}
-\Tr(Q_t\ln \widehat{W}_t) + \Tr(Q_t\ln V_{t+1}) = \bar{A} + \bar{B}
\end{array}
\end{equation}
where $\bar{A} = -\Tr(Q_t\ln \widehat{W}_t) + \Tr(Q_{t-1}\ln V_t)$, 
and $\bar{B} = - \Tr(Q_{t-1}\ln V_t) + \Tr(Q_t\ln V_{t+1})$.

The first part can be upper bounded with the help of the fixed-share step
in lower bounding the singular value of $\hat{w}_{t,i}$.
After telescoping the second part, we can get the desired upper bound 
with the help of Lemma 4 from \cite{freund1997decision}.
\end{sproof}

\section{Extension to Online Adaptive Variance Minimization}
\label{sec:adaptive-var-min}

In this section, we study the closely related problem of online
adaptive variance minimization. The problem is defined as follows:
At each time $t$, we first select a vector $\mathbf{y_t}\in \Omega$,
and then a covariance matrix $C_t\in \mathbb{R}^{n\times n}$ such that
$0 \preceq C_t\preceq I$ is revealed. The goal is to
minimize the adaptive regret defined as:
\begin{equation}
\label{eq::general_var}
\mathcal{R}_{a}^{\text{var}} = \max_{[r,s]\subset [1,T]}\Big\{
\sum\limits_{t=r}^s \mathbb{E}[\mathbf{y_t}^\top C_t\mathbf{y_t}] -
\min_{\mathbf{u}\in\Omega}\sum\limits_{t=r}^s \mathbf{u}^\top C_t\mathbf{u}  \Big\}
\end{equation}
where the expectation is taken over the probability distribution of $\mathbf{y_t}$.

This problem has two different situations corresponding to different parameter space $\Omega$ of $\mathbf{y_t}$ and $\mathbf{u}$.

\textbf{Situation 1:} 
When $\Omega$ is the set of $\{\mathbf{x} | \left\|\mathbf{x}\right\| = 1 \}$ (e.g., the unit vector space),
the solution to $\min_{\mathbf{u}\in\Omega}\sum_{t=r}^s \mathbf{u}^\top C_t\mathbf{u}$ is the minimum eigenvector
of the matrix $\sum_{t=r}^sC_t$.

\textbf{Situation 2:} When $\Omega$ is the probability simplex (e.g., $\Omega$ is equal to $\mathcal{B}_{\text{1}}^\text{n}$),
it corresponds to the risk minimization in stock portfolios \cite{markowitz1952portfolio}. 

We will start with \textbf{Situation 1} since it is highly related to the previous section. 

\subsection{Online Adaptive Variance Minimization over the Unit vector space}

We begin with the observation of the following equivalence \cite{warmuth2006online}:
\begin{equation*}
\min_{\left\|\mathbf{u}\right\| = 1} \mathbf{u}^\top C\mathbf{u} = \min_{U\in \mathscr{B}_{\text{1}}^\text{n}}\Tr(UC) 
\end{equation*}
where $C$ is any covariance matrix, 
and $\mathscr{B}_{\text{1}}^\text{n}$ is the set of all density matrices.

Thus, the problem in (\ref{eq::general_var}) can be reformulated as:
\begin{equation}
\label{eq::var_unit_form}
\mathcal{R}_{a}^{\text{var-unit}} = \max_{[r,s]\subset [1,T]}\Big\{
\sum\limits_{t=r}^s \Tr(Y_tC_t) -
\min_{U\in\mathscr{B}_{\text{1}}^\text{n}}\sum\limits_{t=r}^s \Tr(UC_t)  \Big\}
\end{equation}
where $Y_t\in\mathscr{B}_{\text{1}}^\text{n}$.

To see the equivalence between $\mathbb{E}[\mathbf{y_t}^\top C_t\mathbf{y_t}]$ in Eq.(\ref{eq::general_var})
and $\Tr(Y_tC_t)$, 
we do the eigendecomposition of $Y_t = \sum_{i=1}^n\sigma_i\mathbf{y_i}\mathbf{y_i}^\top$.
Then $\Tr(Y_tC_t)$ is equal to $\sum_{i=1}^n\sigma_i\Tr(\mathbf{y_i}\mathbf{y_i}^\top C_t)$
$=$ $\sum_{i=1}^n\sigma_i\mathbf{y_i}^\top C_t\mathbf{y_i}$. 
Since $Y_t\in\mathscr{B}_{\text{1}}^\text{n}$, 
the vector $\mathbf{\sigma}$ is a simplex vector, 
and $\sum_{i=1}^n\sigma_i\mathbf{y_i}^\top C_t\mathbf{y_i}$ is equal to $\mathbb{E}[\mathbf{y_i}^\top C_t\mathbf{y_i}]$ 
with probability distribution defined by the vector $\mathbf{\sigma}$.

If we examine Eq.(\ref{eq::var_unit_form}) and (\ref{eq::ref_adaptive_pca}) together,
we will see that they share some similarities:
First, they are almost the same if we set $n-k =1$ in Eq.(\ref{eq::ref_adaptive_pca}).
Also, $\mathbf{x_t}\mathbf{x_t}^\top $ in Eq.(\ref{eq::ref_adaptive_pca}) is a special case of $C_t$ in Eq.(\ref{eq::var_unit_form}). 

Thus, it is possible to apply Algorithm \ref{alg::adaptive_pca} 
to solving the problem (\ref{eq::var_unit_form}) by setting $n-k =
1$. In this case, Algorithms \ref{alg::alg-mix-decomp} and \ref{alg::capping} are
not needed.
This is summarized in Algorithm \ref{alg::adaptive_var_unit}.

\begin{algorithm}[tb]
    \caption{Online adaptive variance minimization over unit sphere}
    \label{alg::adaptive_var_unit}
\begin{algorithmic}[1]
    \STATE {\bfseries Input:} an initial density matrix $Y_1 \in \mathscr{B}_{1}^n$.
    \FOR{$t=1$ {\bfseries to} $T$}
    \STATE Perform eigendecomposition $Y_t = \widehat{D}\diag(\sigma_t)\widehat{D}^\top$.
    \STATE Use the vector $\mathbf{y_t}=\widehat{D}[:,j]$ with probability $\sigma_{t,j}$.
    \STATE Receive covariance matrix $C_t$, which incurs the loss $\mathbf{y_t}^\top C_t\mathbf{y_t}$ and expected loss $\Tr(Y_tC_t)$.
    \STATE Update $Y_{t+1}$ as:
    \begin{subequations}
    \label{eq::var_unit_update}
    \begin{align}
    \label{eq::var_unit_v_t+1}
    &V_{t+1} = \frac{\exp(\ln Y_t - \eta C_t)}{\Tr(\exp(\ln Y_t - \eta C_t))}\\
    \label{eq::var_unit_fix_share_pca}
    &\sigma_{t+1,i} = \frac{\alpha}{n} + (1-\alpha)v_{t+1,i}, 
    Y_{t+1} = \widehat{U}\diag(\sigma_{t+1})\widehat{U}^\top
    \end{align}
    \end{subequations}
    where we apply eigendecomposition to $V_{t+1}$ as $V_{t+1} = \widehat{U}\diag(\mathbf{v_{t+1}})\widehat{U}^\top$.
    \ENDFOR
\end{algorithmic}
\end{algorithm}

The theorem below is analogous to Theorem \ref{thm::adaptive_pca} in
the case that $n-k = 1$.

\begin{theorem}
\label{thm::adaptive_var_unit}
{\it
For any sequence of covariance matrices $C_1$, $\dots$, $C_T$
with $0\preceq C_t \preceq I$, and for 
$\min\limits_{U\in\mathscr{B}_{\text{1}}^\text{n}}\sum\limits_{t=r}^s \Tr(UC_t) \le L$,
if we run Algorithm \ref{alg::adaptive_var_unit} with
$\alpha = 1/(T+1)$, 
$D = \ln(n(1+T))+1$,
and $\eta = \ln(1+\sqrt{2D/L})$, for any $T\ge 1$ we have:
\begin{equation*}
\mathcal{R}_a^{\text{var-unit}} \le O(\sqrt{2LD}+D) 
\end{equation*}    
}
\end{theorem}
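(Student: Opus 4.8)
The plan is to obtain Theorem~\ref{thm::adaptive_var_unit} as the specialization of Theorem~\ref{thm::adaptive_pca} to $n-k=1$, run with a general loss matrix $C_t$ in place of the rank-one matrix $\mathbf{x_t}\mathbf{x_t}^\top$. When $n-k=1$ the set $\mathscr{B}_1^n$ is the full set of density matrices, so the eigenvalue constraint $\|W\|_2\le 1/(n-k)=1$ is automatic and the capping step is vacuous. Consequently the update \eqref{eq::var_unit_update} is exactly the matrix-exponentiated-gradient step \eqref{eq::var_unit_v_t+1} followed by the fixed-share mixing \eqref{eq::var_unit_fix_share_pca} with no projection, and the played matrix $Y_t$ simultaneously plays the roles of both $W_t$ and $\widehat{W}_t$ from Algorithm~\ref{alg::adaptive_pca}.

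First I would establish the per-step inequality that is the analogue of Lemma~\ref{lem::adaptive_pca_step_ineq}. Theorem~\ref{thm::quantum_ineq}, taken from \cite{warmuth2006online}, holds for any symmetric loss with $0\preceq C_t\preceq I$ (the rank-one case used in the PCA analysis is a special instance), so applying it with $W_t\to Y_t$ and $\mathbf{x_t}\mathbf{x_t}^\top\to C_t$ gives
\begin{equation*}
\Delta(Q_t,Y_t)-\Delta(Q_t,V_{t+1})\ge \Tr(Y_tC_t)(1-\exp(-\eta))-\eta\Tr(Q_tC_t)
\end{equation*}
for every $Q_t\in\mathscr{B}_1^n$. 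Because no capping occurs, the Pythagorean step that in the PCA proof replaces $W_t$ by $\widehat{W}_t$ collapses to an equality; expanding the quantum relative entropies on the left then yields
\begin{equation*}
\Tr(Y_tC_t)(1-\exp(-\eta))-\eta\Tr(Q_tC_t)\le -\Tr(Q_t\ln Y_t)+\Tr(Q_t\ln V_{t+1}),
\end{equation*}
which is Lemma~\ref{lem::adaptive_pca_step_ineq} at $n-k=1$.

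Next I would sum this inequality over an arbitrary window $t=r,\dots,s$ and bound the right-hand side exactly as in the proof of Theorem~\ref{thm::adaptive_pca}, i.e.\ following Proposition~2 of \cite{cesa2012new}. I split each summand as $\bar A+\bar B$ with $\bar A=-\Tr(Q_t\ln Y_t)+\Tr(Q_{t-1}\ln V_t)$ and $\bar B=-\Tr(Q_{t-1}\ln V_t)+\Tr(Q_t\ln V_{t+1})$. The $\bar B$ terms telescope, leaving only boundary contributions, while the $\bar A$ terms are controlled by the fixed-share floor: the rule $\sigma_{t,i}=\frac{\alpha}{n}+(1-\alpha)v_{t,i}$ forces every eigenvalue of $Y_t$ to exceed $\alpha/n$, which bounds the interior (constant-comparator) terms by a constant through $\ln\frac{1}{1-\alpha}$ and the comparator-switch term at $t=r$ by the $D$ term. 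Finally, invoking Lemma~4 of \cite{freund1997decision} to optimize over $\eta$, exactly as in Theorem~\ref{thm::adaptive_subset_expert}, and substituting $n-k=1$ into the PCA parameters (so that $\alpha=1/(T(n-k)+1)$ becomes $\alpha=1/(T+1)$ and $D=(n-k)\ln(n(1+(n-k)T))+1$ becomes $D=\ln(n(1+T))+1$) produces $\mathcal{R}_a^{\text{var-unit}}\le O(\sqrt{2LD}+D)$.

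The main obstacle I anticipate is not in the summation machinery, which is identical to the matrix PCA case, but in confirming that Theorem~\ref{thm::quantum_ineq} survives the replacement of $\mathbf{x_t}\mathbf{x_t}^\top$ by a general $C_t$ with $0\preceq C_t\preceq I$; this requires checking the derivation in \cite{warmuth2006online} to see that only $0\preceq C_t\preceq I$, and not rank-one structure, is used in the Golden--Thompson-type bound on $-\ln\Tr(\exp(\ln Y_t-\eta C_t))$. A secondary delicate point is the non-commutativity in $\bar A$: since $Q_t$ need not commute with $Y_t$, one must first pass from the scalar fixed-share inequality to the operator ordering $\ln V_t-\ln Y_t\preceq\ln\frac{1}{1-\alpha}I$ using the common eigenbasis $\widehat U$ of $V_t$ and $Y_t$, and only then take the trace against $Q_t\succeq 0$ with $\Tr(Q_t)=1$.
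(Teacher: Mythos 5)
Your proposal is correct and follows essentially the same route as the paper, whose own (sketched) proof says precisely to derive the analogue of Lemma~\ref{lem::adaptive_pca_step_ineq} from Theorem~2 of \cite{warmuth2006online} with general $C_t$ and then repeat the proof of Theorem~\ref{thm::adaptive_pca} at $n-k=1$. Your two flagged subtleties — that the Warmuth--Kuzmin bound needs only $0\preceq C_t\preceq I$ rather than rank-one structure, and that $V_t$ and $Y_t$ share an eigenbasis so the fixed-share bound lifts to the operator order before tracing against $Q_t$ — are exactly the points the paper's full PCA proof relies on, so nothing is missing.
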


\begin{sproof}
Similar inequality can be obtained as in Lemma \ref{lem::adaptive_pca_step_ineq}
by using the Theorem 2 in \cite{warmuth2006online}.
The rest follows the proof of Theorem \ref{thm::adaptive_pca}.
\end{sproof}

In order to apply the above theorem, we need to either estimate the step size $\eta$ heuristically
or estimate the upper bound $L$,
which may not be easily done.

In the next theorem, we show that 
we can still upper bound the $\mathcal{R}_a^{\text{var-unit}}$ without knowing $L$,
but the upper bound is a function of time horizon $T$ instead of the upper bound $L$.

Before we get to the theorem, we need the following lemma which lifts the
vector case of Lemma 1 in \cite{cesa2012new} to the density matrix case:
\begin{lemma}
\label{lem::mat_lem1_bianchi}
For any $\eta \ge 0$, $t\ge 1$, any covariance matrix $C_t$ with $0\preceq C_t\preceq I$,
and for any $Q_t\in\mathscr{B}_{1}^n$, Algorithm \ref{alg::adaptive_var_unit} satisfies:
\begin{equation*}
\begin{array}{l}
\Tr(Y_tC_t) - \Tr(Q_tC_t)
\le \frac{1}{\eta}\Big( \Tr(Q_t\ln V_{t+1}) - \Tr(Q_t\ln Y_t) \Big) + \frac{\eta}{2}
\end{array}
\end{equation*}
\end{lemma}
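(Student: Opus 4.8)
The plan is to follow the same strategy as the proof of Theorem~\ref{thm::adaptive_pca} and its scalar analogue (Lemma~1 in \cite{cesa2012new}), but to replace the commutative Hedge analysis by a quantum version built on the Golden--Thompson inequality. First I would rewrite the right-hand side in terms of the quantum relative entropy: since $Q_t$ is a density matrix, $\Tr(Q_t\ln V_{t+1}) - \Tr(Q_t\ln Y_t) = \Delta(Q_t,Y_t) - \Delta(Q_t,V_{t+1})$, so the claim is equivalent to a one-step drop in relative entropy bounding the instantaneous regret $\Tr(Y_tC_t)-\Tr(Q_tC_t)$.

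Next I would make the update \eqref{eq::var_unit_v_t+1} explicit. Writing $Z_t = \Tr(\exp(\ln Y_t - \eta C_t))$, and using that $Y_t$ is positive definite (guaranteed because the fixed-share step \eqref{eq::var_unit_fix_share_pca} keeps every eigenvalue at least $\alpha/n$, so $\ln Y_t$ is well defined), we have $\ln V_{t+1} = \ln Y_t - \eta C_t - (\ln Z_t)I$. Taking $\Tr(Q_t\,\cdot\,)$ and using $\Tr(Q_t)=1$ collapses the logarithm of the partition function, giving
\begin{equation*}
\Tr(Q_t\ln V_{t+1}) - \Tr(Q_t\ln Y_t) = -\eta\Tr(Q_tC_t) - \ln Z_t.
\end{equation*}
Substituting this back, the lemma reduces to the single scalar inequality $\ln Z_t \le -\eta\Tr(Y_tC_t) + \tfrac{\eta^2}{2}$.

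The heart of the argument---and the step that genuinely uses the matrix structure---is bounding $\ln Z_t$. Here I would invoke the Golden--Thompson inequality to separate the non-commuting terms, $Z_t = \Tr(\exp(\ln Y_t - \eta C_t)) \le \Tr(Y_t\exp(-\eta C_t))$. Applying the scalar bound $e^{-a}\le 1 - a + \tfrac{a^2}{2}$ (valid for $a\ge0$) spectrally to $\eta C_t \succeq 0$ yields $\exp(-\eta C_t)\preceq I - \eta C_t + \tfrac{\eta^2}{2}C_t^2$; since $0\preceq C_t\preceq I$ we have $C_t^2\preceq I$, hence $\Tr(Y_tC_t^2)\le\Tr(Y_t)=1$. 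Combining these gives $Z_t \le 1 - \eta\Tr(Y_tC_t) + \tfrac{\eta^2}{2}$, and the elementary inequality $\ln(1+z)\le z$ delivers the required bound on $\ln Z_t$.

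The main obstacle is precisely this last paragraph: in the vector/expert setting the analogous lemma follows from scalar Hoeffding because all quantities commute, whereas here $\ln Y_t$ and $C_t$ need not commute. The Golden--Thompson inequality is what makes the separation legitimate, and the spectral comparison $\exp(-\eta C_t)\preceq I-\eta C_t+\tfrac{\eta^2}{2}C_t^2$ together with $C_t^2\preceq I$ is what produces the clean $\tfrac{\eta^2}{2}$ constant rather than the looser bound appearing in Theorem~\ref{thm::quantum_ineq}. Everything else is routine algebra.
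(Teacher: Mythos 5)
Your proof is correct, and its skeleton coincides with the paper's: both use $\Tr(Q_t)=1$ to collapse the log-partition function, reducing the lemma to the scalar inequality $\eta\Tr(Y_tC_t)+\ln\Tr\big(\exp(\ln Y_t-\eta C_t)\big)\le\frac{\eta^2}{2}$, and both then apply Golden--Thompson to get $\Tr\big(\exp(\ln Y_t-\eta C_t)\big)\le\Tr\big(Y_t\exp(-\eta C_t)\big)$. You diverge only at the final estimate. The paper bounds $\exp(-\eta C_t)\preceq I-(1-e^{-\eta})C_t$ via Lemma 2.1 of \cite{tsuda2005matrix}, applies $\ln(1-x)\le -x$, and then needs two further scalar facts, $\Tr(Y_tC_t)\le 1$ and $\eta-1+e^{-\eta}\le\frac{\eta^2}{2}$ (proved by convexity of $\frac{\eta^2}{2}-e^{-\eta}$), to land on the $\frac{\eta^2}{2}$ constant. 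You instead apply the Taylor bound $e^{-a}\le 1-a+\frac{a^2}{2}$ for $a\ge 0$ spectrally, which is legitimate since $I$, $C_t$, $C_t^2$, and $\exp(-\eta C_t)$ share an eigenbasis, obtaining $\exp(-\eta C_t)\preceq I-\eta C_t+\frac{\eta^2}{2}C_t^2$, and then use $C_t^2\preceq I$ so that $\Tr(Y_tC_t^2)\le\Tr(Y_t)=1$ (both trace comparisons rest on Lemma \ref{lem::matrix_pos_sym_ineq} with $Y_t\succeq 0$, which you should cite explicitly). Your route buys a mildly shorter argument: one fewer imported lemma, and the $\frac{\eta^2}{2}$ emerges directly from the quadratic term rather than from a posteriori scalar analysis. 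Two points to make explicit in a final write-up: $\ln(1+z)\le z$ requires $1+z>0$, which holds here because $Z_t=\Tr\big(\exp(\ln Y_t-\eta C_t)\big)>0$ and $Z_t\le 1+z$; and your closing claim is slightly overstated---the paper's detour through Tsuda's lemma reaches the same clean $\frac{\eta^2}{2}$, so your quadratic spectral bound is a simplification of the constant-extraction step, not the only way to avoid the $(1-e^{-\eta})^{-1}$ factor of Theorem \ref{thm::quantum_ineq}.
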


Now we are ready to present the upper bound on the regret for
Algorithm~\ref{alg::adaptive_var_unit}. 
\begin{theorem}
\label{thm::T_depend_adaptive_var_unit}
{\it
For any sequence of covariance matrices $C_1$, $\dots$, $C_T$
with $0\preceq C_t \preceq I$,
if we run Algorithm \ref{alg::adaptive_var_unit} with
$\alpha = 1/(T+1)$
and $\eta = \frac{\sqrt{\ln(n(1+T))}}{\sqrt{T}}$, for any $T\ge 1$ we have:
\begin{equation*}
\mathcal{R}_a^{\text{var-unit}} \le O\Big(\sqrt{T\ln\big(n(1+T)\big)}\Big) 
\end{equation*}    
}
\end{theorem}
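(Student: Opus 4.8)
The plan is to reduce the adaptive (maximum-over-intervals) bound to a single-interval argument, and to exploit the fixed-share step to control the boundary terms that arise when an interval starts late in the horizon. I would fix an arbitrary interval $[r,s]\subset[1,T]$ and let $U^*=\argmin_{U\in\mathscr{B}_1^n}\sum_{t=r}^s\Tr(UC_t)$ be its minimizer; I then set the comparator in Lemma~\ref{lem::mat_lem1_bianchi} to be $Q_t=U^*$ for every $t\in[r,s]$. Summing that per-step inequality over $t=r,\dots,s$ gives
\[
\sum_{t=r}^s\big(\Tr(Y_tC_t)-\Tr(U^*C_t)\big)\le \frac{1}{\eta}\sum_{t=r}^s\big(\Tr(U^*\ln V_{t+1})-\Tr(U^*\ln Y_t)\big)+\frac{\eta}{2}(s-r+1),
\]
so the whole problem becomes bounding the telescoping sum in the middle.

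The key step is to handle that sum using the fixed-share update \eqref{eq::var_unit_fix_share_pca}. Since $Y_{t+1}$ and $V_{t+1}$ share the same eigenbasis and the eigenvalues of $Y_{t+1}$ satisfy $\sigma_{t+1,i}=\tfrac{\alpha}{n}+(1-\alpha)v_{t+1,i}\ge(1-\alpha)v_{t+1,i}$, I would argue eigenvalue-wise that $\ln V_{t+1}\preceq \ln Y_{t+1}-\ln(1-\alpha)I$. Because $U^*\succeq 0$ with $\Tr(U^*)=1$, substituting this converts $\Tr(U^*\ln V_{t+1})-\Tr(U^*\ln Y_t)$ into the genuinely telescoping increment $\Tr(U^*\ln Y_{t+1})-\Tr(U^*\ln Y_t)$ plus the constant penalty $-\ln(1-\alpha)$. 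Telescoping then leaves only the boundary terms $\Tr(U^*\ln Y_{s+1})-\Tr(U^*\ln Y_r)$ together with $-(s-r+1)\ln(1-\alpha)$.

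These boundary terms are where the fixed-share step pays off, and controlling them is the crux. The term $\Tr(U^*\ln Y_{s+1})\le 0$ because the eigenvalues of a density matrix are at most $1$; the delicate term is $-\Tr(U^*\ln Y_r)$, and here I would use that every $Y_r$ produced by the mixing (and the uniform initialization $Y_1=I/n$) has eigenvalues bounded below by $\alpha/n$, so $-\ln Y_r\preceq \ln(n/\alpha)I$ and hence $-\Tr(U^*\ln Y_r)\le\ln(n/\alpha)$ uniformly in $r$. This uniform-in-$r$ bound is exactly what makes the argument work for adaptive regret: without fixed-share, $Y_r$ could collapse onto a subspace and make this quantity blow up for intervals that begin late. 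With $\alpha=1/(T+1)$ one gets $\ln(n/\alpha)=\ln(n(1+T))$ and $-(s-r+1)\ln(1-\alpha)\le T\cdot\tfrac{\alpha}{1-\alpha}=1$, so the whole telescoping sum is at most $D=\ln(n(1+T))+1$.

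Combining the pieces and using $s-r+1\le T$ yields, for every interval, $\sum_{t=r}^s(\Tr(Y_tC_t)-\Tr(U^*C_t))\le D/\eta+\eta T/2$; since this bound is independent of $r$ and $s$, it also bounds the maximum over intervals. Finally I would substitute $\eta=\sqrt{\ln(n(1+T))/T}$ and verify $D/\eta+\eta T/2=O(\sqrt{T\ln(n(1+T))})$, completing the proof. The main obstacle is the telescoping/boundary-term step: getting the matrix-logarithm inequality right (it is clean only because $V_{t+1}$ and $Y_{t+1}$ are simultaneously diagonalizable) and showing that the fixed-share lower bound on the eigenvalues of $Y_r$ makes the start-of-interval penalty uniformly bounded in $r$.
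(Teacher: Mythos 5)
Your proof is correct, and it reaches the paper's bound by a genuinely different, more self-contained route. The paper proves this theorem by extending the comparator over the \emph{whole} horizon: it sets $Q_t=Q\in\mathscr{B}_1^n$ on $[r,s]$ and $Q_t=0$ elsewhere (so Lemma~\ref{lem::mat_lem1_bianchi} must be invoked for both cases, with the $\left\|q_t\right\|_1$ normalization), and then routes the middle term through the shifting-comparator decomposition of Eq.~(\ref{eq::quantum_split_two_parts-main}) from the adaptive-PCA analysis, where the total-variation bookkeeping $m(\mathbf{q_{1:T}})$ counts the single switch-on at $t=r$. You instead keep a fixed comparator $U^*$ and sum only over the interval, replacing the entire $\bar{A}/\bar{B}$ machinery with the Loewner inequality $\ln V_{t+1}\preceq \ln Y_{t+1}+\ln\frac{1}{1-\alpha}\,I$ --- valid exactly because Eq.~(\ref{eq::var_unit_fix_share_pca}) makes $V_{t+1}$ and $Y_{t+1}$ simultaneously diagonalizable, after which Lemma~\ref{lem::matrix_pos_sym_ineq} converts it to traces against $U^*\succeq 0$ --- and then telescoping within $[r,s]$. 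Your ingredients are in exact correspondence with the paper's: the start-of-interval bound $-\Tr(U^*\ln Y_r)\le\ln(n/\alpha)$ from the fixed-share eigenvalue floor $\alpha/n$ plays the role of the $D_{TV}$ jump term $\circled{1}$, your per-step $\ln\frac{1}{1-\alpha}$ penalty matches terms $\circled{2}$ and $\circled{4}$, and $\Tr(U^*\ln Y_{s+1})\le 0$ matches $\circled{3}$; the constants agree exactly, giving $D=\ln(n(1+T))+1$ and $D/\eta+\eta T/2=O\big(\sqrt{T\ln(n(1+T))}\big)$, uniformly in $[r,s]$. What your version buys is a shorter argument that never needs the $Q_t=0$ case of the lemma or the TV-distance counting, and that makes explicit \emph{why} fixed-share yields adaptivity (the eigenvalue floor caps the restart cost uniformly in $r$). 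One shared caveat, which you correctly flag and which is not a gap relative to the paper: for $r=1$ one needs an eigenvalue floor on the initial matrix (e.g.\ $Y_1=I/n$), an assumption the paper's own proof uses implicitly through its $\ln(n)$ term at $t=1$.
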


\begin{proof}

In the proof, we will use two cases of $Q_t$: $Q_t\in\mathscr{B}_{1}^n$, and $Q_t = 0$.

From Lemma \ref{lem::mat_lem1_bianchi}, the following inequality is valid for both cases of $Q_t$:
\begin{equation*}
\begin{array}{l}
\Tr(Y_tC_t) - \Tr(Q_tC_t) 
\le \frac{1}{\eta}\Big( \Tr(Q_t\ln V_{t+1}) - \Tr(Q_t\ln Y_t) \Big) + \frac{\eta}{2}
\end{array}
\end{equation*}

Follow the same analysis as in the proof of Theorem \ref{thm::adaptive_pca},
we first do the eigendecomposition to $Q_t$ as $Q_t = \widetilde{D}\diag(q_t)\widetilde{D}^\top$.
Since $\left\|q_t\right\|_1$ is either $1$ or $0$, we will re-write the above inequality as:
\begin{equation}
\begin{array}{l}
\left\|q_t\right\|_1\Tr(Y_tC_t) - \Tr(Q_tC_t)
\le \frac{1}{\eta}\Big( \Tr(Q_t\ln V_{t+1}) - \Tr(Q_t\ln Y_t) \Big) + \frac{\eta}{2}\left\|q_t\right\|_1
\end{array}
\end{equation}

Analyzing the term $\Tr(Q_t\ln V_{t+1}) - \Tr(Q_t\ln Y_t)$ in the above inequality 
is the same as the analysis of the Eq.(\ref{eq::quantum_split_two_parts-main}) in the appendix.

Thus, summing over $t=1$ to $T$ to the above inequality, and setting $Q_t = Q\in\mathscr{B}_1^n$ for $t=r,\dots,s$
and $0$ elsewhere,
we have 
\begin{equation*}
\begin{array}{l}
\sum\limits_{t=r}^s\Tr(Y_tC_t) - \min\limits_{U\in\mathscr{B}_1^n}\sum\limits_{t=r}^s\Tr(UC_t)
\le \frac{1}{\eta}\Big(\ln\frac{n}{\alpha}+T\ln\frac{1}{1-\alpha}\Big) + \frac{\eta}{2}T,
\end{array}
\end{equation*} 
since it holds for any $Q\in\mathscr{B}_1^n$.

After plugging in the expression of $\eta$ and $\alpha$, we have
\begin{equation*}
\begin{array}{l}
\sum\limits_{t=r}^s\Tr(Y_tC_t) - \min\limits_{U\in\mathscr{B}_1^n}\sum\limits_{t=r}^s\Tr(UC_t) 
\le O\Big(\sqrt{T\ln\big(n(1+T)\big)}\Big) 
\end{array}
\end{equation*}

Since the above inequality holds for any $1\le r\le s\le T$, 
we put a $\max\limits_{[r,s]\subset [1,T]}$ in the left part,
which proves the result.
\end{proof}

\subsection{Online Adaptive Variance Minimization over the Simplex space}

We first re-write the problem in Eq.(\ref{eq::general_var}) when
$\Omega$ is the simplex below:
\small
\begin{equation}
\label{eq::var_simplex_form}
\mathcal{R}_{a}^{\text{var-sim}} = \max_{[r,s]\subset [1,T]}\Big\{
\sum\limits_{t=r}^s \mathbb{E}[\mathbf{y_t}^\top C_t\mathbf{y_t}] -
\min_{\mathbf{u}\in\mathcal{B}_1^n}\sum\limits_{t=r}^s \mathbf{u}^\top C_t\mathbf{u}  \Big\}
\end{equation}
\normalsize
where $\mathbf{y_t}\in\mathcal{B}_1^n$, and $\mathcal{B}_1^n$ is the simplex set.

When $r = 1$ and $s = T$, the problem reduces to the static regret problem,
which is solved in \cite{warmuth2006online} by the exponentiated gradient algorithm as below:
\begin{equation*}
y_{t+1,i} = \frac{y_{t,i}\exp\big(-\eta(C_t\mathbf{y_t})_i\big)}{\sum_i y_{t,i}\exp\big(-\eta(C_t\mathbf{y_t})_i\big)}
\end{equation*}

As is done in the previous sections, we add the fixed-share step after the above update,
which is summarized in Algorithm \ref{alg::adaptive_var_simplex}.

\begin{algorithm}[tb]
    \caption{Online adaptive variance minimization over simplex}
    \label{alg::adaptive_var_simplex}
\begin{algorithmic}[1]
    \STATE {\bfseries Input:} an initial vector $\mathbf{y_1} \in \mathcal{B}_{1}^n$.
    \FOR{$t=1$ {\bfseries to} $T$}
    \STATE Receive covariance matrix $C_t$.
    \STATE Incur the loss $\mathbf{y_t}^\top C_t\mathbf{y_t}$.
    \STATE Update $\mathbf{y_{t+1}}$ as:
    \begin{subequations}
    \begin{align}
    &v_{t+1,i} = \frac{y_{t,i}\exp\big(-\eta(C_t\mathbf{y_t})_i\big)}{\sum_i y_{t,i}\exp\big(-\eta(C_t\mathbf{y_t})_i\big)},\\
    &y_{t+1,i} = \frac{\alpha}{n} + (1-\alpha)v_{t+1,i}.
    \end{align}
    \end{subequations}
    \ENDFOR
\end{algorithmic}
\end{algorithm}

With the update of $y_t$ in the Algorithm \ref{alg::adaptive_var_simplex},
we have the following theorem:
\begin{theorem}
\label{thm::adaptive_var_simplex}
{\it
For any sequence of covariance matrices $C_1$, $\dots$, $C_T$
with $0\preceq C_t \preceq I$, and for 
$\min\limits_{\mathbf{u}\in\mathcal{B}_{\text{1}}^\text{n}}\sum\limits_{t=r}^s \mathbf{u}^\top C_t\mathbf{u} \le L$,
if we run Algorithm \ref{alg::adaptive_var_simplex} with
$\alpha = 1/(T+1)$, $c = \frac{\sqrt{2\ln\big((1+T)n\big)+2}}{\sqrt{L}}$,
$b = \frac{c}{2}$, $a = \frac{b}{2b+1}$, 
and $\eta = 2a$, for any $T\ge 1$ we have:
\begin{equation*}
\mathcal{R}_a^{\text{var-sim}} \le 2\sqrt{2L\Big(\ln\big((1+T)n\big)+1\Big)} + 2\ln\big((1+T)n\big)
\end{equation*}    
}
\end{theorem}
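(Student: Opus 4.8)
The plan is to reuse, almost verbatim, the three-stage template behind Theorem~\ref{thm::adaptive_pca} and Theorem~\ref{thm::adaptive_var_unit}, but now over the simplex, where $\mathcal{B}_1^n$ is the whole probability simplex so that no mixture-decomposition or capping is required and the scalar relative entropy $d(\cdot,\cdot)$ plays the role of the quantum relative entropy $\Delta(\cdot,\cdot)$. The first step is a per-step inequality in the spirit of Lemma~\ref{lem::adaptive_pca_step_ineq} and Lemma~\ref{lem::mat_lem1_bianchi}. Because the instantaneous loss $\mathbf{y}^\top C_t\mathbf{y}$ is quadratic rather than linear, I would first linearize it by convexity,
\begin{equation*}
\mathbf{y_t}^\top C_t\mathbf{y_t}-\mathbf{u}^\top C_t\mathbf{u}\le 2\,(C_t\mathbf{y_t})^\top(\mathbf{y_t}-\mathbf{u}),
\end{equation*}
so that the exponentiated-gradient update of Algorithm~\ref{alg::adaptive_var_simplex} becomes an ordinary Hedge step on the surrogate linear loss $(C_t\mathbf{y_t})_i$ (the factor $2$ is absorbed into the learning rate, which is why $\eta=2a$). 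Feeding this surrogate into the log-partition estimate of Theorem~2 of \cite{warmuth2006online} should yield, for every comparator $\mathbf{q_t}\in\mathcal{B}_1^n$, a bound of the shape
\begin{equation*}
\mathbf{y_t}^\top C_t\mathbf{y_t}-\mathbf{q_t}^\top C_t\mathbf{q_t}\le \tfrac{1}{a}\Big(\textstyle\sum_i q_{t,i}\ln\tfrac{v_{t+1,i}}{y_{t,i}}\Big)+(\text{correction}),
\end{equation*}
in which the correction is governed by the auxiliary constants $a,b$ and the loss range.

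The second step is to sum this inequality over $t$ and to control the relative-entropy telescoping exactly as in the proof of Theorem~\ref{thm::adaptive_pca}. Following the split in Eq.(\ref{eq::quantum_split_two_parts-main}), I would write $\sum_i q_{t,i}\ln\frac{v_{t+1,i}}{y_{t,i}}=A+B$ with $A=\sum_i(q_{t,i}\ln\frac{1}{y_{t,i}}-q_{t-1,i}\ln\frac{1}{v_{t,i}})$ and $B=\sum_i(q_{t-1,i}\ln\frac{1}{v_{t,i}}-q_{t,i}\ln\frac{1}{v_{t+1,i}})$. The term $B$ telescopes, while the term $A$ is handled precisely by the fixed-share step $y_{t,i}=\frac{\alpha}{n}+(1-\alpha)v_{t,i}$, which lower bounds each coordinate by $y_{t,i}\ge\alpha/n$ and hence produces the $\ln\frac{n}{\alpha}+T\ln\frac{1}{1-\alpha}$ contribution; Lemma~4 of \cite{freund1997decision} (the same tool used for Theorem~\ref{thm::adaptive_subset_expert}) converts this into a clean $O(\ln((1+T)n))$ bound after substituting $\alpha=1/(T+1)$. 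As in Theorem~\ref{thm::T_depend_adaptive_var_unit}, the restriction to an arbitrary window $[r,s]$ is obtained by choosing the comparator $\mathbf{q_t}=\mathbf{u}\in\mathcal{B}_1^n$ for $t\in[r,s]$ and $\mathbf{q_t}=0$ otherwise, so that only the active window contributes to the comparator loss $L$ while the fixed-share argument still bounds the entropy drift across the whole horizon; a final $\max_{[r,s]}$ then produces the adaptive regret $\mathcal{R}_a^{\text{var-sim}}$.

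The third step is the learning-rate optimization, which is where the constants $a=\frac{b}{2b+1}$, $b=\frac{c}{2}$, $c=\sqrt{2(\ln((1+T)n)+1)/L}$ enter. After the previous two steps the accumulated bound has the generic form $\mathcal{R}_a^{\text{var-sim}}\le \kappa\,L+\frac{1}{a}\ln((1+T)n)$, in which $\kappa$ is an increasing function of the learning rate; the role of $b$ and $c$ is exactly to balance the multiplicative loss term against the additive entropy term, and the specific algebra $a=\frac{b}{2b+1}$ is chosen so that, after substituting $c=\sqrt{2\tilde D/L}$ with $\tilde D=\ln((1+T)n)+1$, the two contributions combine into the closed form $2\sqrt{2L\tilde D}+2\ln((1+T)n)$.

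I expect the main obstacle to be the first step: obtaining the per-step inequality with a correction small enough to yield the square-root (small-loss) dependence on $L$, rather than merely the $O(\sqrt{T})$ dependence of Theorem~\ref{thm::T_depend_adaptive_var_unit}. The surrogate loss $(C_t\mathbf{y_t})_i$ is only guaranteed to lie in $[-1,1]$ (via $\mathbf{e_i}^\top C_t\mathbf{e_i}\le 1$, $\mathbf{y_t}^\top C_t\mathbf{y_t}\le 1$, and Cauchy--Schwarz in the $C_t$ semi-inner-product), not in $[0,1]$, so the standard Hedge estimate must be applied carefully---exploiting the shift-invariance of the exponentiated-gradient update together with a second-order (rather than a crude Hoeffding) expansion of the log-partition function---in order to make the correction proportional to the algorithm's own loss. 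Once that inequality is in place with the correct constants, the fixed-share telescoping and the learning-rate tuning are routine adaptations of the arguments already given for Theorems~\ref{thm::adaptive_subset_expert}, \ref{thm::adaptive_pca}, and~\ref{thm::adaptive_var_unit}.
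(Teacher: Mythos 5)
Your proposal follows essentially the same route as the paper's proof: a per-step loss-versus-entropy-decrement inequality over the simplex, the fixed-share telescoping with comparator $\mathbf{q_t}=\mathbf{u}$ for $t\in[r,s]$ and $0$ elsewhere (giving $\ln\frac{n}{\alpha}+T\ln\frac{1}{1-\alpha}\le \ln\big((1+T)n\big)+1$ at $\alpha=1/(T+1)$), and the tuning of $a,b,c$ to balance $cL$ against $\frac{2(c+1)}{c}\big(\ln((1+T)n)+1\big)$. The one step you flag as the main obstacle---a small-loss per-step bound despite surrogate losses only lying in $[-1,1]$---is precisely Lemma 1 of \cite{warmuth2006online} (restated as Lemma \ref{lem::var_simplex_step} in the appendix), which the paper simply invokes with $r=2$, $a=\frac{b}{2b+1}$, $\eta=2a$ to get $a\,\mathbf{y_t}^\top C_t\mathbf{y_t}-b\,\mathbf{u_t}^\top C_t\mathbf{u_t}\le d(\mathbf{u_t},\mathbf{y_t})-d(\mathbf{u_t},\mathbf{v_{t+1}})$ directly, so no Hedge-style re-derivation is required (and, as a small correction, Lemma 4 of \cite{freund1997decision} is not used in this theorem---the tuning is done explicitly through $a,b,c$ rather than via that lemma).
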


\section{Experiments}
\label{sec:adaptive-pca-exp}

\begin{figure}
\vskip 0.0in
  \centering
  \subfigure[]{
    \includegraphics[height=5.cm]{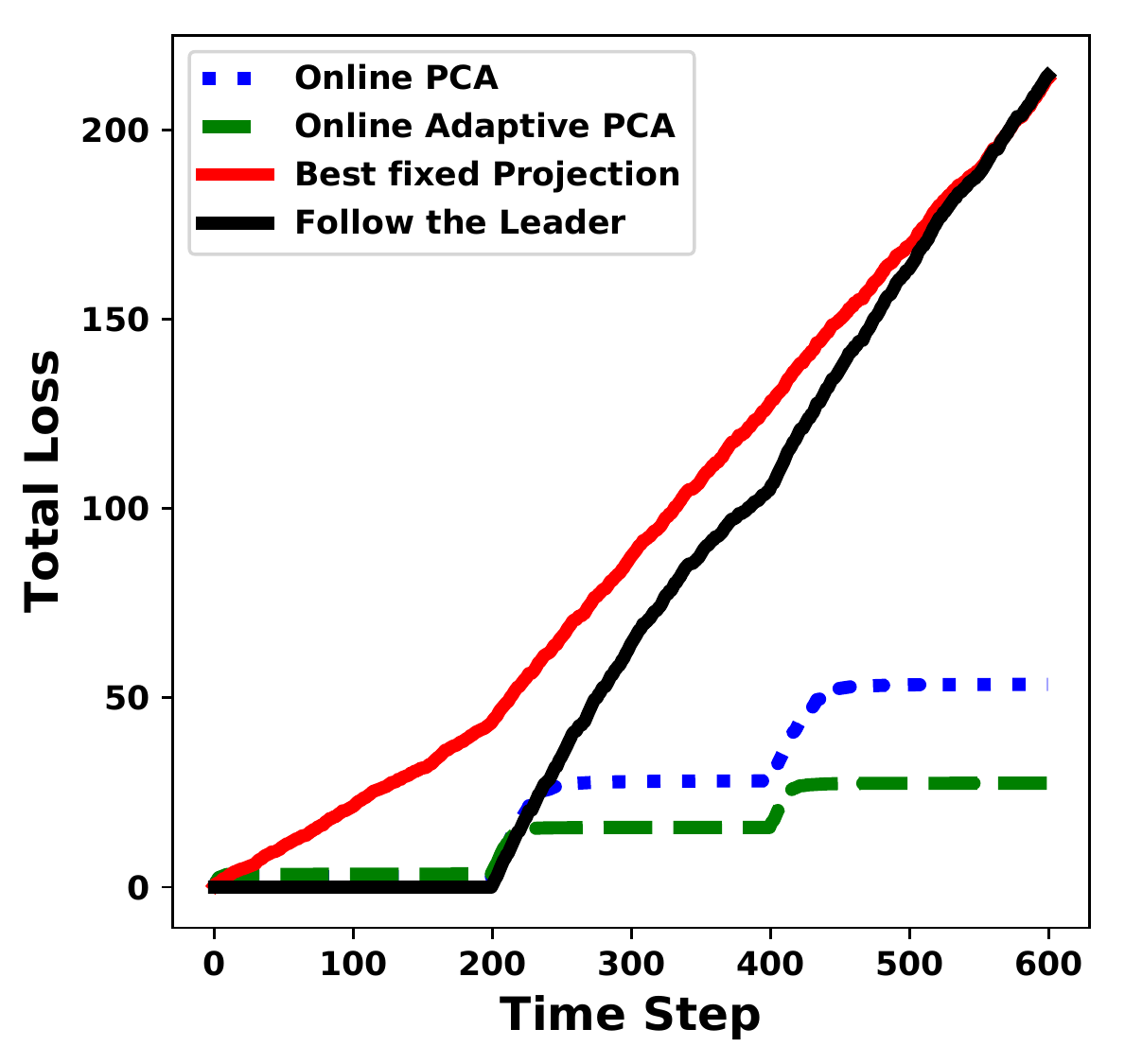}
    }
  \hspace{.1in}
  \subfigure[]{
    \includegraphics[height=5.cm]{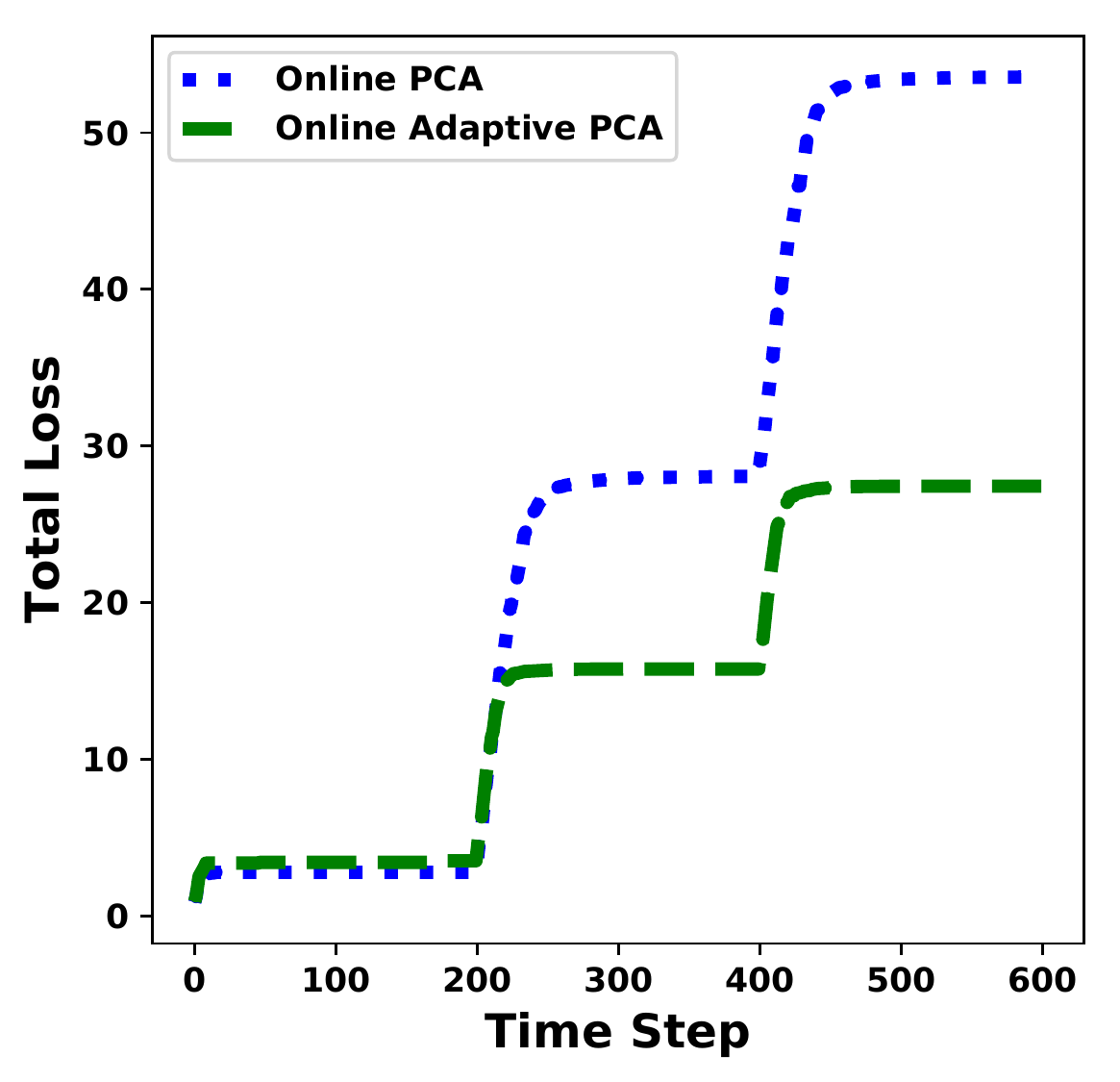}
    }
  \caption{
           (a): 
           The cumulative loss of the toy example with data samples coming from three different subspaces.
           (b):
           The detailed comparison for the two online algorithms. }
  \label{fig::synthetic_subspace} 
\vskip 0.in
\end{figure}

In this section, we use two examples to illustrate the effectiveness of our proposed online adaptive PCA algorithm.
The first example is synthetic, which shows that our proposed algorithm (denoted as Online Adaptive PCA) 
can adapt to the changing subspace faster than the method of
\cite{warmuth2008randomized}.
The second example uses the practical dataset Yale-B to demonstrate that the proposed algorithm
can have lower cumulative loss in practice when the data/face samples are coming from different persons.

The other algorithms that are used as comparators are:
1. Follow the Leader algorithm (denoted as Follow the Leader) \cite{kalai2005efficient}, 
which only minimizes the loss on the past history;
2. The best fixed solution in hindsight (denoted as Best fixed Projection),
which is the solution to the Problem described in Eq.(\ref{eq::best_fixed_sol_PCA});
3. The online static PCA (denoted as Online PCA) \cite{warmuth2008randomized}.
Other PCA algorithms are not included, since they are not designed for regret minimization.

\subsection{A Toy Example}

\begin{figure}
\vskip 0.0in
\centering
\includegraphics[height=6cm]{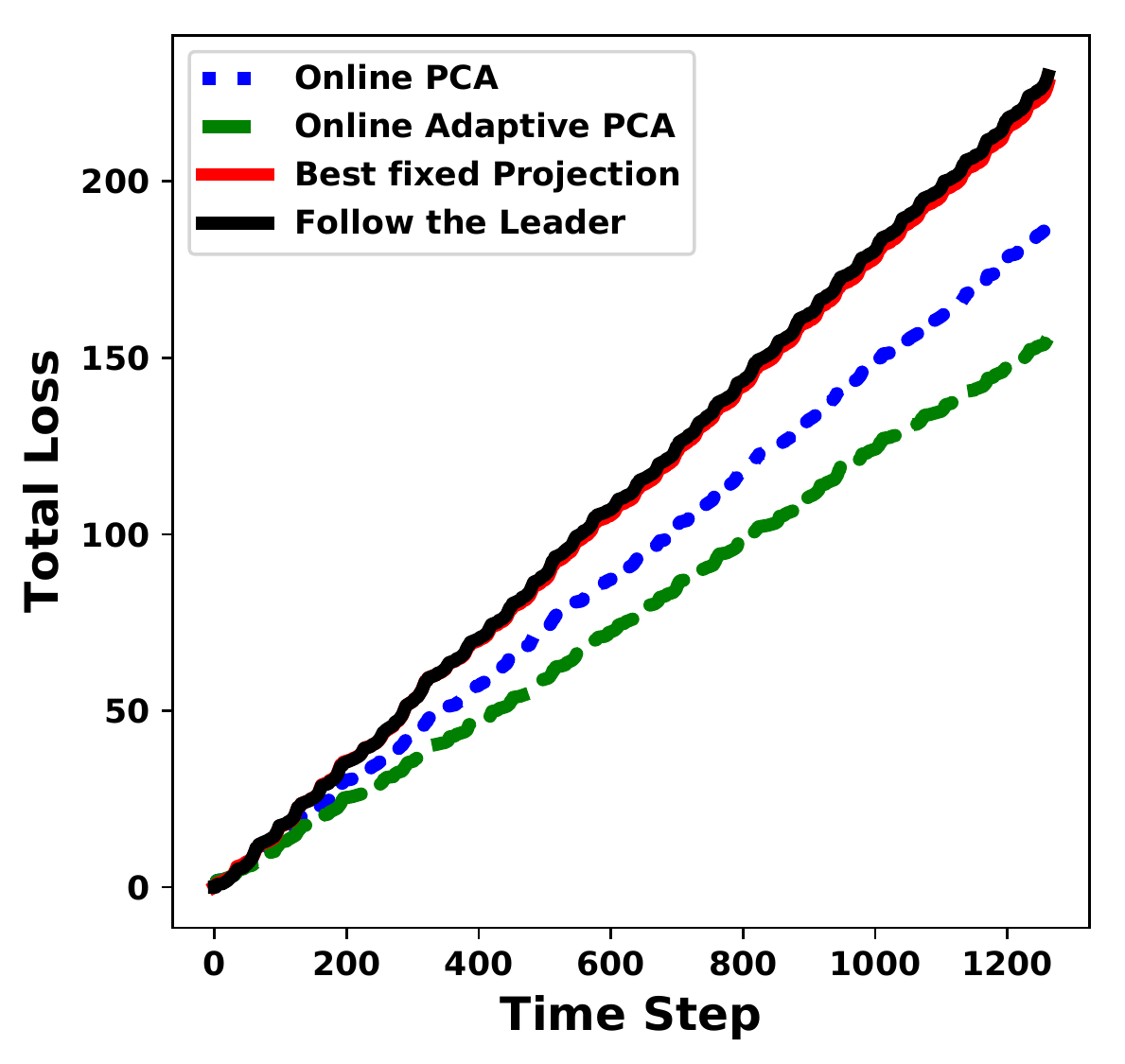}
\caption{The cumulative loss for the face example with data samples coming from 20 different persons}
\label{fig::yale_face}
\vskip 0.in
\end{figure}

In this toy example, we create the synthetic data samples coming from changing subspace,
which is a similar setup as in \cite{warmuth2008randomized}.
The data samples are divided into three equal time intervals, 
and each interval has 200 data samples.
The 200 data samples within same interval is randomly generated by a Gaussian distribution 
with zero mean and data dimension equal to 20, and
the covariance matrix is randomly generated with rank equal to 2.
In this way, the data samples are from some unknown 2-dimensional subspace,
and any data sample with $\ell_2$-norm greater than 1 is normalized to 1.
Since the stepsize used in the two online algorithms is determined by 
the upper bound of the batch solution, we first find the upper bound and 
plug into the stepsize function, which gives $\eta = 0.19$.
We can tune the stepsize heuristically in practice 
and in this example we just use $\eta = 1$ and $\alpha =1\mathrm{e}{-5}$.

After all data samples are generated, we apply the previously mentioned algorithms
with $k=2$ and obtain the cumulative loss as a function of time steps,
which is shown in Fig.\ref{fig::synthetic_subspace}.
From this figure we can see that:
1. Follow the Leader algorithm is not appropriate in the setting 
where the sequential data is shifting over time.
2. The static regret is not a good metric under this setting,
since the best fixed solution in hindsight is suboptimal.
3. Compared with Static PCA, the proposed Adaptive PCA can
adapt to the changing environment faster,
which results in lower cumulative loss and
is more appropriate when the data is shifting over time.

\subsection{Face data Compression Example}

In this example, we use the Yale-B dataset which is a collection of
face images. The data is split into 
20 time intervals corresponding to 20 different people. Within each interval,
there are 64 face image samples.
Like the previous example, we first normalize the data
to ensure its $\ell_2$-norm not greater than 1.
We use $k = 2$, which is the same as the previous example.
The stepsize $\eta$ is also tuned heuristically like the previous example,
which is equal to $5$ and $\alpha = 1\mathrm{e}{-4}$.

We apply the previously mentioned algorithms 
and again obtain the cumulative loss as the function of time steps,
which is displayed in Fig.\ref{fig::yale_face}.
From this figure we can see that
although there is no clear bumps indicating the shift from one subspace to another
as the Fig.\ref{fig::synthetic_subspace} of the toy example,
our proposed algorithm still has the lowest cumulative loss,
which indicates that upper bounding the adaptive regret
is still effective when the compressed faces are coming from different persons.

\section{Conclusion}
\label{sec:adaptive-pca-conclusion}

In this chapter, we propose an online adaptive PCA algorithm, which
augments the previous online static PCA algorithm with a fixed-share
step. 
However,
different from the previous online PCA algorithm which is designed
to minimize the static regret,
the proposed online adaptive PCA algorithm aims to
minimize the adaptive regret
which is more appropriate when the underlying environment is changing
or the sequential data is shifting over time.
We demonstrate theoretically and experimentally that our algorithm can adapt to the
changing environments. 
Furthermore, we extend the online adaptive PCA algorithm 
to online adaptive variance minimization problems.

One may note that the proposed algorithms suffer from the per-iteration computation complexity of $O(n^3)$
due to the eigendecomposition step, although some tricks mentioned in \cite{arora2012stochastic} 
could be used to make it comparable with
incremental PCA of $O(k^2n)$.
For the future work,
one possible direction is to investigate algorithms with slightly worse adaptive regret bound
but with better per-iteration computation complexity.

\chapter{Online Convex Optimization for Cumulative Constraints}
\label{chap:oco-long-term}

Previous chapter focuses on how to enable the online PCA algorithm
to have better adaptivity to the changing environments.

In this chapter, we come back to the general online convex optimization (OCO) problem.
For online convex optimization with constraints, a projection operator
is typically applied
in order to make the updated prediction feasible for each time step \cite{zinkevich2003online,duchi2008efficient,duchi2010composite}.
However, when the constraints are complex, 
the computational burden of the projection may be too high
to have online computation.

To circumvent this
dilemma, \cite{mahdavi2012trading,jenatton2016adaptive,yu2017online} proposed
algorithms which approximates the true desired projection with a
simpler closed-form projection. 
The algorithm can still upper bound the static
regret $\cR_s$ by $\sqrt{T}$ as the optimal result in \cite{zinkevich2003online},
but the constraint $\cS = \{\theta: g_i(\theta)\le 0, i = 1,\dots,m\}$ may not be satisfied in every time
step.
Instead, the long-term constraint violation $\sum\limits_{t=1}^Tg_i(\theta_t), \forall i$
can be upper bounded in a sub-linear order $o(T)$,
which is useful when we only require the constraint violation to be
non-positive on average: $\lim_{T\to \infty}\sum\limits_{t=1}^Tg_i(\theta_t)/T \le 0,\forall i$.
However,
this bound does not enforce that the violation of the constraint gets small,
which is originally desired.
A situation can arise in which strictly satisfied
constraints at one time step can cancel out violations of the
constraints at other time steps.

Along the line of the long-term constraint work, there are some variations,
which make the long-term constraint idea apply to the online resource allocation.
This is achieved by regarding the total resource consumption constraint for different time steps as the
long-term time-dependent constraint.

For the online job scheduling, 
\cite{yu2017online} considered the stochastic long-term constraint case.
It achieves $O(\sqrt{T})$ bound for both $\cR_s$ 
and the expected long-term constraint $\bbE[\sum\limits_{t=1}^T g_t(\theta_t)]$
with Slater condition assumption.
However, such Slater condition assumes that the compared static action needs to be strictly feasible,
which means there exists at least one point 
lying in the intersection of constraints $\bbE[g_t(\theta)]< 0, t=1,\dots,T$.
This limits the claimed regret performance due to the increasing difficulty in satisfying all the constraints,
resulting in \emph{loose regret}.

To solve this loose regret problem,
\cite{liakopoulos2019cautious} came up with the idea that the fixed comparator only needs to satisfy
part of the time-dependent (possibly adversarial) constraints.
That is, it used a different fixed comparator 
$\hat{\theta}=\argmin_{\theta\in \Theta_K}\sum\limits_{t=1}^T f_t(\theta)$,
where $\Theta_K = \{\theta\in\cS_0: \sum\limits_{i=t}^{t+K-1}g_i(\theta)\le 0, 1\le t\le T-K+1\}$,
$\cS_0$ is the fixed convex set, and $K$ is a user-determined parameter.

Although $\hat{\theta}\in\Theta_K$ can be used in the long-term budget constraint
when $g_t$ represents budget at each time step,
it is sometimes not a reasonable choice
in that many other resource allocation problems' constraints 
cannot simply be added together due to causality.
For example, in the online job scheduling,
previous time step's vacancy of the server cannot be carried over to the next time step,
while the unfinished jobs can.
Or we want to ensure that the rate of failures (the constraint violation itself) is upper bounded.

In this chapter, we show how our proposed algorithms can be used to tackle the two previously mentioned problems:
not enforcing low constraint violation and limited application in resource allocation.

In the first part of the chapter which is adapted from the published work \cite{yuan2018online}, 
we will show how the proposed algorithms can enforce low constraint violation
for the following two different problem setups:

\textbf{Convex Case:} The first algorithm is for the convex case,
which also has the user-determined trade-off
as in \cite{jenatton2016adaptive}, while the constraint violation is more strict.
Specifically, we have $\cR_s\le O(T^{\max\{\beta,1-\beta\}})$ 
and $\sum\limits_{t=1}^T\big([g_i(\theta_t)]_+\big)^2 \le O(T^{1-\beta}),\forall i$
where $[g_i(\theta_t)]_+ =\max\{0,g_i(\theta_t)\}$ and  
$\beta\in(0,1)$.
Note the square term heavily penalizes large constraint violations and
constraint violations from one step cannot be canceled out by strictly
feasible steps. 
Additionally, we give a bound on the cumulative
constraint violation 
$\sum\limits_{t=1}^T[g_i(\theta_t)]_+ \le O(T^{1-\beta/2})$, which
generalizes the bounds from \cite{mahdavi2012trading,jenatton2016adaptive}. 

In the case of $\beta = 0.5$, which we call "balanced", 
both $\cR_s$ and 
$\sum\limits_{t=1}^T([g_i(\theta_t)]_+)^2$ have the same upper bound of $O(\sqrt{T})$.
More importantly,
our algorithm guarantees that at each time step, 
the clipped constraint term $[g_i(\theta_t)]_+$ is upper bounded by $O(\frac{1}{T^{1/6}})$,
which does not follow from the results of \cite{mahdavi2012trading,jenatton2016adaptive}.
However, our results currently cannot generalize those of
\cite{yu2017online}, 
which has $\sum\limits_{t=1}^Tg_i(\theta_t)\le O(\sqrt{T})$. 
It is unclear how to extend the work of \cite{yu2017online} to
the clipped constraints, $[g_i(\theta_t)]_+$.

\textbf{Strongly Convex Case:} Our second algorithm for strongly convex function $f_t(\theta)$
gives us the improved upper bounds compared with the previous work in \cite{jenatton2016adaptive}.
Specifically, we have $\cR_s\le O(\log(T))$, and
$\sum\limits_{t=1}^T[g_i(\theta_t)]_+ \le O(\sqrt{\log(T)T}),\forall i$. 
The improved bounds match the regret order of standard OCO from
\cite{hazan2007logarithmic}, while maintaining a constraint violation of reasonable order.

We show numerical experiments on three problems. A toy example is used
to compare trajectories of our algorithm with those of
\cite{jenatton2016adaptive,mahdavi2012trading}, and we see that our
algorithm tightly follows the constraints. The algorithms are
also compared on a doubly-stochastic matrix approximation problem
\cite{jenatton2016adaptive} and an economic dispatch problem from
power systems. In these, our algorithms lead to reasonable objective
regret and low cumulative constraint violation.

In the second part of the chapter, 
we will discuss how to apply the proposed algorithms
to the general resource allocation problems
with \emph{tight regret} guarantee
by using a variant of \emph{dynamic regret}.

\section{Problem Formulation}
\label{sec:oco-long-term-prob-form}

The basic projected gradient algorithm achieving $\cR_s\le O(\sqrt{T})$
for convex problem was defined in \cite{zinkevich2003online}.
Specifically, at each iteration $t$, the update rule is:
\begin{equation}
\label{eq::ogd_2003}
\begin{array}{lll}
\theta_{t+1} &= \Pi_S(\theta_t-\eta \nabla f_t(\theta_t)) 
& = \arg \min\limits_{y\in S}\left\|y-(\theta_t-\eta \nabla f_t(\theta_t))\right\|^2
\end{array}
\end{equation}
where $\Pi_S$ is the projection operation to the set $S$.

Although the algorithm is simple,  
it needs to solve
a constrained optimization problem at every time step, which might be too time-consuming
for online implementation
when the constraints are complex.

In order to lower the computational complexity and accelerate the online processing speed,
the work of \cite{mahdavi2012trading} avoids the convex optimization by projecting the variable to a fixed ball $\cS\subseteq\mathcal{B}$,
which always has a closed-form solution.
That paper gives an online solution for the following problem:
\begin{equation}
\label{eq::original long term version}
\begin{array}{llll}
\underset{\theta_1,\ldots,\theta_T\in \mathcal{B}}{\min} & \sum\limits_{t=1}^T f_t(\theta_t) -
                                    \min\limits_{\theta\in
                                    \cS}\sum\limits_{t=1}^T f_t(\theta)
&s.t. & \sum\limits_{t=1}^T g_i(\theta_t)\le 0, i = 1,2,...,m 
\end{array}
\end{equation}
where $\cS = \{\theta: g_i(\theta)\le 0, i=1,2,...,m \} \subseteq \mathcal{B}$. It
is assumed that there exist constants $R>0$ and $r<1$ such that
$r\mathbb{K}\subseteq \cS \subseteq R\mathbb{K}$ with $\mathbb{K}$ being
the unit $\ell_2$ ball centered at the origin and $\mathcal{B} = R\mathbb{K}$.

Compared to the update in Eq.~\eqref{eq::ogd_2003}, which requires
$\theta_t \in \cS$ for all $t$, \eqref{eq::original long term version} implies that 
 only the sum of constraints is
 required. This sum of constraints is known as the \emph{long-term
   constraint}.

To solve this new problem, \cite{mahdavi2012trading} considers the following augmented Lagrangian function at each iteration $t$:
\begin{equation}
\label{eq::pre_l_t}
\mathcal{L}_t(\theta,\lambda) = f_t(\theta) + \sum\limits_{i=1}^m \Big\{ \lambda_ig_i(\theta) - \frac{\sigma \eta}{2}\lambda_i^2 \Big\}
\end{equation}

The update rule is as follows:
\begin{equation}
\label{eq::original update rule}
\begin{array}{ll}
\theta_{t+1} = \Pi_{\mathcal{B}}(\theta_t-\eta\nabla_\theta \mathcal{L}_t(\theta_t,\lambda_t) ), &
\lambda_{t+1} = \Pi_{[0,+\infty)^m}(\lambda_t + \eta\nabla_{\lambda} \mathcal{L}_t(\theta_t,\lambda_t) )
\end{array}
\end{equation}
where $\eta$ and $\sigma$ are the pre-determined step size and some constant, respectively. 

More recently, an adaptive version was developed in \cite{jenatton2016adaptive},
which has a user-defined trade-off
parameter. The algorithm proposed by \cite{jenatton2016adaptive}
utilizes two different step size sequences
to update $\theta$ and $\lambda$, respectively,  
instead of using a single step size $\eta$.

In both algorithms of \cite{mahdavi2012trading} and
\cite{jenatton2016adaptive}, the bound for the violation of the
long-term constraint is that  
$\forall i$, $\sum\limits_{t=1}^T g_i(\theta_t)\le O(T^{\gamma})$ for some $\gamma \in (0,1)$. 
However, as argued in the last section,
this bound does not enforce that the violation of the constraint $\theta_t
\in \cS$ gets small
due to the possible cancellation from strictly feasible steps.
This problem can be rectified by considering clipped constraint, $[g_i(\theta_t)]_+$, in place
of $g_i(\theta_t)$.

For convex problems,  
our goal is to bound the term $\sum\limits_{t=1}^T \big([g_i(\theta_t)]_+\big)^2$,
which, as discussed in the previous section, is more useful for
enforcing small constraint violations,
and also recovers the existing bounds for both $\sum\limits_{t=1}^T [g_i(\theta_t)]_+$ and $\sum\limits_{t=1}^T g_i(\theta_t)$.
For strongly convex problems, we also show the improvement on the upper bounds
compared to the results in \cite{jenatton2016adaptive}.

In sum,
in this chapter, our first goal is to solve the following problem for the general convex condition:
\begin{equation}
\label{eq::new long term problem}
\begin{array}{llll}
\min\limits_{\theta_1,\theta_2,...,\theta_T\in\mathcal{B}} & \sum\limits_{t=1}^T
                                                f_t(\theta_t) -
                                                \min\limits_{\theta\in
                                                S}\sum\limits_{t=1}^T
                                                f_t(\theta) 
&\quad \quad s.t. & \sum\limits_{t=1}^T \big([g_i(\theta_t)]_+\big)^2\le O(T^\gamma),
       \forall i 
\end{array}
\end{equation}
where $\gamma \in (0,1)$. The new constraint from
\eqref{eq::new long term problem} is called the \emph{square-clipped
 long-term constraint} (since it is a square-clipped version of the
long-term constraint) or \emph{square-cumulative constraint} (since it
encodes the square-cumulative violation of the constraints).

To solve Problem (\ref{eq::new long term problem}), we change the augmented Lagrangian function $\mathcal{L}_t$ as follows:
\begin{equation}
  \label{eq::new long term lagrangian}
\mathcal{L}_t(\theta,\lambda) = f_t(\theta) + \sum\limits_{i=1}^m \Big\{ \lambda_i[g_i(\theta)]_+ - \frac{\phi_t}{2}\lambda_i^2 \Big\}
\end{equation}

We will also see in a later section how this new function $\mathcal{L}_t$ can be used to 
get general time-dependent resource allocation problem
under a variant of \emph{dynamic regret}.

Throughout this chapter, 
we will use the following assumptions as in \cite{mahdavi2012trading}:
1. The convex set $\cS$ is non-empty, closed, bounded, and can be described by $m$ convex functions 
as $\cS = \{\theta: g_i(\theta)\le 0, i =1,2,...,m \}$.
2. Both the loss functions $f_t(\theta)$, $\forall t$ and constraint functions $g_i(\theta)$, $\forall i$ 
are Lipschitz continuous in the set $\mathcal{B}$.
That is, $\left\|f_t(x) - f_t(y)\right\| \le L_f\left\|x-y\right\|$, $\left\|g_i(x) - g_i(y)\right\| \le L_g\left\|x-y\right\|$,
$\forall x,y\in \mathcal{B}$ and $\forall t,i$. $G=\max\{L_f,L_g\}$, and 
\begin{equation*}
\begin{array}{ll}
F = \max\limits_{t=1,2,...,T}\max\limits_{x,y\in\mathcal{B}} f_t(x) -f_t(y)\le 2L_fR,  &
D = \max\limits_{i=1,2,...,m}\max\limits_{x\in\mathcal{B}}g_i(x)\le L_gR
\end{array}
\end{equation*}

\section{Algorithm}
\label{sec:static-long-term-algorithm}

\subsection{Convex Case}

\begin{algorithm}[tb]
    \caption{Generalized Online Convex Optimization with Long-term Constraint}
    \label{alg::convex-long-term}
\begin{algorithmic}[1]
    \STATE {\bfseries Input:} constraints $g_i(\theta)\le 0,i=1,2,...,m$, stepsize $\eta$, time horizon T, and constant $\sigma>0$.
    \STATE {\bfseries Initialization:} $\theta_1$ is in the center of the $\mathcal{B}$ .
    \FOR{$t=1$ {\bfseries to} $T$}
    \STATE Input the prediction result $\theta_t$.
    \STATE Obtain the convex loss function $f_t(\theta)$ and the loss value $f_t(\theta_t)$.
    \STATE Calculate a subgradient $\partial_\theta
    \mathcal{L}_t(\theta_t,\lambda_t)$, where:
    \small
    \begin{equation*}
    \begin{array}{ll}
     \partial_\theta \mathcal{L}_t(\theta_t,\lambda_t) = \partial_\theta f_t(\theta_t) + \sum\limits_{i=1}^m \lambda_t^i\partial_\theta ([g_i(\theta_t)]_+),
     &\partial_\theta ([g_i(\theta_t)]_+) =
     \begin{cases}
     0, \quad\mbox{$g_i(\theta_t) \le0$}\\
     \partial_\theta g_i(\theta_t), \mbox{otherwise}\\
     \end{cases}
    \end{array}
    \end{equation*}
    \normalsize
    \STATE Update $\theta_t$ and $\lambda_t$ as below:
    \begin{equation*}
    \begin{array}{ll}
    \theta_{t+1} = \Pi_{\mathcal{B}}(\theta_t-\eta \partial_\theta \mathcal{L}_t(\theta_t,\lambda_t)),
    \lambda_{t+1} = \frac{[g(\theta_{t+1})]_+}{\sigma \eta}
    \end{array}
    \end{equation*}
    \ENDFOR
\end{algorithmic}
\end{algorithm}

The main algorithm for this chapter is shown in Algorithm \ref{alg::convex-long-term}.
For
simplicity, we abuse the subgradient notation, denoting a single
element of the subgradient by $\partial_x \mathcal{L}_t(x_t,\lambda_t)$.
We also replace the $\phi_t$ in Eq.~\eqref{eq::new long term lagrangian}
with $\sigma\eta$.
Comparing our algorithm with Eq.(\ref{eq::original update rule}), we
can see that the gradient projection step for $\theta_{t+1}$ is similar, while the update
rule for $\lambda_{t+1}$ is different.
Instead of a projected gradient step, we explicitly maximize
$\mathcal{L}_{t+1}(\theta_{t+1},\lambda)$ over $\lambda$.
This explicit projection-free update for $\lambda_{t+1}$ is possible
because the constraint clipping guarantees that the maximizer is non-negative. 
Furthermore, this constraint-violation-dependent update helps to
enforce small cumulative and individual constraint
violations. Specific bounds on constraint violation are given in
Theorem \ref{thm::sumOfSquareLongterm} and Lemma \ref{lem:bound_step} below.

Based on the update rule in Algorithm \ref{alg::convex-long-term},
the following theorem gives the upper bounds for both the regret on the loss
and the squared-cumulative constraint violation, $\sum\limits_{t=1}^T\Big([g_i(x_t)]_+\Big)^2$ in Problem \ref{eq::new long term problem}. 

\begin{theorem}
\label{thm::sumOfSquareLongterm}
{\it
Set $\sigma = \frac{(m+1)G^2}{2(1-\alpha)}$, 
$\eta = \frac{1}{G\sqrt{(m+1)RT}}$. 
If we follow the update rule in Algorithm \ref{alg::convex-long-term} with $\alpha\in(0,1)$
and $\theta^*$ being the optimal solution for $\min\limits_{\theta\in \cS}\sum\limits_{t=1}^Tf_t(\theta)$, 
we have
\begin{equation*}
\begin{array}{ll}
\sum\limits_{t=1}^T\Big( f_t(\theta_t) - f_t(\theta^*)\Big)\le O(\sqrt{T}),& 
\sum\limits_{t=1}^T\Big([g_i(\theta_t)]_+\Big)^2 \le O(\sqrt{T}), \forall i \in \{1,2,...,m\}
\end{array}
\end{equation*}
}
\end{theorem}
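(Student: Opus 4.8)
The plan is to run the standard primal analysis for projected subgradient methods, but applied to the augmented Lagrangian $\mathcal{L}_t$ rather than $f_t$, and then to exploit the closed-form $\lambda$-update to convert the dual penalty terms into a \emph{negative} multiple of the squared constraint violation. First I would start from the non-expansiveness of $\Pi_{\mathcal{B}}$ together with the update $\theta_{t+1} = \Pi_{\mathcal{B}}(\theta_t - \eta\,\partial_\theta \mathcal{L}_t(\theta_t,\lambda_t))$, expand $\|\theta_{t+1}-\theta^*\|^2$, and rearrange to obtain
\[
\partial_\theta \mathcal{L}_t(\theta_t,\lambda_t)^\top(\theta_t-\theta^*) \le \frac{1}{2\eta}\big(\|\theta_t-\theta^*\|^2 - \|\theta_{t+1}-\theta^*\|^2\big) + \frac{\eta}{2}\|\partial_\theta \mathcal{L}_t(\theta_t,\lambda_t)\|^2 .
\]
By convexity of $\mathcal{L}_t(\cdot,\lambda_t)$ the left side upper-bounds $\mathcal{L}_t(\theta_t,\lambda_t) - \mathcal{L}_t(\theta^*,\lambda_t)$. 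Since $\theta^*\in\cS$ forces $[g_i(\theta^*)]_+ = 0$, the $-\frac{\sigma\eta}{2}\lambda_i^2$ terms cancel between the two evaluations and this difference reduces cleanly to $f_t(\theta_t)-f_t(\theta^*) + \sum_i \lambda_{t,i}[g_i(\theta_t)]_+$.

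Next I would bound the gradient-norm term. Using the Lipschitz assumptions $\|\partial_\theta f_t\|\le G$ and $\|\partial_\theta [g_i(\theta_t)]_+\|\le G$, the triangle inequality gives $\|\partial_\theta \mathcal{L}_t(\theta_t,\lambda_t)\| \le G(1+\sum_i\lambda_{t,i})$, and Cauchy--Schwarz over the $m+1$ terms yields $\|\partial_\theta \mathcal{L}_t(\theta_t,\lambda_t)\|^2 \le (m+1)G^2(1+\sum_i\lambda_{t,i}^2)$. The crucial step is to substitute the dual update $\lambda_{t,i} = [g_i(\theta_t)]_+/(\sigma\eta)$ (valid for $t\ge 2$; $\lambda_1=0$ is absorbed as a single constant term). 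Writing $S_t = \sum_i([g_i(\theta_t)]_+)^2$, both $\sum_i\lambda_{t,i}[g_i(\theta_t)]_+ = S_t/(\sigma\eta)$ and $\sum_i\lambda_{t,i}^2 = S_t/(\sigma\eta)^2$ become explicit in $S_t$. Collecting the $S_t$ contributions, the net coefficient is $\frac{(m+1)G^2}{2\sigma^2\eta} - \frac{1}{\sigma\eta} = \frac{1}{\sigma\eta}\big(\frac{(m+1)G^2}{2\sigma} - 1\big)$, and the choice $\sigma = \frac{(m+1)G^2}{2(1-\alpha)}$ makes $\frac{(m+1)G^2}{2\sigma} = 1-\alpha$, so this coefficient equals $-\frac{\alpha}{\sigma\eta} < 0$. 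The per-step inequality becomes
\[
f_t(\theta_t)-f_t(\theta^*) \le \frac{1}{2\eta}\big(\|\theta_t-\theta^*\|^2 - \|\theta_{t+1}-\theta^*\|^2\big) + \frac{\eta(m+1)G^2}{2} - \frac{\alpha}{\sigma\eta}S_t .
\]

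To get the regret bound I would sum over $t$, telescope the distance terms (bounding $\|\theta_1-\theta^*\|^2 \le 4R^2$), and simply drop the nonpositive $-\frac{\alpha}{\sigma\eta}\sum_t S_t$ term, leaving $\cR_s \le \frac{2R^2}{\eta} + \frac{\eta(m+1)G^2 T}{2}$; substituting $\eta = \frac{1}{G\sqrt{(m+1)RT}}$ makes both terms $O(\sqrt{T})$. For the constraint bound I would instead keep the $S_t$ term and move it to the left, obtaining $\frac{\alpha}{\sigma\eta}\sum_t S_t \le \frac{2R^2}{\eta} + \frac{\eta(m+1)G^2 T}{2} + \sum_t (f_t(\theta^*)-f_t(\theta_t))$; bounding each $f_t(\theta^*)-f_t(\theta_t)\le F$ and multiplying through by $\frac{\sigma\eta}{\alpha}$ gives $\sum_t S_t \le \frac{\sigma}{\alpha}\big(2R^2 + \frac{\eta^2(m+1)G^2 T}{2} + \eta F T\big)$, where the chosen $\eta$ makes $\frac{\eta^2(m+1)G^2 T}{2}$ a constant and $\eta FT = O(\sqrt{T})$. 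Since $([g_i(\theta_t)]_+)^2 \le S_t$ for each fixed $i$, this yields $\sum_t ([g_i(\theta_t)]_+)^2 \le O(\sqrt{T})$ for every $i$. The main obstacle is the balancing in the second step: I must tune the penalty weight $\sigma$ so that the positive $\lambda^2$ contribution from the gradient-norm bound is strictly dominated by the negative $\lambda[g]_+$ contribution from the Lagrangian difference. Obtaining a strictly negative coefficient $-\alpha/(\sigma\eta)$ (not merely nonpositive) is exactly what lets the single inequality serve double duty — dropped, it gives the regret bound; retained, it gives the cumulative-constraint bound — so the delicate part is verifying that the $\sigma$-choice produces this sign with room to spare, while treating the $t=1$ boundary term (where $\lambda_1=0$ breaks the clean identity) as a harmless additive constant.
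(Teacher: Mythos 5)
Your proposal is correct and follows essentially the same route as the paper's proof: non-expansiveness of the projection plus convexity of $\mathcal{L}_t$ in $\theta$, the gradient bound $\|\partial_\theta\mathcal{L}_t(\theta_t,\lambda_t)\|^2\le(m+1)G^2(1+\|\lambda_t\|^2)$, substitution of the closed-form dual update so that $\sigma=\frac{(m+1)G^2}{2(1-\alpha)}$ leaves the net coefficient $-\alpha/(\sigma\eta)$ on $\sum_t S_t$, and then using the single summed inequality twice (dropping the nonpositive term for the regret bound, retaining it together with $\sum_t\big(f_t(\theta_t)-f_t(\theta^*)\big)\ge -FT$ for the squared-constraint bound). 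The only cosmetic deviations (your explicit treatment of the $t=1$ dual value and the constant $4R^2$ in place of the paper's $\theta_1=0$ giving $R^2$) do not change the argument.
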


Before proving Theorem~\ref{thm::sumOfSquareLongterm}, we need the
following preliminary result. 

\begin{lemma}
  \label{lem:sumOfLagfunction}
  {\it
    For the sequence of $\theta_t$, $\lambda_t$ obtained from Algorithm \ref{alg::convex-long-term} and $\forall \theta\in\mathcal{B}$,
    we have the following inequality:
    \begin{equation*}
    \begin{array}{l}
    \sum\limits_{t=1}^T[\mathcal{L}_t(\theta_t,\lambda_t)-\mathcal{L}_t(\theta,\lambda_t)]\le 
    \frac{R^2}{2\eta}+\frac{\eta T}{2}(m+1)G^2 
    +\frac{\eta}{2}(m+1)G^2\sum\limits_{t=1}^T\left\|\lambda_t\right\|^2
    \end{array}
    \end{equation*}

  }
\end{lemma}

\begin{proof}
First, $\mathcal{L}_t(\theta,\lambda)$ is convex in $\theta$. Then for any $\theta\in\mathcal{B}$,
we have the following inequality:
\begin{equation*}
\mathcal{L}_t(\theta_t,\lambda_t)-\mathcal{L}_t(\theta,\lambda_t) \le (\theta_t-\theta)^\top\partial_\theta\mathcal{L}_t(\theta_t,\lambda_t)
\end{equation*}
Using the non-expansive property of the projection operator and the update rule for $\theta_{t+1}$ in Algorithm \ref{alg::convex-long-term}, 
we have
\begin{equation}
\label{eq::x_projection_inequality}
\begin{array}{ll}
\left\|\theta-\theta_{t+1}\right\|^2& \le \left\|\theta-(\theta_t-\eta\partial_\theta \mathcal{L}_t(\theta_t,\lambda_t))\right\|^2 \\
                          & = \left\|\theta-\theta_t\right\|^2-2\eta(\theta_t-\theta)^\top\partial_\theta\mathcal{L}_t(\theta_t,\lambda_t)
                          +\eta^2\left\|\partial_\theta\mathcal{L}_t(\theta_t,\lambda_t)\right\|^2  
\end{array}
\end{equation}
Then we have
\begin{equation}
\label{eq::L_t_diff_inequality}
\begin{array}{rl}
\mathcal{L}_t(\theta_t,\lambda_t)-\mathcal{L}_t(\theta,\lambda_t) & \le \frac{1}{2\eta}\Big(\left\|\theta-\theta_t\right\|^2
                                                                           -\left\|\theta-\theta_{t+1}\right\|^2\Big)
                                                         +\frac{\eta}{2}\left\|\partial_\theta\mathcal{L}_t(\theta_t,\lambda_t)\right\|^2
\end{array}
\end{equation}

Furthermore, for $\left\|\partial_\theta\mathcal{L}_t(\theta_t,\lambda_t)\right\|^2$, we have
\begin{equation}
\label{eq::grad_lag_inequality}
\begin{array}{ll}
\left\|\partial_\theta\mathcal{L}_t(\theta_t,\lambda_t)\right\|^2 = \left\|\partial_\theta f_t(\theta_t) 
+ \sum\limits_{i=1}^m\lambda_t^i\partial_\theta([g_i(\theta_t)]_+)\right\|^2 
                                                        \le (m+1)G^2(1+\left\|\lambda_t\right\|^2)

\end{array}
\end{equation}
where the last inequality is from the inequality that $(y_1+y_2+...+y_n)^2\le n(y_1^2+y_2^2+...+y_n^2)$, 
and both $\left\|\partial_\theta f_t(\theta_t)\right\|$ and $\left\|\partial_\theta([g_i(\theta_t)]_+)\right\|$ 
are less than or equal to $G$ by the definition.

Then we have 
\begin{equation*}
\begin{array}{l}
\mathcal{L}_t(\theta_t,\lambda_t)-\mathcal{L}_t(\theta,\lambda_t)
 \le \frac{1}{2\eta}\Big(\left\|\theta-\theta_t\right\|^2-\left\|\theta-\theta_{t+1}\right\|^2\Big)
                                                        +\frac{\eta}{2}(m+1)G^2(1+\left\|\lambda_t\right\|^2)

\end{array}
\end{equation*}

Since $\theta_1$ is in the center of $\mathcal{B}$, we can assume $\theta_1 = 0$ without loss of generality. 
If we sum the $\mathcal{L}_t(\theta_t,\lambda_t)-\mathcal{L}_t(\theta,\lambda_t)$ from 1 to $T$, we have
\scriptsize
\begin{equation*}
\begin{array}{ll}
\sum\limits_{t=1}^T[\mathcal{L}_t(\theta_t,\lambda_t)-\mathcal{L}_t(\theta,\lambda_t)] 
                                                &\le \frac{1}{2\eta}\Big(\left\|\theta-\theta_1\right\|^2-\left\|\theta-\theta_{T+1}\right\|^2\Big) 
                                                 +\frac{\eta T}{2}(m+1)G^2 
                                                 +\frac{\eta}{2}(m+1)G^2\sum\limits_{t=1}^T\left\|\lambda_t\right\|^2 \\
                                                & \le \frac{R^2}{2\eta}+\frac{\eta T}{2}(m+1)G^2
                                                 +\frac{\eta}{2}(m+1)G^2\sum\limits_{t=1}^T\left\|\lambda_t\right\|^2

\end{array}
\end{equation*}
\normalsize
where the last inequality follows from the fact that $\theta_1 = 0$ and $\left\|\theta\right\|^2\le R^2$.
\end{proof}

Now we are ready to prove the main theorem.

\begin{proof}[Proof of Theorem~\ref{thm::sumOfSquareLongterm}]
From Lemma \ref{lem:sumOfLagfunction}, we have 
\begin{equation*}
\begin{array}{l}
  \sum\limits_{t=1}^T[\mathcal{L}_t(\theta_t,\lambda_t)-\mathcal{L}_t(\theta,\lambda_t)]\le 
  \frac{R^2}{2\eta}+\frac{\eta T}{2}(m+1)G^2
  +\frac{\eta}{2}(m+1)G^2\sum\limits_{t=1}^T\left\|\lambda_t\right\|^2
\end{array}
\end{equation*}

If we expand the terms in the LHS and move the last term in RHS to the left,
we have
\scriptsize
\begin{equation*}
\begin{array}{l}
\sum\limits_{t=1}^T\Big(f_t(\theta_t) - f_t(\theta)\Big) +
\sum\limits_{t=1}^T\sum\limits_{i=1}^m\Big(\lambda_t^i[g_i(\theta_t)]_+ -
\lambda_t^i[g_i(\theta)]_+\Big) 
-\frac{\eta}{2}(m+1)G^2\sum\limits_{t=1}^T\left\|\lambda_t\right\|^2
\le \frac{R^2}{2\eta}+\frac{\eta T}{2}(m+1)G^2
\end{array}
\end{equation*}
\normalsize

We can set $\theta = \theta^*$ to have $[g_i(\theta^*)]_+ = 0$ 
and plug in the expression $\lambda_t = \frac{[g(\theta_t)]_+}{\sigma\eta}$
to have
\begin{equation}
\label{eq::UpperBoundOfsumOfObjAndLongTermConstrain}
\begin{array}{l}
\sum\limits_{t=1}^T\Big(f_t(\theta_t) - f_t(\theta^*)\Big) +
\sum\limits_{i=1}^m\sum\limits_{t=1}^T\frac{([g_i(\theta_t)]_+)^2}{\sigma\eta}\Big(1-\frac{(m+1)G^2}{2\sigma}\Big)
\le \frac{R^2}{2\eta}+\frac{\eta T}{2}(m+1)G^2
\end{array}
\end{equation}

Plugging in the expression for $\sigma$ and $\eta$, we have 
\begin{equation*}
\begin{array}{l}
\sum\limits_{t=1}^T\Big(f_t(\theta_t) - f_t(\theta^*)\Big) +
\sum\limits_{i=1}^m\sum\limits_{t=1}^T\frac{([g_i(\theta_t)]_+)^2}{\sigma\eta}\alpha 
\le O(\sqrt{T})
\end{array}
\end{equation*}

Because $\frac{([g_i(\theta_t)]_+)^2}{\sigma\eta}\alpha\ge 0$, we have
\begin{equation*}
\sum\limits_{t=1}^T\Big(f_t(\theta_t) - f_t(\theta^*)\Big) \le O(\sqrt{T})
\end{equation*} 

Furthermore, we have $\sum\limits_{t=1}^T\Big(f_t(\theta_t) - f_t(\theta^*)\Big)\ge -FT$
according to the assumption.
Then we have
\begin{equation*}
\begin{array}{l}
\sum\limits_{i=1}^m\sum\limits_{t=1}^T\Big([g_i(\theta_t)]_+\Big)^2  \le \frac{\sigma\eta}{\alpha}(O(\sqrt{T})+FT)
    = \frac{\sigma}{\alpha}(O(\sqrt{T})+FT)O(\frac{1}{\sqrt{T}}) = O(\sqrt{T})
\end{array}
\end{equation*}

Because $\Big([g_i(\theta_t)]_+\Big)^2 \ge 0$, we have 
\begin{equation*}
\sum\limits_{t=1}^T\Big([g_i(\theta_t)]_+\Big)^2 \le O(\sqrt{T}), \forall i \in \{1,2,...,m\}
\end{equation*}

\end{proof}

From Theorem \ref{thm::sumOfSquareLongterm}, we can see that by
setting appropriate step size, $\eta$, and constant, $\sigma$,
we can obtain the upper bound for the regret of the loss function being less than or equal to $O(\sqrt{T})$,
which is also shown in \cite{mahdavi2012trading} \cite{jenatton2016adaptive}.
The main difference of the Theorem \ref{thm::sumOfSquareLongterm} is that 
previous results of \cite{mahdavi2012trading} \cite{jenatton2016adaptive}
all obtain the upper bound for the 
long-term constraint $\sum\limits_{t=1}^T g_i(\theta_t)$, 
while here the upper bound for 
the constraint violation of the form $\sum\limits_{t=1}^T\Big([g_i(\theta_t)]_+\Big)^2$ is achieved. 
Also note that the step size depends on $T$, which may not be available. 
In this case, we can use the 'doubling trick' described in the book \cite{cesa2006prediction}
to transfer our $T$-dependent algorithm into $T$-free one with a worsening factor of $\sqrt{2}/(\sqrt{2}-1)$. 

The proposed algorithm and the resulting bound are useful for two reasons:
1. The square-cumulative constraint implies a bound on the
cumulative constraint violation,
$\sum\limits_{t=1}^T[g_i(\theta_t)]_+$, while enforcing larger penalties
for large violations. 
%
2. The proposed algorithm can also upper bound the constraint violation for each single step $[g_i(\theta_t)]_+$, 
which is not bounded in the previous literature.

The next results show how to bound constraint violations at each
step. Please refer to the Appendix for the proof.

\begin{lemma}
  \label{lem:bound_step}
  {\it
     If there is only one differentiable constraint function $g(\theta)$ with Lipschitz continuous gradient parameter $L$, 
     and we run the Algorithm \ref{alg::convex-long-term}
     with the parameters in Theorem \ref{thm::sumOfSquareLongterm} and large enough $T$, we have
     \begin{equation*}
     \begin{array}{lll}
     [g(\theta_t)]_+ \le O(\frac{1}{T^{1/6}}),& \forall t \in \{1,2,...,T\},
     &if \quad [g(\theta_1)]_+ \le O(\frac{1}{T^{1/6}}).
     \end{array}
     \end{equation*}

  }
\end{lemma}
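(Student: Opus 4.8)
The plan is to prove the bound by induction on $t$, showing that $[g(\theta_t)]_+ \le C T^{-1/6}$ for a suitable constant $C$ and all large $T$. The base case is exactly the hypothesis $[g(\theta_1)]_+\le O(T^{-1/6})$ (take $C$ at least as large as the implied constant). For the inductive step the whole argument reduces to a one-step descent inequality of the form
\begin{equation*}
[g(\theta_{t+1})]_+ \le [g(\theta_t)]_+ + C_0 T^{-1/2} - c_1\big([g(\theta_t)]_+\big)^3,
\end{equation*}
valid whenever $[g(\theta_t)]_+$ is small, which holds under the induction hypothesis for large $T$. The exponent $T^{-1/6}$ then emerges as the equilibrium of this recursion: balancing the $O(T^{-1/2})$ per-step drift against the cubic restoring term $c_1 a^3$ (writing $a$ for a typical value of $[g(\theta_t)]_+$) gives $a\sim T^{-1/6}$. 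Once the displayed inequality is in hand, a short case split closes the induction: if $[g(\theta_t)]_+\le\tfrac12 CT^{-1/6}$ the drift alone keeps us below $CT^{-1/6}$, while if $\tfrac12 CT^{-1/6} < [g(\theta_t)]_+ \le CT^{-1/6}$ the cubic term already dominates the drift (choosing $C^3\ge 8C_0/c_1$), so $[g(\theta_{t+1})]_+\le[g(\theta_t)]_+$. The case $g(\theta_t)\le 0$ is trivial, since then $\lambda_t=0$ and $\|\theta_{t+1}-\theta_t\|\le\eta G$, giving $[g(\theta_{t+1})]_+\le\eta G^2 = O(T^{-1/2})$.

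Establishing the descent inequality is the heart of the proof, and I would build it in three pieces. First I would dispose of the projection: since $r\mathbb{K}\subseteq\cS$ the origin is feasible, so $g(0)\le 0$, and $\mathcal{B}=R\mathbb{K}$ is a ball centered at the origin; writing $v_t=\theta_t-\eta\partial_\theta\mathcal{L}_t(\theta_t,\lambda_t)$ for the unprojected iterate, the projection $\theta_{t+1}=\Pi_{\mathcal{B}}(v_t)$ equals $s v_t$ with $s\in(0,1]$, so convexity of $g$ gives $g(\theta_{t+1})\le s g(v_t)+(1-s)g(0)\le [g(v_t)]_+$. Hence it suffices to bound $g(v_t)$ and the projection can only help. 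Second I would apply the $L$-smoothness of $g$ at $v_t$, using $v_t-\theta_t=-\eta\nabla f_t(\theta_t)-\eta\lambda_t\nabla g(\theta_t)$ and $\eta\lambda_t=[g(\theta_t)]_+/\sigma$ from the $\lambda$-update. The crucial algebraic point is to keep the term $-\eta\lambda_t\|\nabla g(\theta_t)\|^2$ together with the dangerous part $\tfrac{L}{2}\eta^2\lambda_t^2\|\nabla g(\theta_t)\|^2$ of $\tfrac{L}{2}\|v_t-\theta_t\|^2$; since $\tfrac{L}{2}\eta\lambda_t=\tfrac{L}{2\sigma}[g(\theta_t)]_+\le\tfrac12$ for small $[g(\theta_t)]_+$, these combine into $-\tfrac12\eta\lambda_t\|\nabla g(\theta_t)\|^2$, while every remaining term is $O(T^{-1/2})$.

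Third I would convert the descent term into the cubic. Here I use convexity once more: since $g(\theta_t)>0\ge g(0)$, the gradient inequality $g(0)\ge g(\theta_t)+\nabla g(\theta_t)^\top(0-\theta_t)$ yields $\|\nabla g(\theta_t)\|\ge g(\theta_t)/R=[g(\theta_t)]_+/R$. Substituting this lower bound together with $\eta\lambda_t=[g(\theta_t)]_+/\sigma$ into $-\tfrac12\eta\lambda_t\|\nabla g(\theta_t)\|^2$ produces exactly the $-c_1([g(\theta_t)]_+)^3$ term with $c_1=1/(2R^2\sigma)$, completing the one-step inequality.

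I expect the main obstacle to be this second step: obtaining a genuinely useful descent rather than a vacuous one. A naive bound $\|\theta_{t+1}-\theta_t\|\le\eta G(1+\lambda_t)$ combined with Lipschitz continuity only yields $[g(\theta_{t+1})]_+\le(2-\alpha)[g(\theta_t)]_+ + O(T^{-1/2})$, which permits geometric growth; and splitting $(1+\lambda_t)^2$ crudely inside the smoothness term leaves a positive $O\big(([g(\theta_t)]_+)^2\big)$ contribution that would overwhelm the cubic decrease near the threshold. The argument only closes because the $\lambda_t^2$ piece is absorbed into the descent term and because $\lambda_t$ (which is $O(T^{1/3})$ under the induction hypothesis) enters the restoring force with the correct sign. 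Verifying the finitely many ``large enough $T$'' thresholds (for $\tfrac{L}{2\sigma}[g(\theta_t)]_+\le\tfrac12$, for the case split, and for the base case) is then routine.
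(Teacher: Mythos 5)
Your proposal is correct and follows essentially the same route as the paper's own proof: bound the projected iterate by the unprojected one, apply the $L$-smoothness descent inequality with $\eta\lambda_t=[g(\theta_t)]_+/\sigma$, lower-bound $\|\nabla g(\theta_t)\|\ge [g(\theta_t)]_+/R$ via convexity and feasibility of the origin, and balance the resulting cubic restoring term $([g(\theta_t)]_+)^3/(2\sigma R^2)$ against the $O(\eta)=O(T^{-1/2})$ per-step drift at the threshold $\sim T^{-1/6}$, closing by induction with a case split that mirrors the paper's Cases 1--3. The only real deviation is cosmetic but welcome: you dispose of the projection by writing $\Pi_{\mathcal{B}}(v_t)=s\,v_t$ with $s\in(0,1]$ and using convexity through the feasible center, $g(s\,v_t)\le s\,g(v_t)+(1-s)g(0)\le [g(v_t)]_+$, which for the ball $\mathcal{B}=R\mathbb{K}$ is cleaner than the paper's first-order optimality argument, whose test point $\theta_{t+1}-\epsilon_0\nabla g(\theta_{t+1})$ needs the extra observation $\nabla g(\theta_{t+1})^\top\theta_{t+1}\ge g(\theta_{t+1})-g(0)>0$ to remain in $\mathcal{B}$ when $\theta_{t+1}$ lies on the boundary.
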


Lemma \ref{lem:bound_step} only considers single constraint case. 
For case of multiple differentiable constraints, 
we have the following:
\begin{proposition}
\label{prop::bound_step_max}
  {\it 
     For multiple differentiable constraint functions $g_i(\theta)$, $i\in\{1,2,...,m\}$ with Lipschitz continuous gradient parameters $L_i$,
     if we use $\bar{g}(\theta) = \log\Big(\sum\limits_{i=1}^m \exp{g_i(\theta)}\Big)$
     as the constraint function in Algorithm \ref{alg::convex-long-term}, then for
     large enough $T$, we have 
     \begin{equation*}
     \begin{array}{lll}
     [g_i(\theta_t)]_+ \le O(\frac{1}{T^{1/6}}),& \forall i, t,
     &if \quad [\bar{g}(\theta_1)]_+ \le O(\frac{1}{T^{1/6}}).
     \end{array}
     \end{equation*}

  }
\end{proposition}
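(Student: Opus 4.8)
The plan is to reduce the multiple-constraint case to the single-constraint setting of Lemma \ref{lem:bound_step} by treating the log-sum-exp surrogate $\bar{g}(\theta) = \log\Big(\sum_{i=1}^m \exp g_i(\theta)\Big)$ as the single constraint fed into Algorithm \ref{alg::convex-long-term}. The crucial structural fact is the pointwise domination $g_i(\theta) \le \bar{g}(\theta)$ for every $i$ and every $\theta$, which follows immediately since $\bar{g}(\theta) = \log\Big(\sum_j \exp g_j(\theta)\Big) \ge \log\big(\exp g_i(\theta)\big) = g_i(\theta)$. Consequently $[g_i(\theta_t)]_+ \le [\bar{g}(\theta_t)]_+$ for each $t$, so it suffices to bound $[\bar{g}(\theta_t)]_+$, and the desired conclusion is then inherited by every individual constraint.

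First I would verify that $\bar{g}$ satisfies the hypotheses of Lemma \ref{lem:bound_step}. Differentiability of $\bar{g}$ is clear from the chain rule, with gradient $\nabla \bar{g}(\theta) = \sum_{i=1}^m p_i(\theta)\nabla g_i(\theta)$, where $p_i(\theta) = \frac{\exp g_i(\theta)}{\sum_j \exp g_j(\theta)}$ are softmax weights summing to one. Since this is a convex combination of the $\nabla g_i(\theta)$, we have $\left\|\nabla \bar{g}(\theta)\right\| \le \max_i \left\|\nabla g_i(\theta)\right\| \le L_g$, so $\bar{g}$ is $L_g$-Lipschitz and the constant $G = \max\{L_f, L_g\}$ used to set $\sigma$ and $\eta$ in Theorem \ref{thm::sumOfSquareLongterm} is unchanged when we run the algorithm with the single surrogate $\bar{g}$ (so that effectively $m = 1$ there).

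The main obstacle is establishing that $\nabla \bar{g}$ is itself Lipschitz continuous with some constant $\bar{L}$ depending only on $m$, the $L_i$, and $G$. I would split $\nabla \bar{g}(\theta) - \nabla \bar{g}(\theta')$ into $\sum_i p_i(\theta)\big(\nabla g_i(\theta) - \nabla g_i(\theta')\big)$ and $\sum_i \big(p_i(\theta) - p_i(\theta')\big)\nabla g_i(\theta')$. The first sum is bounded by $(\max_i L_i)\left\|\theta - \theta'\right\|$ using the gradient-Lipschitz assumption on each $g_i$; the second sum requires that the softmax weights $p_i$ be Lipschitz in $\theta$, which holds because the softmax map is Lipschitz in its arguments and each $g_i$ is $L_g$-Lipschitz, combined with the uniform bound $\left\|\nabla g_i(\theta')\right\| \le G$. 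Adding these yields a finite $\bar{L}$. Since $\bar{L}$ does not depend on $T$, it is absorbed into the $O(\cdot)$ constants.

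Finally I would apply Lemma \ref{lem:bound_step} to $\bar{g}$: under the hypothesis $[\bar{g}(\theta_1)]_+ \le O(1/T^{1/6})$ and for large enough $T$, the lemma gives $[\bar{g}(\theta_t)]_+ \le O(1/T^{1/6})$ for all $t$. Combining this with the domination $[g_i(\theta_t)]_+ \le [\bar{g}(\theta_t)]_+$ established at the outset completes the proof for every constraint $i$.
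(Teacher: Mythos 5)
Your proposal is correct and follows the same overall reduction as the paper: dominate each $g_i$ by the log-sum-exp surrogate via $[g_i(\theta_t)]_+ \le [\bar{g}(\theta_t)]_+$, verify that $\bar{g}$ meets the hypotheses of Lemma \ref{lem:bound_step}, and apply that lemma to $\bar{g}$. The one place you genuinely diverge is in verifying smoothness of $\bar{g}$. The paper computes the Hessian $\nabla^{\prime\prime}_\theta \bar{g}$ explicitly, discards a negative-semidefinite outer-product term, and bounds the spectral norm by $\bar{L}+G^2$ with $\bar{L}=\max_i L_i$; this implicitly requires each $g_i$ to be twice differentiable. Your first-order decomposition
$\nabla\bar{g}(\theta)-\nabla\bar{g}(\theta') = \sum_i p_i(\theta)\bigl(\nabla g_i(\theta)-\nabla g_i(\theta')\bigr) + \sum_i \bigl(p_i(\theta)-p_i(\theta')\bigr)\nabla g_i(\theta')$
works directly from the Lipschitz-gradient hypothesis stated in Lemma \ref{lem:bound_step} and in the proposition itself, so it is slightly more general (no twice-differentiability needed), at the cost of leaving the softmax-Lipschitz constant implicit where the paper's route yields the clean explicit constant. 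As a side benefit, your convex-combination bound $\|\nabla\bar{g}\|\le \max_i\|\nabla g_i\|\le G$ is tighter than the paper's $\sqrt{m}G$. Two small housekeeping points: you should state (in one line) that $\bar{g}$ is convex, since the proof of Lemma \ref{lem:bound_step} uses convexity of the constraint — this follows because log-sum-exp is convex and nondecreasing in each argument and each $g_i$ is convex; and, to make the softmax step airtight, note that with $\nabla_z p_i = p_i(e_i-p)$ one gets $\sum_i |p_i(\theta)-p_i(\theta')| \le 2G\|\theta-\theta'\|$, giving $\bar{L}\le \max_i L_i + 2G^2$, independent of $T$ as required. Neither point is a gap in the argument's structure.
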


Clearly, both Lemma \ref{lem:bound_step} and Proposition \ref{prop::bound_step_max} 
only deal with differentiable functions.
For a non-differentiable function $g(\theta)$,
we can first use a differentiable function $\bar{g}(\theta)$ to approximate the $g(\theta)$
with $\bar{g}(\theta)\ge g(\theta)$, and then apply the previous Lemma \ref{lem:bound_step}
and Proposition \ref{prop::bound_step_max} to upper bound
each individual $g_i(\theta_t)$. 
Many non-smooth convex functions can be approximated in this way as
shown in \cite{nesterov2005smooth}.

\subsection{Strongly Convex Case}

For $f_t(\theta)$ to be strongly convex, the Algorithm \ref{alg::convex-long-term} is still valid. 
But in order to reduce the upper bounds for both objective regret and the clipped long-term constraint
 $\sum\limits_{t=1}^T [g_i(\theta_t)]_+$ compared with Proposition \ref{prop::tradeOffLossAndConstraint} in next section, 
we need to use time-varying step size as 
the one used in \cite{hazan2007logarithmic}. Thus, we modify the update rule 
of $\theta_t$, $\lambda_t$ to have time-varying step size 
as below:
\begin{equation}
\label{eq::update_strongly_convex}
\begin{array}{ll}
\theta_{t+1} = \Pi_{\mathcal{B}}(\theta_t-\eta_t \partial_\theta \mathcal{L}_t(\theta_t,\lambda_t)),&
\lambda_{t+1} = \frac{[g(\theta_{t+1})]_+}{\phi_{t+1}}.
\end{array}
\end{equation}

If we replace the update rule in Algorithm \ref{alg::convex-long-term} with Eq.(\ref{eq::update_strongly_convex}),
we can obtain the following theorem: 

\begin{theorem}
\label{thm::stronglyconvex}
{\it
Assume $f_t(\theta)$ has strong convexity parameter $H_1$.
If we set $\eta_t = \frac{1}{H_1t}$, $\phi_t=\eta_t(m+1)G^2$,
follow the new update rule in Eq.(\ref{eq::update_strongly_convex}),
and $\theta^*$ being the optimal solution for $\min\limits_{\theta\in \cS}\sum\limits_{t=1}^Tf_t(\theta)$, 
for $\forall i \in \{1,2,...,m\}$,
we have
\begin{equation*}
\begin{array}{ll}
\sum\limits_{t=1}^T\Big( f_t(\theta_t) - f_t(\theta^*)\Big)\le O(\log(T)),
&\sum\limits_{t=1}^Tg_i(\theta_t)\le\sum\limits_{t=1}^T[g_i(\theta_t)]_+ \le O(\sqrt{\log(T) T}).
\end{array}
\end{equation*}
}
\end{theorem}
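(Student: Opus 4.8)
The plan is to follow the architecture of the convex case---the per-step Lagrangian bound of Lemma~\ref{lem:sumOfLagfunction} followed by the multiplier substitution used in Theorem~\ref{thm::sumOfSquareLongterm}---but now to exploit the $H_1$-strong convexity of $\mathcal{L}_t(\cdot,\lambda_t)$ in $\theta$ together with the harmonic step size $\eta_t = 1/(H_1 t)$. First I would derive a strongly convex analog of Lemma~\ref{lem:sumOfLagfunction}. Since $f_t$ is $H_1$-strongly convex and each $[g_i(\cdot)]_+$ is convex, $\mathcal{L}_t(\cdot,\lambda_t)$ is $H_1$-strongly convex, so
\[
\mathcal{L}_t(\theta_t,\lambda_t)-\mathcal{L}_t(\theta,\lambda_t) \le (\theta_t-\theta)^\top\partial_\theta\mathcal{L}_t(\theta_t,\lambda_t) - \tfrac{H_1}{2}\|\theta-\theta_t\|^2 .
\]
Combining this with the non-expansiveness of $\Pi_{\mathcal{B}}$ and the $\theta$-update in Eq.~\eqref{eq::update_strongly_convex} (exactly as in Eq.~\eqref{eq::x_projection_inequality}), and bounding $\|\partial_\theta\mathcal{L}_t(\theta_t,\lambda_t)\|^2$ by $(m+1)G^2(1+\|\lambda_t\|^2)$ via Eq.~\eqref{eq::grad_lag_inequality}, I would sum over $t$ and collect the coefficient of $\|\theta-\theta_t\|^2$. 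With $1/\eta_t = H_1 t$ this coefficient is $\tfrac{H_1 t}{2}-\tfrac{H_1(t-1)}{2}-\tfrac{H_1}{2}=0$ for every $t$, so the weighted telescoping sum cancels completely (the $t=1$ boundary term vanishes because $\eta_1=1/H_1$), leaving
\[
\sum_{t=1}^T\big[\mathcal{L}_t(\theta_t,\lambda_t)-\mathcal{L}_t(\theta,\lambda_t)\big] \le \tfrac{(m+1)G^2}{2}\sum_{t=1}^T\eta_t\big(1+\|\lambda_t\|^2\big).
\]

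Next I would expand the left side at $\theta=\theta^*$, where $[g_i(\theta^*)]_+=0$, so the quadratic penalty terms cancel and only $f_t(\theta_t)-f_t(\theta^*)+\sum_i\lambda_t^i[g_i(\theta_t)]_+$ survives. Substituting $\lambda_t=[g(\theta_t)]_+/\phi_t$ with $\phi_t=\eta_t(m+1)G^2$ and using the identity $\tfrac{(m+1)G^2}{2}\eta_t/\phi_t^2 = 1/(2\phi_t)$ absorbs exactly half of the $\|\lambda_t\|^2$ contribution on the right, giving the clean inequality
\[
\sum_{t=1}^T\big(f_t(\theta_t)-f_t(\theta^*)\big) + \sum_{i=1}^m\sum_{t=1}^T\frac{([g_i(\theta_t)]_+)^2}{2\phi_t} \le \frac{(m+1)G^2}{2}\sum_{t=1}^T\eta_t .
\]
Because $\sum_{t=1}^T\eta_t=\tfrac{1}{H_1}\sum_{t=1}^T\tfrac1t=O(\log T)$ and the double sum on the left is non-negative, dropping it yields the objective bound $\sum_{t=1}^T(f_t(\theta_t)-f_t(\theta^*))\le O(\log T)$.

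For the constraint violation I would instead keep the square term. Using $\sum_{t=1}^T(f_t(\theta_t)-f_t(\theta^*))\ge -FT$ together with $1/\phi_t = H_1 t/((m+1)G^2)$ in the displayed inequality gives $\sum_{t=1}^T t\,([g_i(\theta_t)]_+)^2 = O(T)$ for each $i$. Then Cauchy--Schwarz with the split $[g_i(\theta_t)]_+ = t^{-1/2}\cdot t^{1/2}[g_i(\theta_t)]_+$ produces
\[
\sum_{t=1}^T[g_i(\theta_t)]_+ \le \Big(\sum_{t=1}^T\tfrac1t\Big)^{1/2}\Big(\sum_{t=1}^T t\,([g_i(\theta_t)]_+)^2\Big)^{1/2} = O(\sqrt{\log T})\,O(\sqrt{T}) = O(\sqrt{\log(T)\,T}),
\]
and since $g_i(\theta_t)\le[g_i(\theta_t)]_+$ the same bound transfers to $\sum_t g_i(\theta_t)$.

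The hard part will be the exact telescoping cancellation in the first step: it relies on matching the strong-convexity term $\tfrac{H_1}{2}\|\theta-\theta_t\|^2$ against the increment $\tfrac12(1/\eta_t-1/\eta_{t-1})$, which is precisely why $\eta_t=1/(H_1 t)$ is the right schedule. The second delicate point is the weighting in the Cauchy--Schwarz step: the multiplier update forces a $t$-weighted square sum rather than the unweighted one appearing in the convex case, so the clean $O(\sqrt{T})$ constraint bound of Theorem~\ref{thm::sumOfSquareLongterm} degrades to $O(\sqrt{T\log T})$, and choosing the weights $t^{\pm 1/2}$ is what converts the weighted square bound into the stated linear constraint bound.
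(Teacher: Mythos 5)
Your proof is correct, and the first half coincides with the paper's argument: the paper also exploits $H_1$-strong convexity of $\mathcal{L}_t(\cdot,\lambda_t)$, the projection inequality \eqref{eq::x_projection_inequality}, the gradient bound \eqref{eq::grad_lag_inequality}, and exactly your telescoping cancellation (its term $A$, where $\tfrac{1}{\eta_t}-\tfrac{1}{\eta_{t-1}}-H_1=0$ and the $t=1$ boundary term vanishes). Where you genuinely diverge is in extracting the constraint bound. The paper does \emph{not} substitute $\lambda_t=[g(\theta_t)]_+/\phi_t$ into the summed inequality; instead it uses concavity of $\mathcal{L}_t$ in $\lambda$ to keep a free vector $\lambda$ on the left (the $\|\lambda_t\|^2$ terms on the right cancel exactly because $\phi_t=(m+1)G^2\eta_t$), sums, and only at the end chooses the aggregate maximizer $\lambda = \sum_{t=1}^T[g(\theta_t)]_+/\sum_{t=1}^T\phi_t$, which yields $\bigl\|\sum_{t=1}^T[g(\theta_t)]_+\bigr\|^2\big/\bigl(2\sum_{t=1}^T\phi_t\bigr)\le O(\log T)+FT$ and hence the claim via $\sum_{t=1}^T\phi_t\le O(\log T)$. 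You instead extend the substitution style of Theorem~\ref{thm::sumOfSquareLongterm} to the strongly convex setting, obtaining the per-constraint weighted bound $\sum_{t=1}^T t\,([g_i(\theta_t)]_+)^2\le O(T)$ and converting it by Cauchy--Schwarz with weights $t^{\pm 1/2}$. Both give the same $O(\log T)$ and $O(\sqrt{T\log T})$ rates; your route buys a strictly stronger intermediate statement (a time-weighted square-cumulative violation bound, in the spirit of the chapter's squared-constraint results, which in particular forces the tail violations $\sum_{t\ge T/2}([g_i(\theta_t)]_+)^2$ to be $O(1)$), while the paper's free-$\lambda$ trick is slightly more streamlined, handles all $m$ constraints jointly through the vector norm, and avoids the final Cauchy--Schwarz step.
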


The paper \cite{jenatton2016adaptive} also has a discussion of
strongly convex functions, 
but only provides a bound similar to the convex one.
Theorem \ref{thm::stronglyconvex} shows the improved bounds for both objective regret and
the constraint violation.
On one hand the objective regret is consistent with the standard OCO result in \cite{hazan2007logarithmic}, 
and on the other the constraint violation is further reduced compared with the result in \cite{jenatton2016adaptive}.

\paragraph{Proof of Theorem \ref{thm::stronglyconvex}:}
\begin{proof}
For the strongly convex case of $f_t(\theta)$ with strong convexity parameter equal to $H_1$,
we can also conclude that the modified augmented Lagrangian function in Eq.(\ref{eq::update_strongly_convex})
is also strongly convex w.r.t. $\theta$ with the strong convexity parameter $H\ge H_1$.
Then we have 
\begin{equation}
\label{inequal::strong_lag}
\begin{array}{l}
\mathcal{L}_t(\theta^*,\lambda_t)-\mathcal{L}_t(\theta_t,\lambda_t) \ge 
\partial_\theta\mathcal{L}_t(\theta_t)^\top(\theta^*-\theta_t)
 + \frac{H_1}{2}\left\|\theta^*-\theta_t\right\|^2
\end{array}
\end{equation}

From concavity of $\mathcal{L}$ in terms of $\lambda$, we can have
\begin{equation}
\label{inequal::strong_lambda}
\mathcal{L}_t(\theta_t,\lambda) - \mathcal{L}_t(\theta_t,\lambda_t)\le (\lambda-\lambda_t)^\top\nabla_{\lambda}\mathcal{L}_t(\theta_t,\lambda_t)
\end{equation}
Since $\lambda_t$ maximizes the augmented Lagrangian, we can see that the RHS is $0$.

From Eq.(\ref{eq::x_projection_inequality}), we have
\begin{equation}
\label{inequal::strong_proj}
\begin{array}{l}
\partial_\theta\mathcal{L}_t(\theta_t)^\top(\theta_t-x^*) 
\le \frac{1}{2\eta_t}\Big(\left\|\theta^*-\theta_t\right\|^2-\left\|\theta^*-\theta_{t+1}\right\|^2\Big)
 +\frac{\eta_t}{2}(m+1)G^2(1+\left\|\lambda_t\right\|^2)

\end{array}
\end{equation}

Multiply Eq.(\ref{inequal::strong_lag}) by $-1$ and add Eq.(\ref{inequal::strong_lambda}) together with Eq.(\ref{inequal::strong_proj}) plugging in:
\scriptsize
\begin{equation*}
\begin{array}{l}
\mathcal{L}_t(\theta_t,\lambda) - \mathcal{L}_t(\theta^*,\lambda_t) 
\le \frac{1}{2\eta_t}\Big(\left\|\theta^*-\theta_t\right\|^2-\left\|\theta^*-\theta_{t+1}\right\|^2\Big)
 +\frac{\eta_t}{2}(m+1)G^2(1+\left\|\lambda_t\right\|^2)-\frac{H_1}{2}\left\|\theta^*-\theta_t\right\|^2
\end{array}
\end{equation*}
\normalsize

Let $b_t=\left\|\theta^*-\theta_t\right\|^2$, and plug in the expression for $\mathcal{L}_t$, we can get:
\begin{equation*}
\begin{array}{l}
f_t(\theta_t) - f_t(\theta^*) +\lambda^\top[g(\theta_t)]_+-\frac{\phi_t}{2}\left\|\lambda\right\|^2 \le \frac{1}{2\eta_t}(b_t-b_{t+1})\\
\quad\quad\quad -\frac{H_1}{2}b_t +\frac{(m+1)G^2}{2}\eta_t+\frac{(m+1)G^2}{2}\left\|\lambda_t\right\|^2(\eta_t-\frac{\phi_t}{(m+1)G^2})
\end{array}
\end{equation*}

Plug in the expressions $\eta_t = \frac{1}{H_1t}$, $\phi_t = (m+1)G^2\eta_t$, and sum over $t=1$ to $T$:
\begin{equation*}
\begin{array}{l}
\sum\limits_{t=1}^T\Big(f_t(\theta_t)-f_t(\theta^*)\Big) + \lambda^\top\Big(\sum\limits_{t=1}^T[g(\theta_t)]_+\Big)
-\frac{\left\|\lambda\right\|^2}{2}\sum\limits_{t=1}^T\phi_t \\
\le \underbrace{\frac{1}{2}\sum\limits_{t=1}^T\Big(\frac{b_t-b_{t+1}}{\eta_t}-\frac{H_1}{2}b_t\Big)}_A 
+ \underbrace{\frac{(m+1)G^2}{2}\sum\limits_{t=1}^T\eta_t}_B
\end{array}
\end{equation*}

For the expression of $A$, we have:
\begin{equation*}
\begin{array}{l}
A = \frac{1}{2}\Big[\frac{b_1}{\eta_1}+\sum\limits_{t=2}^Tb_t(\frac{1}{\eta_t}-\frac{1}{\eta_{t-1}}-H_1)-\frac{b_{T+1}}{\eta_T}-H_1b_1\Big]
 \le 0
\end{array}
\end{equation*}

For the expression of $B$, with the expression of $\eta_t$ and the inequality relation between sum and integral, we have:
\begin{equation*}
B\le O(\log(T))
\end{equation*}

Thus, we have:
\begin{equation*}
\begin{array}{l}
\sum\limits_{t=1}^T\Big(f_t(\theta_t)-f_t(\theta^*)\Big) + \lambda^\top\Big(\sum\limits_{t=1}^T[g(\theta_t)]_+\Big)
-\frac{\left\|\lambda\right\|^2}{2}\sum\limits_{t=1}^T\phi_t 
\le O(\log(T))
\end{array}
\end{equation*}

If we set $\lambda = \frac{\sum\limits_{t=1}^T[g(x_t)]_+}{\sum\limits_{t=1}^T\phi_t}$, 
and due to non-negativity of $\frac{\Big\|\sum\limits_{t=1}^T[g(\theta_t)]_+\Big\|^2}{2\sum\limits_{t=1}^T\phi_t}$,
we have
\begin{equation*}
\sum\limits_{t=1}^T\Big(f_t(\theta_t)-f_t(\theta^*)\Big) \le O(\log(T))
\end{equation*}

Furthermore, we have $\sum\limits_{t=1}^T\Big(f_t(\theta_t) - f_t(\theta^*)\Big)\ge -FT$
according to the assumption.
Then we have
\begin{equation*}
\frac{\Big\|\sum\limits_{t=1}^T[g(\theta_t)]_+\Big\|^2}{2\sum\limits_{t=1}^T\phi_t} \le O(\log(T)) + FT
\end{equation*}

Because $\sum\limits_{t=1}^T\phi_t\le O(\log(T))$, we have:
\begin{equation*}
\sum\limits_{t=1}^T[g(x_t)]_+\le O(\sqrt{\log(T)T})
\end{equation*}

\end{proof}

\section{Relation with Previous Results}
\label{sec:long-term-relation-pre}

In this section, we extend Theorem
\ref{thm::sumOfSquareLongterm} to enable direct
comparison with the results from  \cite{mahdavi2012trading}
\cite{jenatton2016adaptive}. In particular, it is shown how
Algorithm~\ref{alg::convex-long-term} recovers the existing regret bounds, while
the use of the new augmented Lagrangian \eqref{eq::new long term
  lagrangian} in the previous algorithms also provides regret bounds
for the clipped constraint case. 

The first result puts a bound on the clipped long-term constraint, rather
than the sum-of-squares that appears in
Theorem~\ref{thm::sumOfSquareLongterm}. This will allow more direct
comparisons with the existing results. 

\begin{proposition}
\label{prop::similarResultTo2012}
  {\it 
      If $\sigma = \frac{(m+1)G^2}{2(1-\alpha)}$, 
      $\eta = O(\frac{1}{\sqrt{T}})$,
      $\alpha\in (0,1)$, and
      $\theta^* = \underset{\theta\in \cS}\argmin\sum\limits_{t=1}^Tf_t(\theta)$, 
      then the result of Algorithm \ref{alg::convex-long-term} satisfies
      \begin{equation*}
      \begin{array}{ll}
      \sum\limits_{t=1}^T\Big( f_t(\theta_t) - f_t(\theta^*)\Big)\le O(\sqrt{T}),
      &\sum\limits_{t=1}^Tg_i(\theta_t)\le \sum\limits_{t=1}^T[g_i(\theta_t)]_+ \le O(T^{3/4}), \forall i \in \{1,2,...,m\}
      \end{array}
      \end{equation*}
  }
\end{proposition}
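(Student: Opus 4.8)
The plan is to reuse the core inequality already established in Lemma~\ref{lem:sumOfLagfunction}, but now drive the analysis toward a bound on $\sum_{t=1}^T [g_i(\theta_t)]_+$ rather than the sum-of-squares. First I would invoke Lemma~\ref{lem:sumOfLagfunction} to obtain
\begin{equation*}
\sum_{t=1}^T[\mathcal{L}_t(\theta_t,\lambda_t)-\mathcal{L}_t(\theta,\lambda_t)]\le
\frac{R^2}{2\eta}+\frac{\eta T}{2}(m+1)G^2+\frac{\eta}{2}(m+1)G^2\sum_{t=1}^T\|\lambda_t\|^2,
\end{equation*}
then set $\theta=\theta^*$, expand the Lagrangian using $\lambda_t=\tfrac{[g(\theta_t)]_+}{\sigma\eta}$, and exploit $[g_i(\theta^*)]_+=0$. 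As in the proof of Theorem~\ref{thm::sumOfSquareLongterm}, the choice $\sigma=\frac{(m+1)G^2}{2(1-\alpha)}$ makes the coefficient $\left(1-\frac{(m+1)G^2}{2\sigma}\right)=\alpha>0$, yielding
\begin{equation*}
\sum_{t=1}^T\big(f_t(\theta_t)-f_t(\theta^*)\big)+\frac{\alpha}{\sigma\eta}\sum_{i=1}^m\sum_{t=1}^T\big([g_i(\theta_t)]_+\big)^2\le\frac{R^2}{2\eta}+\frac{\eta T}{2}(m+1)G^2.
\end{equation*}
With $\eta=O(1/\sqrt{T})$ the right side is $O(\sqrt{T})$, and since the squared-constraint term is nonnegative the objective regret bound $\sum_t(f_t(\theta_t)-f_t(\theta^*))\le O(\sqrt{T})$ follows immediately.

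Next I would extract the cumulative clipped-constraint bound. Using $\sum_t(f_t(\theta_t)-f_t(\theta^*))\ge -FT$ from the boundedness assumption, the displayed inequality gives $\sum_{i}\sum_t([g_i(\theta_t)]_+)^2\le \frac{\sigma\eta}{\alpha}\big(O(\sqrt{T})+FT\big)=O(\eta T)=O(\sqrt{T}\cdot\sqrt{T})\cdot\eta$; being careful with the $\eta=O(1/\sqrt{T})$ scaling this is $O(T^{1/2}\cdot T^{1/2}/T^{1/2})=O(T^{1/2})$ — wait, I must track this precisely: $\sigma\eta=O(1/\sqrt{T})$, so $\frac{\sigma\eta}{\alpha}(O(\sqrt{T})+FT)=O(\sqrt{T})+O(\sqrt{T})=O(\sqrt{T})$ is the sum-of-squares, consistent with Theorem~\ref{thm::sumOfSquareLongterm}. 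To pass from the sum-of-squares to the plain sum, the key step is the Cauchy--Schwarz inequality:
\begin{equation*}
\sum_{t=1}^T[g_i(\theta_t)]_+\le\sqrt{T}\left(\sum_{t=1}^T\big([g_i(\theta_t)]_+\big)^2\right)^{1/2}\le\sqrt{T}\cdot O(T^{1/4})=O(T^{3/4}).
\end{equation*}
The trivial inequality $\sum_t g_i(\theta_t)\le\sum_t[g_i(\theta_t)]_+$ then completes the chain, since $g_i(\theta_t)\le[g_i(\theta_t)]_+$ pointwise.

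The main obstacle I anticipate is purely bookkeeping: keeping the dependence on $\eta$, $\sigma$, $\alpha$, $T$, $m$, and $G$ straight through the Cauchy--Schwarz step, so that the sum-of-squares bound $O(\sqrt{T})$ correctly produces a $\sqrt{T}\cdot(T^{1/2})^{1/2}=T^{3/4}$ rate rather than something tighter or looser. There is no genuinely hard inequality here — the whole argument is a re-packaging of Theorem~\ref{thm::sumOfSquareLongterm} with Cauchy--Schwarz applied at the end — so the real content is confirming that the constants absorb correctly and that the generality of $\eta=O(1/\sqrt{T})$ (rather than the exact value from Theorem~\ref{thm::sumOfSquareLongterm}) still yields the stated orders. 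This directly recovers the $O(T^{3/4})$ long-term constraint bound of \cite{mahdavi2012trading}, confirming that Algorithm~\ref{alg::convex-long-term} subsumes the earlier result.
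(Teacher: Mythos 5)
Your proposal is correct and follows essentially the same route as the paper: the paper's proof simply cites Theorem~\ref{thm::sumOfSquareLongterm} for the $O(\sqrt{T})$ regret and square-cumulative bounds and then applies the inequality $\bigl(\sum_{t=1}^T [g_i(\theta_t)]_+\bigr)^2 \le T\sum_{t=1}^T\bigl([g_i(\theta_t)]_+\bigr)^2$ (your Cauchy--Schwarz step) together with $g_i(\theta_t)\le[g_i(\theta_t)]_+$. Your only departure is re-deriving the theorem's inequality from Lemma~\ref{lem:sumOfLagfunctions} inline rather than citing it, and your observation that only the order $\eta=O(1/\sqrt{T})$ matters is consistent with the paper's usage.
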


This result shows that our algorithm generalizes the
regret and long-term constraint bounds of \cite{mahdavi2012trading}.
Please refer to the Appendix for this section's proofs.

The next result shows that by changing our constant stepsize accordingly,
with the Algorithm \ref{alg::convex-long-term}, we can 
achieve the user-defined trade-off from \cite{jenatton2016adaptive}.
Furthermore, we also include the squared version and clipped
constraint violations.

\begin{proposition}
\label{prop::tradeOffLossAndConstraint}
  {\it 
      If $\sigma = \frac{(m+1)G^2}{2(1-\alpha)}$, $\eta =
      O(\frac{1}{T^{\beta}})$, $\alpha\in(0,1)$, $\beta \in(0,1)$,
      and $\theta^* = \underset{\theta\in \cS}\argmin\sum\limits_{t=1}^Tf_t(\theta)$, 
      then the result of  Algorithm~\ref{alg::convex-long-term} satisfies
      \begin{equation*}
      \begin{array}{lll}
      \sum\limits_{t=1}^T\Big( f_t(\theta_t) - f_t(\theta^*)\Big)\le O(T^{max\{\beta,1-\beta\}}),\\
      \sum\limits_{t=1}^Tg_i(\theta_t)\le \sum\limits_{t=1}^T[g_i(\theta_t)]_+ \le O(T^{1-\beta/2}),
      &\sum\limits_{t=1}^T([g_i(\theta_t)]_+)^2\le O(T^{1-\beta}), \forall i \in \{1,2,...,m\}
      \end{array}
      \end{equation*}
  }
\end{proposition}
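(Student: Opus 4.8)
The plan is to reuse the machinery already developed for Theorem~\ref{thm::sumOfSquareLongterm}, since the only change here is the more general step size $\eta = O(T^{-\beta})$ in place of $\eta = O(T^{-1/2})$. First I would invoke Lemma~\ref{lem:sumOfLagfunction}, expand the Lagrangian terms, set $\theta = \theta^*$ (so that $[g_i(\theta^*)]_+ = 0$), and substitute $\lambda_t = \frac{[g(\theta_t)]_+}{\sigma\eta}$ exactly as in the proof of Theorem~\ref{thm::sumOfSquareLongterm}. This reproduces the inequality
\begin{equation*}
\sum_{t=1}^T\big(f_t(\theta_t)-f_t(\theta^*)\big) + \frac{1}{\sigma\eta}\Big(1-\tfrac{(m+1)G^2}{2\sigma}\Big)\sum_{i=1}^m\sum_{t=1}^T\big([g_i(\theta_t)]_+\big)^2 \le \frac{R^2}{2\eta}+\frac{\eta T}{2}(m+1)G^2.
\end{equation*}
With $\sigma = \frac{(m+1)G^2}{2(1-\alpha)}$ the bracketed factor equals $\alpha > 0$, so both terms on the left are nonnegative.

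The rest is exponent bookkeeping. Plugging $\eta = O(T^{-\beta})$ into the right-hand side gives $\frac{R^2}{2\eta} = O(T^{\beta})$ and $\frac{\eta T}{2}(m+1)G^2 = O(T^{1-\beta})$, hence the whole bound is $O(T^{\max\{\beta,1-\beta\}})$. Dropping the nonnegative constraint term yields the objective-regret bound $\sum_{t=1}^T(f_t(\theta_t)-f_t(\theta^*)) \le O(T^{\max\{\beta,1-\beta\}})$. For the square-cumulative constraint, I would instead drop the objective regret using the lower bound $\sum_{t=1}^T(f_t(\theta_t)-f_t(\theta^*)) \ge -FT$ from the Lipschitz assumption, obtaining
\begin{equation*}
\frac{\alpha}{\sigma\eta}\sum_{i=1}^m\sum_{t=1}^T\big([g_i(\theta_t)]_+\big)^2 \le O(T^{\max\{\beta,1-\beta\}}) + FT = O(T),
\end{equation*}
so that $\sum_{t=1}^T([g_i(\theta_t)]_+)^2 \le \frac{\sigma\eta}{\alpha}O(T) = O(T^{1-\beta})$ for each $i$, using $\eta=O(T^{-\beta})$ once more.

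The one genuinely new step, relative to Theorem~\ref{thm::sumOfSquareLongterm}, is deducing the clipped long-term bound from the square-cumulative bound. For this I would apply Cauchy--Schwarz:
\begin{equation*}
\sum_{t=1}^T [g_i(\theta_t)]_+ \le \sqrt{T\sum_{t=1}^T\big([g_i(\theta_t)]_+\big)^2} \le \sqrt{T\cdot O(T^{1-\beta})} = O(T^{1-\beta/2}),
\end{equation*}
and then note $g_i(\theta_t) \le [g_i(\theta_t)]_+$ pointwise to recover the left inequality in the statement. There is no serious structural obstacle here; the main thing to get right is the exponent balancing, namely that $\max\{\beta,1-\beta\}$ emerges correctly from the two terms $\frac{R^2}{2\eta}$ and $\frac{\eta T}{2}(m+1)G^2$, and keeping the $\alpha$- and $\sigma$-dependent constants (fixed once $\alpha$ is chosen) out of the asymptotics. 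The Cauchy--Schwarz step is precisely what ties the new square-cumulative guarantee back to the classical long-term constraint bounds of \cite{mahdavi2012trading,jenatton2016adaptive}.
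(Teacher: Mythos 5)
Your proposal is correct and matches the paper's own proof essentially step for step: the paper likewise reuses Lemma~\ref{lem:sumOfLagfunction} and the derivation up to Eq.~(\ref{eq::UpperBoundOfsumOfObjAndLongTermConstrain}) from Theorem~\ref{thm::sumOfSquareLongterm}, plugs in $\eta = O(T^{-\beta})$ to balance $\frac{R^2}{2\eta}$ against $\frac{\eta T}{2}(m+1)G^2$, uses the $-FT$ lower bound on the regret to isolate the square-cumulative term, and then applies the same Cauchy--Schwarz (stated there as $(y_1+\cdots+y_n)^2 \le n(y_1^2+\cdots+y_n^2)$) to obtain $\sum_{t=1}^T[g_i(\theta_t)]_+ \le O(T^{1-\beta/2})$. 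Your exponent bookkeeping and the reduction $g_i(\theta_t)\le[g_i(\theta_t)]_+$ are exactly as in the paper, so there is nothing to correct.
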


Proposition \ref{prop::tradeOffLossAndConstraint} 
 provides a systematic way 
to balance the regret of the objective and the constraint violation.
Next, we will show that previous algorithms can use our proposed augmented Lagrangian function to 
have their own clipped long-term constraint bound.
\begin{proposition}
\label{prop::true_violation_bound_2011}
{\it
       If we run Algorithm 1 in \cite{mahdavi2012trading} with the augmented Lagrangian formula 
       defined in Eq.(\ref{eq::new long term lagrangian}), the result satisfies
      \begin{equation*}
      \begin{array}{ll}
      \sum\limits_{t=1}^T\Big( f_t(\theta_t) - f_t(\theta^*)\Big)\le O(\sqrt{T}),
      &\sum\limits_{t=1}^Tg_i(\theta_t)\le \sum\limits_{t=1}^T[g_i(\theta_t)]_+ \le O(T^{3/4}), \forall i \in \{1,2,...,m\}.
      \end{array}
      \end{equation*}
  }
\end{proposition}

For the update rule proposed in \cite{jenatton2016adaptive},
we need to change the 
$\mathcal{L}_t(\theta,\lambda)$ to the following one:
\begin{equation}
\label{eq::new_lag_2016}
  \mathcal{L}_t(\theta,\lambda) = f_t(\theta)+\lambda [g(\theta)]_+ - \frac{\phi_t}{2}\lambda^2
\end{equation}
where $g(\theta) = \max\limits_{i\in \{1,\dots,m\}} g_i(\theta)$.

\begin{proposition}
\label{prop::true_violation_bound_2016}
  {\it 
      If we use the update rule and the parameter choices in \cite{jenatton2016adaptive} with
      the augmented Lagrangian in Eq.(\ref{eq::new_lag_2016}),
      then $\forall i \in \{1,...,m\}$, we have
      \begin{equation*}
      \begin{array}{ll}
      \sum\limits_{t=1}^T\Big( f_t(\theta_t) - f_t(\theta^*)\Big)\le O(T^{max\{\beta,1-\beta\}}),
      &\sum\limits_{t=1}^Tg_i(\theta_t)\le \sum\limits_{t=1}^T[g_i(\theta_t)]_+ \le O(T^{1-\beta/2}).
      \end{array}
      \end{equation*}
  }
\end{proposition}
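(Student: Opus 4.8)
The plan is to adapt the algorithm of Jenatton et al. to the new augmented Lagrangian \eqref{eq::new_lag_2016}, and then re-trace their trade-off analysis while tracking the clipped constraint $[g(\theta)]_+$ rather than $g(\theta)$. The key structural change is that their algorithm uses two separate step-size sequences (one for $\theta$, one for $\lambda$), so I would first write down their update rule explicitly with $g_i(\theta)$ replaced by $[g(\theta)]_+ = \max_i[g_i(\theta)]_+$ inside $\mathcal{L}_t$. The proof would then follow the same three-part template used in the proof of Theorem~\ref{thm::sumOfSquareLongterm} and the strongly convex theorem: (i) use convexity of $\mathcal{L}_t$ in $\theta$ together with the non-expansiveness of the projection, as in \eqref{eq::x_projection_inequality}, to bound the per-step drift $(\theta_t-\theta^*)^\top\partial_\theta\mathcal{L}_t$; (ii) use concavity of $\mathcal{L}_t$ in $\lambda$ to relate the $\lambda$-update to the constraint term; (iii) sum over $t$ and choose $\lambda$ to extract the two bounds.

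\textbf{Key steps in order.} First I would establish the gradient-norm bound $\|\partial_\theta\mathcal{L}_t\|^2\le (m+1)G^2(1+\|\lambda_t\|^2)$ exactly as in \eqref{eq::grad_lag_inequality}, noting that $\partial_\theta[g(\theta)]_+$ is still bounded by $G$ since $g=\max_i g_i$ is $G$-Lipschitz. Second, I would reproduce Jenatton et al.'s telescoping argument with their time-dependent step sizes $\eta_t$ and $\phi_t$ (which scale like $T^{-\beta}$ and are what produce the $\beta$-parametrized trade-off), arriving at an inequality of the form $\sum_t(f_t(\theta_t)-f_t(\theta^*)) + \lambda^\top\sum_t[g(\theta_t)]_+ - \frac{\|\lambda\|^2}{2}\sum_t\phi_t \le O(T^{\max\{\beta,1-\beta\}})$. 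Third, setting $\lambda=0$ gives the objective regret bound directly. Fourth, to get the constraint bound I would set $\lambda = \frac{\sum_t[g(\theta_t)]_+}{\sum_t\phi_t}$, use the lower bound $\sum_t(f_t(\theta_t)-f_t(\theta^*))\ge -FT$ from the assumptions, and solve the resulting quadratic inequality for $\sum_t[g(\theta_t)]_+$, yielding $O(T^{1-\beta/2})$ after substituting $\sum_t\phi_t = O(T^{1-\beta})$. The final chain $\sum_t g_i(\theta_t)\le \sum_t[g_i(\theta_t)]_+ \le \sum_t[g(\theta_t)]_+$ follows since $g_i\le [g_i]_+\le [g]_+$ pointwise.

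\textbf{The main obstacle} I expect is verifying that the clipped constraint $[g(\theta)]_+$ slots correctly into Jenatton et al.'s specific two-step-size machinery without breaking the cancellation that makes their $A$-term nonpositive. Their analysis relies on the explicit algebraic form of the $\lambda$-update and on the maximizer of $\mathcal{L}_t$ in $\lambda$ being used; with the clip in place, I must confirm that the constraint-dependent term $\lambda_t^\top[g(\theta_t)]_+ - \frac{\phi_t}{2}\|\lambda_t\|^2$ still admits the same telescoping after the max over $i$ is folded into a single scalar $g=\max_i g_i$. The subtlety is that $\max_i g_i$ is convex (a max of convex functions) so $\mathcal{L}_t$ remains convex in $\theta$ and the subgradient bound survives; I would spend most care checking that no step of their argument secretly used differentiability or the unclipped sign of $g_i$. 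Once convexity and the gradient bound are confirmed, the remaining calculations are routine substitutions of $\eta_t=O(T^{-\beta})$ and $\phi_t$.
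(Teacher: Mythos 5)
Your plan is sound and arrives at the right bounds, but it takes a genuinely different route from the paper's. The paper's proof is a lemma-by-lemma audit of \cite{jenatton2016adaptive}: it observes that their Lemma 1 (the bound on $\mathcal{L}_t(\theta_t,\lambda)-\mathcal{L}_t(\theta_t,\lambda_t)$) and Lemma 3 never use the specific form of the constraint term; that their Lemma 2 uses the sign of the constraint only through $g(\theta^*)\le 0$, which the clip preserves since $[g(\theta^*)]_+=0$; and that their Lemma 4 and Theorem 1 are formal consequences of Lemmas 1--3, so the entire Jenatton analysis goes through verbatim with $g(\theta_t)$ replaced by $[g(\theta_t)]_+$. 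You instead re-derive the summed inequality from scratch using the template of Theorems \ref{thm::sumOfSquareLongterm} and \ref{thm::stronglyconvex} (projection non-expansiveness, the subgradient bound, handling of the $\lambda$-update, an optimal comparator choice $\lambda=\sum_t[g(\theta_t)]_+/\sum_t\phi_t$, the $-FT$ lower bound, and the quadratic inequality). Your route is more self-contained and makes explicit where convexity, Lipschitzness, and non-differentiability-tolerance enter; the paper's route is shorter and isolates the single crux --- the only sign-dependence in the whole argument is $g(\theta^*)\le 0$ --- which your write-up uses implicitly but never names as the key observation. Your closing chain $g_i\le[g_i]_+\le[\max_j g_j]_+$ pointwise, handling the multi-constraint statement via the scalar $g=\max_i g_i$, matches what the paper needs and is correct.

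One caution: you describe Jenatton et al.'s analysis as relying ``on the maximizer of $\mathcal{L}_t$ in $\lambda$ being used.'' That is this thesis's Algorithm \ref{alg::convex-long-term} (where $\lambda_{t+1}=[g(\theta_{t+1})]_+/(\sigma\eta)$ exactly maximizes), not Jenatton's: their $\lambda$-update is a projected gradient-ascent step with its own step-size sequence, which is why their final inequality carries an extra $1/\eta$-type contribution in the coefficient of $\lambda^2$ (compare the $(\sigma\eta T+m/\eta)$ denominator in Eq.\eqref{eq::key_2012} for the single-step-size analysis), rather than the clean $\frac{\lambda^2}{2}\sum_t\phi_t$ you wrote down. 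Since the proposition is specifically about running \emph{their} update rule with the new Lagrangian, your step (ii) must use the mirrored non-expansiveness/telescoping bound for the $\lambda$-iterates, not exact maximization; if you carried out the analysis with the maximizer update you would be proving the result for a different algorithm. With that correction, the remaining substitutions ($\eta_t,\phi_t$ scaling as $T^{-\beta}$, the quadratic inequality, and the pointwise chain) deliver exactly the stated $O(T^{\max\{\beta,1-\beta\}})$ and $O(T^{1-\beta/2})$ rates, in agreement with the paper.
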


Propositions \ref{prop::true_violation_bound_2011} and
\ref{prop::true_violation_bound_2016} show that clipped long-term
constraints can be bounded by combining the algorithms of
\cite{mahdavi2012trading,jenatton2016adaptive} with our augmented
Lagrangian. 
Although these results are similar in part to our Propositions \ref{prop::similarResultTo2012} and \ref{prop::tradeOffLossAndConstraint},
they do not imply the results in Theorems \ref{thm::sumOfSquareLongterm} and \ref{thm::stronglyconvex} 
as well as the new single step constraint violation bound in Lemma \ref{lem:bound_step}, which are our key contributions.
Based on Propositions \ref{prop::true_violation_bound_2011} and \ref{prop::true_violation_bound_2016},
it is natural to ask whether we could apply our new augmented Lagrangian formula (\ref{eq::new long term lagrangian})
to the recent work in \cite{yu2017online} .
%
Unfortunately, we have not found a way to do so.

Furthermore, since $\Big([g_i(\theta_t)]_+\Big)^2$ is also convex, 
we could define $\tilde{g}_i(\theta_t) = \Big([g_i(\theta_t)]_+\Big)^2$ 
and apply the previous algorithms \cite{mahdavi2012trading} \cite{jenatton2016adaptive} and \cite{yu2017online}.
This will result in the upper bounds of $O(T^{3/4})$ \cite{mahdavi2012trading} and $O(T^{1-\beta/2})$ \cite{jenatton2016adaptive},
which are worse than our upper bounds of $O(T^{1/2})$ (Theorem \ref{thm::sumOfSquareLongterm})
and $O(T^{1-\beta})$ ( Proposition \ref{prop::tradeOffLossAndConstraint}).
Note that the algorithm in \cite{yu2017online} cannot be applied 
since the clipped constraints do not satisfy the required Slater condition.

\section{Experiment}
\label{sec:oco-long-term-exp}

\begin{figure}
  \centering
  \subfigure[]{
    \includegraphics[height=3.6cm]{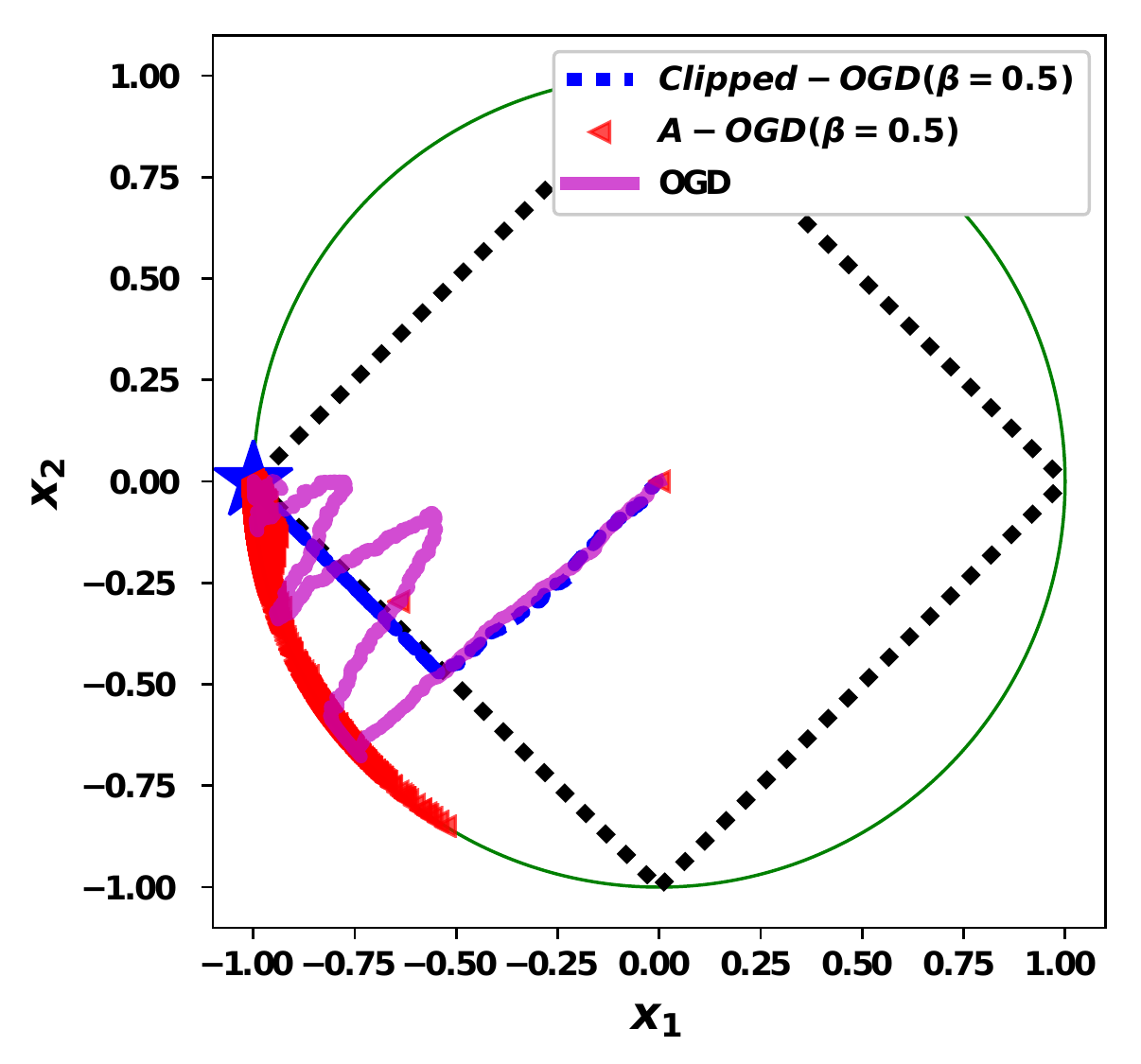}}
  \hspace{.1in}
  \subfigure[]{
    \includegraphics[height=3.6cm]{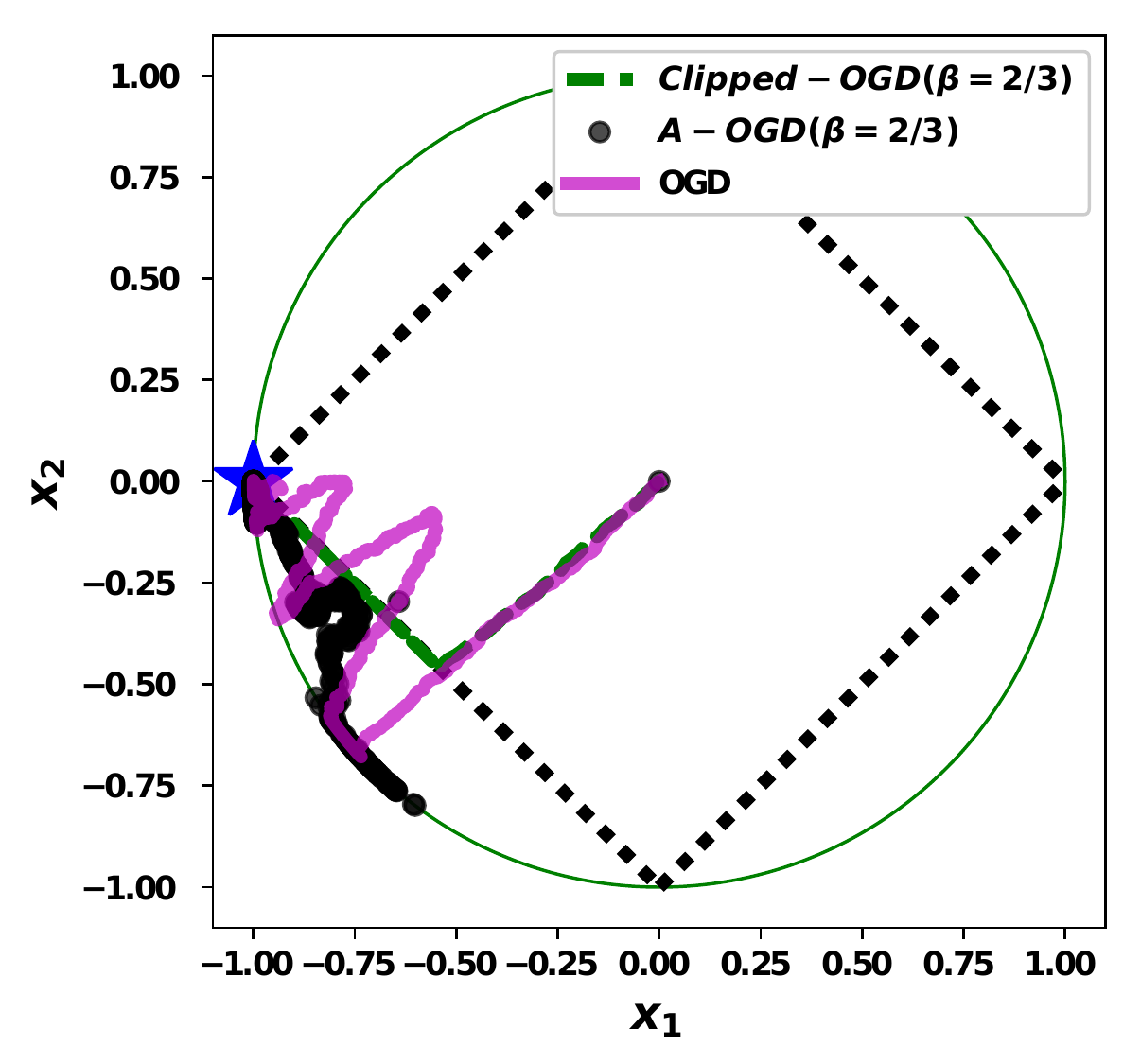}}
  \caption{Toy Example Results: Trajectories generated by different algorithms. 
    Note how trajectories generated by Clipped-OGD follow the
    desired constraints tightly. In contrast, OGD oscillates around
    the true constraints, and A-OGD closely follows the outer ball's boundary.}
  \label{fig::toy_traj} 
\end{figure}

\begin{figure}
\vskip 0.0in
  \centering
  \subfigure[]{
    \includegraphics[height=3.cm]{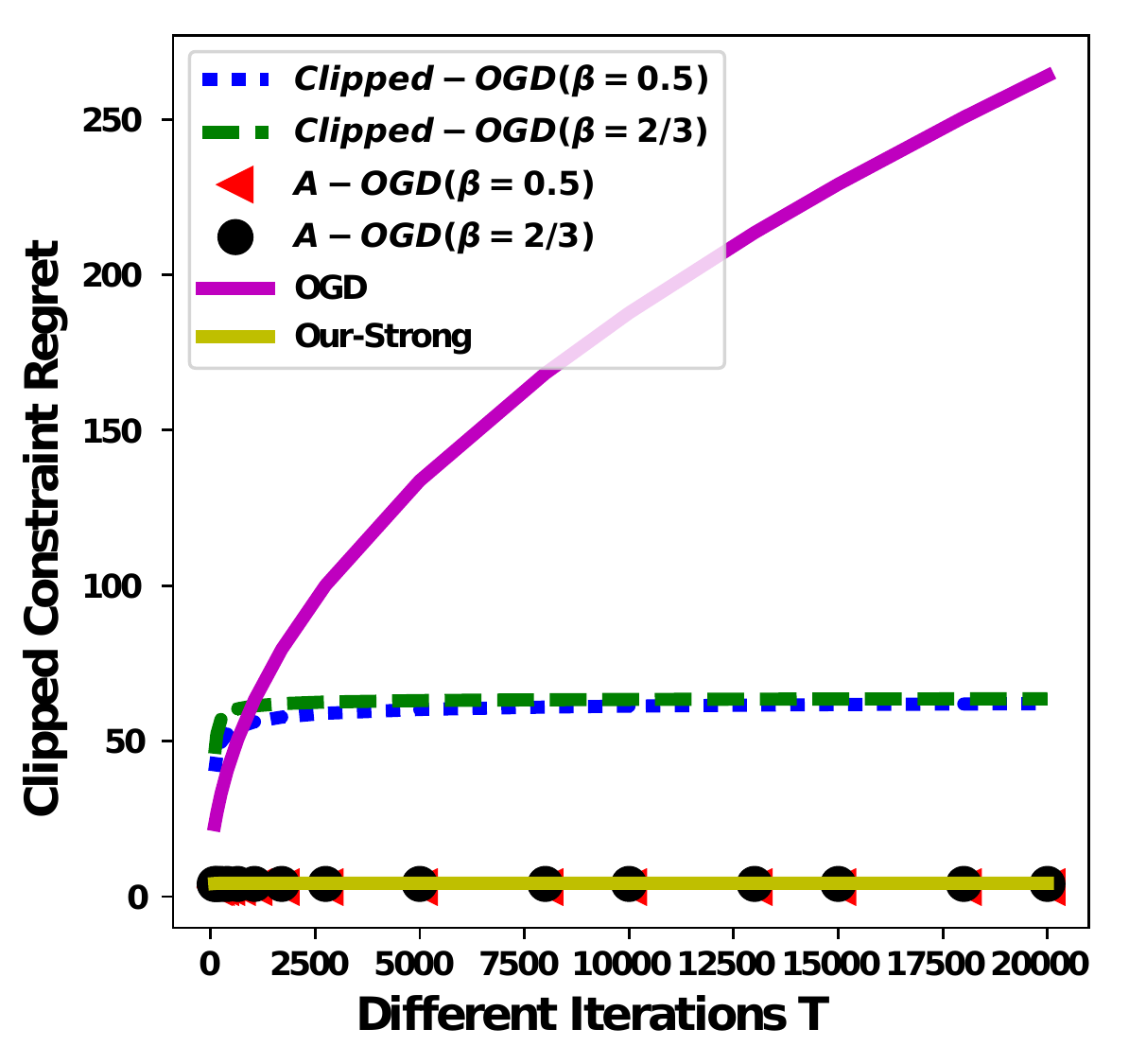}}
  \hspace{.3in}
  \subfigure[]{
    \includegraphics[height=3.cm]{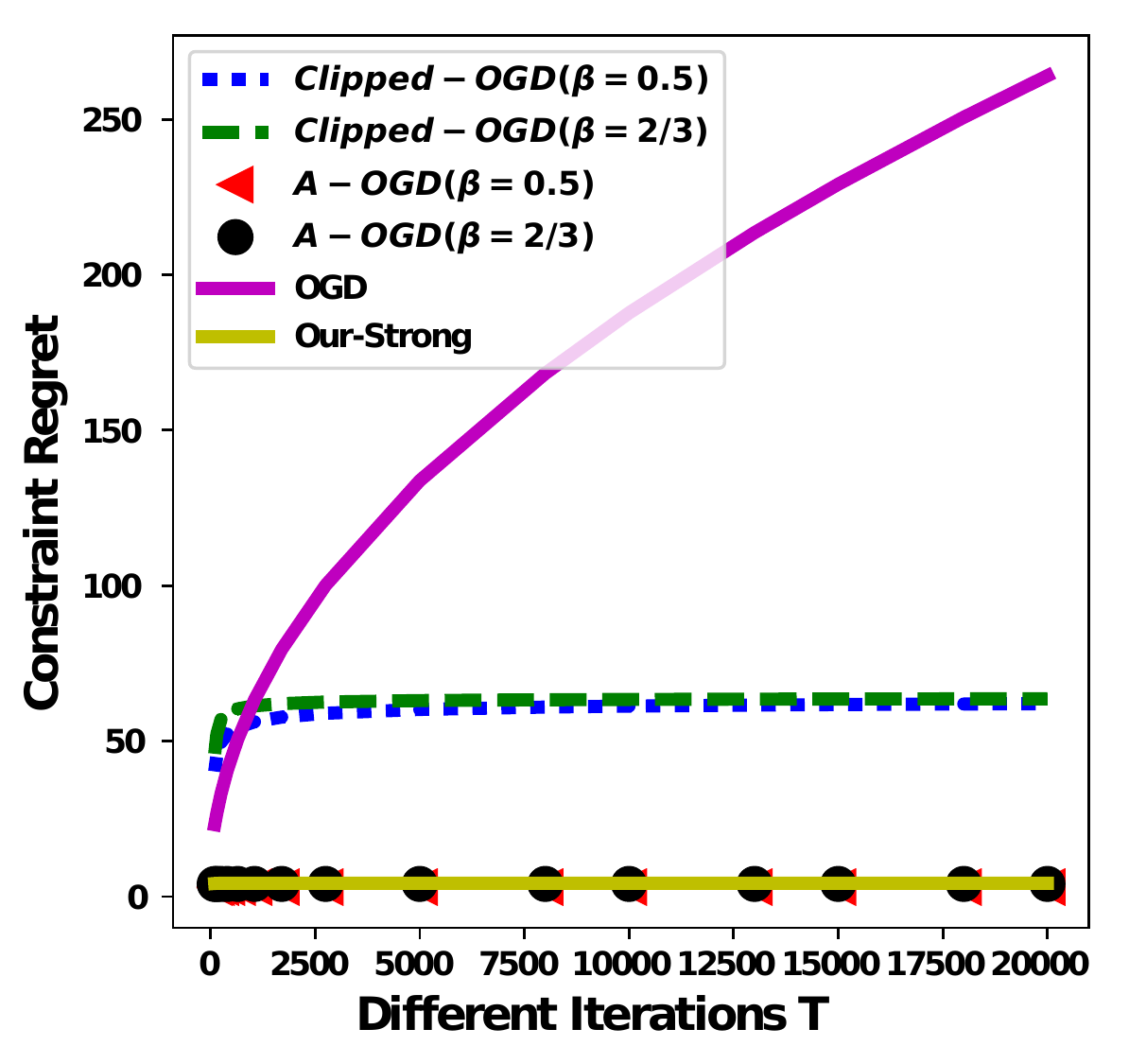}}
  \hspace{.3in}
  \subfigure[]{
    \includegraphics[height=3.cm]{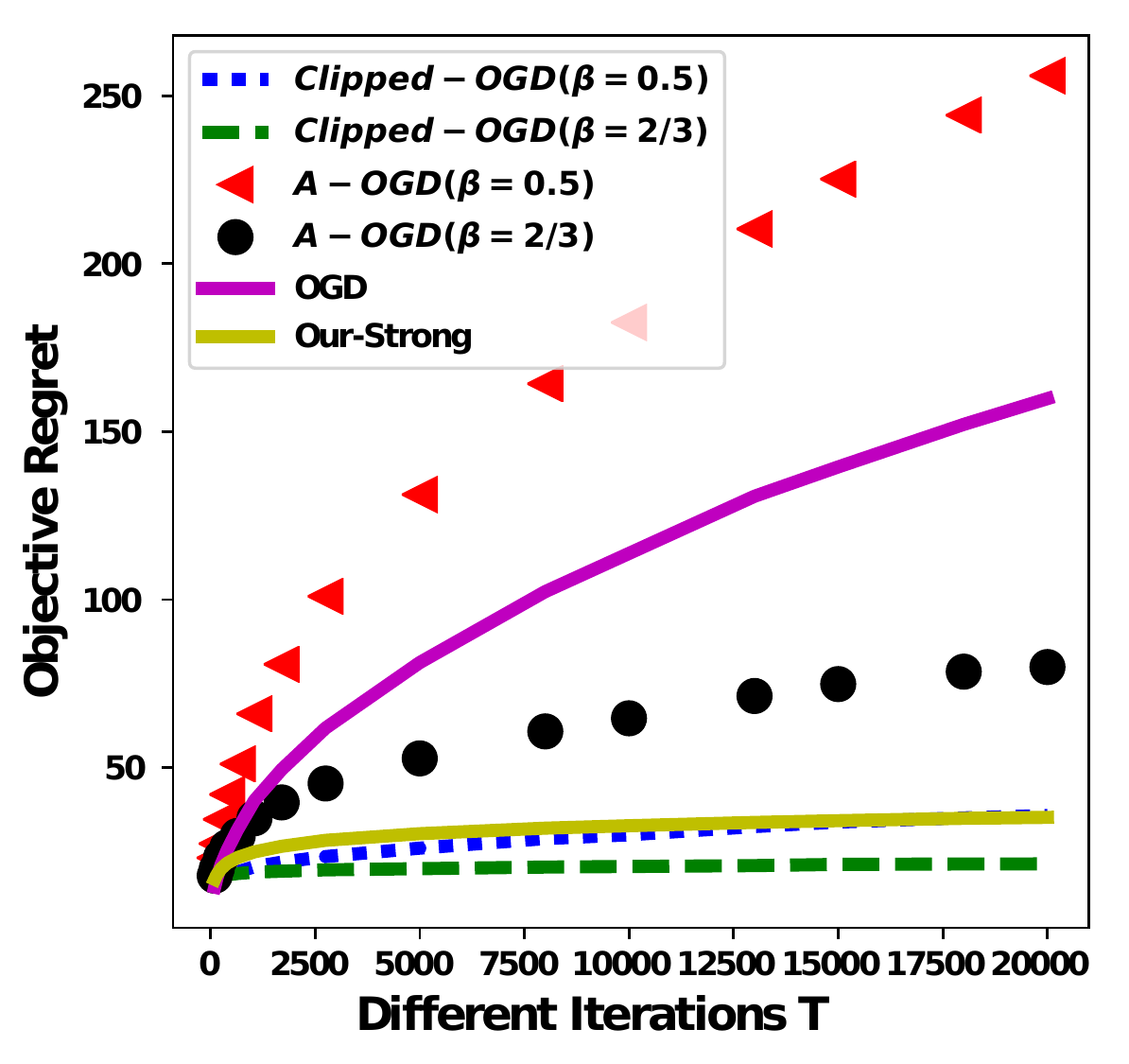}}
  \caption{Doubly-Stochastic Matrices. (a): Clipped Long-term Constraint Violation. 
           (b): Long-term Constraint Violation.
           (c): Cumulative Regret of the Loss function}
  \label{fig::doubly_obj_con} 
\end{figure}

\begin{figure}
  \centering
  \subfigure[]{
    \includegraphics[height=2.4cm]{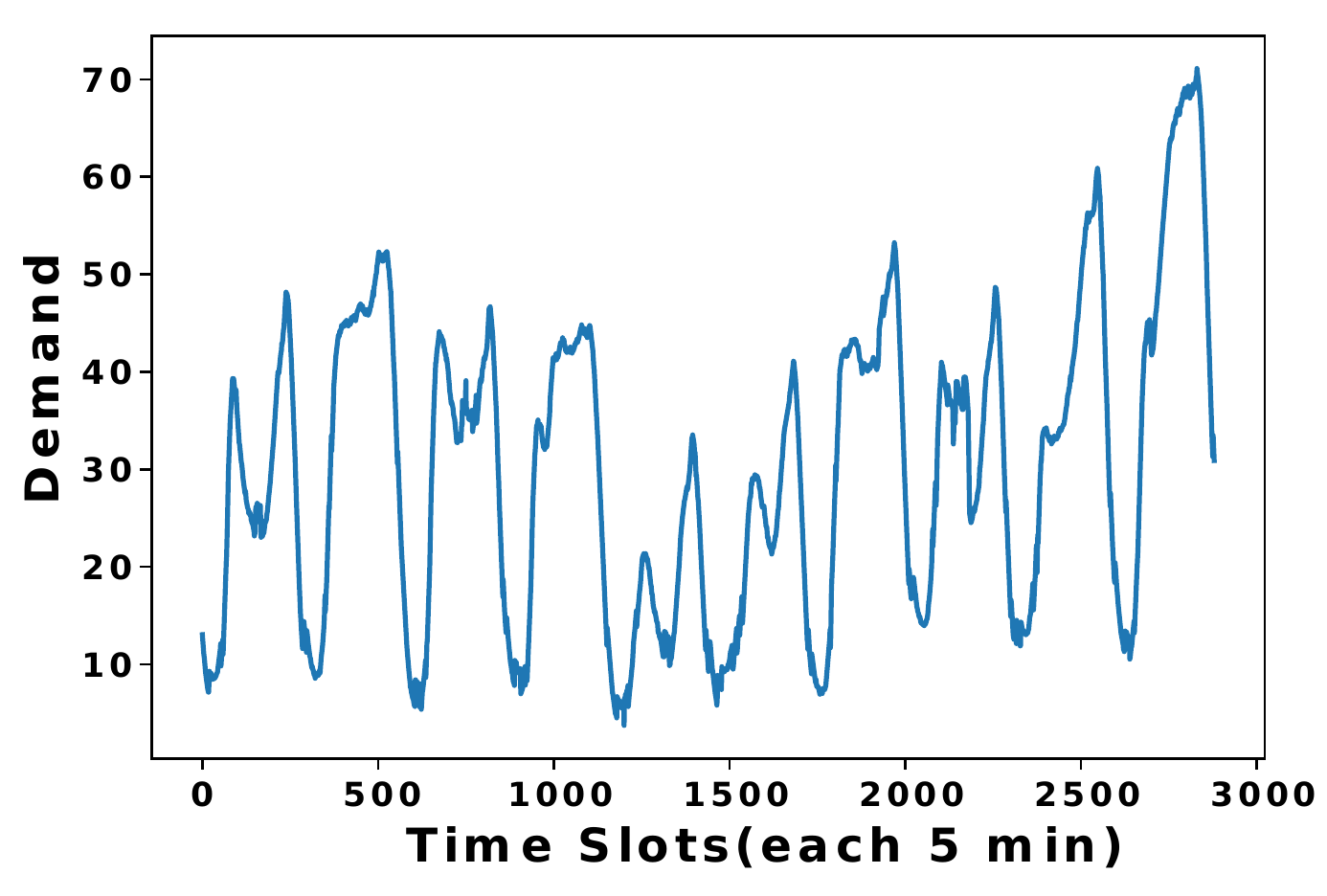}}
  \hspace{.1in}
  \subfigure[]{
    \includegraphics[height=3.1cm]{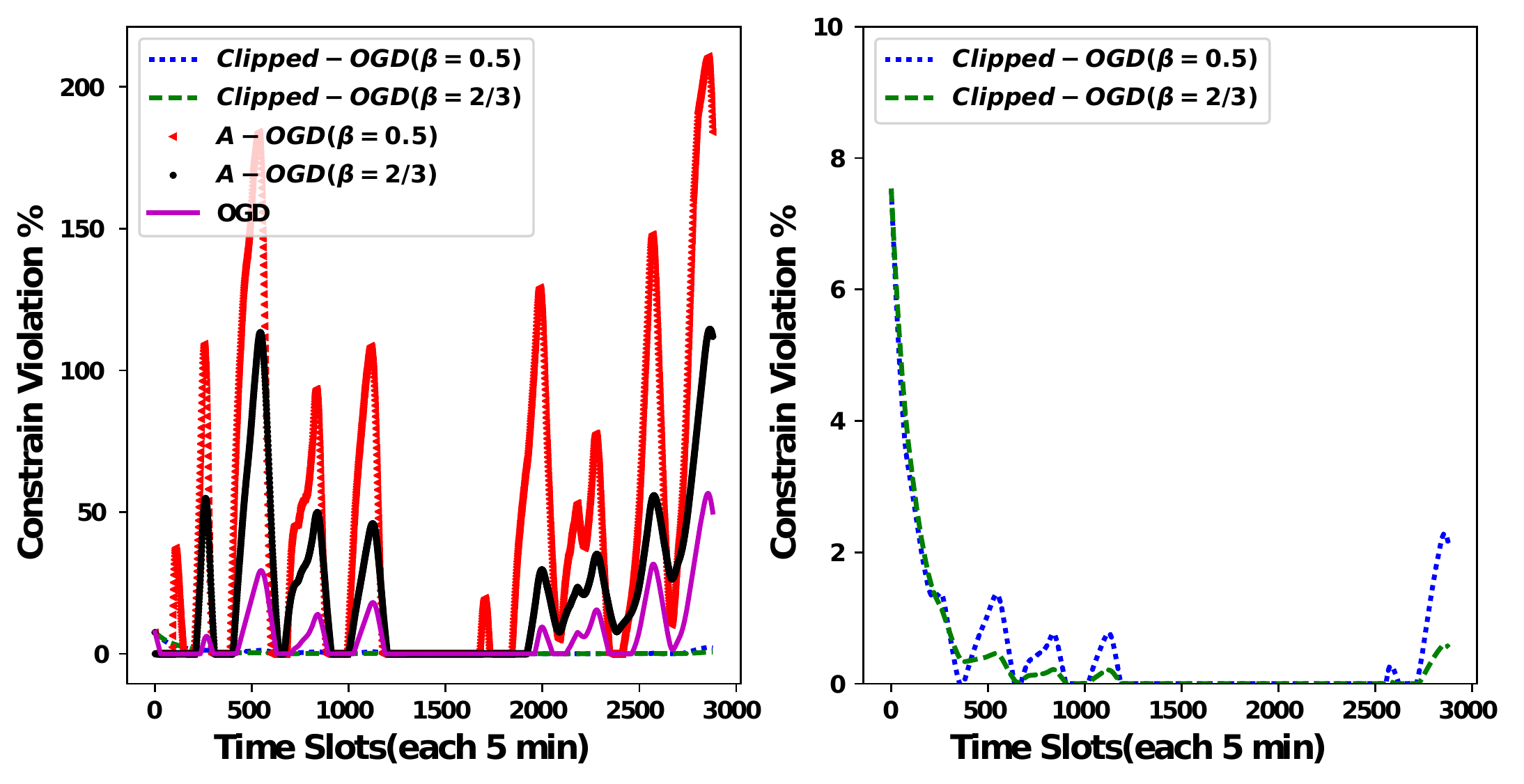}}
  \hspace{.1in}
  \subfigure[]{
    \includegraphics[height=3.1cm]{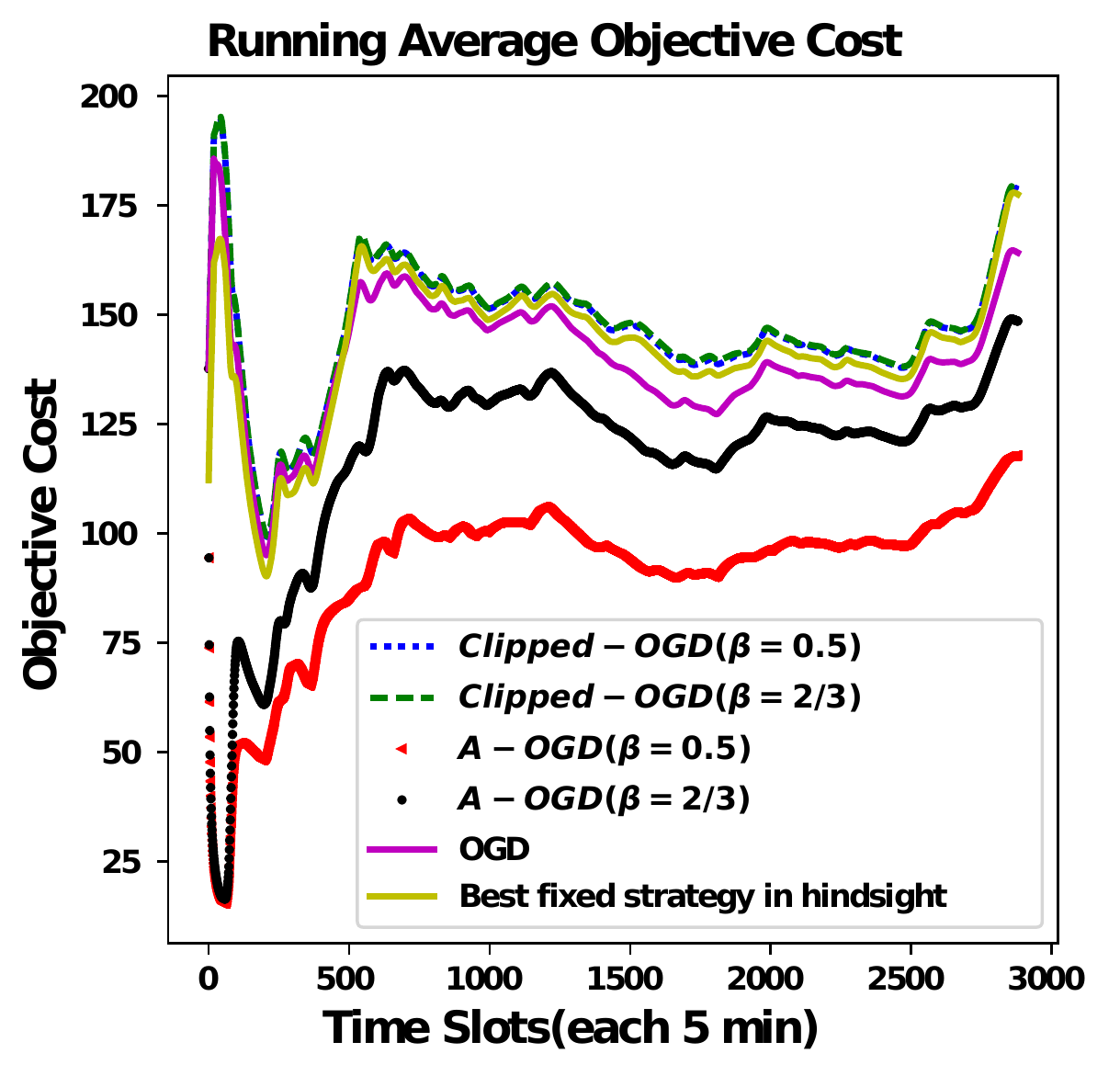}}
  \caption{Economic Dispatch.
           (a): Power Demand Trajectory. 
           (b): Constraint Violation for each time
           step. All of the previous algorithms incurred substantial
           constraint violations. The figure on the right shows the
           violations of our algorithm, which are significantly smaller. 
           (c): Running Average of the Objective Loss}
  \label{fig::ed_obj_con} 
\vskip 0.in
\end{figure}

In this section, we test the performance of the algorithms including
OGD \cite{mahdavi2012trading},
A-OGD \cite{jenatton2016adaptive}, Clipped-OGD (this chapter),
and our proposed algorithm strongly convex case (Our-strong).
Throughout the experiments, our algorithm has the following fixed parameters:
$\alpha = 0.5$, $\sigma =\frac{(m+1)G^2}{2(1-\alpha)}$, $\eta = \frac{1}{T^\beta G\sqrt{R(m+1)}}$.
In order to better show the result of the constraint violation trajectories,
we aggregate all the constraints as a single one
by using $g(\theta_t) = \max_{i\in \{1,...,m\}}g_i(\theta_t)$ as done in \cite{mahdavi2012trading}.

\subsection{A Toy Experiment}

For illustration purposes, we solve the following 2-D toy experiment
with $\theta = [\theta_1,\theta_2]^T$:
\begin{equation*}
\begin{array}{ll}
\min \sum\limits_{t=1}^T c_t^\top\theta,
&s.t.\left|\theta_1\right|+\left|\theta_2\right|-1\le 0.
\end{array}
\end{equation*}
where the constraint is the $\ell_1$-norm constraint.
The vector $c_t$ is generated from a uniform
random vector over $[0,1.2]\times [0,1]$ which is rescaled to have
norm $1$. This leads to slightly average cost on the on the first coordinate.
The offline solutions for different $T$ are obtained by CVXPY \cite{cvxpy}.

All algorithms are run up to $T = 20000$ and are averaged over 10 random sequences of $\{c_t\}_{t=1}^T$.
Since the main goal here is to compare the variables' trajectories generated by different algorithms,
the results for different $T$ are in the Appendix for space purposes.
Fig.~\ref{fig::toy_traj} shows these trajectories for one realization with $T = 8000$.
The blue star is the optimal point's position. 

From Fig.~\ref{fig::toy_traj} we can see that the trajectories generated by Clipped-OGD
follows the boundary very tightly until reaching the optimal point.
This can be explained by the Lemma \ref{lem:bound_step} which shows that 
the constraint violation for single step is also upper bounded.
For the OGD, the trajectory oscillates widely around the boundary of
the true constraint. 
For the A-OGD, its trajectory in Fig.~\ref{fig::toy_traj} violates the
constraint most of the time,
and this violation actually contributes to the lower objective regret shown in the Appendix.

\subsection{Doubly-Stochastic Matrices}

We also test the algorithms for approximation by doubly-stochastic matrices,
as in \cite{jenatton2016adaptive}:
\begin{equation}
\begin{array}{llll}
\min \sum\limits_{t=1}^T \frac{1}{2}\left\|Y_t-X\right\|_F^2
&s.t.\quad X\textbf{1} = \textbf{1},
     & X^T\textbf{1} = \textbf{1},
     & X_{ij}\ge 0.
\end{array}
\end{equation}
where $X\in \mathbb{R}^{d \times d}$ is the matrix variable, $\textbf{1}$ is the vector whose elements are all 1,
and matrix $Y_t$ is the permutation matrix which is randomly generated. 

After changing the equality constraints into inequality ones
(e.g.,$X\textbf{1} = \textbf{1}$ into $X\textbf{1} \ge \textbf{1}$ and $X\textbf{1} \le \textbf{1}$),
we run the algorithms with different T up to $T = 20000$ for 10 different random sequences of $\{Y_t\}_{t=1}^T$.
Since the objective function is strongly convex
with parameter $H_1=1$, we also include our designed strongly convex algorithm as another comparison.
The offline optimal solutions are obtained by CVXPY \cite{cvxpy}.

The mean results for both constraint violation and objective regret
are shown in Fig.~\ref{fig::doubly_obj_con}.
From the result we can see that,
for our designed strongly convex algorithm Our-Strong, its result is around the best ones in
not only the clipped constraint violation, 
but the objective regret. 
For our most-balanced convex case algorithm Clipped-OGD with 
$\beta = 0.5$, although its clipped constraint violation is relatively bigger than A-OGD,
it also becomes quite flat quickly, which means the algorithm quickly converges to a feasible solution.

\subsection{Economic Dispatch in Power Systems}
This example is adapted from \cite{li2018online} and \cite{senthil2010economic},
which considers the problem of power dispatch. That is, at each time step $t$,
we try to minimize the power generation cost $c_i(\theta_{t,i})$ for each generator $i$ while maintaining the
power balance $\sum\limits_{i=1}^n \theta_{t,i} = d_t$, where $d_t$ is the power demand at time $t$.
Also, each power generator produces an emission level $E_i(\theta_{t,i})$.
To bound the emissions, we impose the constraint $\sum\limits_{i=1}^n E_i(\theta_{t,i})\le E_{max}$. 
In addition to requiring this constraint to be satisfied on
average, 
we also require bounded constraint violations at each time step. The
problem is formally stated as:
\begin{equation*}
\begin{array}{lll}
\min \sum\limits_{t=1}^T \Big(\sum\limits_{i=1}^n c_i(\theta_{t,i})+\xi (\sum\limits_{i=1}^n \theta_{t,i}-d_t)^2 \Big),
&s.t. \quad \sum\limits_{i=1}^n E_i(t,i) \le E_{max},
     & 0\le \theta_{t,i} \le \theta_{i,max}.
\end{array}
\end{equation*}
where the second constraint is from the fact that each generator has the power generation limit.

In this example, we use three generators. We define the cost and
emission functions according to \cite{senthil2010economic} and
\cite{li2018online} as $c_i(\theta_{t,i}) = 0.5a_i \theta_{t,i}^2+b_i \theta_{t,i}$,
and $E_i = d_i \theta_{t,i}^2+e_i\theta_{t,i}$, respectively.
The parameters are: $a_1 = 0.2, a_2 = 0.12, a_3 = 0.14$, $b_1 = 1.5, b_2 = 1, b_3 = 0.6$, $d_1 = 0.26, d_2 = 0.38, d_3 = 0.37$,
$E_{max} = 100$, $\xi = 0.5$, and $\theta_{1,max} = 20, \theta_{2,max} = 15, \theta_{3,max} = 18$.
The demand $d_t$ is adapted from real-world 5-minute interval demand data between 04/24/2018 and 05/03/2018
\footnote{https://www.iso-ne.com/isoexpress/web/reports/load-and-demand}, which is shown in Fig.~\ref{fig::ed_obj_con}(a).
The offline optimal solution or best fixed strategy in hindsight is
obtained by an implementation of SAGA \cite{defazio2014saga}. 
The constraint violation for each time step is shown in Fig.~\ref{fig::ed_obj_con}(b),
and the running average objective cost is shown in Fig.~\ref{fig::ed_obj_con}(c).
From these results we can see that our algorithm has very small
constraint violation for each time step, 
which is desired by the requirement. Furthermore, our objective costs
are very close to the best fixed strategy.

\section{Extension to Dynamic OCO with Long-term Constraint}
\label{sec:dynamic-oco-long-term}

In this section, we extend the Algorithm \ref{alg::convex-long-term}
to solve the general time-dependent online resource allocation problems.

Let us use the long-term budget allocation problem solved in \cite{liakopoulos2019cautious} as an example,
and assume that the per time step budget constraint is $g_t(\theta)\le b_t$,
where $b_t$ is the budget at time step $t$.
Since we are usually given the total budget $b$ over $T$ time steps and
have no idea on how to allocate it,
we could set per time step budget constraint being equal to $g_t(\theta)\le b/T$.
The OCO with long-term constraint algorithm
can dynamically allocate the per time step budget usage
and make sure the budget is satisfied on average
as of the result $\sum\limits_{t=1}^T g_t(\theta_t)-b\le o(T)$.
As mentioned in the previous section, 
to solve the problem of the increasing difficulty in satisfying all the constraints
$g_t(\theta^*)-b_t\le 0, t=\{1,2,\dots,T\}$ occurred in previous algorithms,
\cite{liakopoulos2019cautious} used another comparator
$\hat{\theta}=\argmin_{\theta\in \Theta_K}\sum\limits_{t=1}^T f_t(\theta)$,
where $\Theta_K = \{\theta\in\cS_0: \sum\limits_{i=t}^{t+K-1}g_i(\theta)\le 0, 1\le t\le T-K+1\}$,
$\cS_0$ is the fixed convex set, and $K$ is a user-determined parameter.

However, as discussed at the beginning of this chapter,
the constraint set $\Theta_K$ is not appropriate
in other resource allocation problems such as the job scheduling and rates of failure allocation (the constraint violation itself),
since many applications' long-term constraint cannot be simply added together.
In general, there are three types of time-dependent long-term constraint:
\begin{enumerate}
 \item $\sum\limits_{t=1}^Tg_t(\theta_t)$, sum of the constraint functions, ideal for non-causal constraint such as budget one,
which is used in \cite{liakopoulos2019cautious,yu2017online,chen2017online}.
\item
  $Q_T$, where $Q_t = [Q_{t-1} + g_t(\theta_t)]_+$, $t=1,2,\dots, T$
  and $Q_0 = 0$, which considers the causality restriction when adding the constraints.
  For example, the queuing/job scheduling constraint as mentioned before. 
  However, previous works dealing with queuing type long-term constraint such as \cite{yu2017online,chen2017online}
  usually use $\sum\limits_{t=1}^Tg_t(\theta_t)$, which is inappropriate.
\item
  $\sum\limits_{t=1}^T[g_t(\theta_t)]_+$, cumulative constraint, which only considers the violation part
and is ideal for the long-term failure rate constraint like mistake error.
\end{enumerate}

This section's goal is to enable our proposed algorithms to apply to different types of 
time-dependent long-term constraint problems
by bounding the $\sum\limits_{t=1}^T[g_t(\theta_t)]_+$,
since bounding the 3rd type implies the bound for the other two types.

Since the constraint set $\Theta_K$ cannot be used when bounding $\sum\limits_{t=1}^T[g_t(\theta_t)]_+$, 
we need a new way to solve the \emph{loose regret} due to
the problem of
the increasing difficulty in satisfying all the constraints.
As discussed in the previous chapters, another tighter performance metric used in online learning is called \emph{dynamic regret},
which measures the difference of the cumulative loss 
against a comparison sequence, $z_1,\ldots,z_T\in \cS$:
\begin{equation*}
\cR_d = \sum\limits_{t=1}^T f_t(\theta_t) 
- \sum\limits_{t=1}^T f_t(z_t)
\end{equation*}

For the convex $f_t$, $\cR_d\le O(\sqrt{T(1+V)})$ is obtained by \cite{zhang2018adaptive},
while for the strongly convex or exp-concave $f_t$,
$\cR_d\le \max\{O(\log T), O(\sqrt{TV})\}$ is shown in \cite{yuan2019trading},
where $V$ is the comparator sequence's path-length defined as:
\begin{equation*}
  \sum_{t=2}^{T} \|z_{t}-z_{t-1}\| \le V
\end{equation*}

For the purposes of both solving the problem of the loose bound occurred in \emph{static regret}
and mitigating the generalization issue in using the set $\Theta_K$,
we extend the Algorithm \ref{alg::convex-long-term} to bound the dynamic regret $\cR_d$, where the comparator sequence $z_1^T$
is coming from the set
$V_K(z_1^T) = \{z_1^T\in\cS_0: \sum\limits_{t=2}^T\|z_t-z_{t-1}\|\le V, 
\text{the number of feasible $z_i$ (e.g.,$g_i(z_i)\le 0$) is K}\}$.
This generalizes the comparator set $\Theta_K$
by allowing the changes of the comparator sequence as opposed to a fixed one,
which has a much tighter bound compared to the static regret
and is more appropriate under the changing environments.

The assumptions used in this section are the following:
\begin{itemize}
\item The fixed convex set $\cS_0$ is compact with diameter equal to $D$.
\item Both $f_t$ and $g_t$ are Lipschitz continuous with $\partial_\theta f_t(\theta)\le G$,
and $\partial_\theta g_t(\theta)\le G$.
Since $\cS_0$ is compact, without loss of generality, 
we assume $f_t(\theta)\le F$, $g_t(\theta)\le F$, $\forall \theta\in\cS_0$.
\item The comparator sequence $z_1^T$ coming from 
$V_K(z_1^T) = \{z_1^T\in\cS_0: \sum\limits_{t=2}^T\|z_t-z_{t-1}\|\le V, 
\text{the number of feasible $z_i$ (e.g.,$g_i(z_i)\le 0$) is K}\}$
is not empty.
\end{itemize}
where the first two assumptions are ubiquitous in the online convex optimization.
The 3rd one is used to define the dynamic regret used in this chapter,
which is less restrictive compared to
both the $\Theta_K = \{\theta\in\cS_0: \sum\limits_{i=t}^{t+K-1}g_i(\theta)\le 0, 1\le t\le T-K+1\}$ in \cite{liakopoulos2019cautious}
and the $\cap_t\{\theta\in\cS_0: g_t(\theta)\le 0\}$ in \cite{yuan2018online,yu2017online}.
Since $g_t$ can be generated adversarially, 
it is possible to make $\cap_t\{\theta\in\cS_0: g_t(\theta)\le 0\}$ infeasible
by varying the $g_t$ intentionally.

In order to solve the time-changing long-term constraint $g_t(\theta)$,
we modify the Eq.~\eqref{eq::new long term lagrangian} as:
\begin{equation}
\label{eq:gen_lag}
\cL_t(\theta,\lambda) = f_t(\theta) + \lambda[g_t(\theta)]_+ - \frac{\phi_t}{2}\lambda^2
\end{equation}
Although the analysis in the previous section
can be used to deal with time-changing $g_t(\theta)$,
the results only hold true w.r.t. the very loose \emph{static regret}.

\subsection{Convex Case}

Let us first discuss the update rule and the results associated with the case when $f_t(\theta)$ is convex.

We first change the $\cL_t$ in Eq.~\eqref{eq:gen_lag} 
by replacing the time-dependent parameter $\phi_t$ with $\sigma\eta$ as:
\begin{equation}
\label{eq:conv_lag}
\cL_t(\theta,\lambda) = f_t(\theta) + \lambda[g_t(\theta)]_+ - \frac{\sigma\eta}{2}\lambda^2
\end{equation}

The update rule for $t=1,2,\dots,T$ is
\begin{subequations}
\label{eq:convex_update}
\begin{align}
\label{eq:convex_update_lam}
\lambda_t = \frac{[g_t(\theta_t)]_+}{\sigma\eta}
\end{align}
\begin{align}
\label{eq:convex_update_x}
\theta_{t+1} = \Pi_{\cS_0}\Big(\theta_t-\eta\nabla_\theta\cL_t(\theta_t,\lambda_t)\Big)
\end{align}
\end{subequations}
where $\theta_1$ is initialized in $\cS_0$, and we abuse the sub-gradient notation
to denote a single element of the sub-gradient by $\nabla_\theta\cL_t$.

With the update rule in Eq.~\eqref{eq:convex_update}, we can get the following result:
\begin{theorem}
\label{thm:gen_conv}
For any comparator sequence $z_1^T\in V_K(z_1^T)$, by setting $\sigma = 2G^2$
and $\eta = O(\sqrt{\frac{T-K+1+V}{T}})$, we can bound the $\cR_d$ and $\sum\limits_{t=1}^T([g_t(\theta_t)]_+)^2$ as
\begin{subequations}
\begin{align}
\label{eq:gen_conv_obj}
\cR_d = \sum\limits_{t=1}^T\Big(f_t(\theta_t)-f_t(z_t)\Big) \le O(\sqrt{T(T-K+1+V)})
\end{align}
\begin{align}
\label{eq:gen_conv_const}
\sum\limits_{t=1}^T([g_t(\theta_t)]_+)^2 \le O(\sqrt{T(T-K+1+V)})
\end{align}
\end{subequations}

\end{theorem}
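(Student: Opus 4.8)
The plan is to generalize the static-regret analysis of Lemma~\ref{lem:sumOfLagfunction} and Theorem~\ref{thm::sumOfSquareLongterm} to a moving comparator sequence $z_1^T$, importing the path-length handling from the proof of Theorem~\ref{thm::strongly_regret}. First I would use convexity of $\cL_t(\cdot,\lambda_t)$ together with the non-expansiveness of $\Pi_{\cS_0}$ and the update rule~\eqref{eq:convex_update_x} to obtain, for each $t$,
\[
\cL_t(\theta_t,\lambda_t)-\cL_t(z_t,\lambda_t) \le \frac{1}{2\eta}\Big(\|z_t-\theta_t\|^2-\|z_t-\theta_{t+1}\|^2\Big)+\frac{\eta}{2}\|\nabla_\theta\cL_t(\theta_t,\lambda_t)\|^2,
\]
and then bound the gradient norm by $\|\nabla_\theta\cL_t(\theta_t,\lambda_t)\|^2\le 2G^2(1+\lambda_t^2)$ using $(a+b)^2\le 2(a^2+b^2)$ and the Lipschitz assumptions on $f_t,g_t$.

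The dynamic part enters when summing the telescoping term: since $z_t$ changes, I would split $\|z_t-\theta_{t+1}\|^2 \ge \|z_{t+1}-\theta_{t+1}\|^2 - 2D\|z_{t+1}-z_t\|$ exactly as in~\eqref{eq::strongly_lower_with_var}, so the sum telescopes up to a path-length penalty $\frac{D}{\eta}\sum_t\|z_{t+1}-z_t\|\le \frac{DV}{\eta}$ (setting $z_{T+1}=z_T$) and a boundary term $\frac{D^2}{2\eta}$. Substituting $\lambda_t=[g_t(\theta_t)]_+/(\sigma\eta)$, expanding $\cL_t(\theta_t,\lambda_t)-\cL_t(z_t,\lambda_t)=f_t(\theta_t)-f_t(z_t)+\lambda_t([g_t(\theta_t)]_+-[g_t(z_t)]_+)$, and choosing $\sigma=2G^2$ so that the $\theta_t$-constraint contributions combine, I expect to reach
\[
\sum_{t=1}^T\big(f_t(\theta_t)-f_t(z_t)\big)+\frac{1}{2\sigma\eta}\sum_{t=1}^T([g_t(\theta_t)]_+)^2 \le \frac{D^2+2DV}{2\eta}+\eta G^2 T + \sum_{t=1}^T\lambda_t[g_t(z_t)]_+.
\]

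The main obstacle, and the step that exploits the new comparator set $V_K(z_1^T)$, is controlling $\sum_t\lambda_t[g_t(z_t)]_+$. Here I would use that by definition of $V_K$ exactly $K$ of the $z_t$ are feasible, so $[g_t(z_t)]_+=0$ at those indices, while at the remaining $T-K$ indices $[g_t(z_t)]_+\le F$. Applying Young's inequality $F[g_t(\theta_t)]_+\le \tfrac14([g_t(\theta_t)]_+)^2+F^2$ on each infeasible index gives $\sum_t\lambda_t[g_t(z_t)]_+\le \tfrac{1}{4\sigma\eta}\sum_t([g_t(\theta_t)]_+)^2+\tfrac{(T-K)F^2}{\sigma\eta}$, and absorbing the square term into the left-hand side leaves $\tfrac{1}{4\sigma\eta}\sum_t([g_t(\theta_t)]_+)^2$ intact while adding an $O((T-K)/\eta)$ contribution to the right.

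Finally I would collect the right-hand side as $O\!\big((T-K+1+V)/\eta\big)+O(\eta T)$ and balance it with $\eta=O\!\big(\sqrt{(T-K+1+V)/T}\big)$, which renders both terms $O(\sqrt{T(T-K+1+V)})$. Dropping the nonnegative constraint sum yields~\eqref{eq:gen_conv_obj}; using the crude lower bound $\sum_t(f_t(\theta_t)-f_t(z_t))\ge -FT$ together with the same balanced $\eta$ turns the retained constraint term into~\eqref{eq:gen_conv_const}. I expect the Young's-inequality absorption against the $V_K$ feasibility count to be the only genuinely new ingredient; the telescoping, gradient bound, and path-length splitting all parallel the arguments already established in the static and strongly convex dynamic proofs.
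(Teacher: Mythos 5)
Your proposal is correct and follows essentially the same route as the paper's proof: the same projection/convexity per-step inequality, the same gradient bound $\|\nabla_\theta\cL_t(\theta_t,\lambda_t)\|^2\le 2G^2(1+\lambda_t^2)$, the identical path-length splitting of $\|z_t-\theta_{t+1}\|^2$ with $z_{T+1}=z_T$, the same substitution $\lambda_t=[g_t(\theta_t)]_+/(\sigma\eta)$ with $\sigma=2G^2$ yielding the retained coefficient $\tfrac{1}{4G^2\eta}$, the same use of the $V_K$ feasibility count, and the same balancing of $\eta$ plus the $-FT$ lower bound for the constraint conclusion. The only local difference is your treatment of the cross term $\sum_t\lambda_t[g_t(z_t)]_+$ via Young's inequality and absorption into the constraint sum, whereas the paper simply bounds $[g_t(\theta_t)]_+\le F$ inside $\lambda_t$ to obtain $\tfrac{F}{2G^2\eta}\sum_t[g_t(z_t)]_+\le\tfrac{F^2(T-K)}{2G^2\eta}$ directly; both give the same $O((T-K)/\eta)$ contribution and identical final bounds.
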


Please refer to the Appendix for all the omitted proofs in this section.

Theorem \ref{thm:gen_conv} generalizes the results in the previous section
by both varying the comparator sequence and the constraint feasibility.
More specifically, Theorem \ref{thm:gen_conv} recovers the result in previous section
by setting $V = 0$ and $K = T$.

One direct consequence of the above theorem is:
\begin{corollary}
If $T-K = O(T^{1-\epsilon_1})$, $\epsilon_1\in[0,1]$, and 
$V = O(T^{1-\epsilon_2})$, $\epsilon_2\in[0,1]$, then 
\begin{subequations}
\nonumber
\begin{align}
\cR_d = \sum\limits_{t=1}^T\Big(f_t(\theta_t)-f_t(z_t)\Big) \le \max\{O(T^{1-\epsilon_1/2}), O(T^{1-\epsilon_2/2})\}
\end{align}
\begin{align}
\sum\limits_{t=1}^T([g_t(\theta_t)]_+)^2 \le \max\{O(T^{1-\epsilon_1/2}), O(T^{1-\epsilon_2/2})\}
\end{align}
\begin{align}
\sum\limits_{t=1}^T[g_t(\theta_t)]_+ \le \max\{O(T^{1-\epsilon_1/4}), O(T^{1-\epsilon_2/4})\}
\end{align}
\end{subequations}

\end{corollary}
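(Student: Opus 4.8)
The plan is to treat this corollary as a direct specialization of Theorem~\ref{thm:gen_conv}: both of its order bounds are obtained by substituting the growth-rate hypotheses on $T-K$ and $V$ into the two inequalities the theorem already supplies, and the third (cumulative) bound then follows from a single application of the Cauchy--Schwarz inequality. Since we are permitted to assume Theorem~\ref{thm:gen_conv}, no new algorithmic analysis is required; the argument is purely a matter of simplifying the expression $\sqrt{T(T-K+1+V)}$ under the stated assumptions.

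First I would simplify the common factor $T-K+1+V$. Under $T-K = O(T^{1-\epsilon_1})$ and $V = O(T^{1-\epsilon_2})$ with $\epsilon_1,\epsilon_2\in[0,1]$, both exponents $1-\epsilon_1$ and $1-\epsilon_2$ are nonnegative, so each of $T^{1-\epsilon_1}$ and $T^{1-\epsilon_2}$ is at least $1$; hence the additive constant is dominated and $T-K+1+V = O(\max\{T^{1-\epsilon_1},T^{1-\epsilon_2}\})$. Multiplying by $T$ inside the square root and using $\sqrt{\max\{a,b\}}=\max\{\sqrt{a},\sqrt{b}\}$ gives $\sqrt{T(T-K+1+V)} = O(\max\{T^{1-\epsilon_1/2},T^{1-\epsilon_2/2}\})$. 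Plugging this into the two bounds of Theorem~\ref{thm:gen_conv} yields the stated estimates for $\cR_d$ and for $\sum_{t=1}^T([g_t(\theta_t)]_+)^2$ directly.

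For the cumulative-violation bound I would invoke Cauchy--Schwarz in the form $\left(\sum_{t=1}^T[g_t(\theta_t)]_+\right)^2 \le T\sum_{t=1}^T([g_t(\theta_t)]_+)^2$. Substituting the square-cumulative bound just derived gives $\left(\sum_{t=1}^T[g_t(\theta_t)]_+\right)^2 \le T\cdot O(\max\{T^{1-\epsilon_1/2},T^{1-\epsilon_2/2}\}) = O(\max\{T^{2-\epsilon_1/2},T^{2-\epsilon_2/2}\})$, and taking square roots produces $\sum_{t=1}^T[g_t(\theta_t)]_+ \le O(\max\{T^{1-\epsilon_1/4},T^{1-\epsilon_2/4}\})$, as claimed.

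The argument involves no real obstacle; it is routine order arithmetic. The only points requiring a little care are confirming that the constant $1$ is absorbed (which relies precisely on $\epsilon_1,\epsilon_2\le 1$, i.e.\ nonnegative exponents) and correctly pushing the square root through the maximum of the two polynomial terms. One conceptual caveat worth flagging is that, as in Theorem~\ref{thm:gen_conv}, the step size $\eta$ must be tuned using $T-K+1+V$; the corollary therefore inherits the same implicit dependence on (bounds for) $V$ and $K$, and does not by itself remove the need to know these quantities a priori.
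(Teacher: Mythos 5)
Your proposal is correct and takes essentially the same route as the paper: the first two bounds come from substituting the growth rates of $T-K$ and $V$ into the two inequalities of Theorem~\ref{thm:gen_conv}, and the cumulative-violation bound follows from $\bigl(\sum_{t=1}^T[g_t(\theta_t)]_+\bigr)^2 \le T\sum_{t=1}^T([g_t(\theta_t)]_+)^2$, which is exactly the paper's norm inequality $\|x\|_1\le\sqrt{T}\|x\|$ in Cauchy--Schwarz form. Your added observations --- that absorbing the constant $1$ uses $\epsilon_1,\epsilon_2\le 1$, and that the step size $\eta$ still implicitly depends on bounds for $V$ and $K$ --- are accurate but do not change the argument.
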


\begin{proof}
The first two inequalities are due the direct calculation by plugging $T-K = O(T^{1-\epsilon_1})$
and $V = O(T^{1-\epsilon_2})$ into Eq.~\eqref{eq:gen_conv_obj} and \eqref{eq:gen_conv_const}.
The third inequality can be obtained by viewing $[g_t(\theta_t)]_+,t=1,2,\dots,T$ as a vector
and using the vector norm inequality $\|x\|_1\le \sqrt{T}\|x\|$.
\end{proof}

The above Corollary generalizes the result in \cite{liakopoulos2019cautious}
by considering the dynamic regret w.r.t $z_1^T$ and more general long-term constraint bound.

\subsection{Strongly Convex Case}

In this case, we use the Augmented Lagrangian function $\cL_t$ defined in Eq.~\eqref{eq:gen_lag},
which is rewritten here as:
\begin{equation*}
\cL_t(\theta,\lambda) = f_t(\theta) + \lambda[g_t(\theta)]_+ - \frac{\phi_t}{2}\lambda^2
\end{equation*}

The update rule for $t=1,2,\dots,T$ is
\begin{subequations}
\label{eq:s-convex_update}
\begin{align}
\label{eq:s-convex_update_lam}
\lambda_t = \frac{[g_t(\theta_t)]_+}{\phi_t}
\end{align}
\begin{align}
\label{eq:s-convex_update_x}
\theta_{t+1} = \Pi_{\cS_0}\Big(\theta_t-\eta_t\nabla_\theta\cL_t(\theta_t,\lambda_t)\Big)
\end{align}
\end{subequations}
where $\theta_1$ is initialized in $\cS_0$, and we abuse the sub-gradient notation
to denote a single element of the sub-gradient by $\nabla_\theta\cL_t$.

Compared to the update rule in Eq.~\eqref{eq:convex_update}, the one
in strongly convex case has time-dependent parameters like $\phi_t$ and $\eta_t$.
This is aligned with the parameter setup in previous works
like \cite{hazan2007logarithmic,yuan2018online,yuan2019trading}.

The update rule in Eq.~\eqref{eq:s-convex_update} results in the following theorem:
\begin{theorem}
\label{thm:s-conv}
By using $\phi_t = 2G^2\eta_t$, $\eta_t = \frac{1-\gamma}{\ell(1-\gamma^t)}$,
and $\gamma = 1-\frac{1}{2}\sqrt{\frac{\max\{V+T-K,\log^2T/T\}}{(D+1)T}}$,
for $f_t$ with strong convexity parameter $\ell$ and any comparator sequence $z_1^T\in V_K(z_1^T)$, the following results hold:
\begin{subequations}
\begin{align}
\label{thm:s-conv_obj}
\cR_d = \sum\limits_{t=1}^T\Big(f_t(\theta_t)-f_t(z_t)\Big) \le \max\{O(\sqrt{T(T-K+V)}),O(\log T)\}
\end{align}
\begin{align}
\label{thm:gen_conv_const}
\sum\limits_{t=1}^T[g_t(\theta_t)]_+ \le \max\{O(T^{3/4}(T-K+V)^{1/4}),O(\sqrt{T\log T})\}
\end{align}
\end{subequations}

\end{theorem}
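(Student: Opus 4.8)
The plan is to fuse the dynamic-regret step-size machinery of Theorem~\ref{thm::strongly_regret} with the augmented-Lagrangian argument of Theorem~\ref{thm::stronglyconvex}, adapting both to the moving comparator sequence $z_1^T \in V_K(z_1^T)$. First I would establish a single-step inequality. Since $\lambda_t$ in \eqref{eq:s-convex_update_lam} is exactly the maximizer of the concave map $\lambda \mapsto \cL_t(\theta_t,\lambda)$, concavity gives $\cL_t(\theta_t,\lambda) \le \cL_t(\theta_t,\lambda_t)$ for every fixed $\lambda$, while $\ell$-strong convexity of $\cL_t(\cdot,\lambda_t)$ (inherited from $f_t$, as $\lambda_t[g_t(\cdot)]_+$ is convex) yields $\cL_t(\theta_t,\lambda_t) - \cL_t(z_t,\lambda_t) \le \nabla_\theta\cL_t(\theta_t,\lambda_t)^\top(\theta_t-z_t) - \tfrac{\ell}{2}\|\theta_t-z_t\|^2$. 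Bounding the inner product via the projection step \eqref{eq:s-convex_update_x}, using $\|\nabla_\theta\cL_t(\theta_t,\lambda_t)\|^2 \le 2G^2(1+\lambda_t^2)$, and exploiting $\phi_t = 2G^2\eta_t$ to cancel the $\lambda_t^2$ terms, I expect a per-step bound of the form $f_t(\theta_t)-f_t(z_t) \le \tfrac{1}{2\eta_t}(\|\theta_t-z_t\|^2 - \|\theta_{t+1}-z_t\|^2) + \eta_t G^2 - \tfrac{\ell}{2}\|\theta_t-z_t\|^2 - \lambda[g_t(\theta_t)]_+ + \tfrac{\phi_t}{2}\lambda^2 + \lambda_t[g_t(z_t)]_+$, keeping $\lambda$ as a free parameter.

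Next I would sum over $t$. The path-length trick $\|\theta_{t+1}-z_t\|^2 \ge \|\theta_{t+1}-z_{t+1}\|^2 - 2D\|z_{t+1}-z_t\|$ (with $z_{T+1}=z_T$) converts the telescoping distances into $\|\theta_{t+1}-z_{t+1}\|^2$ plus a $\tfrac{D}{\eta_t}\|z_{t+1}-z_t\|$ term. With $\eta_t=\tfrac{1-\gamma}{\ell(1-\gamma^t)}$ the identities $\tfrac{1}{\eta_1}-\ell=0$ and $\tfrac{1}{\eta_t}-\tfrac{1}{\eta_{t-1}}-\ell=\ell(\gamma^{t-1}-1)\le 0$ — exactly as in the proof of Theorem~\ref{thm::strongly_regret} — force the telescoped $\|\theta_t-z_t\|^2$ contribution to be nonpositive, so it can be dropped. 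The residual terms are $\tfrac{D\ell}{1-\gamma}V$ from the path length (using $\tfrac{1}{\eta_t}\le\tfrac{\ell}{1-\gamma}$), $G^2\sum_t\eta_t$ bounded through Lemma~\ref{lem:integral}, and the new cross term $\sum_t\lambda_t[g_t(z_t)]_+$. Plugging in the stated $\gamma$ and splitting on whether $V+T-K$ exceeds $\log^2 T/T$ reproduces the two regimes; taking $\lambda=0$ yields $\cR_d \le \max\{O(\sqrt{T(T-K+V)}),O(\log T)\}$.

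For the constraint bound I would instead retain $\lambda$, rearrange the summed inequality into $\lambda\sum_t[g_t(\theta_t)]_+ - \tfrac{\lambda^2}{2}\sum_t\phi_t \le M$, lower-bound $\sum_t(f_t(\theta_t)-f_t(z_t)) \ge -O(T)$ by Lipschitz continuity and boundedness of $\cS_0$, and set $\lambda=\tfrac{\sum_t[g_t(\theta_t)]_+}{\sum_t\phi_t}$ to obtain $\tfrac{(\sum_t[g_t(\theta_t)]_+)^2}{2\sum_t\phi_t}\le M$ with $M=O(\sqrt{T(T-K+V)})+O(T)$. Since $\sum_t\phi_t = 2G^2\sum_t\eta_t$ equals $O(\sqrt{T(T-K+V)})$ in the first regime and $O(\log T)$ in the second, taking the square root recovers $\sum_t[g_t(\theta_t)]_+ \le \max\{O(T^{3/4}(T-K+V)^{1/4}),O(\sqrt{T\log T})\}$, which is \eqref{thm:gen_conv_const}.

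The main obstacle I anticipate is the cross term $\sum_t\lambda_t[g_t(z_t)]_+$, which is new relative to Theorem~\ref{thm::stronglyconvex} because the comparators $z_t$ need not all be feasible. The definition of $V_K$ guarantees this term vanishes on the $K$ feasible steps, and on the $T-K$ infeasible steps I would bound $\lambda_t[g_t(z_t)]_+ \le F^2/\phi_t \le \tfrac{F^2\ell}{2G^2(1-\gamma)}$, for a total of $O(\tfrac{T-K}{1-\gamma})$. Verifying that this quantity is dominated by $\sqrt{T(T-K+V)}$ in the first regime and by $O(\log T)$ in the second under the chosen $\gamma$ — using elementary estimates such as $\tfrac{T-K}{\sqrt{V+T-K}} \le \sqrt{V+T-K}$ and $(1-\gamma)\tfrac{\log(1-\gamma)}{\log\gamma} = O(\log\tfrac{1}{1-\gamma})$ — is the delicate bookkeeping step on which the whole argument hinges.
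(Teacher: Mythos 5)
Your proposal is correct and follows essentially the same route as the paper's own proof: the same combination of the projection inequality with strong convexity in $\theta$ and concavity in $\lambda$, the same cancellation of the $\lambda_t^2$ term via $\phi_t = 2G^2\eta_t$, the same telescoping identities $\tfrac{1}{\eta_1}-\ell=0$ and $\tfrac{1}{\eta_t}-\tfrac{1}{\eta_{t-1}}-\ell\le 0$, the same treatment of the cross term $\lambda_t[g_t(z_t)]_+\le F^2/\phi_t$ on the $T-K$ infeasible steps of $V_K(z_1^T)$, and the same choice $\lambda=\tfrac{\sum_t[g_t(\theta_t)]_+}{\sum_t\phi_t}$ with the lower bound $\cR_d\ge -FT$ for the constraint bound. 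The only cosmetic deviations are that you take $\lambda=0$ for the regret bound where the paper drops the nonnegative quadratic after maximizing, and your regime-splitting bookkeeping (e.g., $\tfrac{T-K}{\sqrt{V+T-K}}\le\sqrt{V+T-K}$) is carried out explicitly where the paper absorbs it into the bound on $\tfrac{1}{1-\gamma}$ and the cited estimate $\sum_{t=1}^T\tfrac{1}{1-\gamma^t}\le O(T)$.
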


Compared to the result in convex case, both the $\cR_d$ and the $\sum\limits_{t=1}^T[g_t(x_t)]_+$
are improved. 
The improvement in terms of the order complexity only happens when
$K = T$ and $V = o(1)$ (e.g., $V = 0$). 
For the other cases, it also reduces the additive value by about $\sqrt{T}$.

\section{Conclusion}
\label{sec:oco-long-term-conclusion}

In this chapter, we propose algorithms for OCO with both convex and strongly convex objective functions.
By applying different update strategies that utilize a modified augmented Lagrangian function,
they can solve OCO with a squared/clipped long-term constraints requirement.
The algorithm for general convex case provides the useful bounds for
both  the long-term constraint violation and the constraint violation
at each time step.
Furthermore, the bounds for the strongly convex case is an improvement compared with the previous efforts in the literature.
Experiments show that our algorithms can follow the constraint boundary tightly and
have relatively smaller clipped long-term constraint violation with reasonably low objective regret.

Furthermore, we extend the algorithms to solve the time-dependent long-term constraint
problem with a variant of \emph{dynamic regret} guarantee,
which can be applied to more general resource allocation problems than the previous algorithms.

\chapter{Conclusion}
\label{chap:conclusion}

Tracking the changes of the environments is a key difference between Online Convex Optimization (OCO) algorithms 
and the batch processing based approaches, 
since the sequential data/observation tends to be shifting over time. 
In this thesis,
we develop different OCO algorithms
for various problems
to enable the decision making on-the-fly with better adaptivity to the changing environments.

One way to have better adaptivity is to examine the proposed algorithms' performance by the notion of the \emph{dynamic regret},
which compares the algorithm's cumulative loss 
against that incurred by a comparison sequence.
For the general exp-concave or strongly convex problems,
we propose discounted Online Newton algorithm to
have dynamic regret guarantee $\cR_d \le\max\{O(\log T),O(\sqrt{TV})\}$,
which is inspired by the forgetting factor used in the Recursive Least Squares algorithms.
Moreover, the trade-off between static and dynamic regret
is analyzed for both Online Least-Squares and its generalization of
strongly convex and smooth objective.
To obtain more computationally efficient algorithms, 
 we also propose a novel gradient descent step size rule for
 strongly convex functions,
 which recovers the dynamic regret bounds
 described above. 

Another way to deal with changing environments is to upper bound the 
notion of \emph{adaptive regret}.
Previous literature has developed algorithms 
for the online convex problems by running a pool of algorithms in parallel,
resulting in
the unwanted increase in both the running time and the implementation complexity.
To avoid these problems,
we propose a new algorithm with same performance guarantee,
which is the exponentiated gradient
descent algorithm with a mixture of fixed-share step.
We show that this algorithm can be applied 
to the online Principal Component Analysis (PCA) and its extension of variance minimization under changing environments.

For the constrained OCO algorithms, a projection operator is almost unavoidable.
When the constrain set is complex, 
such operation is very time-consuming and prevents the algorithms from the true online implementation.
To accelerate the OCO algorithms' update,
our third part of the thesis propose algorithms to replace the true desired projection 
with an approximate closed-form one.
Although the approximation may cause constraint violation for some time steps,
sub-linear cumulative constraint violation is guaranteed to achieve the constraint satisfaction on average.
Furthermore, single step constraint violation is bounded to avoid undesired large step violations.
Finally,
we extend our proposed algorithms' idea to solve the more general time-dependent online resource allocation problems 
with performance guarantee by a variant of \emph{dynamic regret}.

\bibliographystyle{hunsrtMycopy} 
\bibliography{OCO_dynamic}

\appendix
\chapter{Trading-Off Static and Dynamic Regret in Online Least-Squares
and Beyond}
\label{sec:appdx_tradeoff}

\noindent
\paragraph{Proof of Lemma~\ref{lem:gen_prob_var_path}:}
\begin{proof}

\noindent
The proof follows the analysis in Chapter 2 of \cite{nesterov2013introductory}.

\noindent
From the strong convexity of $f_t(\theta)$, we have 
\small
\begin{equation}
\label{eq::gen_strongly_ineq}
\begin{array}{ll}
f_t(\theta) &\ge f_t(\theta_t)+\nabla f_t(\theta_t)^\top(\theta-\theta_t)+\frac{\ell}{2}\left\|\theta-\theta_t\right\|^2\\
&= f_t(\theta_t)+\nabla f_t(\theta_t)^\top(\theta-\theta_t) +\nabla f_t(\theta_t)^\top(\theta_{t+1}-\theta_t) 
-\nabla f_t(\theta_t)^\top(\theta_{t+1}-\theta_t)+\frac{\ell}{2}\left\|\theta-\theta_t\right\|^2 \\
&= f_t(\theta_t) +\nabla f_t(\theta_t)^\top(\theta_{t+1}-\theta_t) 
+\nabla f_t(\theta_t)^\top(\theta-\theta_{t+1})+\frac{\ell}{2}\left\|\theta-\theta_t\right\|^2
\end{array}
\end{equation}
\normalsize

\noindent
According to the optimality condition of the update rule in Eq.\eqref{eq::gen_prob_update},
we have $\big(\nabla f_t(\theta_t)+\frac{1}{\eta_t}(\theta_{t+1}-\theta_t)\big)^\top(\theta-\theta_{t+1})\ge 0,\forall \theta\in\cS$,
which is $\nabla f_t(\theta_t)^\top(\theta-\theta_{t+1})\ge \frac{1}{\eta_t}(\theta_{t}-\theta_{t+1})^\top(\theta-\theta_{t+1})$.
Then combine with Eq.\eqref{eq::gen_strongly_ineq}, we have 
\begin{equation}
\label{eq::gen_strongly_final}
\begin{array}{ll}
f_t(\theta) &\ge f_t(\theta_t) +\nabla f_t(\theta_t)^\top(\theta_{t+1}-\theta_t) 
+\frac{1}{\eta_t}(\theta_{t}-\theta_{t+1})^\top(\theta-\theta_{t+1}) +\frac{\ell}{2}\left\|\theta-\theta_t\right\|^2
\end{array}
\end{equation}

\noindent
From the smoothness of $f_t(\theta)$, we have 
$f_t(\theta_{t+1}) \le f_t(\theta_t)+\nabla f_t(\theta_t)^\top(\theta_{t+1}-\theta_t)+\frac{u}{2}\left\|\theta_{t+1}-\theta_t\right\|^2$.
Since $\frac{1}{\eta_t} = \frac{\ell(\gamma-\gamma^t)+u(1-\gamma)}{1-\gamma}\ge u$,
we have $f_t(\theta_t)+\nabla f_t(\theta_t)^\top(\theta_{t+1}-\theta_t) 
\ge f_t(\theta_{t+1}) - \frac{1}{2\eta_t}\left\|\theta_{t+1}-\theta_t\right\|^2$.
Then combined with inequality \eqref{eq::gen_strongly_final}, we have 
\begin{equation}
\begin{array}{ll}
f_t(\theta) &\ge f_t(\theta_{t+1})-\frac{1}{2\eta_t}\left\|\theta_{t+1}-\theta_t\right\|^2 
+ \frac{1}{\eta_t}(\theta_{t}-\theta_{t+1})^\top(\theta-\theta_{t+1})
+\frac{\ell}{2}\left\|\theta-\theta_t\right\|^2\\
& = f_t(\theta_{t+1})+\frac{1}{2\eta_t}\left\|\theta_{t+1}-\theta_t\right\|^2 
+ \frac{1}{\eta_t}(\theta_{t}-\theta_{t+1})^\top(\theta-\theta_{t})
+\frac{\ell}{2}\left\|\theta-\theta_t\right\|^2
\end{array}
\end{equation}

\noindent
By setting $\theta = \theta_t^*$ and using the fact $f_t(\theta_t^*)\le f_t(\theta_{t+1})$,
we reformulate the above inequality as:
\begin{equation}
\begin{array}{l}
(\theta_{t}-\theta_{t+1})^\top(\theta_t^*-\theta_{t}) 
\le -\frac{\ell(1-\gamma)}{2\ell(\gamma-\gamma^t)+2u(1-\gamma)}\left\|\theta_t^*-\theta_t\right\|^2
-\frac{1}{2}\left\|\theta_{t+1}-\theta_t\right\|^2
\end{array}
\end{equation}

\noindent
Since $\left\|\theta_{t+1} -\theta_t^*\right\|^2 = \left\|\theta_{t+1}-\theta_t+\theta_t -\theta_t^*\right\|^2$,
we have
\begin{equation}
\begin{array}{ll}
\left\|\theta_{t+1} -\theta_t^*\right\|^2 &= \left\|\theta_{t+1} -\theta_t\right\|^2  + \left\|\theta_{t} -\theta_t^*\right\|^2
+ 2(\theta_{t}-\theta_{t+1})^\top(\theta_t^*-\theta_{t}) \\
& \le \big(1-\frac{\ell(1-\gamma)}{\ell(\gamma-\gamma^t)+u(1-\gamma)}\big)\left\|\theta_{t} -\theta_t^*\right\|^2 \\
& \le \big(1-\frac{\ell(1-\gamma)}{\ell\gamma+u(1-\gamma)}\big)\left\|\theta_{t} -\theta_t^*\right\|^2
\end{array}
\end{equation}

\end{proof}

\noindent
\paragraph{Proof of Theorem~\ref{thm::gen_prob_dynamic_regret}:}
\begin{proof}

\noindent
We use the same steps as in the previous section.
First, according to the Mean Value Theorem, 
we have 
$f_t(\theta_t)-f_t(\theta_t^*) = \nabla f_t(x)^\top(\theta_t-\theta_t^*)
\le \left\|\nabla f_t(x)\right\|\left\|\theta_t-\theta_t^*\right\|$,
where $x\in \{v| v = \delta \theta_t + (1-\delta)\theta_t^*,\delta\in[0,1]\}$.
Due to the assumption on the upper bound of the norm of the gradient, we have 
$f_t(\theta_t)-f_t(\theta_t^*) \le G\left\|\theta_t-\theta_t^*\right\|$.
As a result, 
$\sum\limits_{t=1}^T\big(f_t(\theta_t)-f_t(\theta_t^*)\big)\le G\sum\limits_{t=1}^T\left\|\theta_t-\theta_t^*\right\|$.

\noindent
Now we need to upper bound the term $\sum\limits_{t=1}^T\left\|\theta_t-\theta_t^*\right\|$.
$\sum\limits_{t=1}^T\left\|\theta_t-\theta_t^*\right\|$ is equal to $ \left\|\theta_1-\theta_1^*\right\| 
+ \sum\limits_{t=2}^T\left\|\theta_t-\theta_{t-1}^*+\theta_{t-1}^*-\theta_t^*\right\|$,
which is less than 
$\left\|\theta_1-\theta_1^*\right\| + \sum\limits_{t=1}^{T}\left\|\theta_{t+1}-\theta_{t}^*\right\| 
+ \sum\limits_{t=2}^T\left\|\theta_t^*-\theta_{t-1}^*\right\|$.
According to Lemma \ref{lem:gen_prob_var_path},
we have $\sum\limits_{t=1}^{T}\left\|\theta_{t+1}-\theta_{t}^*\right\| 
\le \rho\sum\limits_{t=1}^T\left\|\theta_{t}-\theta_{t}^*\right\|$, 
where $\rho$ is equal to $\sqrt{1-\frac{l(1-\gamma)}{u(1-\gamma)+l\gamma}}$.
Then we have 
$\sum\limits_{t=1}^T\left\|\theta_t-\theta_t^*\right\| \le
\left\|\theta_1-\theta_1^*\right\| + \rho\sum\limits_{t=1}^{T}\left\|\theta_{t}-\theta_{t}^*\right\| 
+ \sum\limits_{t=2}^T\left\|\theta_t^*-\theta_{t-1}^*\right\|$,
which can be reformulated as 
$\sum\limits_{t=1}^T\left\|\theta_t-\theta_t^*\right\| \le
\frac{1}{1-\rho}(\left\|\theta_1-\theta_1^*\right\| +
+ \sum\limits_{t=2}^T\left\|\theta_t^*-\theta_{t-1}^*\right\|)$.

\noindent
$1-\rho = 1-\sqrt{1-\frac{a_0}{b_0}}= \frac{\sqrt{b_0}-\sqrt{b_0-a_0}}{\sqrt{b_0}}$, 
where $a_0 = \ell$ and $b_0 = \frac{\ell\gamma+u(1-\gamma)}{1-\gamma}$.
Thus, $1/(1-\rho) = \frac{\sqrt{b_0}}{\sqrt{b_0}-\sqrt{b_0-a_0}}
= \frac{\sqrt{b_0}(\sqrt{b_0}+\sqrt{b_0-a_0})}{a_0}$.
After plugging in the expression of $1-\gamma = 1/T^{\beta}$,
$1/(1-\rho) = \frac{\sqrt{\ell(T^{\beta}-1)+u}\big(\sqrt{\ell(T^{\beta}-1)+u}+\sqrt{\ell(T^{\beta}-1)+u-\ell}\big)}{\ell}
\le \frac{2\big(\ell(T^{\beta}-1)+u\big)}{\ell} = 2(T^{\beta}-1)+u/\ell$

\noindent
Then $\mathcal{R}_d = \sum\limits_{t=1}^T\big(f_t(\theta_t)-f_t(\theta_t^*)\big)
\le G\frac{1}{1-\rho}\big(\left\|\theta_1-\theta_1^*\right\| +
+ \sum\limits_{t=2}^T\left\|\theta_t^*-\theta_{t-1}^*\right\|\big)
\le G\big(2(T^{\beta}-1)+u/\ell\big)\big(\left\|\theta_1-\theta_1^*\right\| +
+ \sum\limits_{t=2}^T\left\|\theta_t^*-\theta_{t-1}^*\right\|\big)$.

\end{proof}

\paragraph{Proof of Theorem~\ref{thm::gen_prob_static_regret}:}
\begin{proof}

\noindent
The proof follows the similar steps in the proof of Theorem \ref{thm::quad_static_regret}.

\noindent
According to the non-expansive property of the projection operator and the update rule in Eq.\eqref{eq::gen_prob_update},
we have
\begin{equation*}
\begin{array}{ll}
\left\|\theta_{t+1}-\theta^*\right\|^2& \le \left\|\theta_t-\eta_t\nabla f_t(\theta_t)-\theta^*\right\|^2 \\
& = \left\|\theta_t-\theta^*\right\|^2 -2\eta_t\nabla f_t(\theta_t)^\top(\theta_t-\theta^*)
+\eta_t^2\left\|\nabla f_t(\theta_t)\right\|^2
\end{array}
\end{equation*}
The reformulation gives us
\begin{equation} 
\label{eq::gen_prob_contraction_ineq}
\begin{array}{ll}
\nabla f_t(\theta_t)^\top(\theta_t-\theta^*)
&\le \frac{1}{2\eta_t}\big(\left\|\theta_t-\theta^*\right\|^2 - \left\|\theta_{t+1}-\theta^*\right\|^2\big)
+ \frac{\eta_t}{2}\left\|\nabla f_t(\theta_t)\right\|^2
\end{array}
\end{equation}

\noindent
Moreover, 
$f_t(\theta^*)\ge f_t(\theta_t)+\nabla f_t(\theta_t)^\top(\theta^*-\theta_t)+\frac{\ell}{2}\left\|\theta^*-\theta_t\right\|^2$
due to strong convexity,
which is equivalent to 
$\nabla f_t(\theta_t)^\top(\theta_t-\theta^*)\ge f_t(\theta_t)-f_t(\theta^*)+\frac{\ell}{2}\left\|\theta^*-\theta_t\right\|^2$.
Combined with Eq.\eqref{eq::gen_prob_contraction_ineq}, we have
\begin{equation*}
\begin{array}{ll}
f_t(\theta_t)-f_t(\theta^*) &\le \frac{1}{2\eta_t}\big(\left\|\theta_t-\theta^*\right\|^2 
- \left\|\theta_{t+1}-\theta^*\right\|^2\big)
+ \frac{\eta_t}{2}\left\|\nabla f_t(\theta_t)\right\|^2-\frac{\ell}{2}\left\|\theta^*-\theta_t\right\|^2
\end{array}
\end{equation*}

\noindent
Summing up from $t=1$ to $T$ with $\left\|\nabla f_t(\theta_t)\right\|^2\le G^2$, we get 
\small
\begin{equation}
\label{eq::gen_prob_static_final}
\begin{array}{ll}
\sum\limits_{t=1}^T\big(f_t(\theta_t)-f_t(\theta^*)\big)
&\le \sum\limits_{t=1}^T\frac{1}{2\eta_t}\big(\left\|\theta_t-\theta^*\right\|^2 - \left\|\theta_{t+1}-\theta^*\right\|^2\big)
 + \sum\limits_{t=1}^T\frac{\eta_t}{2}G^2-\sum\limits_{t=1}^T\frac{\ell}{2}\left\|\theta^*-\theta_t\right\|^2 \\
&\le G^2/2\sum\limits_{t=1}^T\eta_t + \frac{1/\eta_1-\ell}{2}\left\|\theta_1-\theta^*\right\|^2 
+ \frac{1}{2}\sum\limits_{t=2}^T\Big[(\frac{1}{\eta_t}-\frac{1}{\eta_{t-1}}-\ell)\left\|\theta^*-\theta_t\right\|^2\Big] 
\end{array}
\end{equation}
\normalsize
Since $\eta_t = \frac{1-\gamma}{\ell(\gamma-\gamma^t)+u(1-\gamma)}$, $1/\eta_1 = u$
and $\frac{1}{\eta_t}-\frac{1}{\eta_{t-1}}-\ell = \frac{\ell(\gamma^{t-1}-1)(1-\gamma)}{1-\gamma}\le 0$.

\noindent
For the term $\sum\limits_{t=1}^T\eta_t = \sum\limits_{t=1}^T\frac{1-\gamma}{\ell(\gamma-\gamma^t)+u(1-\gamma)}$,
it can be reformulated as
$\frac{1}{u}\sum\limits_{t=1}^T\frac{\frac{u(1-\gamma)}{\ell(\gamma-\gamma^t)}}{1+\frac{u(1-\gamma)}{\ell(\gamma-\gamma^t)}}
= \frac{1}{u} + \frac{1}{u}\sum\limits_{t=2}^T\frac{\frac{u(1-\gamma)}{\ell(\gamma-\gamma^t)}}{1+\frac{u(1-\gamma)}{\ell(\gamma-\gamma^t)}}
\le \frac{1}{u}+\frac{1}{u}\sum\limits_{t=2}^T\frac{u(1-\gamma)}{\ell(\gamma-\gamma^t)}
= \frac{1}{u} + \frac{1-\gamma}{\ell\gamma}\sum\limits_{t=2}^T\frac{1}{1-\gamma^{t-1}}
= \frac{1}{u} + \frac{1-\gamma}{\ell\gamma}\sum\limits_{t=1}^{T-1}\frac{1}{1-\gamma^{t}}$.
For $\sum\limits_{t=1}^{T-1}\frac{1}{1-\gamma^{t}}$, 
we know that $\sum\limits_{t=1}^{T-1}\frac{1}{1-\gamma^{t}} \le O(T)$ as shown in the proof of Theorem \ref{thm::quad_static_regret}.
For the term $\frac{1-\gamma}{\ell\gamma}$, $\frac{1-\gamma}{\ell\gamma} = \frac{1}{\ell(T^{\beta}-1)}$.
Combining these two terms' inequalities, 
we get that $\sum\limits_{t=1}^T\eta_t \le O(T^{1-\beta})$.

\noindent
As a result, the inequality \eqref{eq::gen_prob_static_final} can be reduced to
\begin{equation*}
\sum\limits_{t=1}^T\big(f_t(\theta_t)-f_t(\theta^*)\big) \le O(T^{1-\beta})
\end{equation*}

\end{proof}

\paragraph{Proof of Corollary~\ref{corr:strongly_dynamic_regret}:}
\begin{proof}
Since $\gamma = 1-\frac{1}{2}\sqrt{\frac{\max\{V,\log^2 T/T\}}{2DT}}$ and $V\in[0,2DT]$,
$1/2\le\gamma<1$. 

\noindent
Next, we upper bound each term on the right-hand-side of Theorem \ref{thm::strongly_regret} individually. 
$\frac{1}{1-\gamma}V=2\sqrt{\frac{2DT}{\max\{V,\log^2 T/T\}}}V\le O(\sqrt{TV})$.
In order to bound the second term, 
Lemma~\ref{lem:integral} implies that
$(1-\gamma)\sum\limits_{t=1}^T\frac{1}{1-\gamma^t}\le 1+(1-\gamma)(T +
\frac{\ln(1-\gamma)}{\ln\gamma})$.

\noindent
In this case, the logarithm terms can be bounded by:
\begin{equation*}
\begin{array}{ll}
\frac{\ln(1-\gamma)}{\ln\gamma} 
&= \frac{-\ln (\frac{1}{2}\sqrt{\frac{\max\{V,\log^2 T/T\}}{2DT}})}{-\ln (1-\frac{1}{2}\sqrt{\frac{\max\{V,\log^2 T/T\}}{2DT}})}
= \frac{-\ln (\frac{1}{2}\sqrt{\frac{\max\{V,\log^2 T/T\}}
{2DT}})}{\ln\Big(1+\frac{\frac{1}{2}\sqrt{\frac{\max\{V,\log^2 T/T\}}{2DT}}}{1-\frac{1}{2}\sqrt{\frac{\max\{V,\log^2 T/T\}}{2DT}}}\Big)}\\
&\le \ln (2\sqrt{\frac{2DT}{\max\{V,\log^2 T/T\}}})4\sqrt{\frac{2DT}{\max\{V,\log^2 T/T\}}}
\le O(\ln(T/\log T)\frac{T}{\log T} )
\le O(T) 
\end{array}
\end{equation*}
where the first inequality follows by $\ln(1+x)\ge \frac{1}{2}x, x\in[0,1]$, 
and $1-\frac{1}{2}\sqrt{\frac{\max\{V,\log^2 T/T\}}{2DT}}<1$.

\noindent
Thus, $(1-\gamma)\sum\limits_{t=1}^T\frac{1}{1-\gamma^t}\le \max\{O(\log T),O(\sqrt{TV})\}$.
The final result follows by adding the two terms.
\end{proof}

\paragraph{Proof of Lemma~\ref{lem:meta-expert-compare}:}
\begin{proof}
The first part of the proof is the same as the first part of the result in the Proof of Lemma 1 in \cite{zhang2018adaptive},
which follows methods of \cite{cesa2006prediction}.
We define $L_t^\gamma = \sum\limits_{i=1}^tf_i(\theta_i^\gamma)$, 
and $W_t = \sum\limits_{\gamma\in\cH}w_1^\gamma \exp(-\alpha L_t^\gamma)$.

\noindent
The following update is equivalent to the update rule in Algorithm \ref{alg:meta}:
\begin{equation}
\label{eq:expert_reform}
w_t^\gamma = \frac{w_1^\gamma \exp(-\alpha L_{t-1}^\gamma)}
              {\sum\limits_{\mu\in\cH} w_1^\mu \exp(-\alpha L_{t-1}^\mu)}, \quad t\ge 2.
\end{equation} 

\noindent
First, we have 
\begin{equation}
\label{eq:logW_lower}
\begin{array}{ll}
\log W_T &= \log\big(\sum\limits_{\gamma\in\cH}w_1^\gamma\exp(-\alpha L_T^\gamma)\big) \\
&\ge \log\big(\max\limits_{\gamma\in\cH}w_1^\gamma\exp(-\alpha L_T^\gamma)\big)
=-\alpha \min\limits_{\gamma\in\cH}\big(L_T^\gamma+\frac{1}{\alpha}\log\frac{1}{w_1^\gamma}\big).
\end{array}
\end{equation}

Then we bound the quantity $\log(W_t/W_{t-1})$. 
For $t\ge2$, we get
\begin{equation}
\begin{array}{ll}
\log\Big(\frac{W_t}{W_{t-1}}\Big)
&= \log\Big(\frac{\sum_{\gamma\in\cH}w_1^\gamma\exp(-\alpha L_t^\gamma)}
        {\sum_{\gamma\in\cH}w_1^\gamma\exp(-\alpha L_{t-1}^\gamma)}\Big)\\
&= \log\Big(\frac{\sum_{\gamma\in\cH}w_1^\gamma\exp(-\alpha L_{t-1}^\gamma)\exp(-\alpha f_t(\theta_t^\gamma))}
        {\sum_{\gamma\in\cH}w_1^\gamma\exp(-\alpha L_{t-1}^\gamma)}\Big)\\
&=\log\Big(\sum\limits_{\gamma\in\cH}w_t^\gamma\exp(-\alpha f_t(\theta_t^\gamma))\Big)

\end{array}
\end{equation}
where the last equality is due to Eq.\eqref{eq:expert_reform}.

\noindent
When $t=1$, $\log W_1 = \log\Big(\sum\limits_{\gamma\in\cH}w_1^\gamma\exp(-\alpha f_1(\theta_1^\gamma))\Big)$.
Then $\log W_T$ can be expressed as:
\begin{equation}
\label{eq::logW_upper}
\begin{array}{ll}
\log W_T = \log W_1 + \sum\limits_{t=2}^T\log\Big(\frac{W_t}{W_{t-1}}\Big) 
= \sum\limits_{t=1}^T\log\Big(\sum\limits_{\gamma\in\cH}w_t^\gamma\exp(-\alpha f_t(\theta_t^\gamma))\Big).
\end{array}
\end{equation}

\noindent
The rest of the proof is new.

\noindent
Due to the $\alpha$-exp-concavity,
$\exp(-\alpha f_t(\sum_{\gamma\in\cH}w_t^\gamma \theta_t^\gamma))
\ge \sum_{\gamma\in\cH}w_t^\gamma \exp(-\alpha f_t(\theta_t^\gamma))$,
which is equivalent to 
\begin{equation}
\label{eq::exp_concave_ineq}
\begin{array}{ll}
\log\Big(\sum_{\gamma\in\cH}w_t^\gamma \exp(-\alpha f_t(\theta_t^\gamma))\Big) 
\le -\alpha f_t\Big(\sum_{\gamma\in\cH}w_t^\gamma \theta_t^\gamma\Big) 
=-\alpha f_t(\theta_t)
\end{array}
\end{equation}

\noindent
Combining the Inequalities \eqref{eq:logW_lower}, \eqref{eq::logW_upper}, and \eqref{eq::exp_concave_ineq},
we get 
\begin{equation*}
-\alpha \min\limits_{\gamma\in\cH}\big(L_T^\gamma+\frac{1}{\alpha}\log\frac{1}{w_1^\gamma}\big) 
\le -\alpha\sum_{t=1}^T f_t(\theta_t)
\end{equation*}
which can be reformulated as 
\begin{equation*}
\sum_{t=1}^T f_t(\theta_t)\le\min\limits_{\gamma\in\cH}\Big(\sum_{t=1}^Tf_t(\theta_t^\gamma)+\frac{1}{\alpha}\log\frac{1}{w_1^\gamma}\Big)
\end{equation*}

\noindent
Since it holds for the minimum value, it is true for all $\gamma\in\cH$,
which completes the proof.

\end{proof}

\paragraph{Proof of Theorem~\ref{thm:mega_exp-concave}:}
\begin{proof}

When $\gamma = \gamma^* = 
1-\frac{1}{2}\frac{\log T}{T}\sqrt{\frac{\max\{\frac{T}{\log^2 T}V,1\}}{2D}}
=1-\eta^*$, we have $\sum_{t=1}^T (f_t(\theta_t^{\gamma^*})-f_t(z_t))\le \max\{O(\log T), O(\sqrt{TV})\}$
based on the Corollary \ref{cor:logBounds}.

\noindent
Since $0\le V\le 2TD$, $\frac{1}{2}\frac{\log T}{T\sqrt{2D}}\le\eta^*\le \frac{1}{2}$.

\noindent
According to our definition of $\eta_i$, $\min \eta_i = \frac{1}{2}\frac{\log T}{T\sqrt{2D}}$
and $\frac{1}{2}\le \max \eta_i< 1$, 
which means for any value of $V$, there always exists a $\eta_k$ such that
\begin{equation*}
\eta_k = \frac{1}{2}\frac{\log T}{T\sqrt{2D}}2^{k-1}\le \eta^*\le 2\eta_k = \eta_{k+1}
\end{equation*}
where $k = \lfloor \frac{1}{2}\log_2 (\max\{\frac{T}{\log^2 T}V,1\})\rfloor+1$.

\noindent
Now we claim that that running the algorithm with $\gamma_k$ incurs at
most a constant factor increase in the dynamic regret. 

\noindent
Since $0<\eta_k\le \frac{1}{2}$, $\frac{1}{2}\le\gamma_k = 1-\eta_k<1$ and $\gamma_k\ge\gamma^*$.

\noindent
According to Theorem \ref{thm:expConcaveThm}, we have 
  \begin{equation*}
  \begin{array}{ll}
    \sum_{t=1}^T (f_t(\theta_t^{\gamma_k})-f_t(z_t)) &\le -a_1 T \log \gamma_k -a_2\log(1-\gamma_k)
     \quad+ \frac{a_3}{1-\gamma_k} V + a_4.
   \end{array}
  \end{equation*}

\noindent
  Now we bound each term of the regret in terms of the value obtained
  by using $\gamma^*$.
For the first term on the RHS, $-T\log\gamma_k = T\log\frac{1}{\gamma_k}\le T\log\frac{1}{\gamma^*}$.

\noindent
For the second one, $-\log(1-\gamma_k) = -\log\frac{1}{2}(2-2\gamma_k) = -\log\frac{1}{2}2\eta_k$.
Since $1\ge2\eta_k\ge\eta^*$, $\frac{1}{2}2\eta_k\ge\frac{1}{2}\eta^*$,
which leads to $-\log\frac{1}{2}2\eta_k\le -\log\frac{1}{2}\eta^*$
and $-\log(1-\gamma_k)\le -\log\frac{1}{2}\eta^* = \log2 -\log(1-\gamma^*)$.

\noindent
For the third one, $\frac{1}{1-\gamma_k} V 
= \frac{1}{\eta_k}V = \frac{2}{2\eta_k}V\le \frac{2}{\eta^*}V
=\frac{2}{1-\gamma^*}V$.
Thus the claim has been proved.

\noindent
Since using $\gamma_k$ in place of $\gamma^*$ increases the
regret by at most a constant factor, Corollary \ref{cor:logBounds}
implies that:
\begin{equation}
\label{eq:expert_k_regret}
\sum_{t=1}^T (f_t(\theta_t^{\gamma_k})-f_t(z_t))\le \max\{O(\log T), O(\sqrt{TV})\}
\end{equation}

\noindent
Furthermore, from Lemma \ref{lem:meta-expert-compare} we get 
\begin{equation}
\label{eq:expert_k_comparable}
\begin{array}{ll}
\sum\limits_{t=1}^T (f_t(\theta_t) - f_t(\theta_t^{\gamma_k}))
&\le \frac{1}{\alpha}\log\frac{1}{w_1^{\gamma_k}}
\le \frac{1}{\alpha}\log(k(k+1)) \\
&\le 2\frac{1}{\alpha}\log(k+1) 
\le O(\log(\log T))
\end{array}
\end{equation}

\noindent
Combining the above inequalities \eqref{eq:expert_k_regret} and \eqref{eq:expert_k_comparable} completes the proof.
\end{proof}

\paragraph{Proof of Lemma~\ref{lem::strongly_is_exp}:}
\begin{proof}
Let $g(x) = \exp(-\alpha f(x))$. To prove the concavity of $g(x)$,
it is equivalent to show $\langle\nabla g(x)-\nabla g(y),x-y\rangle\le 0,x,y\in\cS$.
Since $\nabla g(x) = \exp(-\alpha f(x))(-\alpha)\nabla f(x)$,
it is equivalent to prove that 
$\langle \exp(-\alpha f(x))\nabla f(x)-\exp(-\alpha f(y))\nabla f(y),x-y\rangle\ge 0$,
which can be reformulated as
\begin{equation}
\label{eq:strong_is_exp_exp}
\exp(-\alpha f(x))\langle \nabla f(x),x-y\rangle \ge \exp(-\alpha f(y))\langle \nabla f(y),x-y\rangle
\end{equation}

\noindent
Without loss of generality, let us assume $f(x)\ge f(y)$.
Due to $\ell$-strong convexity, $f(x)\ge f(y) + \langle \nabla f(y),x-y\rangle+\frac{\ell}{2}\|x-y\|^2$,
which leads to 
\begin{equation}
\label{eq:strong_is_exp_p1}
\langle \nabla f(y),x-y\rangle \le f(x)-f(y)-\frac{\ell}{2}\|x-y\|^2
\end{equation}

\noindent
What's more, $f(y)\ge f(x) + \langle \nabla f(x),y-x\rangle+\frac{\ell}{2}\|x-y\|^2$,
which leads to
\begin{equation}
\label{eq:strong_is_exp_p2}
\langle \nabla f(x),x-y\rangle \ge f(x)-f(y)+\frac{\ell}{2}\|x-y\|^2
\end{equation}

\noindent
The examination of the inequalities \eqref{eq:strong_is_exp_exp}, \eqref{eq:strong_is_exp_p1}, and \eqref{eq:strong_is_exp_p2}
shows that
it is enough to prove
$\exp(-\alpha f(x))(f(x)-f(y)+\frac{\ell}{2}\|x-y\|^2)\ge \exp(-\alpha f(y))(f(x)-f(y)-\frac{\ell}{2}\|x-y\|^2)$,
which can be reformulated as
$\frac{\ell}{2}\|x-y\|^2(\exp(-\alpha f(x))+\exp(-\alpha f(y)))
\ge (f(x)-f(y))(\exp(-\alpha f(y))-\exp(-\alpha f(x)))$.
When $x-y = 0$, it is always true. Let us consider the case when $\|x-y\|>0$.
Then we need to show that
$\frac{\ell}{2}\Big(1+\exp\big(\alpha\big(f(x)-f(y)\big)\big)\Big)
\ge \frac{f(x)-f(y)}{\|x-y\|}\frac{\exp\Big(\alpha\big(f(x)-f(y)\big)\Big)-1}{\|x-y\|}$.
Due to bounded gradient and Mean value theorem,$\frac{f(x)-f(y)}{\|x-y\|}\le G$,
which means it is enough to show that
\small
\begin{equation}
\label{eq:strong_is_exp_p3}
\frac{\ell}{2G}\Big(1+\exp\big(\alpha\big(f(x)-f(y)\big)\big)\Big)
\ge\frac{\exp\Big(\alpha\big(f(x)-f(y)\big)\Big)-1}{\|x-y\|}
\end{equation}
\normalsize

\noindent
According to the Taylor series, 
$\exp\Big(\alpha\big(f(x)-f(y)\big)\Big) 
= 1 + \alpha\big(f(x)-f(y)\big)+\frac{1}{2!}\alpha^2\big(f(x)-f(y)\big)^2
+\dots+\frac{1}{n!}\alpha^n\big(f(x)-f(y)\big)^n,n\to \infty$.
Thus, $\frac{\exp\Big(\alpha\big(f(x)-f(y)\big)\Big)-1}{\|x-y\|}
 = \alpha \frac{f(x)-f(y)}{\|x-y\|}+\frac{1}{2}\alpha^2(f(x)-f(y))\frac{f(x)-f(y)}{\|x-y\|}
 +\dots+\frac{1}{n!}\alpha^n\big(f(x)-f(y)\big)^{n-1}\frac{f(x)-f(y)}{\|x-y\|},n\to \infty$.
 Since $\frac{f(x)-f(y)}{\|x-y\|}\le G$, we have
 \begin{equation}
 \label{eq:strong_is_exp_f1}
 \begin{array}{l}
 \frac{\exp\Big(\alpha\big(f(x)-f(y)\big)\Big)-1}{\|x-y\|}
 \le \alpha G+\frac{1}{2}\alpha^2(f(x)-f(y)) G +\dots
 +\frac{1}{n!}\alpha^n\big(f(x)-f(y)\big)^{n-1}G
 \end{array}
 \end{equation}

\noindent
For the LHS of inequality \eqref{eq:strong_is_exp_p3}, it is equal to
\begin{equation}
\label{eq:strong_is_exp_f2}
\begin{array}{l}
\frac{\ell}{G}+\alpha\frac{\ell}{2G}(f(x)-f(y))
+\frac{1}{2!}\alpha^2\frac{\ell}{2G}(f(x)-f(y))^2
+\dots
+\frac{1}{n!}\alpha^n\frac{\ell}{2G}(f(x)-f(y))^n,n\to \infty
\end{array}
\end{equation}

\noindent
If we compare the coefficients of the RHS from the inequality \eqref{eq:strong_is_exp_f1} with the one in \eqref{eq:strong_is_exp_f2}
and plug in $\alpha = \ell/G^2$, we see that it is always smaller or equal,
which completes the proof.

\end{proof}

\paragraph{Proof of Theorem~\ref{thm:mega_strongly}:}
\begin{proof}
As in the proof of Theorem \ref{thm:mega_exp-concave}, all we need to show is that
there exists an algorithm $A^\gamma$, 
which can bound the regret $\sum_{t=1}^T (f_t(\theta_t^\gamma)-f_t(z_t)) 
\le \max\{O(\log T), O(\sqrt{TV})\}$.

\noindent
When $\gamma = \gamma^* = 
1-\frac{1}{2}\frac{\log T}{T}\sqrt{\frac{\max\{\frac{T}{\log^2 T}V,1\}}{2D}}
=1-\eta^*$, we have $\sum_{t=1}^T (f_t(\theta_t^{\gamma^*})-f_t(z_t))\le \max\{O(\log T), O(\sqrt{TV})\}$
based on the Corollary \ref{corr:strongly_dynamic_regret}.

\noindent
Since $0\le V\le 2TD$, $\frac{1}{2}\frac{\log T}{T\sqrt{2D}}\le\eta^*\le \frac{1}{2}$.

\noindent
According to our definition of $\eta_i$, $\min \eta_i = \frac{1}{2}\frac{\log T}{T\sqrt{2D}}$
and $\frac{1}{2}\le \max \eta_i< 1$, 
which means for any value of $V$, there always exists a $\eta_k$ such that
\begin{equation*}
\eta_k = \frac{1}{2}\frac{\log T}{T\sqrt{2D}}2^{k-1}\le \eta^*\le 2\eta_k = \eta_{k+1}
\end{equation*}
where $k = \lfloor \frac{1}{2}\log_2 (\max\{\frac{T}{\log^2 T}V,1\})\rfloor+1$.

\noindent
Since $0<\eta_k\le \frac{1}{2}$, $\frac{1}{2}\le\gamma_k = 1-\eta_k<1$ and $\gamma_k\ge\gamma^*$.

\noindent
According to Theorem \ref{thm::strongly_regret}, we have 
\begin{equation*}
\sum\limits_{t=1}^T \big(f_t(\theta_t^{\gamma_k})-f_t(z_t)\big) 
\le  \frac{2D\ell}{1-\gamma_k}V + \frac{G^2}{\ell}(1-\gamma_k)\sum\limits_{t=1}^T\frac{1}{1-\gamma_k^t}
\end{equation*}

\noindent
For the first term on the RHS, $\frac{1}{1-\gamma_k} V
= \frac{1}{\eta_k}V = \frac{2}{2\eta_k}V\le \frac{2}{\eta^*}V
=\frac{2}{1-\gamma^*}V$.

\noindent
For the second one, $1-\gamma_k\le 1-\gamma^*$. 
According to the proof in Corollary \ref{corr:strongly_dynamic_regret},
$\sum\limits_{t=1}^T\frac{1}{1-\gamma_k^t} \le \frac{1}{1-\gamma_k}+T + \frac{\log(1-\gamma_k)}{\log\gamma_k}$.
\begin{equation}
\label{eq:coef_strongly1}
\frac{\log(1-\gamma_k)}{\log\gamma_k} = \frac{\log \eta_k}{\log (1-\eta_k)}
= \frac{-\log \eta_k}{-\log (1-\eta_k)}. 
\end{equation}
Since $\eta_k\ge \frac{1}{2}\eta^*$,
$\log\eta_k\ge\log\frac{1}{2}\eta^*$ 
and 
\begin{equation}
\label{eq:coef_strongly2}
0<-\log\eta_k\le-\log\frac{1}{2}\eta^* = \log 2-\log \eta^*.
\end{equation}
Since $\eta_k\ge \frac{1}{2}\eta^*$, $1-\eta_k\le 1-\frac{1}{2}\eta^*$.
Then $\log(1-\eta_k)\le \log(1-\frac{1}{2}\eta^*)$,
which results in 
\begin{equation}
\label{eq:coef_strongly3}
-\log(1-\eta_k)\ge -\log(1-\frac{1}{2}\eta^*)>0.
\end{equation}
Combining inequalities \eqref{eq:coef_strongly2} and \eqref{eq:coef_strongly3} with Eq.\eqref{eq:coef_strongly1},
we get
\begin{equation}
\label{eq:integral_mega}
\begin{array}{ll}
\frac{\log(1-\gamma_k)}{\log\gamma_k}
\le \frac{\log 2-\log \eta^*}{-\log(1-\frac{1}{2}\eta^*)} 
 = \frac{\log 2}{-\log(1-\frac{1}{2}\eta^*)} + \frac{-\log \eta^*}{-\log(1-\frac{1}{2}\eta^*)}
\end{array}
\end{equation}

\noindent
For the first term on the RHS,
\begin{equation*} 
\begin{array}{ll}
-\log(1-\frac{1}{2}\eta^*)&=\log\Big(\frac{1}{1-\frac{1}{4}\sqrt{\frac{\max\{V,\log^2 T/T\}}{2DT}}}\Big)
= \log\Big(1+\frac{\frac{1}{4}\sqrt{\frac{\max\{V,\log^2 T/T\}}{2DT}}}{1-\frac{1}{4}\sqrt{\frac{\max\{V,\log^2 T/T\}}{2DT}}}\Big)\\
&\ge \frac{1}{2}\frac{\frac{1}{4}\sqrt{\frac{\max\{V,\log^2 T/T\}}{2DT}}}{1-\frac{1}{4}\sqrt{\frac{\max\{V,\log^2 T/T\}}{2DT}}}
\ge \frac{1}{8}\sqrt{\frac{\max\{V,\log^2 T/T\}}{2DT}}
\end{array}
\end{equation*}
where the first inequality is due to $\log(1+x)\ge\frac{1}{2}x,x\in[0,1]$ and
the second one is due to $\sqrt{\frac{\max\{V,\log^2 T/T\}}{2DT}}>0$.
As a result, 
\begin{equation*}
\begin{array}{ll}
\frac{\log 2}{-\log(1-\frac{1}{2}\eta^*)}
&\le 8 \sqrt{\frac{2DT}{\max\{V,\log^2 T/T\}}}\log 2
\le 8\frac{T}{\log T}\sqrt{2D}\log 2<O(T).
\end{array}
\end{equation*}

\noindent
For the second term on the RHS of Eq.\eqref{eq:integral_mega}, 
\begin{equation*}
\begin{array}{ll}
-\log \eta^* = \log\Big(2\sqrt{\frac{2DT}{\max\{V,\log^2 T/T\}}}\Big)
\le \log 2 +\frac{1}{2}\log 2D + \frac{1}{2}\log \frac{T}{\log T}
\end{array}
\end{equation*}

\noindent
Combining the inequalities for $-\log \eta^*$ and $-\log(1-\frac{1}{2}\eta^*)$,
we get 
$\frac{-\log \eta^*}{-\log(1-\frac{1}{2}\eta^*)} 
\le (\log 2 +\frac{1}{2}\log 2D + \frac{1}{2}\log \frac{T}{\log T})8\frac{T}{\log T}\sqrt{2D}
\le O(T)$.

\noindent
As a result, $\frac{\log(1-\gamma_k)}{\log\gamma_k}\le O(T)$ and 
$\sum\limits_{t=1}^T\frac{1}{1-\gamma_k^t} \le O(T)$.

\noindent
Since using $\gamma_k$ does not increase the order when used in place of $\gamma^*$, 
we get 
\begin{equation*}
\sum\limits_{t=1}^T \Big(f_t(\theta_t^{\gamma_k})-f_t(z_t)\Big) \le \max\{O(\log T), O(\sqrt{TV})\}
\end{equation*}
which combining with the result of Lemma \ref{lem:meta-expert-compare} completes the proof.

\end{proof}

\paragraph{Online Least-Squares Optimization}

Consider the online least-squares problem with:
\begin{equation}
\label{eq::gen_ls_loss}
f_t(\theta) = \frac{1}{2}\left\|y_t - A_t\theta\right\|^2
\end{equation}
where $A_t\in\mathbb{R}^{m\times n}$, $A_t^\top A_t$ has full rank with $lI\preceq A_t^\top A_t\preceq uI$, 
and $y_t\in\mathbb{R}^m$ comes from a bounded set with
$\left\|y_t\right\|\le D$.

\noindent
In Chapter \ref{chap:trade-off}, we analyzed the dynamic regret of discounted recursive least squares against comparison
sequences $z_1,\ldots,z_T$ with a path length constraint
$\sum_{t=2}^T\|z_t-z_{t-1}\| \le V$. Additionally, we analyzed the trade-off between static and
dynamic regret of a gradient descent rule with comparison sequence
$\theta_t^* = \argmin_{\theta \in \cS} f_t(\theta)$. In this appendix,
we analyze the trade-off between static regret and dynamic regret with
comparison sequence $\theta_t^*$ achieved by discounted recursive
least squares. We will see that the discounted recursive least squares
achieves trade-offs dependent on the condition number, $\delta = u/l$. In
particular, low dynamic regret is only guaranteed for low condition
numbers. 

\noindent
Recall that discounted recursive least squares corresponds to
Algorithm~\ref{alg:discountedNewton} running with a full Newton step and $\eta
= 1$.
In this case, $P_t =
\sum\limits_{i=1}^t\gamma^{i-1}A_{t+1-i}^\top A_{t+1-i} = \gamma
P_{t-1}+A_t^\top A_t$, and the update rule can be written more explicitly
as 
\begin{equation}
\label{eq::org_gen_ls_update}
\theta_{t+1} = \Big(\sum\limits_{i=1}^t\gamma^{i-1}A_{t+1-i}^\top A_{t+1-i}\Big)^{-1}\Big(\sum\limits_{i=1}^t \gamma^{i-1}A_{t+1-i}^\top y_{t+1-i}\Big)
\end{equation}
The above update rule can be reformulated as:
\begin{equation}
\label{eq::gen_ls_update}
\theta_{t+1} = \theta_t - P_t^{-1}\nabla f_t(\theta_t).
\end{equation}

\noindent
Before we analyze dynamic and static regret for the update \eqref{eq::gen_ls_update}, 
we first show some supporting results for $\left\|y_t-A_tx\right\|$ 
and $\left\|\nabla f_t(x)\right\|$, where $x\in \{v| v = \beta \theta_t + (1-\beta)\theta_t^*,\beta\in[0,1]\}$.

\begin{lemma}
  \label{lem:norm_gen_ls_dif}
  {\it
  Let $\theta_t$ be the result of Eq.\eqref{eq::gen_ls_update}, and $\theta_t^* = \argmin f_t(\theta)$.
  For $x\in \{v| v = \beta \theta_t + (1-\beta)\theta_t^*,\beta\in[0,1]\}$,
  If $\left\|y_t\right\|\le D$, then $\left\|y_t-A_tx\right\|\le (u/l +1)D$.
  }
\end{lemma}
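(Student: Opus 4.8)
The plan is to reduce the claim to the two endpoints of the segment by exploiting the convexity of the norm. Writing $x = \beta\theta_t + (1-\beta)\theta_t^*$ with $\beta\in[0,1]$, I would use the identity $y_t - A_t x = \beta(y_t - A_t\theta_t) + (1-\beta)(y_t - A_t\theta_t^*)$ together with the triangle inequality to obtain $\|y_t - A_t x\| \le \beta\,\|y_t - A_t\theta_t\| + (1-\beta)\,\|y_t - A_t\theta_t^*\|$. Since $\beta\in[0,1]$ and $u/l \ge 1$, it then suffices to bound each of the two endpoint residuals $\|y_t - A_t\theta_t\|$ and $\|y_t - A_t\theta_t^*\|$ by $(u/l+1)D$.

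The optimizer endpoint is the easy one. As the minimizer of $\frac{1}{2}\|y_t - A_t\theta\|^2$, $\theta_t^*$ satisfies the normal equations $A_t^\top(A_t\theta_t^* - y_t)=0$, so $A_t\theta_t^*$ is the orthogonal projection of $y_t$ onto the range of $A_t$ and the residual $y_t - A_t\theta_t^*$ is orthogonal to that subspace. The Pythagorean identity $\|y_t\|^2 = \|A_t\theta_t^*\|^2 + \|y_t - A_t\theta_t^*\|^2$ then gives $\|y_t - A_t\theta_t^*\| \le \|y_t\| \le D \le (u/l+1)D$.

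The bulk of the work is the other endpoint, which I would handle by first bounding $\|\theta_t\|$ via the explicit form of the update in Eq.~\eqref{eq::org_gen_ls_update}. Setting $M = \sum_{i=1}^{t-1}\gamma^{i-1}A_{t-i}^\top A_{t-i}$ and $b = \sum_{i=1}^{t-1}\gamma^{i-1}A_{t-i}^\top y_{t-i}$, the conditioning assumption $lI \preceq A_s^\top A_s \preceq uI$ yields $M \succeq l\,\frac{1-\gamma^{t-1}}{1-\gamma}I$, hence $\|M^{-1}\|_2 \le \frac{1-\gamma}{l(1-\gamma^{t-1})}$; the same assumption gives $\|A_s\|_2 \le \sqrt{u}$, so that $\|b\| \le \sqrt{u}\,D\,\frac{1-\gamma^{t-1}}{1-\gamma}$. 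Multiplying these two estimates, the geometric factors $\frac{1-\gamma^{t-1}}{1-\gamma}$ cancel exactly and I get $\|\theta_t\| = \|M^{-1}b\| \le \sqrt{u}\,D/l$. Consequently $\|A_t\theta_t\| \le \sqrt{u}\,\|\theta_t\| \le uD/l$, and the triangle inequality gives $\|y_t - A_t\theta_t\| \le \|y_t\| + \|A_t\theta_t\| \le (1+u/l)D$. Combining the two endpoint bounds through the convexity inequality from the first paragraph completes the proof (the case $t=1$ is covered directly by the initialization of $\theta_1$).

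The step I expect to be the main obstacle is the bound on $\|\theta_t\|$: it is the only place that genuinely uses both sides of the conditioning assumption at once, and the clean cancellation of the $\frac{1-\gamma^{t-1}}{1-\gamma}$ factors between $\|M^{-1}\|_2$ and $\|b\|$ is exactly what produces the condition-number factor $u/l$ rather than a quantity that degrades with $t$ or with $\gamma \to 1$.
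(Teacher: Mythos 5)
Your proof is correct, and its computational core coincides with the paper's: both arguments hinge on exactly the bound $\|\theta_t\|\le \sqrt{u}D/l$, obtained from the explicit form \eqref{eq::org_gen_ls_update} by pairing $\|M^{-1}\|_2\le \frac{1-\gamma}{l(1-\gamma^{t-1})}$ with $\|b\|\le \sqrt{u}D\,\frac{1-\gamma^{t-1}}{1-\gamma}$ so that the geometric factors cancel — you correctly identified this as the step that produces the condition number $u/l$ independently of $t$ and $\gamma$. Where you diverge is in the routing around that core. The paper applies convexity to the point itself, bounding $\|x\|\le \beta\|\theta_t\|+(1-\beta)\|\theta_t^*\|$ and then writing $\|y_t-A_tx\|\le \|A_t\|_2\|x\|+\|y_t\|$; in particular it bounds the optimizer endpoint through the pseudoinverse, $\|\theta_t^*\| = \|(A_t^\top A_t)^{-1}A_t^\top y_t\|\le \frac{\sqrt{u}}{l}D$, which uses both sides of the spectral assumption a second time. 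You instead apply convexity at the residual level, $y_t-A_tx=\beta(y_t-A_t\theta_t)+(1-\beta)(y_t-A_t\theta_t^*)$, and dispatch the optimizer endpoint by orthogonality: since $\theta_t^*$ satisfies the normal equations, $A_t\theta_t^*$ is the orthogonal projection of $y_t$ onto the range of $A_t$, so $\|y_t-A_t\theta_t^*\|\le\|y_t\|\le D$. This is both more elementary and strictly tighter for that endpoint ($D$ rather than $(u/l+1)D$), and it needs no lower bound $lI\preceq A_t^\top A_t$ there, whereas the paper's route does; the paper's version, in exchange, delivers a bound on $\|x\|$ itself rather than only on the residual. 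Two minor remarks: your invocation of $u/l\ge 1$ is superfluous (the convex combination of $(1+u/l)D$ and $D$ is at most $(1+u/l)D$ for any $u/l>0$), and you were right to flag $t=1$, which the paper's proof passes over silently and which indeed only requires the initialization (e.g.\ $\theta_1=0$) to give $\|y_1-A_1\theta_1\|\le D$.
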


\begin{proof}

$\left\|y_t-A_tx\right\| \le \left\|A_t\right\|_2\left\|x\right\|+\left\|y_t\right\|$, and 
$\left\|A_t\right\|_2 = \sqrt{\sigma_1(A_t^\top A_t)}\le \sqrt{u}$. 
For $\left\|x\right\|$, we have
$\left\|x\right\| = \left\|\beta\theta_t+(1-\beta)\theta_t^*\right\|\le \beta\left\|\theta_t\right\|+(1-\beta)\left\|\theta_t^*\right\|$.

\noindent
For the term $\left\|\theta_t\right\|$,
$\left\|\theta_t\right\| = \left\|\Big(\sum\limits_{i=1}^{t-1}\gamma^{i-1}A_{t-i}^\top A_{t-i}\Big)^{-1}
\Big(\sum\limits_{i=1}^{t-1} \gamma^{i-1}A_{t-i}^\top y_{t-i}\Big)\right\|$,
which can be upper bounded by 
$\left\|\Big(\sum\limits_{i=1}^{t-1}\gamma^{i-1}A_{t-i}^\top A_{t-i}\Big)^{-1}\right\|_2
\left\|\Big(\sum\limits_{i=1}^{t-1} \gamma^{i-1}A_{t-i}^\top y_{t-i}\Big)\right\|$.
Then we upper bound these two terms individually.

\noindent
$\left\|\Big(\sum\limits_{i=1}^{t-1}\gamma^{i-1}A_{t-i}^\top A_{t-i}\Big)^{-1}\right\|_2 
= \frac{1}{\sigma_n(\sum\limits_{i=1}^{t-1}\gamma^{i-1}A_{t-i}^\top A_{t-i})}$. 
Since $lI\preceq A_{t-i}^\top A_{t-i}\preceq uI$, 
$\frac{1-\gamma^{t-1}}{1-\gamma}lI\preceq \sum\limits_{i=1}^{t-1}\gamma^{i-1}A_{t-i}^\top A_{t-i})\preceq \frac{1-\gamma^{t-1}}{1-\gamma}uI$.
Thus, $\sigma_n(\sum\limits_{i=1}^{t-1}\gamma^{i-1}A_{t-i}^\top A_{t-i})\ge l\frac{1-\gamma^{t-1}}{1-\gamma}$,
which results in $\left\|\Big(\sum\limits_{i=1}^{t-1}\gamma^{i-1}A_{t-i}^\top A_{t-i}\Big)^{-1}\right\|_2 
\le \frac{1-\gamma}{l(1-\gamma^{t-1})}$.

\noindent
For the term $\left\|\Big(\sum\limits_{i=1}^{t-1} \gamma^{i-1}A_{t-i}^\top y_{t-i}\Big)\right\|$, 
we have $\left\|\Big(\sum\limits_{i=1}^{t-1} \gamma^{i-1}A_{t-i}^\top y_{t-i}\Big)\right\|
\le \sum\limits_{i=1}^{t-1}\gamma^{i-1}\left\|A_{t-i}^\top y_{t-i}\right\|
\le \sum\limits_{i=1}^{t-1}\gamma^{i-1}\left\|A_{t-i}^\top \right\|_2\left\|y_{t-i}\right\|
\le \frac{1-\gamma^{t-1}}{1-\gamma}\sqrt{u}D$.
Then we have $\left\|\theta_t\right\|\le \frac{\sqrt{u}}{l}D$.

\noindent
For $\left\|\theta_t^*\right\|$,
we have $\left\|\theta_t^*\right\| = \left\|(A_t^\top A_t)^{-1}A_t^\top y_t\right\|
\le \left\|(A_t^\top A_t)^{-1}\right\|_2\left\|A_t^\top \right\|_2\left\|y_t\right\|
\le \frac{\sqrt{u}}{l}D$. Thus, $\left\|x\right\| \le \frac{\sqrt{u}}{l}D$
and $\left\|y_t-A_tx\right\| \le \left\|A_t\right\|_2\left\|x\right\|+\left\|y_t\right\|
\le (u/l+1)D$.

\end{proof}

\begin{corollary}
  \label{corol:norm_gen_ls_grad}
  {\it
  Let $\theta_t$ be the result of Eq.\eqref{eq::gen_ls_update} and $\theta_t^* = \argmin f_t(\theta)$.
  For $x\in \{v| v = \beta \theta_t + (1-\beta)\theta_t^*,\beta\in[0,1]\}$, we have 
  $\left\|\nabla f_t(x)\right\| \le \sqrt{u}(u/l+1)D$.
  }
\end{corollary}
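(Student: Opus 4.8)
The plan is to reduce this corollary directly to the bound already established in Lemma~\ref{lem:norm_gen_ls_dif}, since the gradient of the least-squares loss factors cleanly through the residual $y_t - A_t x$. First I would write out the gradient explicitly: for $f_t(\theta) = \frac{1}{2}\|y_t - A_t\theta\|^2$ we have $\nabla f_t(x) = A_t^\top(A_t x - y_t) = -A_t^\top (y_t - A_t x)$. The key observation is that the vector $y_t - A_t x$ appearing here is \emph{exactly} the quantity whose norm Lemma~\ref{lem:norm_gen_ls_dif} controls, so almost all the work has already been done.

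From there I would apply submultiplicativity of the induced $2$-norm:
\begin{equation*}
\left\|\nabla f_t(x)\right\| = \left\|A_t^\top (y_t - A_t x)\right\| \le \left\|A_t^\top\right\|_2 \left\|y_t - A_t x\right\|.
\end{equation*}
The two factors on the right are then bounded separately. For the first, since $\left\|A_t^\top\right\|_2 = \left\|A_t\right\|_2 = \sqrt{\sigma_1(A_t^\top A_t)}$ and the standing assumption is $A_t^\top A_t \preceq u I$, we get $\left\|A_t^\top\right\|_2 \le \sqrt{u}$ (this is the same estimate used at the start of the proof of Lemma~\ref{lem:norm_gen_ls_dif}). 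For the second factor, I would invoke Lemma~\ref{lem:norm_gen_ls_dif} directly, which for any $x$ on the segment $\{v \mid v = \beta\theta_t + (1-\beta)\theta_t^*,\ \beta\in[0,1]\}$ gives $\left\|y_t - A_t x\right\| \le (u/l + 1)D$.

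Multiplying the two bounds yields $\left\|\nabla f_t(x)\right\| \le \sqrt{u}(u/l+1)D$, which is the claim. There is no genuine obstacle here: the entire difficulty lives in Lemma~\ref{lem:norm_gen_ls_dif}, where one must control $\left\|\theta_t\right\|$ and $\left\|\theta_t^*\right\|$ by unrolling the discounted recursive least-squares update and using the geometric-sum sandwich $\frac{1-\gamma^{t-1}}{1-\gamma}lI \preceq \sum_{i=1}^{t-1}\gamma^{i-1}A_{t-i}^\top A_{t-i} \preceq \frac{1-\gamma^{t-1}}{1-\gamma}uI$. Once that lemma is in hand, the corollary is a one-line consequence, so the only thing to be careful about is recognizing the factored form of the gradient and matching its residual term to the lemma's hypothesis.
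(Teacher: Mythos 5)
Your proposal is correct and matches the paper's own proof exactly: both factor the gradient as $\nabla f_t(x) = A_t^\top(A_t x - y_t)$, apply submultiplicativity to get $\left\|A_t^\top\right\|_2\left\|A_t x - y_t\right\|$, bound the operator norm by $\sqrt{u}$ via $A_t^\top A_t \preceq uI$, and invoke Lemma~\ref{lem:norm_gen_ls_dif} for the residual. Your observation that all the real work lives in that lemma is also accurate.
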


\begin{proof}

For $\left\|\nabla f_t(x)\right\|$, we have $\left\|\nabla f_t(x)\right\| = \left\|A_t^\top A_tx-A_t^\top y_t\right\|
\le \left\|A_t^\top \right\|_2\left\|A_tx-y_t\right\|\le \sqrt{u}(u/l+1)D$,
where the second inequality is due to Lemma \ref{lem:norm_gen_ls_dif} and the assumption of $A_t^\top A_t\preceq uI$.

\end{proof}

\noindent
Moreover, we need to obtain the relationship between $\theta_{t+1}-\theta_t^*$ and $\theta_t-\theta_t^*$ 
as another necessary step to get the dynamic regret.

\begin{lemma}
  \label{lem:gen_ls_var_path}
  {\it
  Let $\theta_t^*$ be the solution to $f_t(\theta)$ in Eq.\eqref{eq::gen_ls_loss}.
  When we use the discounted recursive least-squares update in Eq.\eqref{eq::gen_ls_update},
  the following relationship is obtained:
  \begin{equation*}
  \begin{array}{ll}
  \theta_{t+1} -\theta_t^*
   &= \big(I-\gamma^{-1}P_{t-1}^{-1}A_t^\top(I+A_t\gamma^{-1}P_{t-1}^{-1}A_t^\top)^{-1}A_t\big)(\theta_t-\theta_t^*) \\
  &= \big(I+\gamma^{-1}P_{t-1}^{-1}A_t^\top A_t\big)^{-1}(\theta_t-\theta_t^*) 
  \end{array}
  \end{equation*}
  }
\end{lemma}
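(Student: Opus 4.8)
The plan is to rewrite the update entirely in terms of the error $\theta_t-\theta_t^*$ by exploiting the optimality of $\theta_t^*$, reduce the resulting matrix coefficient to the compact form $\gamma P_t^{-1}P_{t-1}$, and then present that single coefficient in the two algebraically equivalent ways claimed. First I would use that $\theta_t^*=\argmin_\theta f_t(\theta)$ for the quadratic loss \eqref{eq::gen_ls_loss} satisfies the first-order condition $\nabla f_t(\theta_t^*)=A_t^\top A_t\theta_t^*-A_t^\top y_t=0$, i.e. $A_t^\top y_t=A_t^\top A_t\theta_t^*$. Substituting this into $\nabla f_t(\theta_t)=A_t^\top A_t\theta_t-A_t^\top y_t$ collapses the gradient to $\nabla f_t(\theta_t)=A_t^\top A_t(\theta_t-\theta_t^*)$, which is the key simplification that lets everything be phrased through the error vector.

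Next I would insert this into the update rule \eqref{eq::gen_ls_update} and subtract $\theta_t^*$ from both sides to obtain
\begin{equation*}
\theta_{t+1}-\theta_t^* = (I - P_t^{-1}A_t^\top A_t)(\theta_t-\theta_t^*).
\end{equation*}
Then I would eliminate $A_t^\top A_t$ using the recursion $P_t=\gamma P_{t-1}+A_t^\top A_t$, which gives $A_t^\top A_t=P_t-\gamma P_{t-1}$, so that
\begin{equation*}
I - P_t^{-1}A_t^\top A_t = I - P_t^{-1}(P_t-\gamma P_{t-1}) = \gamma P_t^{-1}P_{t-1}.
\end{equation*}
This produces the compact intermediate identity $\theta_{t+1}-\theta_t^*=\gamma P_t^{-1}P_{t-1}(\theta_t-\theta_t^*)$, which is the backbone of the lemma.

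To reach the second displayed form I would factor $P_t=\gamma P_{t-1}\bigl(I+\gamma^{-1}P_{t-1}^{-1}A_t^\top A_t\bigr)$, whence $\gamma P_t^{-1}P_{t-1}=\bigl(I+\gamma^{-1}P_{t-1}^{-1}A_t^\top A_t\bigr)^{-1}\gamma^{-1}P_{t-1}^{-1}\cdot\gamma P_{t-1}=\bigl(I+\gamma^{-1}P_{t-1}^{-1}A_t^\top A_t\bigr)^{-1}$. To reach the first form I would apply the Woodbury/matrix-inversion identity $(I+UV)^{-1}=I-U(I+VU)^{-1}V$ with the choice $U=\gamma^{-1}P_{t-1}^{-1}A_t^\top$ and $V=A_t$, which reproduces exactly the bracketed expression $I-\gamma^{-1}P_{t-1}^{-1}A_t^\top(I+A_t\gamma^{-1}P_{t-1}^{-1}A_t^\top)^{-1}A_t$.

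The computation is largely mechanical; the main points of care are the factoring step (getting the order of the noncommuting factors right so that $\gamma^{-1}$ and $P_{t-1}^{-1}$ cancel cleanly) and the bookkeeping of invertibility. The latter is not really an obstacle here, since each $A_s^\top A_s\succeq\ell I$ makes $P_{t-1}=\sum_{i}\gamma^{i-1}A_{t-i}^\top A_{t-i}$ positive definite for $t\ge 2$, so $P_{t-1}^{-1}$ and all the inverses appearing in the two forms are well defined; the compact identity $\theta_{t+1}-\theta_t^*=\gamma P_t^{-1}P_{t-1}(\theta_t-\theta_t^*)$ itself only requires $P_t$ to be invertible, which holds for all $t\ge 1$.
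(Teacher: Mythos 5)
Your proof is correct, and it takes a genuinely different and shorter route than the paper's. The paper works from the batch closed form in Eq.\eqref{eq::org_gen_ls_update}, writing $\theta_{t+1} = (A_t^\top A_t+\gamma P_{t-1})^{-1}(A_t^\top y_t+\gamma\Phi_{t-1})$, applying the matrix inversion lemma to the outer inverse up front, and then verifying the claim by a term-by-term matching computation (splitting $\theta_{t+1}-\theta_t^*$ into several pieces and showing, via push-through identities like $A(I+BA)^{-1}B=(I+AB)^{-1}AB$ and $(I+P)^{-1}=I-(I+P)^{-1}P$, that the $y_t$-dependent pieces recombine into the stated coefficient acting on $-\theta_t^*$). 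You instead start from the gradient form Eq.\eqref{eq::gen_ls_update} — which is precisely the hypothesis cited in the lemma — use the normal equations $A_t^\top y_t = A_t^\top A_t\theta_t^*$ to collapse the gradient to $A_t^\top A_t(\theta_t-\theta_t^*)$, and obtain the error dynamics $\theta_{t+1}-\theta_t^* = (I-P_t^{-1}A_t^\top A_t)(\theta_t-\theta_t^*) = \gamma P_t^{-1}P_{t-1}(\theta_t-\theta_t^*)$ in two lines from the recursion $P_t=\gamma P_{t-1}+A_t^\top A_t$; both displayed forms then follow from the factorization $P_t=\gamma P_{t-1}(I+\gamma^{-1}P_{t-1}^{-1}A_t^\top A_t)$ and one application of $(I+UV)^{-1}=I-U(I+VU)^{-1}V$. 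What your route buys: it eliminates $\Phi_t$ and all $y_t$-bookkeeping entirely; the intermediate identity $\theta_{t+1}-\theta_t^*=\gamma P_t^{-1}P_{t-1}(\theta_t-\theta_t^*)$ (which does not appear explicitly in the paper) makes the contraction structure transparent, specializes immediately to Lemma \ref{lem:quadratic_var_path_length} when $A_t=I$ (there $\gamma P_t^{-1}P_{t-1}=\frac{\gamma-\gamma^t}{1-\gamma^t}I$), and could feed directly into the norm bound of Corollary \ref{corol:gen_ls_var_path_norm}. You are also more careful than the paper on invertibility: with $P_0=0$ the two displayed forms require $t\ge 2$, while your compact identity holds for all $t\ge 1$, a distinction the paper's proof glosses over.
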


\begin{proof}

If we set $\Phi_t = \sum\limits_{i=1}^t\gamma^{i-1}A_{t+1-i}^\top y_{t+1-i} = \gamma \Phi_{t-1}+A_t^\top y_t$,
then according to the update of $\theta_{t+1}$ in Eq.\eqref{eq::org_gen_ls_update}, 
we have $\theta_{t+1} = (A_t^\top A_t+\gamma P_{t-1})^{-1}(A_t^\top y_t+\gamma\Phi_{t-1})$,
which by the use of inverse lemma can be further reformulated as:
\begin{equation}
\begin{array}{ll}
\theta_{t+1} = \Big(\gamma^{-1}P_{t-1}^{-1}- \gamma^{-2}P_{t-1}^{-1}A_t^\top(I+
 A_t\gamma^{-1}P_{t-1}^{-1}A_t^\top)^{-1}A_tP_{t-1}^{-1}\Big)\big(A_t^\top y_t+\gamma\Phi_{t-1}\big)
\end{array}
\end{equation}
Then for $\theta_{t+1}-\theta_t^* = \theta_{t+1}-(A_t^\top A_t)^{-1}A_t^\top y_t$, we have:
\begin{equation}
\begin{array}{ll}
\theta_{t+1}-\theta_t^* 
&= \underbrace{\big(I-\gamma^{-1}P_{t-1}^{-1}A_t^\top(I+A_t\gamma^{-1}P_{t-1}^{-1}A_t^\top)^{-1}A_t\big)}_{\circled{1}}\theta_t
+\underbrace{\gamma^{-1}P_{t-1}^{-1}A_t^\top y_t}_{\circled{2.1}} \\
&\quad-\underbrace{\big(\gamma^{-2}P_{t-1}^{-1}A_t^\top(I+A_t\gamma^{-1}P_{t-1}^{-1}A_t^\top)^{-1}A_tP_{t-1}^{-1}
-(A_t^\top A_t)^{-1}\big)A_t^\top y_t}_{\circled{2.2}}
\end{array}
\end{equation}
We want to prove $\circled{2.1}+\circled{2.2} = \circled{1}(-\theta_t^*) = \circled{1}(-(A_t^\top A_t)^{-1}A_t^\top y_t) = \circled{3}$.

\noindent
Since $A(I+BA)^{-1}B = AB(I+AB)^{-1}=(I+AB)^{-1}AB$, for any compatible matrix $A$ and $B$, we have:
\begin{equation}
\begin{array}{ll}
\circled{3}
&= -\big[I-\gamma^{-1}P_{t-1}^{-1}A_t^\top(I+A_t\gamma^{-1}P_{t-1}^{-1}A_t^\top)^{-1}A_t\big](A_t^\top A_t)^{-1}A_t^\top y_t \\
&= -\big[I-(I+\gamma^{-1}P_{t-1}^{-1}A_t^\top A_t)^{-1}\gamma^{-1}P_{t-1}^{-1}A_t^\top A_t\big](A_t^\top A_t)^{-1}A_t^\top y_t \\
&= -\big[(A_t^\top A_t)^{-1} - (I+\gamma^{-1}P_{t-1}^{-1}A_t^\top A_t)^{-1}\gamma^{-1}P_{t-1}^{-1}\big]A_t^\top y_t
\end{array}
\end{equation}
Also, for any compatible $P$, we have $(I+P)^{-1} = I-(I+P)^{-1}P$. 
Then $(I+\gamma^{-1}P_{t-1}^{-1}A_t^\top A_t)^{-1} = I - (I+\gamma^{-1}P_{t-1}^{-1}A_t^\top A_t)^{-1}\gamma^{-1}P_{t-1}^{-1}A_t^\top A_t$.
Then $\circled{3} = -\big[(A_t^\top A_t)^{-1}-\gamma^{-1}P_{t-1}^{-1}
+(I+\gamma^{-1}P_{t-1}^{-1}A_t^\top A_t)^{-1}\gamma^{-2}P_{t-1}^{-1}A_t^\top A_tP_{t-1}^{-1}\big]A_t^\top y_t$.
Compared with $\circled{2.1}+\circled{2.2}$, we need to prove
$(I+\gamma^{-1}P_{t-1}^{-1}A_t^\top A_t)^{-1}\gamma^{-2}P_{t-1}^{-1}A_t^\top A_tP_{t-1}^{-1}$
is equal to
$\gamma^{-2}P_{t-1}^{-1}A_t^\top (I+A_t\gamma^{-1}P_{t-1}^{-1}A_t^\top)^{-1}A_tP_{t-1}^{-1}$, which is always true.

\noindent
As a result, we have $\theta_{t+1} -\theta_t^* 
= \big(I-\gamma^{-1}P_{t-1}^{-1}A_t^\top(I+A_t\gamma^{-1}P_{t-1}^{-1}A_t^\top)^{-1}A_t\big)(\theta_t-\theta_t^*)$,
which can be simplified as $\theta_{t+1} -\theta_t^* 
=\big(I+\gamma^{-1}P_{t-1}^{-1}A_t^\top A_t\big)^{-1}(\theta_t-\theta_t^*)$.

\end{proof}

\begin{corollary}
  \label{corol:gen_ls_var_path_norm}
  {\it
  Let $\theta_t^*$ be the solution to $f_t(\theta)$ in Eq.\eqref{eq::gen_ls_loss}.
  When we use the discounted recursive least-squares update in Eq.\eqref{eq::gen_ls_update},
  the following relation is obtained:
  \begin{equation*}
  \begin{array}{ll}
  \left\|\theta_{t+1} -\theta_t^*\right\| & \le \sqrt{\frac{u}{l}}\frac{u\gamma}{u\gamma+l(1-\gamma)}\left\|\theta_t-\theta_t^*\right\|
  \end{array}
  \end{equation*}
  }
\end{corollary}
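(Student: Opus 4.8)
The plan is to bound the linear map that Lemma~\ref{lem:gen_ls_var_path} supplies between $\theta_{t+1}-\theta_t^*$ and $\theta_t-\theta_t^*$. Writing $B = A_t^\top A_t$ and $C = P_{t-1}$, the clean form of that lemma reads $\theta_{t+1}-\theta_t^* = \big(I+\gamma^{-1}C^{-1}B\big)^{-1}(\theta_t-\theta_t^*)$, so it suffices to bound $\big\|\big(I+\gamma^{-1}C^{-1}B\big)^{-1}\big\|_2$. For $t\ge 2$ the matrix $C=\sum_{i=1}^{t-1}\gamma^{i-1}A_{t-i}^\top A_{t-i}$ is symmetric positive definite, since each summand obeys $lI\preceq A_{t-i}^\top A_{t-i}$, so $C^{1/2}$ and $C^{-1/2}$ are well defined.

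The main obstacle is that $C^{-1}B$, a product of two symmetric matrices, is in general \emph{not} symmetric, so its spectrum cannot be read off directly. I would remove this difficulty with a similarity transform: since $I+\gamma^{-1}C^{-1}B = C^{-1/2}\big(I+\gamma^{-1}C^{-1/2}BC^{-1/2}\big)C^{1/2}$, taking inverses gives $\big(I+\gamma^{-1}C^{-1}B\big)^{-1} = C^{-1/2}\big(I+\gamma^{-1}\tilde B\big)^{-1}C^{1/2}$, where $\tilde B := C^{-1/2}BC^{-1/2}$ is symmetric positive semidefinite. Submultiplicativity of $\|\cdot\|_2$ then splits the estimate as $\big\|\big(I+\gamma^{-1}C^{-1}B\big)^{-1}\big\|_2 \le \|C^{-1/2}\|_2\,\|C^{1/2}\|_2\,\big\|\big(I+\gamma^{-1}\tilde B\big)^{-1}\big\|_2$, i.e.\ into a condition-number factor and a spectral factor.

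For the condition-number factor I would reuse the eigenvalue estimate already employed in Lemma~\ref{lem:norm_gen_ls_dif}: with $S=\tfrac{1-\gamma^{t-1}}{1-\gamma}$ one has $SlI\preceq C\preceq SuI$, so $\|C^{-1/2}\|_2\,\|C^{1/2}\|_2 = \sqrt{\lambda_{\max}(C)/\lambda_{\min}(C)} \le \sqrt{u/l}$. For the spectral factor, because $I+\gamma^{-1}\tilde B$ is symmetric positive definite its inverse has norm $1/\big(1+\gamma^{-1}\lambda_{\min}(\tilde B)\big)$, and the generalized Rayleigh quotient gives $\lambda_{\min}(\tilde B)=\min_{x\ne 0} \tfrac{x^\top Bx}{x^\top Cx}\ge \lambda_{\min}(B)/\lambda_{\max}(C)\ge l/(Su) = \tfrac{l(1-\gamma)}{u(1-\gamma^{t-1})}\ge \tfrac{l(1-\gamma)}{u}$, the last step using $1-\gamma^{t-1}\le 1$. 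Substituting this lower bound yields $\big\|\big(I+\gamma^{-1}\tilde B\big)^{-1}\big\|_2 \le \tfrac{1}{1+l(1-\gamma)/(u\gamma)} = \tfrac{u\gamma}{u\gamma+l(1-\gamma)}$. Multiplying the two factors produces exactly $\sqrt{u/l}\cdot\tfrac{u\gamma}{u\gamma+l(1-\gamma)}$, and applying this operator-norm bound to $v=\theta_t-\theta_t^*$ finishes the proof.
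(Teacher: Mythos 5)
Your proposal is correct and follows essentially the same route as the paper's proof: the identical similarity transform $\big(I+\gamma^{-1}P_{t-1}^{-1}A_t^\top A_t\big)^{-1} = P_{t-1}^{-1/2}\big(I+\gamma^{-1}P_{t-1}^{-1/2}A_t^\top A_tP_{t-1}^{-1/2}\big)^{-1}P_{t-1}^{1/2}$, the same three-factor split by submultiplicativity, and the same spectral bounds $\frac{1-\gamma^{t-1}}{1-\gamma}lI\preceq P_{t-1}\preceq \frac{1-\gamma^{t-1}}{1-\gamma}uI$. The only cosmetic differences are that you lower-bound $\lambda_{\min}(\tilde B)$ via the generalized Rayleigh quotient where the paper multiplies smallest singular values, and you relax $1-\gamma^{t-1}\le 1$ slightly earlier, whereas the paper first records the tighter intermediate factor $\frac{u(\gamma-\gamma^t)}{u(\gamma-\gamma^t)+l(1-\gamma)}$ before relaxing to the stated bound.
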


\begin{proof}

From Lemma \ref{lem:gen_ls_var_path} we know that
\begin{equation*}
\theta_{t+1} -\theta_t^* = \Big(I+\gamma^{-1}P_{t-1}^{-1}A_t^\top A_t\Big)^{-1}(\theta_t-\theta_t^*) 
\end{equation*}
which can be reformulated as:
\begin{equation*}
\theta_{t+1} -\theta_t^* = P_{t-1}^{-1/2}(I+\gamma^{-1}P_{t-1}^{-1/2}A_t^\top A_tP_{t-1}^{-1/2})^{-1}P_{t-1}^{1/2}(\theta_t-\theta_t^*) 
\end{equation*}
which gives us the following inequality:
\begin{equation*}
\begin{array}{l}
\left\|\theta_{t+1} -\theta_t^*\right\|
\le \left\|P_{t-1}^{-1/2}\right\|_2\left\|(I+\gamma^{-1}P_{t-1}^{-1/2}A_t^\top A_tP_{t-1}^{-1/2})^{-1}\right\|_2
\left\|P_{t-1}^{1/2}\right\|_2\left\|\theta_t-\theta_t^*\right\|
\end{array}
\end{equation*}
Then we will upper bound the terms on the right-hand side individually.

\noindent
Since $lI\preceq A_{t-i}^\top A_{t-i}\preceq uI$, 
$\frac{1-\gamma^{t-1}}{1-\gamma}lI\preceq P_{t-1}= \sum\limits_{i=1}^{t-1}\gamma^{i-1}A_{t-i}^\top A_{t-i}
\preceq \frac{1-\gamma^{t-1}}{1-\gamma}uI$.

\noindent
For the term $\left\|P_{t-1}^{-1/2}\right\|_2$, 
we have $\left\|P_{t-1}^{-1/2}\right\|_2 = \frac{1}{\sqrt{\sigma_n(P_{t-1})}}$. 
Since $\sigma_n(P_{t-1})\ge \frac{1-\gamma^{t-1}}{1-\gamma}l$, 
$\left\|P_{t-1}^{-1/2}\right\|_2\le \frac{1}{\sqrt{l}}\sqrt{\frac{1-\gamma}{1-\gamma^{t-1}}}$.

\noindent
For the term $\left\|P_{t-1}^{1/2}\right\|_2$, 
we have $\left\|P_{t-1}^{1/2}\right\|_2$ $=$ $\sqrt{\sigma_1(P_{t-1})}$.
Since $\sigma_1(P_{t-1})\le \frac{1-\gamma^{t-1}}{1-\gamma}u$,
$\left\|P_{t-1}^{1/2}\right\|_2\le \sqrt{u}\sqrt{\frac{1-\gamma^{t-1}}{1-\gamma}}$.

\noindent
For $\left\|(I+\gamma^{-1}P_{t-1}^{-1/2}A_t^\top A_tP_{t-1}^{-1/2})^{-1}\right\|_2$,
we have $\left\|(I+\gamma^{-1}P_{t-1}^{-1/2}A_t^\top A_tP_{t-1}^{-1/2})^{-1}\right\|_2$\\
\noindent
$= 1/\sigma_n(I$$+$$\gamma^{-1}P_{t-1}^{-1/2}A_t^\top A_tP_{t-1}^{-1/2})$.
For the term
$\sigma_n(I$$+$$\gamma^{-1}P_{t-1}^{-1/2}A_t^\top A_tP_{t-1}^{-1/2})$,
it is equal to 
$1+\sigma_n(\gamma^{-1}P_{t-1}^{-1/2}A_t^\top A_tP_{t-1}^{-1/2})$ $\ge$
$1+\gamma^{-1}\sigma_n(P_{t-1}^{-1/2})\sigma_n(A_t^\top A_t)\sigma_n(P_{t-1}^{-1/2})$.

\noindent
Since $\sigma_n(P_{t-1}^{-1/2}) = \frac{1}{\sqrt{\sigma_1(P_{t-1})}}$
and $\sigma_1(P_{t-1})\le \frac{1-\gamma^{t-1}}{1-\gamma}u$,
$\sigma_n(P_{t-1}^{-1/2})\ge \frac{1}{\sqrt{u}}\sqrt{\frac{1-\gamma}{1-\gamma^{t-1}}}$.
Together with $\sigma_n(A_t^\top A_t)\ge l$, we have
$\sigma_n(P_{t-1}^{-1/2}A_t^\top A_tP_{t-1}^{-1/2}) 
\ge \frac{l}{u}\frac{1-\gamma}{1-\gamma^{t-1}}$,
which results in
$\left\|(I+\gamma^{-1}P_{t-1}^{-1/2}A_t^\top A_tP_{t-1}^{-1/2})^{-1}\right\|_2
\le \frac{1}{1+\gamma^{-1}\frac{l}{u}\frac{1-\gamma}{1-\gamma^{t-1}}}$.

\noindent
Combining the above three terms' inequalities results in
\begin{equation*}
\left\|\theta_{t+1} -\theta_t^*\right\| 
\le \sqrt{\frac{u}{l}}\frac{u(\gamma-\gamma^t)}{u(\gamma-\gamma^t)+l(1-\gamma)}\left\|\theta_t-\theta_t^*\right\|
\le \sqrt{\frac{u}{l}}\frac{u\gamma}{u\gamma+l(1-\gamma)}\left\|\theta_t-\theta_t^*\right\|.
\end{equation*}
\end{proof}

\noindent
Now we are ready to present the dynamic regret for the general recursive least-squares update:
\begin{theorem}
\label{thm::gen_ls_dynamic_regret}
{\it 
Let $\theta_t^*$ be the solution to $f_t(\theta)$ in Eq.\eqref{eq::gen_ls_loss}
and $\delta = u/l\ge 1$ be the condition number.
When using the discounted recursive least-squares update in Eq.\eqref{eq::gen_ls_update} 
with $\gamma <\frac{1}{\delta^{3/2}-\delta+1}$
and $\rho = \sqrt{\frac{u}{l}}\frac{u\gamma}{u\gamma+l(1-\gamma)}<1$,
  we can upper bound the dynamic regret:
  \small
  \begin{equation*}
  \mathcal{R}_d \le \sqrt{u}(u/l+1)D\frac{1}{1-\rho}\big(\left\|\theta_1-\theta_1^*\right\| +
+ \sum\limits_{t=2}^T\left\|\theta_t^*-\theta_{t-1}^*\right\|\big) 
  \end{equation*}
  \normalsize
}
\end{theorem}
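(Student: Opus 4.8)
The plan is to replay the argument used for the simple quadratic case in Theorem~\ref{thm::quad_dynamic_regret}, now invoking the general-least-squares analogues of its supporting results, all of which are already established above. First I would bound each per-step gap $f_t(\theta_t)-f_t(\theta_t^*)$ by the product of a gradient norm and a distance. By the Mean Value Theorem there is a point $x$ on the segment $\{\beta\theta_t+(1-\beta)\theta_t^*:\beta\in[0,1]\}$ with
\begin{equation*}
f_t(\theta_t)-f_t(\theta_t^*)=\nabla f_t(x)^\top(\theta_t-\theta_t^*)\le \|\nabla f_t(x)\|\,\|\theta_t-\theta_t^*\|,
\end{equation*}
and Corollary~\ref{corol:norm_gen_ls_grad} supplies the uniform bound $\|\nabla f_t(x)\|\le \sqrt{u}(u/l+1)D$. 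Summing over $t$ reduces the whole problem to controlling $\sum_{t=1}^T\|\theta_t-\theta_t^*\|$, exactly as in the quadratic case.

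Second, I would bound that sum by a telescoping-plus-contraction argument. Writing $\theta_t-\theta_t^*=(\theta_t-\theta_{t-1}^*)+(\theta_{t-1}^*-\theta_t^*)$ and applying the triangle inequality gives
\begin{equation*}
\sum_{t=1}^T\|\theta_t-\theta_t^*\|\le \|\theta_1-\theta_1^*\|+\sum_{t=1}^{T-1}\|\theta_{t+1}-\theta_t^*\|+\sum_{t=2}^T\|\theta_t^*-\theta_{t-1}^*\|.
\end{equation*}
The key input is Corollary~\ref{corol:gen_ls_var_path_norm}, whose contraction estimate $\|\theta_{t+1}-\theta_t^*\|\le\rho\,\|\theta_t-\theta_t^*\|$ with $\rho=\sqrt{u/l}\,\frac{u\gamma}{u\gamma+l(1-\gamma)}$ lets me replace the middle sum by $\rho\sum_{t=1}^{T}\|\theta_t-\theta_t^*\|$ (enlarging the index range to $T$ only adds a nonnegative term). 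Rearranging yields $(1-\rho)\sum_{t=1}^T\|\theta_t-\theta_t^*\|\le\|\theta_1-\theta_1^*\|+\sum_{t=2}^T\|\theta_t^*-\theta_{t-1}^*\|$, and dividing by $1-\rho$ and combining with the first step produces the claimed bound.

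The only place requiring genuine care is the legitimacy of dividing by $1-\rho$, i.e.\ verifying $\rho<1$. Dividing numerator and denominator of $\rho$ by $l$ gives $\rho=\sqrt{\delta}\,\frac{\delta\gamma}{\delta\gamma+1-\gamma}$, so $\rho<1$ is equivalent to $\delta^{3/2}\gamma<\gamma(\delta-1)+1$, that is $\gamma(\delta^{3/2}-\delta+1)<1$, which is precisely the hypothesis $\gamma<\tfrac{1}{\delta^{3/2}-\delta+1}$. This explains why the condition number enters the statement: for ill-conditioned $A_t$ the admissible discount factors shrink, and low dynamic regret can be guaranteed only when $\delta$ is moderate. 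I do not expect a real obstacle beyond this, since the heavy lifting---the matrix-inversion-lemma computation behind Lemma~\ref{lem:gen_ls_var_path} and the spectral bounds behind Corollary~\ref{corol:gen_ls_var_path_norm}, together with the norm bound of Corollary~\ref{corol:norm_gen_ls_grad}---is already done; the theorem itself is essentially a short assembly of these pieces.
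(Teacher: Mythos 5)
Your proposal is correct and follows essentially the same route as the paper's proof: the Mean Value Theorem plus Corollary~\ref{corol:norm_gen_ls_grad} to reduce the regret to $\sum_{t=1}^T\|\theta_t-\theta_t^*\|$, then the triangle-inequality telescoping combined with the contraction of Corollary~\ref{corol:gen_ls_var_path_norm} and division by $1-\rho$. Your explicit check that $\rho<1$ is equivalent to $\gamma<\tfrac{1}{\delta^{3/2}-\delta+1}$ is a small but welcome addition the paper leaves implicit.
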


\begin{proof}
The proof follows the similar steps in the proof of Theorem \ref{thm::quad_dynamic_regret}.
First, we use the Mean Value Theorem to get 
$f_t(\theta_t)-f_t(\theta_t^*) = \nabla f_t(x)^\top(\theta_t-\theta_t^*)
\le \left\|\nabla f_t(x)\right\|\left\|\theta_t-\theta_t^*\right\|$,
where $x\in \{v| v = \beta \theta_t + (1-\beta)\theta_t^*,\beta\in[0,1]\}$.
According to Corollary \ref{corol:norm_gen_ls_grad},
$\left\|\nabla f_t(x)\right\|\le \sqrt{u}(u/l+1)D$.
As a result, 
$\sum\limits_{t=1}^T\big(f_t(\theta_t)-f_t(\theta_t^*)\big)\le \sqrt{u}(u/l+1)D\sum\limits_{t=1}^T\left\|\theta_t-\theta_t^*\right\|$.

\noindent
Now we need to upper bound the term $\sum\limits_{t=1}^T\left\|\theta_t-\theta_t^*\right\|$.
\begin{equation*}
\begin{array}{ll}
\sum\limits_{t=1}^T\left\|\theta_t-\theta_t^*\right\| 
&= \left\|\theta_1-\theta_1^*\right\| 
+ \sum\limits_{t=2}^T\left\|\theta_t-\theta_{t-1}^*+\theta_{t-1}^*-\theta_t^*\right\| \\
&\le \left\|\theta_1-\theta_1^*\right\| + \sum\limits_{t=1}^{T-1}\left\|\theta_{t+1}-\theta_{t}^*\right\| 
+ \sum\limits_{t=2}^T\left\|\theta_t^*-\theta_{t-1}^*\right\|  \\
&\le \left\|\theta_1-\theta_1^*\right\| + \sum\limits_{t=1}^{T}\left\|\theta_{t+1}-\theta_{t}^*\right\| 
+ \sum\limits_{t=2}^T\left\|\theta_t^*-\theta_{t-1}^*\right\|.
\end{array}
\end{equation*}
According to Corollary \ref{corol:gen_ls_var_path_norm}, 
$\left\|\theta_{t+1} -\theta_t^*\right\| \le \rho\left\|\theta_t-\theta_t^*\right\|$.
Then the above inequality can be reformulated as 
$\sum\limits_{t=1}^T\left\|\theta_t-\theta_t^*\right\| \le
\frac{1}{1-\rho}(\left\|\theta_1-\theta_1^*\right\| +
+ \sum\limits_{t=2}^T\left\|\theta_t^*-\theta_{t-1}^*\right\|)$.
Then $\mathcal{R}_d = \sum\limits_{t=1}^T\big(f_t(\theta_t)-f_t(\theta_t^*)\big)
\le \sqrt{u}(u/l+1)D\frac{1}{1-\rho}(\left\|\theta_1-\theta_1^*\right\| +
+ \sum\limits_{t=2}^T\left\|\theta_t^*-\theta_{t-1}^*\right\|)$.

\end{proof}

\noindent
In the above Theorem \ref{thm::gen_ls_dynamic_regret}, the valid range of $\gamma$ is in $(0,1/(\delta^{3/2}-\delta+1))$.
Let us now examine the requirement of $\gamma$ to achieve the sub-linear static regret:
\begin{theorem}
\label{thm::gen_ls_static_regret}
{\it Let $\theta^*$ be the solution to $\min\sum\limits_{t=1}^T f_t(\theta)$. 
  When using the discounted recursive least-squares update in Eq.\eqref{eq::gen_ls_update} with $1-\gamma = 1/T^{\alpha}, \alpha\in (0,1)$,
  we can upper bound the static regret:
  \begin{equation*}
  \mathcal{R}_s \le O(T^{1-\alpha})
  \end{equation*}
}
\end{theorem}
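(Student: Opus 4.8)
The plan is to follow the discounted online Newton analysis of Theorem~\ref{thm:expConcaveThm} (Case~\ref{it:quad}), specialized to the fixed comparator $z_t \equiv \theta^*$, since the quadratic loss \eqref{eq::gen_ls_loss} satisfies \eqref{eq:quadBound} with equality and Hessian $\nabla^2 f_t(\theta) = A_t^\top A_t$, while \eqref{eq::gen_ls_update} is exactly the full-Newton step with $\eta = 1$. The only structural difference from Theorem~\ref{thm:expConcaveThm} is the initialization $P_0 = 0$ (rather than $\epsilon I$), which I would account for by treating the first time step separately where needed. Writing $\nabla_t = \nabla f_t(\theta_t) = A_t^\top(A_t\theta_t - y_t)$, the starting point is the quadratic identity $f_t(\theta_t) - f_t(\theta^*) = \nabla_t^\top(\theta_t - \theta^*) - \tfrac{1}{2}\|\theta_t - \theta^*\|_{A_t^\top A_t}^2$.

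Next I would expand $\|\theta_{t+1} - \theta^*\|_{P_t}^2$ using the update \eqref{eq::gen_ls_update}, which yields $\nabla_t^\top(\theta_t - \theta^*) = \tfrac{1}{2}\big(\|\theta_t - \theta^*\|_{P_t}^2 - \|\theta_{t+1} - \theta^*\|_{P_t}^2\big) + \tfrac{1}{2}\nabla_t^\top P_t^{-1}\nabla_t$. Substituting into the identity above and summing over $t$, the cross terms $\|\theta_t - \theta^*\|_{P_t}^2 - \|\theta_{t+1} - \theta^*\|_{P_t}^2$ are re-indexed so that each $\|\theta_t - \theta^*\|_{P_t}^2$ is paired with $\|\theta_t - \theta^*\|_{P_{t-1}}^2$; using $P_t - P_{t-1} = A_t^\top A_t - (1-\gamma)P_{t-1}$, the resulting weighted norm $\tfrac{1}{2}\|\theta_t - \theta^*\|_{A_t^\top A_t}^2$ cancels exactly against the $-\tfrac{1}{2}\|\theta_t - \theta^*\|_{A_t^\top A_t}^2$ coming from the quadratic identity (here $P_1 = A_1^\top A_1$ makes the $t=1$ boundary term cancel as well). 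After dropping the nonpositive leftovers $-\tfrac{1-\gamma}{2}\|\theta_t - \theta^*\|_{P_{t-1}}^2$ and $-\tfrac{1}{2}\|\theta_{T+1} - \theta^*\|_{P_T}^2$, what remains is the clean bound $\cR_s \le \tfrac{1}{2}\sum_{t=1}^T \nabla_t^\top P_t^{-1}\nabla_t$.

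It then remains to bound the Newton-decrement sum. Since $\|A_t\theta_t - y_t\| \le (u/l+1)D$ by Lemma~\ref{lem:norm_gen_ls_dif}, I have $\nabla_t\nabla_t^\top = A_t^\top(A_t\theta_t - y_t)(A_t\theta_t - y_t)^\top A_t \preceq (u/l+1)^2 D^2\, A_t^\top A_t = (u/l+1)^2 D^2 (P_t - \gamma P_{t-1})$, so $\nabla_t^\top P_t^{-1}\nabla_t = \langle P_t^{-1}, \nabla_t\nabla_t^\top\rangle \le (u/l+1)^2 D^2 \langle P_t^{-1}, P_t - \gamma P_{t-1}\rangle$. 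Applying the log-determinant bound \eqref{eq:logBound} for $t \ge 2$ and telescoping gives $\sum_{t=2}^T \langle P_t^{-1}, P_t - \gamma P_{t-1}\rangle \le \log|P_T| - \log|P_1| - (T-1)n\log\gamma$, while the $t=1$ term is bounded directly by $(u/l+1)^2 D^2 \langle P_1^{-1}, A_1^\top A_1\rangle = (u/l+1)^2 D^2 n$. Using $lI \preceq P_1$ and $P_T \preceq \tfrac{u}{1-\gamma}I$ bounds $\log|P_T| - \log|P_1| \le n\log\tfrac{u}{l(1-\gamma)} = O(\log T)$, and with $\gamma = 1 - T^{-\alpha}$ the estimate $-\log\gamma \le \tfrac{T^{-\alpha}}{1 - T^{-\alpha}}$ gives $-(T-1)n\log\gamma = O(T^{1-\alpha})$, so that $\cR_s \le O(T^{1-\alpha})$. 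The main obstacle is the telescoping bookkeeping in the second paragraph: because the preconditioner $P_t$ varies with $t$, one must carefully re-index the weighted norms and exploit the specific recursion $P_t = \gamma P_{t-1} + A_t^\top A_t$ to obtain the exact cancellation of the $\|\cdot\|_{A_t^\top A_t}^2$ terms, and the degenerate initialization $P_0 = 0$ forces the separate treatment of $t = 1$ in the log-determinant step.
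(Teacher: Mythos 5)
Your proposal is correct, and its first half coincides with the paper's own argument: the paper likewise expands $\|\theta_{t+1}-\theta^*\|_{P_t}^2$ via the update \eqref{eq::gen_ls_update}, exploits the exact quadratic identity $f_t(\theta_t)-f_t(\theta^*)=\nabla_t^\top(\theta_t-\theta^*)-\tfrac12\|\theta_t-\theta^*\|_{A_t^\top A_t}^2$, uses $P_t-\gamma P_{t-1}=A_t^\top A_t$ and $P_1=A_1^\top A_1$ to cancel the curvature terms, and arrives at the same intermediate bound $\cR_s\le\tfrac12\sum_{t=1}^T\nabla_t^\top P_t^{-1}\nabla_t$. (Your bookkeeping via $P_t-P_{t-1}=A_t^\top A_t-(1-\gamma)P_{t-1}$ and the paper's trick of weakening $-\|\theta_{t+1}-\theta^*\|_{P_t}^2$ to $-\gamma\|\theta_{t+1}-\theta^*\|_{P_t}^2$ are the same computation, since your extra dropped term $-\tfrac{1-\gamma}{2}\|\theta_t-\theta^*\|_{P_{t-1}}^2$ is exactly what the paper discards implicitly.)

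Where you genuinely diverge is in bounding the Newton-decrement sum. The paper controls each term spectrally: it writes $\nabla_t^\top P_t^{-1}\nabla_t=(A_t\theta_t-y_t)^\top A_tP_t^{-1}A_t^\top(A_t\theta_t-y_t)$, uses $\sigma_n(P_t)\ge l\tfrac{1-\gamma^t}{1-\gamma}$ and $\sigma_1(A_t^\top A_t)\le u$ to get the per-step bound $\tfrac{u}{l}\tfrac{1-\gamma}{1-\gamma^t}(u/l+1)^2D^2$, and then invokes the integral estimate $\sum_{t=1}^T 1/(1-\gamma^t)\le O(T)$ from the proof of Theorem~\ref{thm::quad_static_regret} (via Lemma~\ref{lem:integral}). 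You instead reuse the log-determinant telescope \eqref{eq:logBound} from Theorem~\ref{thm:expConcaveThm}, with the necessary separate treatment of $t=1$ (where $\langle P_1^{-1},A_1^\top A_1\rangle=n$, since $P_0=0$ makes the lemma inapplicable), obtaining $O(n\log\tfrac{u}{l(1-\gamma)})-(T-1)n\log\gamma=O(T^{1-\alpha})$; this step is fully correct, including the matrix inequality $\nabla_t\nabla_t^\top\preceq(u/l+1)^2D^2(P_t-\gamma P_{t-1})$ from Lemma~\ref{lem:norm_gen_ls_dif}. The trade-off: the paper's route gives explicit per-step decay of the decrement but uses the uniform lower bound $A_t^\top A_t\succeq lI$ at every step inside this stage, whereas your route needs the lower bound only to guarantee $P_1\succ 0$ and pushes the condition number into a logarithmic term $n\log\tfrac{u}{l(1-\gamma)}$ — making this stage of your argument slightly more robust and better aligned with the general machinery of Theorem~\ref{thm:expConcaveThm} (note, though, that Lemma~\ref{lem:norm_gen_ls_dif} itself still requires $lI\preceq A_t^\top A_t$ for all $t$, so the overall assumptions are unchanged). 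Both yield $\cR_s\le O(T^{1-\alpha})$.
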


\begin{proof}

The proof follows the analysis of the online Newton method \cite{hazan2007logarithmic}.
From the update in Eq.\eqref{eq::gen_ls_update}, 
we have $\theta_{t+1}-\theta^* = \theta_t-\theta^*-P_{t}^{-1}\nabla f_t(\theta_t)$
and $P_t(\theta_{t+1}-\theta^*) = P_t(\theta_t-\theta^*)-\nabla f_t(\theta_t)$.
Multiplying the two equalities, we have
$(\theta_{t+1}-\theta^*)^\top P_t(\theta_{t+1}-\theta^*) 
= (\theta_t-\theta^*)^\top P_t(\theta_t-\theta^*)-2\nabla f_t(\theta_t)^\top(\theta_t-\theta^*)
+ \nabla f_t(\theta_t)^\top P_{t}^{-1}\nabla f_t(\theta_t)$.

\noindent
After the reformulation, we have 
$\nabla f_t(\theta_t)^\top(\theta_t-\theta^*) = \frac{1}{2}\nabla f_t(\theta_t)^\top P_{t}^{-1}\nabla f_t(\theta_t)
+\frac{1}{2}(\theta_t-\theta^*)^\top P_t(\theta_t-\theta^*)-\frac{1}{2}(\theta_{t+1}-\theta^*)^\top P_t(\theta_{t+1}-\theta^*)
\le \frac{1}{2}\nabla f_t(\theta_t)^\top P_{t}^{-1}\nabla f_t(\theta_t)
+\frac{1}{2}(\theta_t-\theta^*)^\top P_t(\theta_t-\theta^*)-\frac{1}{2}(\theta_{t+1}-\theta^*)^\top\gamma P_t(\theta_{t+1}-\theta^*)$.

\noindent
Summing the above inequality from $t=1$ to $T$, we have:
\footnotesize
\begin{equation*}
\begin{array}{l}
\sum\limits_{t=1}^T \nabla f_t(\theta_t)^\top(\theta_t-\theta^*) \\
\le \sum\limits_{t=1}^T\frac{1}{2}\nabla f_t(\theta_t)^\top P_{t}^{-1}\nabla f_t(\theta_t)
+ \frac{1}{2}(\theta_1-\theta^*)^\top P_1(\theta_1-\theta^*)
+ \sum\limits_{t=2}^T\frac{1}{2}(\theta_t-\theta^*)^\top(P_t-\gamma P_{t-1})(\theta_t-\theta^*)\\
\quad- \frac{1}{2}(\theta_{T+1}-\theta^*)^\top\gamma P_T(\theta_{T+1}-\theta^*)\\
\le \sum\limits_{t=1}^T\frac{1}{2}\nabla f_t(\theta_t)^\top P_{t}^{-1}\nabla f_t(\theta_t)
+ \frac{1}{2}(\theta_1-\theta^*)^\top(P_1-A_1^\top A_1)(\theta_1-\theta^*)
+ \sum\limits_{t=1}^T\frac{1}{2}(\theta_t-\theta^*)^\top A_t^\top A_t(\theta_t-\theta^*).
\end{array}
\end{equation*}
\normalsize

\noindent
Since $P_1 = A_1^\top A_1$ and $f_t(\theta_t) - f_t(\theta^*) = \nabla f_t(\theta_t)^\top(\theta_t-\theta^*)
-\frac{1}{2}(\theta_t-\theta^*)^\top A_t^\top A_t(\theta_t-\theta^*)$,
we reformulate the above inequality as:
\begin{equation}
\begin{array}{ll}
\sum\limits_{t=1}^T \Big(f_t(\theta_t) - f_t(\theta^*) \Big)
&= \sum\limits_{t=1}^T \Big(\nabla f_t(\theta_t)^\top(\theta_t-\theta^*)
- \frac{1}{2}(\theta_t-\theta^*)^\top A_t^\top A_t(\theta_t-\theta^*)\Big) \\
&\le \sum\limits_{t=1}^T\frac{1}{2}\nabla f_t(\theta_t)^\top P_{t}^{-1}\nabla f_t(\theta_t)\\
&= \sum\limits_{t=1}^T\frac{1}{2}(A_t\theta_t-y_t)^\top A_tP_t^{-1}A_t^\top(A_t\theta_t-y_t) \\
& \le \sum\limits_{t=1}^T \frac{1}{2}\sigma_1(P_t^{-1/2}A_t^\top A_tP_t^{-1/2})\left\|A_t\theta_t-y_t\right\|^2
\end{array}
\end{equation}
Since $\sigma_1(P_t^{-1/2}A_t^\top A_tP_t^{-1/2})\le \sigma_1(P_t^{-1})\sigma_1(A_t^\top A_t)
= \frac{1}{\sigma_n(P_t)}\sigma_1(A_t^\top A_t)$.
Based on the proof of Corollary \ref{corol:gen_ls_var_path_norm},
$\sigma_n(P_t)\ge \frac{1-\gamma^t}{1-\gamma}l$ and $\sigma_1(A_t^TA_t)\le u$.
Then $\sigma_1(P_t^{-1/2}A_t^\top A_tP_t^{-1/2})\le \frac{u}{l}\frac{1-\gamma}{1-\gamma^t}$.
As a result, we have 
\begin{equation}
\begin{array}{ll}
\sum\limits_{t=1}^T \Big(f_t(\theta_t) - f_t(\theta^*) \Big) 
\le \sum\limits_{t=1}^T\frac{1}{2}\frac{u}{l}\frac{1-\gamma}{1-\gamma^t}\left\|A_t\theta_t-y_t\right\|^2 
\le \sum\limits_{t=1}^T\frac{1}{2}\frac{u}{l}\frac{1-\gamma}{1-\gamma^t}(u/l+1)^2D^2 
\le O(T^{1-\alpha})
\end{array}
\end{equation}
where the second inequality is due to Lemma \ref{lem:norm_gen_ls_dif} 
and the third inequality is due to the fact that $\sum\limits_{t=1}^T1/(1-\gamma^t)\le O(T)$
as shown in the proof of Theorem \ref{thm::quad_static_regret}.

\end{proof}

\noindent
Recall that the valid range of $\gamma$ in Theorem \ref{thm::gen_ls_dynamic_regret} is
$(0,1/(\delta^{3/2}-\delta+1))$, 
while having sub-linear static regret requires $\gamma = \frac{T^{\alpha}-1}{T^{\alpha}}$.
Although for some specific $T$, there might be some intersection.
In general, these two are contradictory.
However, as discussed in the main body of the Chapter \ref{chap:trade-off}, more flexible
trade-offs between static and dynamic regret can be achieved via the
gradient descent rule.

\chapter{Online Adaptive Principal Component Analysis and Its extensions}
\label{sec:appdx_adaptivePCA}

\noindent
Before presenting the proofs, we need the following lemma from previous literature:
\begin{lemma}\cite{freund1997decision}
\label{lem::freund_ineq}
Suppose $0\le L\le \tilde{L}$ and $0 < R \le \tilde{R}$. Let $\beta = g(\tilde{L}/\tilde{R})$
where $g(z) = 1/(1+\sqrt{2/z})$. Then
\begin{equation*}
\frac{-L\ln\beta + R}{1-\beta}\le L + \sqrt{2\tilde{L}\tilde{R}} + R
\end{equation*}
\end{lemma}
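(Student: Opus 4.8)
The statement is a purely algebraic inequality, so the plan is to reduce everything to a single scalar variable and one elementary logarithmic bound. First I would introduce $z = \tilde{L}/\tilde{R} > 0$, so that $\beta = g(z) = 1/(1+\sqrt{2/z})$, and record the three arithmetic identities I will need: $\tfrac{1}{\beta} = 1 + \sqrt{2/z}$, $\tfrac{1}{1-\beta} = 1 + \sqrt{z/2}$ (since $1-\beta = \sqrt{2/z}/(1+\sqrt{2/z})$), and $\tfrac{1+\beta}{2\beta} = 1 + \tfrac12\sqrt{2/z}$. Because $z>0$ forces $\beta\in(0,1)$, both $-\ln\beta>0$ and $1-\beta>0$, so the left-hand side is a nonnegative linear function of $L$ and $R$, namely $\tfrac{-\ln\beta}{1-\beta}\,L + \tfrac{1}{1-\beta}\,R$.

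The crux is the logarithmic inequality
\begin{equation*}
-\ln\beta \;\le\; \frac{1-\beta^2}{2\beta} \;=\; \frac{(1-\beta)(1+\beta)}{2\beta},
\qquad \beta\in(0,1].
\end{equation*}
I would prove this by the substitution $\beta = e^{-u}$ with $u\ge 0$, under which it becomes $u \le \tfrac12(e^{u}-e^{-u}) = \sinh u$, which holds for all $u\ge 0$ (equality at $u=0$, and $\sinh u - u$ has nonnegative derivative $\cosh u - 1\ge 0$). I expect this step to be the only real obstacle: the naive bound $-\ln\beta=\ln(1+\sqrt{2/z})\le\sqrt{2/z}$ is too loose and leaves a spurious $\tfrac{1}{\sqrt2}\sqrt{\tilde L\tilde R}$ term, so the sharper $\sinh$-based bound is what produces the exact constant $\sqrt{2\tilde L\tilde R}$.

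Dividing the key inequality by $1-\beta$ gives $\tfrac{-\ln\beta}{1-\beta}\le\tfrac{1+\beta}{2\beta} = 1 + \tfrac12\sqrt{2/z}$, while $\tfrac{1}{1-\beta} = 1 + \sqrt{z/2}$. Plugging these coefficients in,
\begin{equation*}
\frac{-L\ln\beta + R}{1-\beta}
\;\le\; \Big(1 + \tfrac12\sqrt{2/z}\Big)L + \Big(1 + \sqrt{z/2}\Big)R
\;=\; L + R + \tfrac12 L\sqrt{2/z} + R\sqrt{z/2}.
\end{equation*}
Finally I would use the hypotheses $L\le\tilde{L}$ and $R\le\tilde{R}$ together with $z=\tilde L/\tilde R$ to bound the two square-root terms: $\tfrac12 L\sqrt{2/z} \le \tfrac12\tilde L\sqrt{2\tilde R/\tilde L} = \tfrac{1}{\sqrt2}\sqrt{\tilde L\tilde R}$ and $R\sqrt{z/2}\le \tilde R\sqrt{\tilde L/(2\tilde R)} = \tfrac{1}{\sqrt2}\sqrt{\tilde L\tilde R}$. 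Their sum is exactly $\tfrac{2}{\sqrt2}\sqrt{\tilde L\tilde R} = \sqrt{2\tilde L\tilde R}$, yielding the claimed bound $L + \sqrt{2\tilde L\tilde R} + R$. I would note in passing the degenerate case $\tilde L = 0$ (forcing $L=0$), handled directly with $\beta\to 0$ so that the left side reduces to $R\le R$.
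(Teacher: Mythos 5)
Your proof is correct and complete. Note that the paper gives no proof of this lemma at all --- it is imported by citation from Freund and Schapire (1997) --- and your argument is essentially the classical one from that reference: the entire content is the sharp logarithmic bound $-\ln\beta \le (1-\beta^2)/(2\beta)$ on $(0,1]$ (your substitution $\beta = e^{-u}$ reducing it to $u \le \sinh u$ is a clean way to verify it), combined with the exact identities $\tfrac{1}{1-\beta} = 1+\sqrt{z/2}$ and $\tfrac{1+\beta}{2\beta} = 1+\tfrac12\sqrt{2/z}$ for $z = \tilde L/\tilde R$, after which $L \le \tilde L$ and $R \le \tilde R$ make the two cross terms each equal $\tfrac{1}{\sqrt2}\sqrt{\tilde L\tilde R}$, summing to exactly $\sqrt{2\tilde L\tilde R}$. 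Your side remark is also accurate: the naive bound $\ln(1+\sqrt{2/z}) \le \sqrt{2/z}$ costs a factor and leaves a spurious $\tfrac{1}{\sqrt2}\sqrt{\tilde L\tilde R}$, so the $\sinh$-strength inequality is genuinely needed to recover the stated constant. The treatment of the degenerate case $\tilde L = 0$ (so $L = 0$, $\beta \to 0$, and the left side collapses to $R$) is fine, though you should state the convention $0\cdot\ln 0 = 0$ explicitly if you include it.
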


\noindent
Additionally, we need the following classic bound on traces for
postive semidefinite matrices. See, e.g. \cite{tsuda2005matrix}.
\begin{lemma}
\label{lem::matrix_pos_sym_ineq}
For any positive semi-definite matrix $A$ and any symmetric matrices $B$ and $C$,
$B\preceq C$ implies $\Tr(AB)\le\Tr(AC)$. 
\end{lemma}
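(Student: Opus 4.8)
The plan is to reduce the claim to the single fact that the trace of a product of two positive semi-definite matrices is non-negative. First I would fix the sign convention: under the paper's definition, $B \preceq C$ means $B - C$ is negative semi-definite, equivalently $D := C - B$ is positive semi-definite. By linearity of the trace, the target inequality $\Tr(AB) \le \Tr(AC)$ is then equivalent to $\Tr(AD) \ge 0$. Hence it suffices to show that $\Tr(AD) \ge 0$ whenever $A \succeq 0$ and $D \succeq 0$.

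To establish this sub-fact, I would introduce the symmetric positive semi-definite square root $A^{1/2}$, which exists precisely because $A \succeq 0$. Using the cyclic property of the trace,
\[
\Tr(AD) = \Tr\bigl(A^{1/2} A^{1/2} D\bigr) = \Tr\bigl(A^{1/2} D A^{1/2}\bigr).
\]
The matrix $M := A^{1/2} D A^{1/2}$ is itself positive semi-definite: for any $x \in \mathbb{R}^n$, putting $y = A^{1/2} x$ gives $x^\top M x = y^\top D y \ge 0$ since $D \succeq 0$. A positive semi-definite matrix has non-negative eigenvalues, so its trace—the sum of those eigenvalues—is non-negative, giving $\Tr(M) \ge 0$ and therefore $\Tr(AD) \ge 0$.

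Finally I would substitute $D = C - B$ and expand by linearity to conclude $\Tr(AC) - \Tr(AB) = \Tr\bigl(A(C-B)\bigr) = \Tr(AD) \ge 0$, which is exactly the stated inequality. I do not anticipate a genuine obstacle here; the proof is elementary once the square-root trick and the cyclic property are combined. The only point meriting care is the orientation of the inequality $\preceq$, so I would state explicitly at the outset that $B \preceq C$ is to be read as $C - B \succeq 0$, to avoid any sign confusion when translating the matrix ordering into a trace inequality.
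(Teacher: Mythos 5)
Your proof is correct: fixing the paper's sign convention ($B \preceq C$ meaning $C - B \succeq 0$), reducing to $\Tr(AD) \ge 0$ for $A, D \succeq 0$, and establishing that via $\Tr(AD) = \Tr(A^{1/2} D A^{1/2}) \ge 0$ is the standard argument for this fact. The paper itself gives no proof — it cites the lemma as a classic bound from the literature — so your write-up simply supplies the canonical derivation the paper omits, with the one point meriting care (the orientation of $\preceq$) handled explicitly and correctly.
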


\paragraph{Proof of Theorem \ref{thm::adaptive_subset_expert}:}
\begin{proof}
Fix $1\le r\le s \le T$. We set $\mathbf{q_t} = \mathbf{q}\in
\mathcal{B}_{n-k}^n$ for $t=r,\dots,s$ and $0$ elsewhere. Thus, we
have that  $\|\mathbf{q_t}\|_1$ is either $0$ or $1$. 

\noindent
According to Lemma \ref{lem::adaptive_expert_step_ineq}, for both cases of $\mathbf{q_t}$, we have
\begin{equation}
\label{eq::initial_ineq_entropy}
\left\|\mathbf{q_t}\right\|_1 \mathbf{w_t}^\top\mathbf{\ell_t}(1-\exp(-\eta)) - \eta \mathbf{q_t}^\top\mathbf{\ell_t} \le
\sum_{i=1}^n q_{t,i}\ln(\frac{v_{t+1,i}}{\hat{w}_{t,i}})
\end{equation}

\noindent
The analysis for $\sum_{i=1}^n
q_{t,i}\ln(\frac{v_{t+1,i}}{\hat{w}_{t,i}})$ follows the Proof of
Proposition $2$ in \cite{cesa2012new}.  
We describe the steps for completeness, since it is helpful for
understanding the effect of the fixed-share step,
Eq.(\ref{eq::fix_share_expert}). This analysis will be crucial for the
understanding how the fixed-share step can be applied to PCA
problems. 
\footnotesize
\begin{equation}
\label{eq::analysis_entropy_expert}
\begin{array}{ll}
\sum_{i=1}^n q_{t,i}\ln(\frac{v_{t+1,i}}{\hat{w}_{t,i}})
=& \underbrace{\sum_{i=1}^n\Big(q_{t,i}\ln\frac{1}{\hat{w}_{t,i}} - q_{t-1,i}\ln\frac{1}{v_{t,i}}\Big)}_A
 + \underbrace{\sum_{i=1}^n\Big(q_{t-1,i}\ln\frac{1}{v_{t,i}} - q_{t,i}\ln\frac{1}{v_{t+1,i}}\Big)}_B
\end{array}
\end{equation}
\normalsize

\noindent
For the expression of $A$, we have
\footnotesize
\begin{equation*}
\begin{array}{ll}
A = \sum\limits_{i:q_{t,i}\ge q_{t-1,i}}\Big((q_{t,i}-q_{t-1,i})\ln\frac{1}{\hat{w}_{t,i}} +
 q_{t-1,i}\ln\frac{v_{t,i}}{\hat{w}_{t,i}}\Big)
 +
 \sum\limits_{i:q_{t,i}<q_{t-1,i}}\Big(\underbrace{(q_{t,i}-q_{t-1,i})\ln\frac{1}{v_{t,i}}}_{\le 0}+q_{t,i}\ln\frac{v_{t,i}}{\hat{w}_{t,i}}\Big)
\end{array}
\end{equation*}
\normalsize

\noindent
Based on the update in Eq.(\ref{eq::our_expert_update}), we have $1/\hat{w}_{t,i}\le n/\alpha$
and $v_{t,i}/\hat{w}_{t,i}\le 1/(1-\alpha)$. Plugging the bounds into the above equation, we have
\begin{equation*}
\begin{array}{ll}
A \le  \underbrace{\sum\limits_{i:q_{t,i}\ge q_{t-1,i}} (q_{t,i}-q_{t-1,i})}_{= D_{TV}(\mathbf{q_t},\mathbf{q_{t-1}})}\ln\frac{n}{\alpha} 
 + \underbrace{\Big(\sum\limits_{i:q_{t,i}\ge q_{t-1,i}}q_{t-1,i}+
\sum\limits_{i:q_{t,i}<q_{t-1,i}}q_{t,i}\Big)}_{=\left\|\mathbf{q_t}\right\|_1-D_{TV}(\mathbf{q_t},\mathbf{q_{t-1}})}
\ln\frac{1}{1-\alpha} .
\end{array}
\end{equation*}

\noindent
Telescoping the expression of $B$, substituting the above inequality in Eq.(\ref{eq::analysis_entropy_expert}),
and summing over $t=2,\dots,T$, we have 
\begin{equation*}
\sum\limits_{t=2}^T\sum\limits_{i=1}^n q_{t,i}\ln\frac{v_{t+1,i}}{\hat{w}_{t,i}} 
\le 
m(\mathbf{q_{1:T}})\ln\frac{n}{\alpha}+
\Big(\sum\limits_{t=2}^T\left\|\mathbf{q_t}\right\|_1-m(\mathbf{q_{1:T}})\Big)
\ln\frac{1}{1-\alpha}
+ \sum\limits_{i=1}^n q_{1,i}\ln\frac{1}{v_{2,i}}.
\end{equation*}

\noindent
Adding the $t=1$ term to the above inequality, we have
\begin{equation*}
\begin{array}{ll}
\sum\limits_{t=1}^T\sum\limits_{i=1}^n q_{t,i}\ln\frac{v_{t+1,i}}{\hat{w}_{t,i}} 
\le  \left\|\mathbf{q_1}\right\|_1\ln(n)+m(\mathbf{q_{1:T}})\ln\frac{n}{\alpha}
 +\Big(\sum\limits_{t=1}^T\left\|\mathbf{q_t}\right\|_1-m(\mathbf{q_{1:T}})\Big)\ln\frac{1}{1-\alpha}.
\end{array}
\end{equation*}

\noindent
Now we bound the right side, using the choices for $\mathbf{q_t}$
described at the beginning of the proof.  
If $r\ge 2$, $m(\mathbf{q_{1:T}}) = 1$, and $\left\|\mathbf{q_1}\right\|_1 = 0$.
If $r = 1$, $m(\mathbf{q_{1:T}}) = 0$, and $\left\|\mathbf{q_1}\right\|_1 = 1$.
Thus, $m(\mathbf{q_{1:T}}) + \left\|\mathbf{q_1}\right\|_1 = 1$, and the right part can be upper bounded by
$\ln\frac{n}{\alpha}+T\ln\frac{1}{1-\alpha}$.

\noindent
After combining the above inequality with Eq.(\ref{eq::initial_ineq_entropy}),
setting $\mathbf{q_t} = \mathbf{q}\in \mathcal{B}_{\text{n-k}}^\text{n}$ for $t=r,\dots,s$ and $0$ elsewhere,
and multiplying both sides by $n-k$,
we have
\begin{equation*}
\begin{array}{l}
(1-\exp(-\eta))\sum\limits_{t=r}^s (n-k)\mathbf{w_t}^\top\mathbf{\ell_t} - \eta \sum\limits_{t=r}^s (n-k)\mathbf{q}^\top\mathbf{\ell_t} 
\le (n-k)\ln\frac{n}{\alpha}+(n-k)T\ln\frac{1}{1-\alpha}
\end{array}
\end{equation*}
If we set $\alpha = 1/(1+(n-k)T)$, then the right part can be upper bounded by $(n-k)\ln(n(1+(n-k)T))+1$,
which equals to $D$ as defined in the Theorem \ref{thm::adaptive_subset_expert}.
Thus, the above inequality can be reformulated as 
\begin{equation*}
\sum\limits_{t=r}^s (n-k)\mathbf{w_t}^\top\mathbf{\ell_t} \le \frac{\eta \sum\limits_{t=r}^s (n-k)\mathbf{q}^\top\mathbf{\ell_t} + D}{1-\exp(-\eta)}
\end{equation*}

\noindent
Since the above inequality holds for arbitrary $\mathbf{q}\in \mathcal{B}_{\text{n-k}}^\text{n}$, we have
\begin{equation}
\label{eq::expert_ineq_before_final}
\sum\limits_{t=r}^s (n-k)\mathbf{w_t}^\top\mathbf{\ell_t} \le 
\frac{\eta \min\limits_{\mathbf{q}\in\mathcal{B}_{\text{n-k}}^\text{n}}\sum\limits_{t=r}^s (n-k)\mathbf{q}^\top\mathbf{\ell_t} + D}{1-\exp(-\eta)}
\end{equation} 

\noindent
We will apply the inequality in Lemma \ref{lem::freund_ineq} to upper bound the right part in Eq.(\ref{eq::expert_ineq_before_final}).
With $\min\limits_{\mathbf{q}\in\mathcal{B}_{\text{n-k}}^\text{n}}\sum\limits_{t=r}^s (n-k)\mathbf{q}^\top\mathbf{\ell_t} \le L$
and $\eta = \ln(1+\sqrt{2D/L})$, we have 
\begin{equation*}
\sum\limits_{t=r}^s (n-k)\mathbf{w_t}^\top\mathbf{\ell_t} - 
\min\limits_{\mathbf{q}\in\mathcal{B}_{\text{n-k}}^\text{n}}\sum\limits_{t=r}^s (n-k)\mathbf{q}^\top\mathbf{\ell_t} \le \sqrt{2LD}+D
\end{equation*}

\noindent
Since the above inequality always holds for all intervals, $[r,s]$,
the result is proved by maximizing the left side over $[r,s]$.
\end{proof}

\paragraph{Proof of Theorem \ref{thm::adaptive_pca}:}

\begin{proof}
In the proof, we will examine two cases of $Q_t$: $Q_t\in\mathscr{B}_{n-k}^n$, and $Q_t = 0$.

\noindent
We first apply the eigendecomposition to $Q_t$ as $Q_t = \widetilde{D}\diag(\mathbf{q_t})\widetilde{D}^\top$,
where $\widetilde{D} = [\mathbf{\tilde{d}_1},\dots,\mathbf{\tilde{d}_n}]$.
Since in the adaptive setting, $Q_{t-1}$ is either equal to $Q_t$ or $0$,
they share the same eigenvectors and 
$Q_{t-1}$ can be expressed as $Q_{t-1} = \widetilde{D}\diag(\mathbf{q_{t-1}})\widetilde{D}^\top$.

\noindent
According to Lemma \ref{lem::adaptive_pca_step_ineq}, the following inequality is true for both cases of $Q_t$:
\begin{equation}
\label{eq::step_ineq_pca}
\begin{array}{l}
\left\|\mathbf{q_t}\right\|_1\Tr(W_t\mathbf{x_t}\mathbf{x_t}^\top)(1-\exp(-\eta)) -\eta\Tr(Q_t\mathbf{x_t}\mathbf{x_t}^\top)
 \le -\Tr(Q_t\ln \widehat{W}_t) + \Tr(Q_t\ln V_{t+1})
\end{array}
\end{equation}

\noindent
The next steps extend proof of Proposition 2 in \cite{cesa2012new} to the matrix case.

\noindent
We analyze the right part of the above inequality,
which can be expressed as:
\begin{equation}
\label{eq::quantum_split_two_parts}
\begin{array}{l}
-\Tr(Q_t\ln \widehat{W}_t) + \Tr(Q_t\ln V_{t+1}) = \bar{A} + \bar{B}
\end{array}
\end{equation}
where $\bar{A} = -\Tr(Q_t\ln \widehat{W}_t) + \Tr(Q_{t-1}\ln V_t)$, 
and $\bar{B} = - \Tr(Q_{t-1}\ln V_t) + \Tr(Q_t\ln V_{t+1})$.

\noindent
We will first upper bound the $\bar{A}$ term, 
and then telescope the $\bar{B}$ term.

\noindent
$\bar{A}$ can be expressed as:
\footnotesize
\begin{equation*}
\begin{array}{l}
\bar{A} = \sum\limits_{i:q_{t,i}\ge q_{t-1,i}}\Big(
\underbrace{-\Tr\big((q_{t,i}\mathbf{\tilde{d}_i}\mathbf{\tilde{d}_i}^\top-q_{t-1,i}\mathbf{\tilde{d}_i}\mathbf{\tilde{d}_i}^\top)
\ln \widehat{W}_t\big)}_{\circled{1}}
+ \underbrace{\Tr(q_{t-1,i}\mathbf{\tilde{d}_i}\mathbf{\tilde{d}_i}^\top\ln V_t) 
- \Tr(q_{t-1,i}\mathbf{\tilde{d}_i}\mathbf{\tilde{d}_i}^\top\ln\widehat{W}_t)}_{\circled{2}}
\Big) \\
\quad\quad+ \sum\limits_{i:q_{t,i}<q_{t-1,i}}\Big(
\underbrace{-\Tr\big((q_{t,i}\mathbf{\tilde{d}_i}\mathbf{\tilde{d}_i}^\top
-q_{t-1,i}\mathbf{\tilde{d}_i}\mathbf{\tilde{d}_i}^\top)\ln V_t\big)}_{\circled{3}} 
+ \underbrace{\Tr(q_{t,i}\mathbf{\tilde{d}_i}\mathbf{\tilde{d}_i}^\top\ln V_t) 
- \Tr(q_{t,i}\mathbf{\tilde{d}_i}\mathbf{\tilde{d}_i}^\top\ln\widehat{W}_t)}_{\circled{4}}
\Big)
\end{array}
\end{equation*}
\normalsize

\noindent
For $\circled{1}$, it can be expressed as:
\begin{equation*}
\begin{array}{ll}
\circled{1} &= \Tr\big((q_{t,i}\mathbf{\tilde{d}_i}\mathbf{\tilde{d}_i}^\top
- q_{t-1,i}\mathbf{\tilde{d}_i}\mathbf{\tilde{d}_i^T})\ln \widehat{W}_t^{-1}\big)\\
&\le \Tr\big((q_{t,i}\mathbf{\tilde{d}_i}\mathbf{\tilde{d}_i}^\top-q_{t-1,i}\mathbf{\tilde{d}_i}\mathbf{\tilde{d}_i}^\top)\ln\frac{n}{\alpha}\big)
=(q_{t,i}-q_{t-1,i})\ln\frac{n}{\alpha}.
\end{array}
\end{equation*}
The inequality holds because the update in Eq.(\ref{eq::fix_share_pca}) 
implies $\ln\widehat{W}_T^{-1}\preceq I\ln\frac{n}{\alpha}$ and
furthermore,
$(q_{t,i}\mathbf{\tilde{d}_i}\mathbf{\tilde{d}_i}^\top-q_{t-1,i}\mathbf{\tilde{d}_i}\mathbf{\tilde{d}_i}^\top)$
is positive semi-definite. 
Thus, Lemma \ref{lem::matrix_pos_sym_ineq}, gives the result.

\noindent
The expression for $\circled{2}$ can be bounded as 
\begin{equation*}
\begin{array}{ll}
\circled{2} = \Tr(q_{t-1,i}\mathbf{\tilde{d}_i}\mathbf{\tilde{d}_i}^\top\ln(V_t\widehat{W}_t^{-1})) 
  \le q_{t-1,i}\ln\frac{1}{1-\alpha}
\end{array}
\end{equation*}
where the equality is due to the fact that $V_t$ and $\widehat{W}_t$
have the same eigenvectors.
The inequality follows since $\ln(V_t\widehat{W}_t^{-1})\preceq
I\ln\frac{1}{1-\alpha}$, due to the update in
Eq.(\ref{eq::fix_share_pca}),
while $q_{t-1,i}\mathbf{\tilde{d}_i}\mathbf{\tilde{d}_i}^\top$ is
positive semi-definite. Thus Lemma \ref{lem::matrix_pos_sym_ineq} gives the result.

\noindent
The bound $\circled{3}$ can be expressed as:
\begin{equation*}
\circled{3} = \Tr\big((-q_{t,i}\mathbf{\tilde{d}_i}\mathbf{\tilde{d}_i}^\top
+ q_{t-1,i}\mathbf{\tilde{d}_i}\mathbf{\tilde{d}_i}^\top)\ln V_t\big) \le 0
\end{equation*}
Here, the inequality follows since
$\ln V_t \preceq 0$ and 
and
$(-q_{t,i}\mathbf{\tilde{d}_i}\mathbf{\tilde{d}_i}^\top+q_{t-1,i}\mathbf{\tilde{d}_i}\mathbf{\tilde{d}_i}^\top)$
is positive semi-definite.
Thus, Lemma \ref{lem::matrix_pos_sym_ineq} gives the result.

\noindent
For $\circled{4}$, we have $\circled{4} \le q_{t,i}\ln\frac{1}{1-\alpha}$,
which follows the same argument used to bound the term $\circled{2}$.

\noindent
Thus, $\bar{A}$ can be upper bounded as follows:
\begin{equation*}
\begin{array}{ll}
\bar{A} \le  \underbrace{\sum\limits_{i:q_{t,i}\ge q_{t-1,i}} (q_{t,i}-q_{t-1,i})}_{= D_{TV}(\mathbf{q_t},\mathbf{q_{t-1}})}\ln\frac{n}{\alpha}
 + \underbrace{\Big(\sum\limits_{i:q_{t,i}\ge q_{t-1,i}}q_{t-1,i}
+ \sum\limits_{i:q_{t,i}<q_{t-1,i}} q_{t,i}\Big)}_{=\left\|\mathbf{q_t}\right\|_1-D_{TV}(\mathbf{q_t},\mathbf{q_{t-1}})}
\ln\frac{1}{1-\alpha} 
\end{array}
\end{equation*}

\noindent
Then we telescope the $\bar{B}$ term, substitute the above inequality for $\bar{A}$ into Eq.(\ref{eq::quantum_split_two_parts}), 
and sum over $t=2,\dots,T$ to give:
\scriptsize
\begin{equation*}
\begin{array}{l}
\sum\limits_{t=2}^T \Big(-\Tr(Q_t\ln \widehat{W}_t) + \Tr(Q_t\ln V_{t+1})\Big) 
\le m(\mathbf{q_{1:T}})\ln\frac{n}{\alpha}+\Big(\sum\limits_{t=2}^T\left\|\mathbf{q_t}\right\|_1-m(\mathbf{q_{1:T}})\Big)
\ln\frac{1}{1-\alpha}
- \Tr(Q_1\ln V_2)
\end{array}
\end{equation*}
\normalsize

\noindent
Adding the $t=1$ term to the above inequality, we have 
\scriptsize
\begin{equation*}
\begin{array}{l}
\sum\limits_{t=1}^T \Big(-\Tr(Q_t\ln \widehat{W}_t) + \Tr(Q_t\ln V_{t+1})\Big)
\le \left\|\mathbf{q_1}\right\|_1\ln(n)+m(\mathbf{q_{1:T}})\ln\frac{n}{\alpha}
 +\Big(\sum\limits_{t=1}^T\left\|\mathbf{q_t}\right\|_1-m(\mathbf{q_{1:T}})\Big)\ln\frac{1}{1-\alpha}
\end{array}
\end{equation*}
\normalsize

\noindent
For the above inequality, we set $Q_t = Q\in \mathscr{B}_{n-k}^n$ for $t=r,\dots,s$ and $0$ elsewhere,
which makes $\mathbf{q_t} = \mathbf{q}\in \mathcal{B}_{n-k}^n$ for $t=r,\dots,s$ and $0$ elsewhere.
If $r\ge 2$, $m(\mathbf{q_{1:T}}) = 1$, and $\left\|\mathbf{q_1}\right\|_1 = 0$.
If $r = 1$, $m(\mathbf{q_{1:T}}) = 0$, and $\left\|\mathbf{q_1}\right\|_1 = 1$.
Thus, $m(\mathbf{q_{1:T}}) + \left\|\mathbf{q_1}\right\|_1 = 1$, and the right part can be upper bounded by
$\ln\frac{n}{\alpha}+T\ln\frac{1}{1-\alpha}$.

\noindent
The rest of the steps follow exactly the same as in the proof of Theorem \ref{thm::adaptive_subset_expert}.
\end{proof}

\paragraph{Proof of Lemma \ref{lem::mat_lem1_bianchi}:}

\begin{proof}
We first deal with the term $\Tr(Q_t\ln V_{t+1})$.
According to the update in Eq.(\ref{eq::var_unit_v_t+1}), we have
\footnotesize
\begin{equation*}
\begin{array}{l}
\Tr(Q_t\ln V_{t+1}) = \Tr\bigg(Q_t\ln\Big(\frac{\exp(\ln Y_t - \eta C_t)}{\Tr(\exp(\ln Y_t - \eta C_t))}\Big)\bigg) 
 = \Tr\big( Q_t(\ln Y_t -\eta C_t)\big) - \ln \Big(\Tr\big(\exp(\ln Y_t  - \eta C_t)\big)\Big),
\end{array}
\end{equation*}
\normalsize
since $Q_t \in\mathscr{B}_1^n$ and $\Tr(Q_t) = 1$.

\noindent
As a result, we have
$\Tr(Q_t\ln V_{t+1}) - \Tr(Q_t\ln Y_t)$ $=$ $-\eta\Tr(Q_tC_t) - \ln \Big(\Tr\big(\exp(\ln Y_t  - \eta C_t)\big)\Big)$.

\noindent
Thus, to prove the inequality in Lemma \ref{lem::mat_lem1_bianchi},
it is enough to prove the following inequality
\begin{equation*}
\eta\Tr(Y_tC_t) - \frac{\eta^2}{2} + \ln \Big(\Tr\big(\exp(\ln Y_t  - \eta C_t)\big)\Big) \le 0
\end{equation*} 

\noindent
Before we proceed, we need the following lemmas:
\begin{lemma}[Golden-Thompson inequality]
For any symmetric matrices $A$ and $B$, the following inequality holds:
\begin{equation*}
\Tr\big(\exp(A + B)\big) \le \Tr\big(\exp(A)\exp(B)\big)
\end{equation*}
\end{lemma}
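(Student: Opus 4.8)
Since the Golden--Thompson inequality is classical, the plan is to reduce it to a known trace inequality via a symmetric splitting of the matrix exponential. As $A$ and $B$ are symmetric, $P := \exp(A)$ and $Q := \exp(B)$ are symmetric positive definite, so arbitrary real powers $P^a,Q^a$ are well defined and positive definite; I would work with these throughout. First I would invoke the symmetric (Strang) form of the Lie product formula,
\begin{equation*}
\exp(A+B) = \lim_{n\to\infty}\Big(\exp(A/2n)\,\exp(B/n)\,\exp(A/2n)\Big)^{n},
\end{equation*}
which holds for any symmetric $A,B$ and reduces the problem to controlling the trace of the $n$-th power of the positive definite matrix $\exp(A/2n)\exp(B/n)\exp(A/2n) = P^{a/2}Q^{a}P^{a/2}$, where $a = 1/n \in (0,1]$.

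The key ingredient is the Araki--Lieb--Thirring trace inequality: for positive definite $X,Y$ and any exponent $q \ge 1$,
\begin{equation*}
\Tr\Big(\big(X^{1/2}\,Y\,X^{1/2}\big)^{q}\Big) \le \Tr\big(X^{q}Y^{q}\big).
\end{equation*}
Applying this with $X = P^{a}$, $Y = Q^{a}$, and $q = 1/a = n \ge 1$ gives
\begin{equation*}
\Tr\Big(\big(P^{a/2}Q^{a}P^{a/2}\big)^{n}\Big) \le \Tr\big(P^{na}Q^{na}\big) = \Tr(PQ) = \Tr\big(\exp(A)\exp(B)\big),
\end{equation*}
uniformly in $n$, since $na = 1$. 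Letting $n \to \infty$ and using continuity of the trace together with the product formula, the left-hand side converges to $\Tr(\exp(A+B))$, which yields the claim.

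The main obstacle is establishing the Araki--Lieb--Thirring inequality itself, as the remaining steps are just the splitting formula and a limit. I would prove it by the standard dyadic-exponent induction: one first shows, using the trace Cauchy--Schwarz inequality $|\Tr(M^\top N)| \le \sqrt{\Tr(M^\top M)}\,\sqrt{\Tr(N^\top N)}$, the ``doubling'' bound $\Tr((ST)^{2m}) \le \Tr\big((ST)^m (TS)^m\big)$ for positive semidefinite $S,T$, and then iterates this together with cyclicity of the trace and the similarity $ST \sim S^{1/2}TS^{1/2}$ to obtain $\Tr((ST)^{2^k}) \le \Tr(S^{2^k}T^{2^k})$; a continuity argument then extends the integer-power statement to all real $q \ge 1$. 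Care is needed because the iterates $(ST)^{m}$ are not symmetric, so the Cauchy--Schwarz doublings and the reassociation of the inner products must be arranged so that the non-symmetric cross terms collapse exactly into even powers of $S$ and $T$, which is precisely where the restriction to dyadic exponents is essential.
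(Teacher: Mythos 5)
The paper does not actually prove this lemma: the Golden--Thompson inequality is imported as a classical black box (stated without proof, alongside Lemma 2.1 of \cite{tsuda2005matrix}) and used only inside the proof of Lemma \ref{lem::mat_lem1_bianchi}, so there is no in-paper argument to compare yours against. On its own merits, your proof is the standard modern route and is sound: the symmetric Trotter--Strang formula is valid for arbitrary (finite-dimensional) symmetric $A,B$, your identification $\exp(A/2n)\exp(B/n)\exp(A/2n)=P^{a/2}Q^aP^{a/2}$ with $a=1/n$ is correct, the Araki--Lieb--Thirring application with $X=P^a$, $Y=Q^a$, $q=n$ gives exactly $\Tr\big((P^{a/2}Q^aP^{a/2})^n\big)\le\Tr(P^{na}Q^{na})=\Tr(PQ)$ uniformly in $n$, and continuity of the trace passes the left side to $\Tr(\exp(A+B))$. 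What this buys over the bare citation is a self-contained argument whose only nontrivial input is a trace inequality provable by elementary Cauchy--Schwarz manipulations; the symmetric splitting additionally keeps the iterate $P^{a/2}Q^aP^{a/2}$ positive definite, so all traces of powers are manifestly nonnegative.

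One claim in your final paragraph is overstated, though it is harmless to the proof. The extension of Araki--Lieb--Thirring from dyadic exponents $q=2^k$ to \emph{all} real $q\ge 1$ is not a ``continuity argument'': the passage to non-dyadic (even non-integer) exponents is genuinely harder and is normally done via log-majorization and antisymmetric tensor powers, not by taking limits of the dyadic statement. But you never need that generality. Since the Trotter limit holds along every subsequence, you may simply run your argument with $n=2^k$, for which the doubling bound $\Tr(M^2)\le\Tr(MM^\top)$ with $M=(ST)^m$, combined with cyclicity (e.g.\ $\Tr\big((ST)^2(TS)^2\big)=\Tr\big((TS^2T)(ST^2S)\big)\le\Tr(S^2T^2S^2T^2)$ by one more Cauchy--Schwarz on the positive semidefinite blocks), delivers $\Tr\big((ST)^{2^k}\big)\le\Tr\big(S^{2^k}T^{2^k}\big)$ exactly as you sketch. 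With that restriction stated explicitly, your proof closes completely with elementary ingredients.
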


\noindent
\begin{lemma}[Lemma 2.1 in \cite{tsuda2005matrix}]
\label{lem::lem21_tsuda}
For any symmetric matrix $A$ such that $0\preceq A\preceq I$ and any $\rho_1,\rho_2 \in\mathbb{R}$,
the following holds:
\begin{equation*}
\exp\big(A\rho_1 + (I-A)\rho_2\big) \preceq A\exp(\rho_1) + (I-A)\exp(\rho_2)
\end{equation*}
\end{lemma}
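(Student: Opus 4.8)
The plan is to exploit the fact that $A$ and $I-A$ are simultaneously diagonalizable, which collapses the matrix inequality into a family of scalar inequalities governed by the convexity of the exponential. First I would diagonalize the symmetric matrix $A$ as $A = U\Lambda U^\top$, where $U$ is orthogonal and $\Lambda = \diag(\lambda_1,\dots,\lambda_n)$. Because $0\preceq A\preceq I$, every eigenvalue satisfies $\lambda_i\in[0,1]$.

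Next I would observe that the exponent $A\rho_1 + (I-A)\rho_2$ is a polynomial in $A$, hence shares the eigenvectors of $A$, so that in the basis $U$ it is diagonal:
\begin{equation*}
A\rho_1 + (I-A)\rho_2 = U\diag\big(\lambda_i\rho_1 + (1-\lambda_i)\rho_2\big)U^\top.
\end{equation*}
Applying the matrix exponential in this basis gives
\begin{equation*}
\exp\big(A\rho_1+(I-A)\rho_2\big) = U\diag\Big(\exp\big(\lambda_i\rho_1+(1-\lambda_i)\rho_2\big)\Big)U^\top,
\end{equation*}
while the right-hand side of the claim is
\begin{equation*}
A\exp(\rho_1)+(I-A)\exp(\rho_2) = U\diag\big(\lambda_i e^{\rho_1}+(1-\lambda_i)e^{\rho_2}\big)U^\top.
\end{equation*}
Both sides therefore have the same eigenvectors (the columns of $U$), so the Loewner ordering $\preceq$ between them is equivalent to the entrywise comparison of their eigenvalues.

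It then remains to verify the scalar inequality
\begin{equation*}
\exp\big(\lambda_i\rho_1+(1-\lambda_i)\rho_2\big)\le \lambda_i e^{\rho_1}+(1-\lambda_i)e^{\rho_2}, \quad i=1,\dots,n.
\end{equation*}
Since $\lambda_i\in[0,1]$, the argument $\lambda_i\rho_1+(1-\lambda_i)\rho_2$ is a convex combination of $\rho_1$ and $\rho_2$, so this is precisely the convexity of the exponential function. I expect this final step to be entirely routine; the only point requiring a word of care is the justification that $A$ and $I-A$ commute, which is what makes both sides simultaneously diagonalizable and reduces the matrix ordering to the scalar ordering. Assembling the three displayed identities with the scalar convexity bound then yields the claimed matrix inequality.
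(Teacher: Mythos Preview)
Your argument is correct. Note, however, that the paper does not supply its own proof of this lemma: it is quoted verbatim as Lemma~2.1 from \cite{tsuda2005matrix} and used as a black box inside the proof of Lemma~\ref{lem::mat_lem1_bianchi}. Your diagonalization-then-scalar-convexity approach is the standard way to establish this inequality and is essentially what one finds in the cited source, so there is nothing to compare against here beyond confirming that your proof is sound---which it is.
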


\noindent
Then we apply the Golden-Thompson inequality to the term $\Tr\big(\exp(\ln Y_t  - \eta C_t)\big)$,
which gives us the inequality below:
\begin{equation*}
\Tr\big(\exp(\ln Y_t  - \eta C_t)\big) \le \Tr(Y_t\exp(-\eta C_t)).
\end{equation*}
For the term $\exp(-\eta C_t)$, 
by applying the Lemma \ref{lem::lem21_tsuda} with $\rho_1 = -\eta$ and $\rho_2 = 0$,
we have the following inequality:
\begin{equation*}
 \exp(-\eta C_t) \preceq I - C_t(1-\exp(-\eta)).
\end{equation*}
Thus, we have
\begin{equation*}
\Tr(Y_t\exp(-\eta C_t)) \le 1 - \Tr(Y_tC_t)(1-\exp(-\eta)),
\end{equation*}
and 
\begin{equation*}
\Tr\big(\exp(\ln Y_t  - \eta C_t)\big) \le 1 - \Tr(Y_tC_t)(1-\exp(-\eta)),
\end{equation*}
since $Y_t \in\mathscr{B}_1^n$ and $\Tr(Y_t) = 1$.

\noindent
Thus, it is enough to prove the following inequality
\begin{equation*}
\eta\Tr(Y_tC_t) - \frac{\eta^2}{2} + \ln \Big(1 - \Tr(Y_tC_t)(1-\exp(-\eta))\Big) \le 0
\end{equation*} 

\noindent
Since $\ln(1-x) \le -x$, we have
\begin{equation*}
\ln \Big(1 - \Tr(Y_tC_t)(1-\exp(-\eta))\Big) \le  -\Tr(Y_tC_t)(1-\exp(-\eta)).
\end{equation*}
Thus, it suffices to prove the following inequality:
\begin{equation*} 
\big(\eta-1+\exp(-\eta)\big)\Tr(Y_tC_t) - \frac{\eta^2}{2} \le 0
\end{equation*}

\noindent
Note that by using convexity
of $\exp(-\eta)$, $\eta-1+\exp(-\eta) \ge 0$.

\noindent
By applying Lemma \ref{lem::matrix_pos_sym_ineq} with $A = Y_t$, $B = C_t$, and $C = I$,
we have $\Tr(Y_tC_t) \le \Tr(Y_t) = 1$.
Thus, when $\eta \ge 0$, it is enough to prove the following inequality 
\begin{equation*}
\eta-1+\exp(-\eta) - \frac{\eta^2}{2} \le 0.
\end{equation*}
This inequality follows from convexity of
$\frac{\eta^2}{2}-\exp(-\eta)$ over $\eta\ge 0$. 
\end{proof}

\paragraph{Proof of Theorem \ref{thm::adaptive_var_simplex}:}

\begin{proof}
First, since $0\preceq C_t\preceq I$, we have $\max_{i,j}|C_t(i,j)|$ $\le 1$.

\noindent
Before we proceed, we need the following lemma from \cite{warmuth2006online}:
\begin{lemma}[Lemma 1 in \cite{warmuth2006online}]
\label{lem::var_simplex_step}
Let $\max_{i,j}|C_t(i,j)|\le\frac{r}{2}$, then
for any $\mathbf{u_t}\in\mathcal{B}_1^n$, any constants $a$ and $b$ such that $0\le a \le \frac{b}{1+rb}$,
and $\eta = \frac{2b}{1+rb}$, we have
\begin{equation*}
a\mathbf{y_t}^\top C_t\mathbf{y_t} - b\mathbf{u_t}^\top C_t\mathbf{u_t} \le d(\mathbf{u_t},\mathbf{y_t}) - d(\mathbf{u_t},\mathbf{v_{t+1}})
\end{equation*}
\end{lemma}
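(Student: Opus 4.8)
The plan is to prove Lemma~\ref{lem::var_simplex_step} as a standard per-step (``progress'') bound for the exponentiated-gradient update applied to the quadratic variance loss $\mathbf{y}^\top C_t\mathbf{y}$ over the simplex, in the same spirit as the density-matrix bound in Lemma~\ref{lem::mat_lem1_bianchi} but now in the commutative (vector) setting. First I would expand the relative-entropy difference directly from the update rule of Algorithm~\ref{alg::adaptive_var_simplex}. Writing $Z_t = \sum_j y_{t,j}\exp(-\eta (C_t\mathbf{y_t})_j)$ and using $\sum_i u_{t,i}=1$ for $\mathbf{u_t}\in\mathcal{B}_1^n$, the definition $d(\mathbf{q},\mathbf{w})=\sum_i q_i\ln(q_i/w_i)$ and $\ln(v_{t+1,i}/y_{t,i}) = -\eta (C_t\mathbf{y_t})_i - \ln Z_t$ give the exact identity
\begin{equation*}
d(\mathbf{u_t},\mathbf{y_t}) - d(\mathbf{u_t},\mathbf{v_{t+1}}) = -\eta\,\mathbf{u_t}^\top C_t\mathbf{y_t} - \ln Z_t .
\end{equation*}
This is the scalar analogue of the first display in the proof of Lemma~\ref{lem::mat_lem1_bianchi}, and it is shift-invariant in the loss vector $C_t\mathbf{y_t}$ since the update and both sides are unchanged under $C_t\mathbf{y_t}\mapsto C_t\mathbf{y_t}+c\mathbf{1}$.

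The second ingredient is an exact quadratic identity from positive semidefiniteness of $C_t$, namely $\mathbf{u_t}^\top C_t\mathbf{y_t} = \tfrac12\big(\mathbf{u_t}^\top C_t\mathbf{u_t} + \mathbf{y_t}^\top C_t\mathbf{y_t} - (\mathbf{u_t}-\mathbf{y_t})^\top C_t(\mathbf{u_t}-\mathbf{y_t})\big)$, in which the last term is nonnegative. Substituting this into the identity above (with $\eta=2a$) converts the linearized comparator term into the two genuine quadratic losses $\mathbf{u_t}^\top C_t\mathbf{u_t}$ and $\mathbf{y_t}^\top C_t\mathbf{y_t}$ that appear in the lemma, up to the PSD gap term. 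The remaining and decisive step is to upper bound the log-normalizer $\ln Z_t$ by a \emph{loss-proportional} quantity. Here I would exploit the range bound: since $\max_{i,j}|C_t(i,j)|\le r/2$ and $\mathbf{y_t}$ lies in the simplex, each entry satisfies $(C_t\mathbf{y_t})_j\in[-r/2,r/2]$; using shift-invariance I would recenter these to $[0,r]$, apply the chord bound $e^{-\eta z}\le 1-\tfrac{z}{r}(1-e^{-\eta r})$ valid for $z\in[0,r]$ followed by $\ln(1-x)\le -x$, exactly as the matrix proof uses Tsuda's Lemma~2.1 (from \cite{tsuda2005matrix}) and $\ln(1-x)\le -x$. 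This yields a bound on $\ln Z_t$ linear in $\mathbf{y_t}^\top C_t\mathbf{y_t}$, and the final arithmetic is to verify that the specific choices $\eta = 2b/(1+rb)$ and $a = b/(1+rb)$ (equivalently $1/a = 1/b + r$ and $\eta=2a$) make the coefficients collapse precisely to $a\,\mathbf{y_t}^\top C_t\mathbf{y_t} - b\,\mathbf{u_t}^\top C_t\mathbf{u_t}$, with the nonnegative PSD gap term and the slack $(b-a)\mathbf{u_t}^\top C_t\mathbf{u_t}\ge 0$ left over in the favorable direction.

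The hard part will be the log-normalizer bound, and specifically the fact that the loss vector $C_t\mathbf{y_t}$ is \emph{signed}: a naive Hoeffding-type bound $\ln Z_t\le -\eta\,\mathbf{y_t}^\top C_t\mathbf{y_t} + \eta^2 r^2/8$ leaves an additive $O(\eta^2 r^2)$ slack that cannot be absorbed (it would fail for any comparator with $\mathbf{u_t}^\top C_t\mathbf{u_t}=0$), so the clean rational constants force a tighter, loss-proportional correction. I expect the recentering-plus-chord argument to deliver exactly such a correction, after which the constant matching is the delicate bookkeeping that closes the inequality; the whole computation parallels the reduction at the end of the proof of Lemma~\ref{lem::mat_lem1_bianchi} (where one is left to check $\eta-1+e^{-\eta}-\eta^2/2\le0$ by convexity of $\eta^2/2 - e^{-\eta}$), and is strictly simpler here because all quantities commute. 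As a sanity anchor I would cross-check the result against the already-established density-matrix bound of Theorem~\ref{thm::quantum_ineq} and Lemma~\ref{lem::mat_lem1_bianchi} from \cite{warmuth2006online}, of which Lemma~\ref{lem::var_simplex_step} is the scalar specialization.
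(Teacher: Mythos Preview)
The paper does not prove this lemma at all: it is quoted verbatim as ``Lemma 1 in \cite{warmuth2006online}'' inside the proof of Theorem~\ref{thm::adaptive_var_simplex} and used as a black box. There is therefore no paper-side argument to compare your proposal against.

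For what it is worth, your plan is the standard route and is essentially the argument of \cite{warmuth2006online}: the exact identity $d(\mathbf{u_t},\mathbf{y_t})-d(\mathbf{u_t},\mathbf{v_{t+1}})=-\eta\,\mathbf{u_t}^\top C_t\mathbf{y_t}-\ln Z_t$ from the EG update, the PSD polarization $\mathbf{u_t}^\top C_t\mathbf{y_t}=\tfrac12(\mathbf{u_t}^\top C_t\mathbf{u_t}+\mathbf{y_t}^\top C_t\mathbf{y_t}-(\mathbf{u_t}-\mathbf{y_t})^\top C_t(\mathbf{u_t}-\mathbf{y_t}))$, and a loss-proportional bound on $\ln Z_t$ via the chord inequality on $e^{-\eta z}$ over a bounded interval (after recentering to handle the signed entries of $C_t\mathbf{y_t}$). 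Your coefficient matching at $a=b/(1+rb)$, $\eta=2a$ is the tight case; the stated range $0\le a\le b/(1+rb)$ then follows since $\mathbf{y_t}^\top C_t\mathbf{y_t}\ge 0$. The only caveat is that this is a proof sketch, not a proof: the ``delicate bookkeeping'' you allude to---showing that the chord-bound slope $(1-e^{-\eta r})/r$ together with the PSD remainder collapses exactly to the $a,b$ coefficients---is where the actual verification lives, and you have not written it out.
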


\noindent
Now we apply Lemma \ref{lem::var_simplex_step} 
under the conditions  $r =2$, $a = \frac{b}{2b+1}$, $\eta = 2a$, 
and $b = \frac{c}{2}$.

\noindent
Recall that $d(\mathbf{u_t},\mathbf{y_t}) - d(\mathbf{u_t},\mathbf{v_{t+1}})$ $=$ $\sum_{i}u_{t,i}\ln\Big(\frac{v_{t+1,i}}{y_{t,i}}\Big)$.
Combining this with the inequality in Lemma \ref{lem::var_simplex_step} 
and the fact that $\left\|\mathbf{u_t}\right\|_1 = 1$,
we have 
\begin{equation*}
a\left\|\mathbf{u_t}\right\|_1\mathbf{y_t}^\top C_t\mathbf{y_t} - b\mathbf{u_t}^\top C_t\mathbf{u_t} 
\le \sum_{i}u_{t,i}\ln\Big(\frac{v_{t+1,i}}{y_{t,i}}\Big)
\end{equation*}

\noindent
Note that the above inequality is also true when $\mathbf{u_t} = 0$.

\noindent
Note that the right side of the above inequality 
 is the same as the right part of the Eq.(\ref{eq::initial_ineq_entropy}) 
in the proof of Theorem \ref{thm::adaptive_subset_expert}.

\noindent
As a result, we will use the same steps as in the proof of Theorem \ref{thm::adaptive_subset_expert}.
Then we will set $\mathbf{u_t} = \mathbf{u} = \argmin_{\mathbf{q}\in\mathcal{B}_1^n}\sum\limits_{t=r}^s \mathbf{q}^\top C_t\mathbf{q}$
for $t = r,\dots,s$, and $0$ elsewhere. Summing from $t=1$ up to $T$,
gives the following inequality:
\begin{equation*}
a\big[ \sum_{t=r}^s \mathbf{y_t}^\top C_t\mathbf{y_t} \big] - b\big[\min_{\mathbf{u}\in\mathcal{B}_1^n}\sum_{t=r}^s\mathbf{u}^\top C_t\mathbf{u}\big] 
\le \ln\frac{n}{\alpha} + T\ln\frac{1}{1-\alpha}
\end{equation*}

\noindent
Since $\alpha = 1/(T+1)$, $T\ln\frac{1}{1-\alpha} \le 1$.
Then the above inequality becomes
\begin{equation*}
a\big[ \sum\limits_{t=r}^s \mathbf{y_t}^\top C_t\mathbf{y_t} \big] 
- b\big[\min_{\mathbf{u}\in\mathcal{B}_1^n}\sum\limits_{t=r}^s\mathbf{u}^\top C_t\mathbf{u}\big] 
\le \ln\big((1+T)n\big) + 1
\end{equation*}

\noindent
Plugging in the expressions of $a = c/(2c+2)$, $b = c/2$, and $c = \frac{\sqrt{2\ln\big((1+T)n\big)+2}}{\sqrt{L}}$ we have
\begin{equation*}
\begin{array}{ll}
\sum\limits_{t=r}^s \mathbf{y_t}^\top C_t\mathbf{y_t} - \min_{\mathbf{u}\in\mathcal{B}_1^n}\sum\limits_{t=r}^s\mathbf{u}^\top C_t\mathbf{u}
&\le c\Big[\min_{\mathbf{u}\in\mathcal{B}_1^n}\mathbf{u}^\top C_t\mathbf{u}\Big] + 2\frac{c+1}{c}\big(\ln\big((1+T)n\big) + 1\big) \\
&\le cL + 2\frac{c+1}{c}\big(\ln\big((1+T)n\big) + 1\big) \\
&= 2\sqrt{2L\Big(\ln\big((1+T)n\big)+1\Big)} + 2\ln\big((1+T)n\big)
\end{array}
\end{equation*}

\noindent
Since the inequality holds for any $1\le r\le s \le T$, the proof is
concluded by maximizing over $[r,s]$ on the left.
\end{proof}

\chapter{Online Convex Optimization for Cumulative Constraints}
\label{sec:appdx_oco-long-term}

\section{Toy Example Results}
\noindent
The results including different $T$ up to $20000$ are shown in Fig.\ref{fig::toy_obj_con}, 
whose results are averaged over 10 random sequences of $\{c_t\}_{t=1}^T$.
Since the standard deviations are small, we only plot the mean results.

\noindent
From Fig.\ref{fig::toy_traj} we can see that the trajectories generated by $Clipped-OGD$
follows the boundary very tightly until reaching the optimal point.
which is also reflected by the Fig.~\ref{fig::toy_obj_con}(a) of the clipped long-term constraint violation.
For the $OGD$, its trajectory oscillates a lot around the boundary of
the actual constraint. 
And if we examine the clipped and non-clipped constraint violation in Fig.~\ref{fig::toy_obj_con}(b),
we find that although the clipped constraint violation is very high, its non-clipped one is very small.
This verifies the statement we make in the beginning that the big constraint violation at one time step is canceled out 
by the strictly feasible constraint at the other time step.
For the $A-OGD$, its trajectory in Fig.\ref{fig::toy_traj} violates the constraint most of the time,
and this violation actually contributes to the lower objective regret shown in Fig.\ref{fig::toy_obj_con}.

\begin{figure}
\vskip 0.0in
  \centering
  \subfigure[]{
    \includegraphics[height=3.9cm]{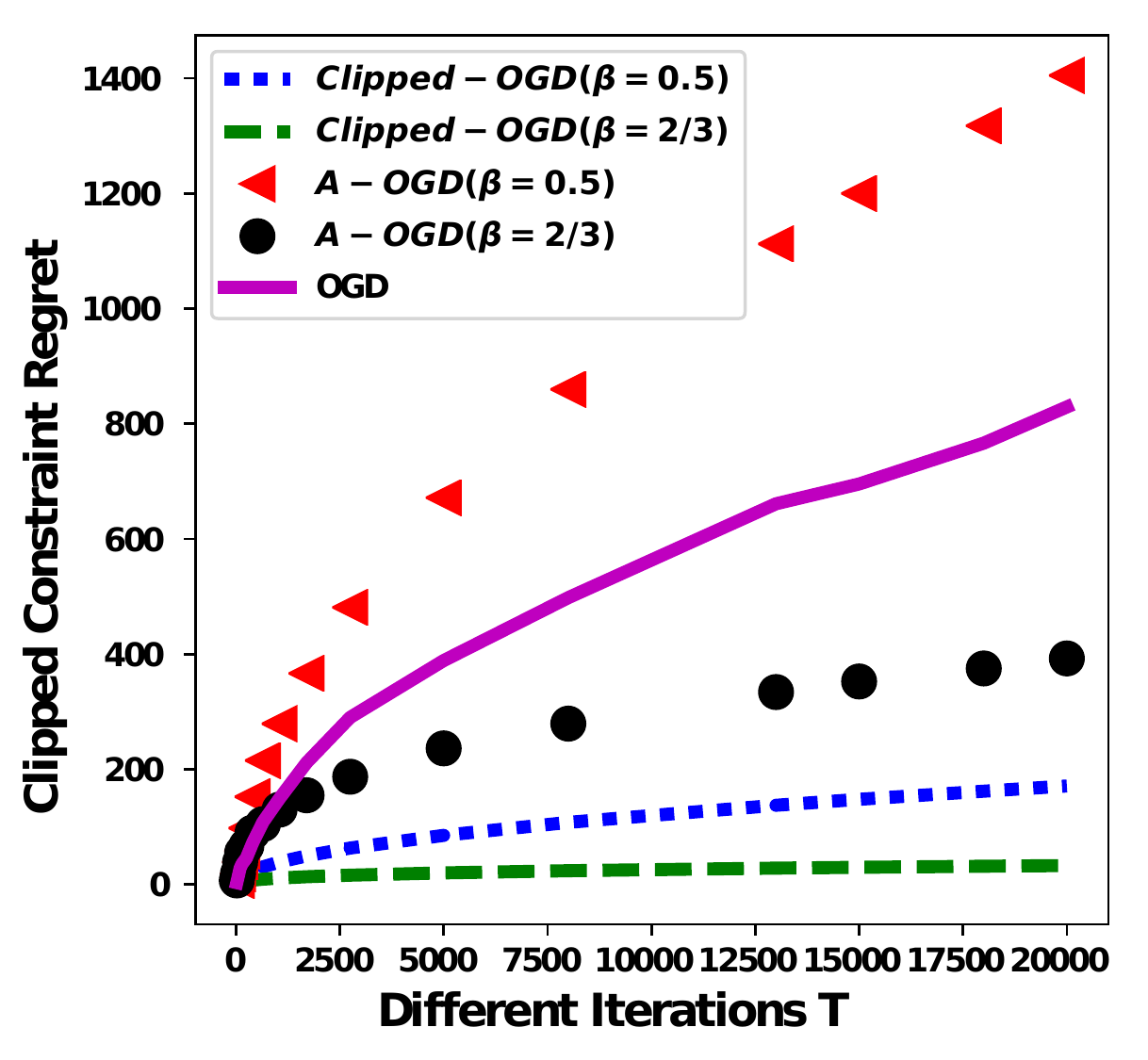}}
  \hspace{.1in}
  \subfigure[]{
    \includegraphics[height=3.9cm]{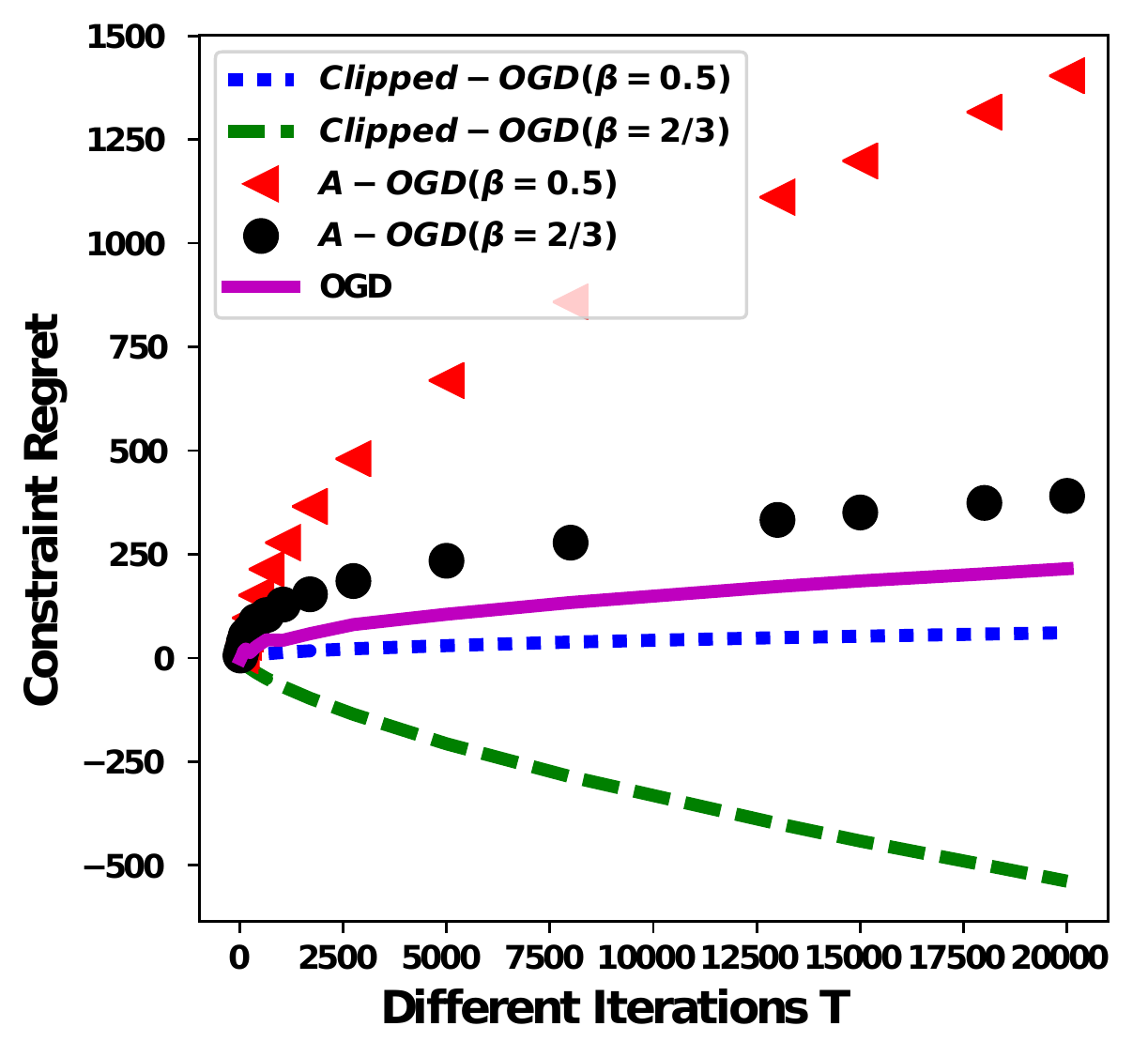}}
  \hspace{.1in}
  \subfigure[]{
    \includegraphics[height=3.9cm]{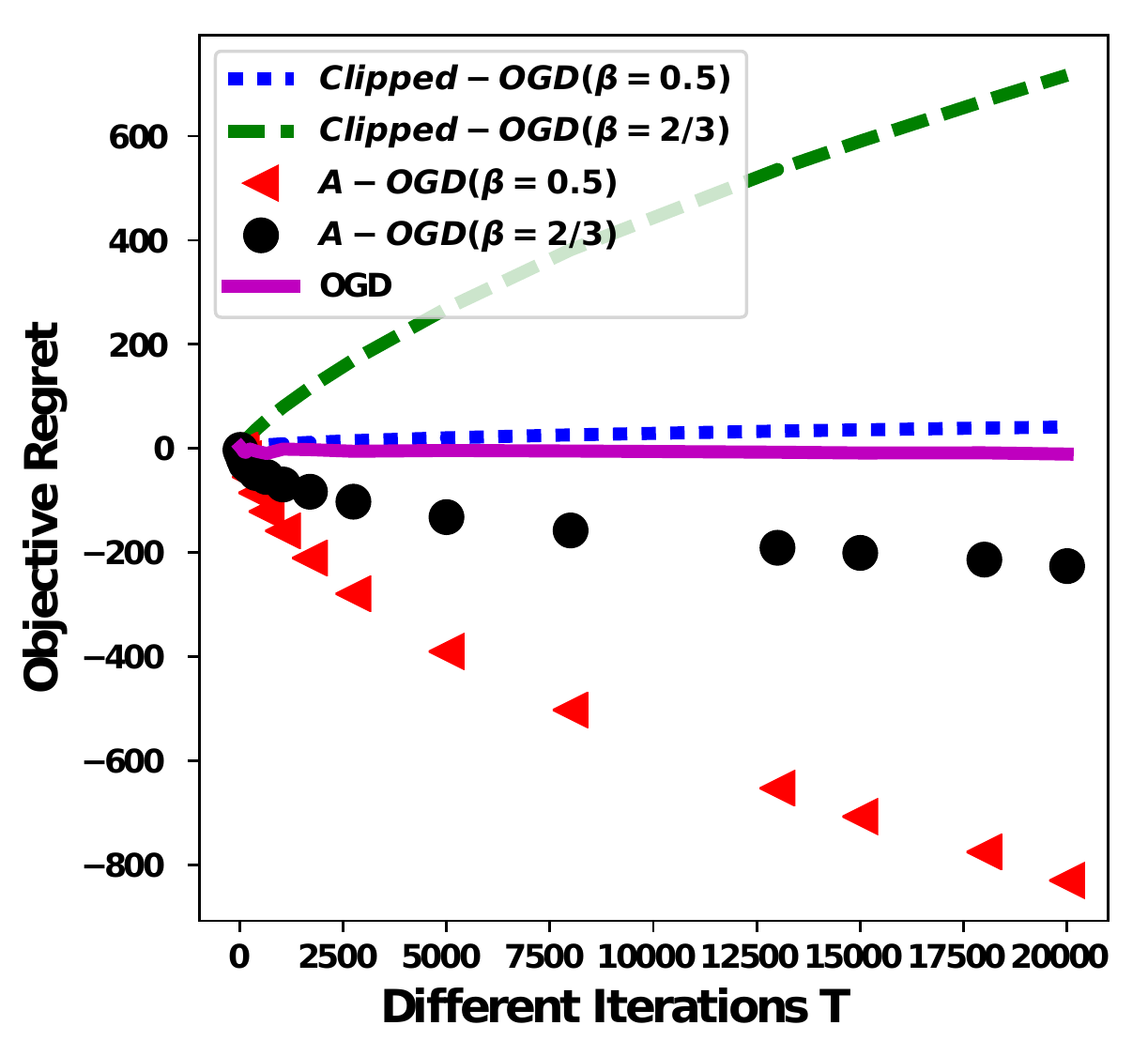}}
  \caption{Toy Example Results: 
           (a): Clipped Long-term Constraint Violation. 
           (b): Long-term Constraint Violation.
           (c): Cumulative Regret of the Loss function}
  \label{fig::toy_obj_con} 
\vskip 0.in
\end{figure}

\section{Proofs}

\paragraph{Proof of Lemma \ref{lem:bound_step}}

\begin{proof}
Recall that the update for $\theta_{t+1}$ is
\begin{equation}
\label{eq::update_x_lemm1}
\theta_{t+1} = \Pi_{\mathcal{B}}\Big(\theta_t-\eta \partial_\theta f_t(\theta_t) - \frac{[g(\theta_t)]_+}{\sigma} \partial_\theta ([g(\theta_t)]_+)\Big)
\end{equation}
Let $y_t=\theta_t-\eta \partial_\theta f_t(\theta_t) - \frac{[g(\theta_t)]_+}{\sigma} \partial_\theta ([g(\theta_t)]_+)$.

\noindent
We first need to show that $g(\theta_{t+1})\le g(y_t)$.
Without loss of generality, let us assume that $y_t$ is not in the set $\mathcal{B}$.
From convexity we have $g(y_t)\ge g(\theta_{t+1})+\nabla_\theta g(\theta_{t+1})^\top(y_t-\theta_{t+1})$.
From non-expansiveness of the projection operator, we have that 
$(y_t-\theta_{t+1})^\top(\theta-\theta_{t+1})\le 0$ for $\theta\in \mathcal{B}$. 
Let $\theta = \theta_{t+1} - \epsilon_0\nabla_\theta g(\theta_{t+1})$ with $\epsilon_0$ small enough to make $\theta\in \mathcal{B}$.
We have $-\epsilon_0 (y_t-\theta_{t+1})^\top\nabla_\theta g(\theta_{t+1}) \le 0$. 
Then we have $g(\theta_{t+1})\le g(y_t)$.

\noindent
As a result, if $g(y_t)$ is upper bounded,  then so is $g(\theta_{t+1})$,
where $\theta_{t+1} = \Pi_{\mathcal{B}}(y_t)$. 
If $T$ is large enough,
$\eta\left\|\partial_\theta f_t(\theta_t)\right\|$ would be very small. 
Thus, we can use $0$-order Taylor expansion for differentiable $g(\theta)$ as below:
\begin{equation}
  \label{eq:simpleTaylor}
\begin{array}{ll}
g(y_t) = g\Big(\theta_t-\eta\partial_\theta
           f_t(\theta_t)-\frac{[g(\theta_t)]_+}{\sigma} \partial_\theta
           ([g(\theta_t)]_+)\Big)
  
  \le g\Big(\theta_t-\frac{[g(\theta_t)]_+}{\sigma} \partial_\theta
    ([g(\theta_t)]_+)\Big)+C \eta 
\end{array}
\end{equation}
where $C$ is a constant determined by the Taylor expansion remainder, as well as
the bound $\|\partial_\theta [g(\theta_t)]_+\| \|\partial_\theta f(\theta_t)\| \le G^2$.

\noindent
Set $\epsilon =(2 C \sigma R^2\eta)^{1/3}=O(\frac{1}{T^{1/6}})$. We will show that if $g(\theta_t) <
\epsilon$, then $g(\theta_{t+1}) \le \epsilon + O(1/\sqrt{T}) =
O(\frac{1}{T^{1/6}})$. We will also show that if $g(\theta_t)\ge \epsilon$,
then $g(\theta_{t+1})\le g(\theta_t)$. It follows then by induction that if
$g(\theta_1)<\epsilon$, then $g(\theta_t) = O(\frac{1}{T^{1/6}})$ for all
$t$. We prove these
inequalities in three cases. Since $g(\theta_{t+1})\le g(y_t)$, it suffices
to bound $g(y_t)$. 

\noindent
\textbf{Case 1: $g(\theta_t) \le 0$.} In this case, the inequality for
$g(y_t)$, \eqref{eq:simpleTaylor}, becomes 
\begin{equation*}
g(y_t) \le g(\theta_t)+C\eta \le C\eta = O(\frac{1}{\sqrt{T}})
\end{equation*}

\noindent
\textbf{Case 2: $0<g(\theta_t) < \epsilon$.} Since $[g(\theta_t)]_+=g(\theta_t)$, the bound on $g(y_t)$ becomes
\begin{equation}
  \label{eq:smallPosBound}
g(y_t) \le g\Big(\theta_t-\frac{g(\theta_t)}{\sigma} \nabla_\theta g(\theta_t)\Big) + C\eta
\end{equation}
We will bound the right using standard methods from gradient descent proofs.
Since $g$ is convex and $\nabla_\theta g(\theta)$ has Lipschitz constant, $L$,
we have the inequality:
\begin{equation}
  \label{eq:convexLipschitzBound}
  g(y)\le g(x) + \nabla_x g(x)^\top (y-x)+\frac{L}{2}\|y-x\|^2  
\end{equation}
for all $x$ and $y$ \cite{nesterov2013introductory}.

\noindent
Recall that
$\epsilon = O(\frac{1}{T^{1/6}})$. Assume that $T$ is sufficiently
large so that $\frac{Lg(\theta_t)}{2\sigma}<\frac{L\epsilon}{2\sigma} < 1$. 
Applying
\eqref{eq:convexLipschitzBound} with $x=\theta_t$ and
$y=\theta_t-\frac{g(\theta_t)}{\sigma}\nabla_\theta g(\theta_t)$ gives
\begin{align}
  g(y_t) & \le  g\Big(\theta_t-\frac{[g(\theta_t)]_+}{\sigma} \partial_\theta
           (g(\theta_t))\Big) + C\eta \\ 
  \label{eq:specialLipschitzBound}
  &\le
                                                               g(\theta_t)
                                                               -
                                                               \frac{g(\theta_t)}{\sigma}(1-\frac{Lg(\theta_t)}{2\sigma})\left\|\nabla_\theta
                                                               g(\theta_t)
                                                               \right\|^2
  + C\eta \\
  &\le g(\theta_t) + C\eta = O(\frac{1}{T^{1/6}}). 
\end{align}
where the third bound follows since $1-\frac{Lg(\theta_t)}{2\sigma} >
0$.

\noindent
\textbf{Case 3: $g(\theta_t)\ge \epsilon$. }
A case can arise such that $g(\theta_{t-1})<\epsilon$ but an additive term of order
$O(\frac{1}{T^{1/2}})$ leads to $\epsilon \le g(\theta_{t})\le \epsilon +
C\eta = O(\frac{1}{T^{1/6}})$. We will now show that no further
increases are possible by bounding the final two terms of
\eqref{eq:specialLipschitzBound} as
\begin{equation}
  \label{eq:equivBound}
                                         -
                                                               \frac{g(\theta_t)}{\sigma}(1-\frac{Lg(\theta_t)}{2\sigma})\left\|\nabla_\theta
                                                               g(\theta_t)
                                                               \right\|^2
  + C\eta \le 0 \iff C\eta \le  \frac{g(\theta_t)}{\sigma}(1-\frac{Lg(\theta_t)}{2\sigma})\left\|\nabla_\theta
                                                               g(\theta_t)
                                                               \right\|^2.
\end{equation}
Now, we
lower-bound the terms on the right of \eqref{eq:equivBound}. Since $\epsilon +
C\eta=O(\frac{1}{T^{1/6}})$, we have that for sufficiently large $T$,
$1-\frac{Lg(\theta_t)}{2\sigma}\ge 1-\frac{L(\epsilon+C\eta)}{2\sigma}\ge
\frac{1}{2}$.  Further note that by convexity, $g(0)\ge
g(\theta_t)-\nabla_\theta g(\theta_t)^\top \theta_t$. Since we assume that $0$ is feasible,
we have that
\begin{equation*}
  \epsilon \le g(\theta_t) \le \nabla_\theta g(\theta_t)^\top \theta_t \le \|\nabla_\theta
  g(\theta_t)\| \|\theta_t\| \le \|\nabla_\theta g(\theta_t)\| R.
\end{equation*}
The final inequality follows since $\theta_t\in\mathcal{B}$. Thus, we have
the following bound for the right of \eqref{eq:equivBound}:
\begin{equation*}
  \frac{g(\theta_t)}{\sigma}(1-\frac{Lg(\theta_t)}{2\sigma})\left\|\nabla_\theta
    g(\theta_t)\right\|^2
    \ge \frac{\epsilon^3}{2\sigma R^2} = C\eta.
  \end{equation*}
  The final equality follows by the definition of $\epsilon$. 
\end{proof}

\noindent
\paragraph{Proof of the Proposition \ref{prop::bound_step_max}:}
\begin{proof}
From the construction of $\bar{g}(\theta)$, we have the 
$\bar{g}(\theta)\ge \max\limits_{i}g_i(\theta)$. Thus, if we can upper bound the $\bar{g}(\theta)$,
$g_i(\theta)$ will automatically be upper bounded.
In order to use Lemma \ref{lem:bound_step}, we need to make sure
the following conditions are satisfied:
\begin{itemize}
\item $\bar{g}(\theta)$ is convex and differentiable. 
\item $\left\|\nabla_\theta \bar{g}(\theta)\right\|$ is upper bounded. 
\item $\left\|\nabla_\theta^{\prime\prime} \bar{g}(\theta)\right\|_2$ is upper bounded, 
where $\nabla_\theta^{\prime\prime} \bar{g}(\theta)$
is the Hessian matrix.
\end{itemize}
The first condition is satisfied due to the formula of $\bar{g}(\theta)$.
To examine the second one, we have 
\begin{equation*}
\nabla_\theta \bar{g}(\theta) = \frac{1}{\sum\limits_{i=1}^m \exp g_i(\theta)}\Bigg[\sum\limits_{i=1}^m \exp g_i(\theta) \nabla_\theta g_i(\theta)\Bigg]
\end{equation*}
\begin{equation*}
\begin{array}{ll}
\left\|\nabla_\theta \bar{g}(\theta)\right\|^2 = \frac{1}{\big(\sum\limits_{i=1}^m \exp g_i(\theta)\big)^2}
\left\|\sum\limits_{i=1}^m \exp g_i(\theta) \nabla_\theta g_i(\theta)\right\|^2
\le \frac{m\sum\limits_{i=1}^m (\exp g_i(\theta))^2\left\|\nabla_\theta g_i(\theta)\right\|^2 }{\big(\sum\limits_{i=1}^m \exp g_i(\theta)\big)^2} 
\le m G^2
\end{array}
\end{equation*}
Thus, $\left\|\nabla_\theta \bar{g}(\theta)\right\| \le \sqrt{m}G$ and the second condition is satisfied.

\noindent
For $\left\|\nabla_\theta^{\prime\prime} \bar{g}(\theta)\right\|_2$, we have 
\begin{equation*}
\begin{array}{ll}
\nabla_\theta^{\prime\prime} \bar{g}(\theta) =& \underbrace{\frac{1}{\sum\limits_{i=1}^m \exp g_i(\theta)}
\Bigg[\sum\limits_{i=1}^m \exp g_i(\theta) \nabla_\theta^{\prime\prime} g_i(\theta) + \exp g_i(\theta)\nabla_\theta g_i(\theta)\nabla_\theta g_i(\theta)^\top\Bigg]}_A \\
& - \underbrace{\frac{1}{\sum\limits_{i=1}^m \exp g_i(\theta)}\Big(\sum\limits_{i=1}^m \exp g_i(\theta)\nabla_\theta g_i(\theta)\Big)\Big(\sum\limits_{i=1}^m \exp g_i(\theta)\nabla_\theta g_i(\theta)^\top\Big)}_B
\end{array}
\end{equation*}
To upper bound $\left\|\nabla_\theta^{\prime\prime} \bar{g}(\theta)\right\|_2$, which is 
\begin{equation*}
\max\limits_{u^\top u = 1} u^\top\nabla_\theta^{\prime\prime} \bar{g}(\theta)u = \max\limits_{u^\top u=1}u^\top Au 
- u^\top Bu\le \max\limits_{u^\top u=1}u^\top Au
\end{equation*}
where the inequality is due to the fact that $B\succeq 0$.

\noindent
Thus, we have $\left\|\nabla_\theta^{\prime\prime} \bar{g}(\theta)\right\|_2\le \left\|A\right\|_2$.
For the $\left\|A\right\|_2$, we have 
\begin{equation*}
\begin{array}{ll}
\left\|A\right\|_2 &= \max\limits_{u^\top u = 1} u^\top Au 
\le \frac{1}{\sum\limits_{i=1}^m \exp g_i(\theta)}\Big(\sum\limits_{i=1}^m\max\limits_{u^\top u = 1}
\exp g_i(\theta)u^\top \nabla_\theta^{\prime\prime}g_i(\theta)u\Big) \\
&\quad\quad+\frac{1}{\sum\limits_{i=1}^m \exp g_i(\theta)}\Big(\sum\limits_{i=1}^m\max\limits_{u^\top u = 1}
\exp g_i(\theta)\left\|\nabla_\theta g_i(\theta)^\top u\right\|^2\Big) \\
&\le \frac{1}{\sum\limits_{i=1}^m \exp g_i(\theta)}\Big(\sum\limits_{i=1}^m \exp g_i(\theta)(L_i + \left\|\nabla_\theta g_i(\theta)\right\|^2)\Big)\\
&\le \frac{1}{\sum\limits_{i=1}^m \exp g_i(\theta)}\Big(\sum\limits_{i=1}^m \exp g_i(\theta)\Big)(\bar{L} + G^2) = \bar{L} + G^2
\end{array}
\end{equation*}
where the first inequality comes from the optimality definition, the second inequality comes from the upper bound for each 
$\left\|\nabla_\theta^{\prime\prime}g_i(\theta)\right\|_2$ and the Cauchy - Schwarz inequality,
and the last inequality comes from the fact that $\bar{L} = \max L_i$ and $\left\|\nabla_\theta g_i(\theta)\right\|$ is upper bounded by $G$.
Thus, the last condition is also satisfied.

\end{proof}

\noindent
\paragraph{Proof of the Proposition \ref{prop::similarResultTo2012}:}
\begin{proof}
From Theorem \ref{thm::sumOfSquareLongterm}, we know that $\sum\limits_{t=1}^T\Big([g_i(\theta_t)]_+\Big)^2 \le O(\sqrt{T})$.
By using the inequality $(y_1+y_2+...+y_n)^2\le n(y_1^2+y_2^2+...+y_n^2)$,
setting $y_i$ being equal to $[g_i(\theta_t)]_+$, and $n = T$, we have 
$\Big(\sum\limits_{t=1}^T[g_i(\theta_t)]_+\Big)^2 \le T\sum\limits_{t=1}^T\Big([g_i(\theta_t)]_+\Big)^2 \le O(T^{3/2})$.
Then we obtain that $\sum\limits_{t=1}^T[g_i(\theta_t)]_+ \le O(T^{3/4})$. 
Because $g_i(\theta_t)\le [g_i(\theta_t)]_+$, we also have $g_i(\theta_t) \le O(T^{3/4})$.
\end{proof}

\noindent
\paragraph{Proof of the Proposition \ref{prop::tradeOffLossAndConstraint}:}
\begin{proof}

Since we only change the step size for Algorithm \ref{alg::convex-long-term}, 
the previous result in Lemma \ref{lem:sumOfLagfunction} and part of the proof 
up to Eq.(\ref{eq::UpperBoundOfsumOfObjAndLongTermConstrain}) in Theorem \ref{thm::sumOfSquareLongterm}
can be used without any changes.

\noindent
First, let us rewrite the Eq.(\ref{eq::UpperBoundOfsumOfObjAndLongTermConstrain}):
\begin{equation}
\label{eq::key_our}
\begin{array}{rl}
\sum\limits_{t=1}^T\Big(f_t(\theta_t) - f_t(\theta^*)\Big) & +
\sum\limits_{i=1}^m\sum\limits_{t=1}^T\frac{([g_i(\theta_t)]_+)^2}{\sigma\eta}\Big(1-\frac{(m+1)G^2}{2\sigma}\Big)
\le \frac{R^2}{2\eta}+\frac{\eta T}{2}(m+1)G^2
\end{array}
\end{equation}

\noindent
By plugging in the definition of $\alpha$, $\eta$, and that $\frac{([g_i(\theta_t)]_+)^2}{\sigma\eta}\alpha\ge 0$,
we have
\begin{equation*}
\begin{array}{ll}
\sum\limits_{t=1}^T\Big(f_t(\theta_t)-f_t(\theta^*)\Big)&\le \frac{R^2}{2}T^{\beta}+\frac{(m+1)G^2}{2}T^{1-\beta}
                                              =O(T^{max\{\beta,1-\beta\}}) 
\end{array}
\end{equation*}

\noindent
As argued in the proof of Theorem \ref{thm::sumOfSquareLongterm},
we have the following inequalities with the help of
$\sum\limits_{t=1}^T\Big(f_t(\theta_t) - f_t(\theta^*)\Big)\ge -FT$:
\begin{equation}
\label{eq::constrain_ineq_our}
\begin{array}{l}
\sum\limits_{i=1}^m\sum\limits_{t=1}^T\frac{([g_i(\theta_t)]_+)^2}{\sigma\eta}\alpha 
\le \frac{R^2}{2}T^{\beta} + \frac{(m+1)G^2}{2}T^{1-\beta}+FT \\
\sum\limits_{t=1}^T([g_i(\theta_t)]_+)^2\le \frac{\sigma}{\alpha}(\frac{R^2}{2}+\frac{(m+1)G^2}{2}T^{1-2\beta}+FT^{1-\beta})
\end{array}
\end{equation}

\noindent
Then we have 
\small
\begin{equation*}
\begin{array}{l}
\sum\limits_{t=1}^T[g_i(\theta_t)]_+\le \sqrt{T\sum\limits_{t=1}^T\Big([g_i(\theta_t)]_+\Big)^2} 
\le\sqrt{\frac{T\sigma}{\alpha}\Big(\frac{R^2}{2}+\frac{(m+1)G^2}{2}T^{1-2\beta}+FT^{1-\beta}\Big)}
=O(T^{1-\beta/2})
\end{array}
\end{equation*}
\normalsize

\end{proof}

\noindent
It is also interesting to figure out why \cite{mahdavi2012trading} 
cannot have this user-defined trade-off benefit.
From \cite{mahdavi2012trading}, the key inequality in obtaining their conclusions is:
\begin{equation}
\label{eq::key_2012}
\begin{array}{l}
\sum\limits_{t=1}^T \Big(f_t(x_t)-f_t(x^*)\Big)+
\sum\limits_{i=1}^m\frac{\Big[\sum\limits_{t=1}^Tg_i(x_t)\Big]_+^2}{2(\sigma\eta T+m/\eta)}\\
\le \frac{R^2}{2\eta} + \frac{\eta T}{2}\Big((m+1)G^2+2mD^2\Big)
\end{array}
\end{equation} 
The main difference between Eq.(\ref{eq::key_2012}) and Eq.(\ref{eq::key_our})
is in the denominator of $\frac{\Big[\sum\limits_{t=1}^Tg_i(x_t)\Big]_+^2}{2(\sigma\eta T+m/\eta)}$.
Eq.(\ref{eq::key_2012}) has the form $(\sigma\eta T+m/\eta)$, 
while Eq.(\ref{eq::key_our}) has the form $(\sigma\eta)$. 
The coupled $\eta$ and $1/\eta$ prevents Eq.(\ref{eq::key_2012}) from arriving this user-defined trade-off.

\noindent
The next proofs of the Proposition \ref{prop::true_violation_bound_2011} and \ref{prop::true_violation_bound_2016}
show how we can use our proposed Lagrangian function in Eq.(\ref{eq::new long term lagrangian})
to make the algorithms in \cite{mahdavi2012trading} and \cite{jenatton2016adaptive}
to have the clipped long-term constraint violation bounds.

\noindent
\paragraph{Proof of the Proposition \ref{prop::true_violation_bound_2011}:}
\begin{sproof}

If we look into the proof of Lemma 2 and Proposition 3 in \cite{mahdavi2012trading}, 
the new Lagrangian formula does not lead to any difference,
which means that the $\mathcal{L}_t(\theta,\lambda)$ defined in Eq.~(\ref{eq::new long term lagrangian})
is also valid for the drawn conclusions.
Then in the proof of Theorem 4 in \cite{mahdavi2012trading}, we can change $g_i(\theta_t)$
to $[g_i(\theta_t)]_+$. 
The maximization for $\lambda$ over the range $[0,+\infty)$ is also valid, 
since $[g_i(\theta_t)]_+$ automatically satisfies this requirement.
Thus, the claimed bounds hold.
\end{sproof}

\noindent
\paragraph{Proof of the Proposition \ref{prop::true_violation_bound_2016}:}
\begin{sproof}

The previous augmented Lagrangian formula $\mathcal{L}_t(\theta,\lambda)$ used in \cite{jenatton2016adaptive} is:
\begin{equation*}
\label{eq::old_lag_2016}
\mathcal{L}_t(\theta,\lambda) = f_t(\theta)+\lambda g(\theta) - \frac{\phi_t}{2}\lambda^2
\end{equation*}
The Lemma 1 in \cite{jenatton2016adaptive} is the upper bound of $\mathcal{L}_t(\theta_t,\lambda) - \mathcal{L}_t(\theta_t,\lambda_t)$.
The proof does not make any difference between formula (\ref{eq::old_lag_2016}) and (\ref{eq::new_lag_2016}).
So we can still have the same conclusion of Lemma 1.
The Lemma 2 in \cite{jenatton2016adaptive} is the lower bound of $\mathcal{L}_t(\theta_t,\lambda) - \mathcal{L}_t(\theta^*,\lambda_t)$.
Since it only uses the fact that $g(\theta^*)\le 0$, which is also true for $[g(\theta^*)]_+$,
we can have the same result with $g(\theta_t)$ being replaced with $[g(\theta_t)]_+$.
The Lemma 3 in \cite{jenatton2016adaptive} is free of $\mathcal{L}_t(\theta,\lambda)$ formula, so it is also true for the new formula.
The Lemma 4 in \cite{jenatton2016adaptive} is the result of Lemma 1-3, 
so it is also valid if we change $g(\theta_t)$ to $[g(\theta_t)]_+$.
Then the conclusion of Theorem 1 in \cite{jenatton2016adaptive} is valid for $[g(\theta_t)]_+$ as well.
\end{sproof}

\paragraph{Proof of Theorem~\ref{thm:gen_conv}:}
\begin{proof}
Due to the non-expansiveness of the projection in Eq.~\eqref{eq:convex_update_x},
we can get 
\begin{equation*}
\begin{array}{ll}
\|z_t-\theta_{t+1}\|^2 & \le\|z_t-\theta_t+\eta\nabla_\theta\cL_t(\theta_t,\lambda_t)\|^2 \\
&=\|z_t-\theta_t\|^2+\eta^2\|\nabla_\theta\cL_t(\theta_t,\lambda_t)\|^2+2\eta\langle z_t-\theta_t,\nabla_\theta\cL_t(\theta_t,\lambda_t)\rangle
\end{array}
\end{equation*}
which can be reformulated as
\begin{equation}
\label{eq:proj_ineq_conv}
\langle \theta_t-z_t,\nabla_\theta\cL_t(\theta_t,\lambda_t)\rangle \le \frac{1}{2\eta}(\|z_t-\theta_t\|^2-\|z_t-\theta_{t+1}\|^2)
+\frac{\eta}{2}\|\nabla_\theta\cL_t(\theta_t,\lambda_t)\|^2
\end{equation}

\noindent
Due to the convexity of $\cL_t(\theta,\lambda)$ in terms of $\theta$, we have
\begin{equation}
\label{eq:convexity_ineq}
\cL_t(\theta_t,\lambda_t)-\cL_t(z_t,\lambda_t) \le \langle \theta_t-z_t,\nabla_\theta\cL_t(\theta_t,\lambda_t)\rangle
\end{equation}

\noindent
Plugging Eq.~\eqref{eq:convexity_ineq} into Eq.~\eqref{eq:proj_ineq_conv}, we get
\begin{equation}
\label{eq:con_update_ineq}
\cL_t(\theta_t,\lambda_t)-\cL_t(z_t,\lambda_t) \le \frac{1}{2\eta}(\|z_t-\theta_t\|^2-\|z_t-\theta_{t+1}\|^2)
+\frac{\eta}{2}\|\nabla_\theta\cL_t(\theta_t,\lambda_t)\|^2
\end{equation}

\noindent
Next, we analyze the term $\|z_t-\theta_{t+1}\|^2$:
\begin{equation}
\label{eq:path_length_ineq}
\begin{array}{ll}
\|z_t-\theta_{t+1}\|^2 &= \|z_t-z_{t+1}+z_{t+1}-\theta_{t+1}\|^2 \\
&=\|z_t-z_{t+1}\|^2+2\langle z_{t+1}-\theta_{t+1},z_t-z_{t+1}\rangle+\|z_{t+1}-\theta_{t+1}\|^2
\end{array}
\end{equation}

\noindent
Since both $\theta_1^T$ and $z_1^T$ are in $\cS_0$, $\|\theta_{t+1}-z_{t+1}\|\le D$
and $\langle z_{t+1}-\theta_{t+1},z_t-z_{t+1}\rangle \ge -D\|z_{t+1}-z_t\|$
due to the assumption that the diameter of $\cS_0$ is $D$ and Cauchy-Schwarz inequality.
Thus, Eq.~\eqref{eq:path_length_ineq} can be lower bounded as
\begin{equation}
\label{eq:z_t-x_t+1-lower}
\|z_t-\theta_{t+1}\|^2\ge -2D\|z_{t+1}-z_t\| + \|z_{t+1}-\theta_{t+1}\|^2
\end{equation}

\noindent
Plug the above inequality into Eq.~\eqref{eq:con_update_ineq} gives
\begin{equation*}
\cL_t(\theta_t,\lambda_t)-\cL_t(z_t,\lambda_t) \le \frac{D}{\eta}\|z_{t+1}-z_t\|
 +\frac{\eta}{2}\|\nabla_\theta\cL_t(\theta_t,\lambda_t)\|^2
 +\frac{1}{2\eta}(\|z_t-\theta_t\|^2-\|z_{t+1}-\theta_{t+1}\|^2)
\end{equation*}

\noindent
For $\|\nabla_\theta\cL_t(\theta_t,\lambda_t)\|^2$ $=$ $\|\nabla_\theta f_t(\theta_t)+\lambda_t\nabla_\theta[g_t(\theta_t)]_+\|^2$,
it can be upper bounded by $2G^2+2G^2\lambda_t^2$ 
due to the fact that $\|a+b\|^2\le 2\|a\|^2+2\|b\|^2$
and both $\|\nabla_\theta f_t(\theta_t)\|$ and $\|\nabla_\theta[g_t(\theta_t)]_+\|$ are upper bounded by $G$.
Plugging this upper bound into the above inequality gives
\begin{equation*}
\cL_t(\theta_t,\lambda_t)-\cL_t(z_t,\lambda_t) \le 
\frac{1}{2\eta}(\|z_t-\theta_t\|^2-\|z_{t+1}-\theta_{t+1}\|^2) + \frac{D}{\eta}\|z_{t+1}-z_t\|
+\eta G^2\lambda_t^2 + \eta G^2
\end{equation*}

\noindent
Expanding the left part of the above inequality and using $\lambda_t = \frac{[g_t(\theta_t)]_+}{\sigma\eta}$
results in
\begin{equation*}
\begin{array}{l}
f_t(\theta_t)-f_t(z_t)+\frac{([g_t(\theta_t)]_+)^2}{\sigma\eta}(1-\frac{G^2}{\sigma})\\
\le \frac{1}{2\eta}(\|z_t-\theta_t\|^2-\|z_{t+1}-\theta_{t+1}\|^2) + \frac{D}{\eta}\|z_{t+1}-z_t\|
+ \eta G^2+\frac{[g_t(\theta_t)]_+[g_t(z_t)]_+}{\sigma\eta}
\end{array}
\end{equation*}

\noindent
Since $\sigma =2G^2$, and $[g_t(\theta_t)]_+$ is upper bounded by $F$,
we can get:
\small
\begin{equation*}
f_t(\theta_t)-f_t(z_t)+\frac{([g_t(\theta_t)]_+)^2}{4G^2\eta} \le
 \frac{1}{2\eta}(\|z_t-\theta_t\|^2-\|z_{t+1}-\theta_{t+1}\|^2) + \frac{D}{\eta}\|z_{t+1}-z_t\|
+ \eta G^2+\frac{F[g_t(z_t)]_+}{2G^2\eta}
\end{equation*}
\normalsize

\noindent
Summing over $t=1$ to $T$, setting $z_{T+1} = z_T$, and using  $\|z_1-x_1\|^2\le D^2$ gives
\begin{equation*}
\sum\limits_{t=1}^T\Big(f_t(\theta_t)-f_t(z_t)\Big) + \frac{1}{4G^2\eta}\sum\limits_{t=1}^T([g_t(\theta_t)]_+)^2
\le \frac{D^2}{2\eta}+\frac{DV}{\eta}+G^2\eta T+\frac{F}{2G^2\eta}\sum\limits_{t=1}^T[g_t(z_t)]_+
\end{equation*}

\noindent
According to the definition of the sequence $z_1^T$, the number of $[g_t(z_t)]_+ = 0$ is $K$.
With the bound of $[g_t(z_t)]_+\le F$, 
$\sum\limits_{t=1}^T[g_t(z_t)]_+ \le (T-K)F$.

\noindent
Thus, the above inequality can be reformulated as
\begin{equation*}
\sum\limits_{t=1}^T\Big(f_t(\theta_t)-f_t(z_t)\Big) + \frac{1}{4G^2\eta}\sum\limits_{t=1}^T([g_t(\theta_t)]_+)^2
\le \frac{D^2}{2\eta}+\frac{DV}{\eta}+G^2\eta T+\frac{F^2}{2G^2\eta}(T-K)
\end{equation*}

\noindent
If we plug in the expression of $\eta = O(\sqrt{\frac{T-K+1+V}{T}})$, we get
\begin{equation*}
\sum\limits_{t=1}^T\Big(f_t(\theta_t)-f_t(z_t)\Big) + \frac{1}{4G^2\eta}\sum\limits_{t=1}^T([g_t(\theta_t)]_+)^2
\le O(\sqrt{T(T-K+1+V)})
\end{equation*}

\noindent
Since $ \frac{1}{4G^2\eta}\sum\limits_{t=1}^T([g_t(\theta_t)]_+)^2\ge 0$,
$\sum\limits_{t=1}^T\Big(f_t(\theta_t)-f_t(z_t)\Big)\le O(\sqrt{T(T-K+1+V)})$.

\noindent
Also, $\Big(f_t(\theta_t)-f_t(z_t)\Big)\ge -FT$.
Then $ \frac{1}{4G^2\eta}\sum\limits_{t=1}^T([g_t(\theta_t)]_+)^2\le O(T)$,
which results in $\sum\limits_{t=1}^T([g_t(\theta_t)]_+)^2\le O(\sqrt{T(T-K+1+V)})$.

\end{proof}

\paragraph{Proof of Theorem~\ref{thm:s-conv}:}
\begin{proof}
Since the update is the same as the one in Eq.~\eqref{eq:convex_update} except the time-dependent parameters,
we have the same inequality as in Eq.~\eqref{eq:proj_ineq_conv}:
\begin{equation}
\label{eq:s-conv_proj}
\langle \theta_t-z_t,\nabla_\theta\cL_t(\theta_t,\lambda_t)\rangle \le \frac{1}{2\eta_t}(\|z_t-\theta_t\|^2-\|z_t-\theta_{t+1}\|^2)
+\frac{\eta_t}{2}\|\nabla_\theta\cL_t(\theta_t,\lambda_t)\|^2
\end{equation}

\noindent
Since $f_t$ is $\ell$ strongly convex, $\cL_t(\theta,\lambda)$ is also $\ell$ strongly convex as:
\begin{equation*}
\langle \theta_t-z_t,\nabla_\theta\cL_t(\theta_t,\lambda_t)\rangle 
\ge \cL_t(\theta_t,\lambda_t) -\cL_t(z_t,\lambda_t) +\frac{\ell}{2}\|z_t-\theta_t\|^2
\end{equation*}

\noindent
Plugging the above inequality into Eq.~\eqref{eq:s-conv_proj} gives:
\begin{equation*}
\cL_t(\theta_t,\lambda_t)-\cL_t(z_t,\lambda_t) \le \frac{1}{2\eta_t}(\|z_t-\theta_t\|^2-\|z_t-\theta_{t+1}\|^2)
+\frac{\eta_t}{2}\|\nabla_\theta\cL_t(\theta_t,\lambda_t)\|^2 - \frac{\ell}{2}\|z_t-\theta_t\|^2
\end{equation*}

\noindent
Due to the lower bound for $\|z_t-\theta_{t+1}\|^2$ as in Eq.~\eqref{eq:z_t-x_t+1-lower}
and the upper bound for $\|\nabla_\theta\cL_t(\theta_t,\lambda_t)\|^2$ as $2G^2+2G^2\lambda_t^2$
as in the proof of Theorem \ref{thm:gen_conv},
we get
\scriptsize
\begin{equation*}
\cL_t(\theta_t,\lambda_t)-\cL_t(z_t,\lambda_t) \le \frac{D}{\eta_t}\|z_{t+1}-z_t\|
+\frac{1}{2\eta_t}(\|z_t-\theta_t\|^2-\|z_{t+1}-\theta_{t+1}\|^2)
- \frac{\ell}{2}\|z_t-\theta_t\|^2 + \eta_tG^2+\eta_tG^2\lambda_t^2
\end{equation*}
\normalsize

\noindent
Due to the concavity of $\cL_t(\theta,\lambda)$ in terms of $\lambda$,
we have
\begin{equation*}
\cL_t(\theta_t,\lambda)-\cL_t(\theta_t,\lambda_t) 
\le \langle \nabla_\lambda\cL_t(\theta_t,\lambda_t),\lambda-\lambda_t\rangle = 0
\end{equation*}
where the equality is due to the update of $\lambda_t$.

\noindent
Adding the above two inequalities and reformulating with $\phi_t = 2G^2\eta_t$ and $g_t(\theta_t)\le F$ gives
\begin{equation*}
\begin{array}{ll}
f_t(\theta_t)-f_t(z_t)+\lambda[g_t(\theta_t)]_+-\frac{\phi_t}{2}\lambda^2 &\le \frac{D}{\eta_t}\|z_{t+1}-z_t\|
+\frac{1}{2\eta_t}(\|z_t-\theta_t\|^2-\|z_{t+1}-\theta_{t+1}\|^2) \\
&\quad- \frac{\ell}{2}\|z_t-\theta_t\|^2 + \eta_tG^2+ \frac{F}{\phi_t}[g_t(z_t)]_+
\end{array}
\end{equation*}

\noindent
Summing over from $t=1$ to $T$ gives
\footnotesize
\begin{equation*}
\begin{array}{ll}
\sum\limits_{t=1}^T\Big(f_t(\theta_t)-f_t(z_t)\Big)+\lambda\sum\limits_{t=1}^T[g_t(\theta_t)]_+
-\frac{\lambda^2}{2}\sum\limits_{t=1}^T\phi_t \\
\le D\sum\limits_{t=1}^T\frac{\|z_{t+1}-z_t\|}{\eta_t}
+(\frac{1}{2\eta_1}-\frac{\ell}{2})\|z_1-\theta_1\|^2 +G^2\sum\limits_{t=1}^T\eta_t
+\sum\limits_{t=2}^T(\frac{1}{2\eta_t}-\frac{1}{2\eta_{t-1}}-\frac{\ell}{2})\|z_t-\theta_t\|^2
 + F\sum\limits_{t=1}^T\frac{[g_t(z_t)]_+}{\phi_t}
\end{array}
\end{equation*}
\normalsize

\noindent
Since $\eta_t = \frac{1-\gamma}{\ell(1-\gamma^t)}$ and $\phi_t = 2G^2\eta_t$,
$\frac{1}{2\eta_1}-\frac{\ell}{2} = 0$
and
$\frac{1}{2\eta_t}-\frac{1}{2\eta_{t-1}}-\frac{\ell}{2} \le 0$.
Also, $\frac{1}{\eta_t}\le \frac{\ell}{1-\gamma}$
and $\frac{1}{\phi_t}\le \frac{1}{2G^2}\frac{\ell}{1-\gamma}$.

\noindent
Thus, by setting $z_{T+1}=z_T$ and using the fact that $z_1^T\in V_K(z_1^T)$, the above inequality can be simplified as 
\begin{equation*}
\begin{array}{ll}
\sum\limits_{t=1}^T\Big(f_t(\theta_t)-f_t(z_t)\Big)+\lambda\sum\limits_{t=1}^T[g_t(\theta_t)]_+
-\frac{\lambda^2}{2}\sum\limits_{t=1}^T\phi_t \\
\le \frac{D\ell}{1-\gamma}V
+\frac{G^2(1-\gamma)}{\ell}\sum\limits_{t=1}^T\frac{1}{1-\gamma^t}
 + \frac{F^2\ell}{2G^2(1-\gamma)}(T-K)
\end{array}
\end{equation*}

\noindent
Maximizing the LHS over $\lambda$, we get $\lambda = \frac{\sum\limits_{t=1}^T[g_t(\theta_t)]_+}{\sum\limits_{t=1}^T\phi_t}$,
which gives
\small
\begin{equation*}
\sum\limits_{t=1}^T\Big(f_t(\theta_t)-f_t(z_t)\Big)+\frac{1}{2}\frac{\Big(\sum\limits_{t=1}^T[g_t(\theta_t)]_+\Big)^2}{\sum\limits_{t=1}^T\phi_t}
\le \Big(D\ell V+ \frac{\ell F^2(T-K)}{2G^2}\Big)\frac{1}{1-\gamma}+\frac{G^2(1-\gamma)}{\ell}\sum\limits_{t=1}^T\frac{1}{1-\gamma^t}
\end{equation*}
\normalsize

\noindent
For the first term on the RHS, 
since $\frac{1}{1-\gamma} = 2\sqrt{\frac{(D+1)T}{\max\{V+(T-K),\log^2T/T\}}} \le 2\sqrt{\frac{(D+1)T}{V+(T-K)}}$,
it can be upper bounded by $O(\sqrt{T(V+T-K)})$.

\noindent
For the second term on the RHS,
since $\sum\limits_{t=1}^T\frac{1}{1-\gamma^t}\le O(T)$ according to the proof of Corollary 3 in \cite{yuan2019trading},
it can be upper bounded by $\max\{O(\sqrt{T(V+T-K)}),O(\log T)\}$.

\noindent
Thus,
\begin{equation*}
\sum\limits_{t=1}^T\Big(f_t(\theta_t)-f_t(z_t)\Big)
+\frac{1}{2}\frac{\Big(\sum\limits_{t=1}^T[g_t(\theta_t)]_+\Big)^2}{\sum\limits_{t=1}^T\phi_t} 
\le \max\{O(\sqrt{T(V+T-K)}),O(\log T)\}
\end{equation*}

\noindent
As a result, $\sum\limits_{t=1}^T\Big(f_t(\theta_t)-f_t(z_t)\Big)\le \max\{O(\sqrt{T(V+T-K)}),O(\log T)\}$.

\noindent
Since $\sum\limits_{t=1}^T\Big(f_t(\theta_t)-f_t(z_t)\Big)\ge -FT$,
the upper bound for
$\sum\limits_{t=1}^T[g_t(\theta_t)]_+$ is obtained by using the definition of $\phi_t$ and the upper bound of it.

\end{proof}


\end{document}